\newtheorem{theorem}{Theorem}[section]
\newtheorem{definition}{Definition}[section]
\newtheorem{lemma}[theorem]{Lemma}
\newtheorem{proposition}[theorem]{Proposition}
\newtheorem{corollary}[theorem]{Corollary}
\def\eqref#1{(\ref{#1})}
\def\frf{\hat{f}^{(RF)}}
\def\frfl{\hat{f}^{(RF)}_{\lambda}}
\def\frflg{\hat{f}^{(RF)}_{\lambda, \gamma}}
\def\frfrlg{\hat{f}^{(RF)}_{\lambda \searrow 0,\gamma}}
\def\fklambda{\hat{f}^{(K)}_{\lambda}}
\def\fkl{\hat{f}^{(K)}_{\tilde\lambda}}
\def\fkrl{\hat{f}^{(K)}_{\lambda \searrow 0}}
\def\RR{\mathbb R}
\def\EE{\mathbb E}
\def\Tr{\mathrm{Tr}}
\def\iK{K^{-1}}
\def\Lt {\tilde \lambda }
\def\dLt{\partial_\lambda \tilde \lambda}
\def\ynik{\| y \|_{\iK}}
\icmltitlerunning{Implicit Regularization of Random Feature Models}
\begin{document}

\twocolumn[
\icmltitle{Implicit Regularization of Random Feature Models}



\icmlsetsymbol{equal}{*}

\begin{icmlauthorlist}
\icmlauthor{Arthur Jacot}{equal,Math}
\icmlauthor{Berfin \c{S}im\c{s}ek}{equal,Math,CS}
\icmlauthor{Francesco Spadaro}{Math}
\icmlauthor{Cl\'{e}ment Hongler}{Math}
\icmlauthor{Franck Gabriel}{Math}
\end{icmlauthorlist}

\icmlaffiliation{Math}{Chair of Statistical Field Theory, \'{E}cole Polytechnique F\'{e}d\'{e}rale de Lausanne, Lausanne, Switzerland}
\icmlaffiliation{CS}{Laboratory of Computational Neuroscience, \'{E}cole Polytechnique F\'{e}d\'{e}rale de Lausanne, Lausanne, Switzerland}

\icmlcorrespondingauthor{Arthur Jacot}{arthur.jacot@epfl.ch}

\icmlkeywords{Random Features, Kernel Methods, Machine Learning}

\vskip 0.3in
]



\printAffiliationsAndNotice{\icmlEqualContribution}

\begin{abstract}
Random Feature (RF) models are used as efficient parametric approximations of kernel methods. We investigate, by means of random matrix theory, the connection between Gaussian RF models and Kernel Ridge Regression (KRR).
For a Gaussian RF model with $P$ features, $N$ data points, and a ridge $\lambda$, we show that the average (i.e. expected) RF predictor is close to a KRR predictor with an \textit{effective ridge} $\tilde{\lambda}$.
We show that $\tilde{\lambda} > \lambda$ and $\tilde{\lambda} \searrow \lambda$ monotonically as $P$ grows, thus revealing the \textit{implicit regularization effect} of finite RF sampling.
We then compare the risk (i.e. test error) of the $\tilde{\lambda}$-KRR predictor with the average risk of the $\lambda$-RF predictor and obtain a precise and explicit bound on their difference.
Finally, we empirically find an extremely good agreement between the test errors of the average $\lambda$-RF predictor and $\tilde{\lambda}$-KRR predictor.
\end{abstract}

\section{Introduction}
In this paper, we consider the Random Feature (RF) model which is an approximation of Kernel Methods \cite{rahimi-08} which has seen many recent theoretical developements.

The conventional wisdom suggests that to ensure good generalization performance, one should choose a model class that is complex enough to learn the signal from the training data, yet simple enough to avoid fitting spurious patterns therein \cite{bishop-06}. This view has been questioned by recent developments in machine learning.
First, \citet{zhang-16} observed that modern neural network models can perfectly fit randomly labeled training data, while still generalizing well.
Second, the test error as a function of parameters exhibits a so-called `double-descent' curve for many models including neural networks, random forests, and random feature models \cite{advani-17, spigler-18, belkin-18, mei-19, belkin-19, nakkiran-19}.

The above models share the feature that for fixed input, the learned predictor $\hat{f}$ is random: for neural networks, this is due to the random initialization of the parameters and/or to the stochasticity of the training algorithm; for random forests, to the random branching; for random feature models, to the sampling of random features.
The somehow surprising generalization behavior of these models has recently been the subject of increasing attention. In general, the risk (i.e. test error) is a random variable with two sources of randomness: the usual one due to the sampling of the training set, and the second one due to the randomness of the model itself.

We consider the Random Feature (RF) model \cite{rahimi-08} with features sampled from a Gaussian Process (GP) and study the RF predictor $\hat{f}$ minimizing the regularized least squares error, isolating the randomness of the model by considering fixed training data points. RF models have been the subject of intense research activity: they are (randomized) approximations of Kernel Methods aimed at easing the computational challenges of Kernel Methods while being asymptotically equivalent to them \cite{rahimi-08, yang-12, sriperumbudur-15, yu-16}. Unlike the asymptotic behavior, which is well studied, RF models with a finite number of features are much less understood.



\subsection{Contributions}

We consider a model of Random Features (RF) approximating a kernel method with kernel $ K $. This model consists of $ P $ Gaussian features, sampled i.i.d. from a (centered) Gaussian process with covariance kernel $ K $. For a given training set of size $N$, we study the distribution of the RF predictor $\frfl$ with ridge parameter $\lambda > 0$ ($L^2$ penalty on the parameters) and denote it by $\lambda$-RF. We show the following:
\begin{itemize}
\item The distribution of $\frfl$ is that of a mixture of Gaussian processes.
\item The expected RF predictor is close to the $ \tilde\lambda $-KRR (Kernel Ridge Regression) predictor for an effective ridge parameter $\tilde{\lambda}>0$.
\item The effective ridge $ \tilde \lambda > \lambda $ is determined by the number of features $ P$, the ridge $\lambda $ and the Gram matrix of $ K $ on the dataset; $ \tilde \lambda $ decreases monotonically to $\lambda$ as $ P $ grows, revealing the implicit regularization effect of finite RF sampling. Conversely, when using random features to approximate a kernel method with a specific ridge $\lambda^*$, one should choose a smaller ridge $\lambda<\lambda^*$ to ensure $\tilde{\lambda}(\lambda)=\lambda^*$.
\item The test errors of the expected $\lambda$-RF predictor and of the $\tilde\lambda$-KRR predictor $ \fkl $ are numerically found to be extremely close, even for small $ P $ and $ N $.
\item The RF predictor's concentration around its expectation can be explicitly controlled in terms of $ P $ and of the data; this yields in particular $\mathbb{E} [L( \frfl )] = L ( \fkl )+\mathcal{O}(P^{-1}) $ as $N,P\to\infty$ with a fixed ratio $\gamma = \nicefrac{P}{N}$ where $L$ is the MSE risk.
\end{itemize}
Since we compare the behavior of $\lambda$-RF and $\tilde\lambda$-KRR predictors on the same fixed training set, our result does not rely on any probabilistic assumption on the training data (in particular, we do not assume that our training data is sampled i.i.d.). While our proofs currently require the features to be Gaussian processes, we are confident that they could be generalized to a more general setting \cite{louart-17,benigni-2019}.

\subsection{Related works}


\textbf{Generalization of Random Features.} The generalization behavior of Random Feature models has seen intense study in the Statistical Learning Theory framework. \citet{rahimi-09} find that $\mathcal{O}(N)$ features are sufficient to ensure the $\mathcal{O}(\frac{1}{\sqrt{N}})$ decay of the generalization error of Kernel Ridge Regression (KRR). \citet{rudi-17} improve on their result and show that $\mathcal{O}(\sqrt{N} \log N)$ features is actually enough to obtain the $\mathcal{O}(\frac{1}{\sqrt{N}})$ decay of the KRR error.

\citet{hastie-19} use random matrix theory tools to compute the asymptotic risk when both $P,N \to \infty$ with $\frac P N \to \gamma > 0 $. When the training data is sampled i.i.d. from a Gaussian distribution, the variance is shown to explode at $ \gamma = 1$. In the same linear regression setup, \citet{bartlett-19} establish general upper and lower bounds on the excess risk. \citet{mei-19} prove that the double-descent (DD) curve also arises for random ReLU features, and adding a ridge suppresses the explosion around $ \gamma = 1 $.

\textbf{Double-descent and the effect of regularization.} For the cross-entropy loss, \citet{neyshabur-14} observed that for two-layer neural networks the test error exhibits the double-descent (DD) curve as the network width increases (without regularizers, without early stopping). For MSE and hinge losses, the DD curve was observed also in multilayer networks on the MNIST dataset \cite{advani-17, spigler-18}. \citet{neal-18} study the variance due to stochastic training in neural networks and find that it increases until a certain width, but then decreases down to $0$. \citet{nakkiran-19} establish the DD phenomenon across various models including convolutional and recurrent networks on more complex datasets (e.g. CIFAR-10, CIFAR-100).

\citet{belkin-18, belkin-19} find that the DD curve is not peculiar to neural networks and observe the same for random Fourier features and decision trees. In \citet{geiger-19}, the DD curve for neural networks is related to the variance associated with the random initialization of the Neural Tangent Kernel \cite{jacot-18}; as a result, ensembling is shown to suppress the DD phenomenon in this case, and the test error stays constant in the overparameterized regime. Recent theoretical work \cite{ascoli-20} study the same setting and derive formulas for the asymptotic error, relying on the so-called replica method.

\textbf{General Wishart Matrices.}
Our theoretical analysis relies on the study of the spectrum of the so-called general Wishart matrices of the form $W\Sigma W^{T}$ (for $N\times N$ matrix $\Sigma$ and $P\times N$ matrix $W$ with i.i.d. standard Gaussian entries) and in particular their Stieltjes transform $m_P(z)=\frac{1}{P}\mathrm{Tr}\left(W\Sigma W^{T}-zI_{P}\right)^{-1}$. A number of asymptotic results \cite{silverstein-1995, bai-2008} about the spectrum and Stieltjes transform of such matrices can be understood using the asymptotic freeness of $W^{T}W$ and $\Sigma$ \cite{gabriel-15, speicher-17}. In this paper, we provide non-asymptotic variants of these results for an arbitrary matrix $\Sigma$ (which in our setting is the kernel Gram matrix); the proofs in our setting are detailed in the Supp. Mat.

\subsection{Outline}
The rest of this paper is organized as follows:
\begin{itemize}
\item In Section \ref{sec:setup}, the setup (linear regression, Gaussian RF model, $\lambda$-RF predictor, and $\lambda$-KRR predictor) is introduced.

\item In Section \ref{sec:first-observations}, preliminary results on the distribution of the $\lambda$-RF model are provided: the RF predictors are Gaussian mixtures (Proposition \ref{gaussian-mixture}) and the $\lambda\searrow0$-RF model is unbiased in the overparameterized regime (Corollary \ref{cor:average-ridgless-overparametrized}). Graphical illustrations of the RF predictors in various regimes are presented (Figure \ref{fig:RF-predictor}).

\item In Section \ref{sec:average-RF}, the first main theorem is stated (Theorem \ref{average-rf}): the average (expected) $\lambda$-RF predictor is close to the $ \Lt $-KRR predictor for an explicit $ \Lt > \lambda $. As a consequence (Corollary \ref{cor:difference_loss_expected_kernel_loss}), the test errors of these two predictors are close. Finally, numerical experiments show that the test errors are in fact virtually identical (Figure \ref{fig:average-RF-main}).

\item In Section \ref{sec:variance-RF}, the second main theorem is stated (Theorem \ref{variance-ridge}): a bound on the variance of the $\lambda$-RF predictor is given, which show that it concentrates around the average $\lambda$-RF predictor. As a consequence, the test error of the $\lambda$-RF predictor is shown to be close to that of the $\Lt$-KRR predictor (Corollary \ref{cor:expected-loss-krr-loss}). The ridgeless $ \lambda \searrow 0 $ case is then investigated (Section \ref{sec:double-descent-curve}): a lower bound on the variance of the $ \lambda $-RF predictor is given, suggesting an explanation for the double-descent curve in the ridgeless case.

\item In Section \ref{sec:conclusion}, we summarize our results and discuss potential implications and extensions.
\end{itemize}

\section{Setup}
\label{sec:setup}

Linear regression is a parametric model consisting of linear combinations
\begin{equation}\label{eq:linear-func}
f_{\theta} = \frac{1}{\sqrt{P}} \left( \theta_1 \phi^{(1)} + \cdots + \theta_P \phi^{(P)} \right) \notag
\end{equation}
of (deterministic) features $\phi^{(1)}, \ldots ,\phi^{(P)}:\mathbb{R}^{d}\to\mathbb{R}$. We consider an arbitrary training dataset $(X,y)$ with $X=[x_{1},...,x_{N}]\in\mathbb{R}^{d\times N}$ and $y=[y_{1},\dots,y_{N}]\in\mathbb{R}^{N}$, where the labels could be noisy observations. For a ridge parameter $\lambda > 0$,
the linear estimator corresponds to the parameters $\hat{\theta}=[\hat{\theta}_{1},\dots,\hat{\theta}_{P}]\in\mathbb{R}^{P}$ that minimize the (regularized) Mean Square Error (MSE) functional $\hat{L}_{\lambda}$ defined by
\begin{equation}\label{eq:mse-reg}
 \hat{L}_{\lambda}(f_{\theta})=\frac{1}{N}\sum_{i=1}^{N}\left(f_{\theta}(x_{i})-y_{i}\right)^{2} + \frac{\lambda}{N} \| \theta \|^2.
\end{equation}
The \emph{data matrix} $ F $ is defined as the $N\times P$ matrix with entries $F_{ij}={1\over\sqrt{P}} \phi^{(j)}(x_{i})$. The minimization of \eqref{eq:mse-reg} can be rewritten in terms of $ F $ as
\begin{equation}
  \hat{\theta} = \mathrm{argmin}_{\theta} \| F\theta - y\|^{2} + \lambda \| \theta \|^2.
\end{equation}
The optimal solution $\hat{\theta}$ is then given by \begin{equation}\label{eq:theta-ridge}
 \hat{\theta}=F^{T}\left(FF^{T}+\lambda I_N\right)^{-1}y
\end{equation}
and the optimal predictor $\hat{f} = f_{\hat \theta} $ by
\begin{equation}\label{eq:opt-fun}
   \hat{f}(x)={1\over\sqrt{P}}\sum_{j=1}^{P}\phi^{(j)}(x)F_{:,j}^{T}\left(FF^{T}+\lambda I_N \right)^{-1}y.
\end{equation}
In this paper, we consider linear models of \textit{Gaussian random features} associated with a kernel $ K\!: \mathbb{R}^{d}\times\mathbb{R}^{d}\to\mathbb{R}$. We take $\phi^{(j)} = f^{(j)}$, where $ f^{(1)}, \ldots, f^{(P)} $ are sampled i.i.d. from a Gaussian Process of zero mean (i.e. $ \mathbb{E} [f^{(j)}(x)] = 0 $ for all $ x\in\mathbb{R}^d $) and with covariance $K$ (i.e. $\mathbb{E}[f^{(j)}(x)f^{(j)}(x')] = K(x,x') $ for all $ x, x'\in\mathbb{R}^d $). In our setup, the optimal parameter $\hat \theta$ still satisfies \eqref{eq:theta-ridge} where $F$ is now a random matrix. The associated predictor, called $\lambda$-RF predictor, is then given by

\begin{figure*}[t!]
    \centering
    \!\!\!
    \subfloat[$P=2, \lambda=10^{-4}$]{
        \includegraphics[width=0.246\textwidth]{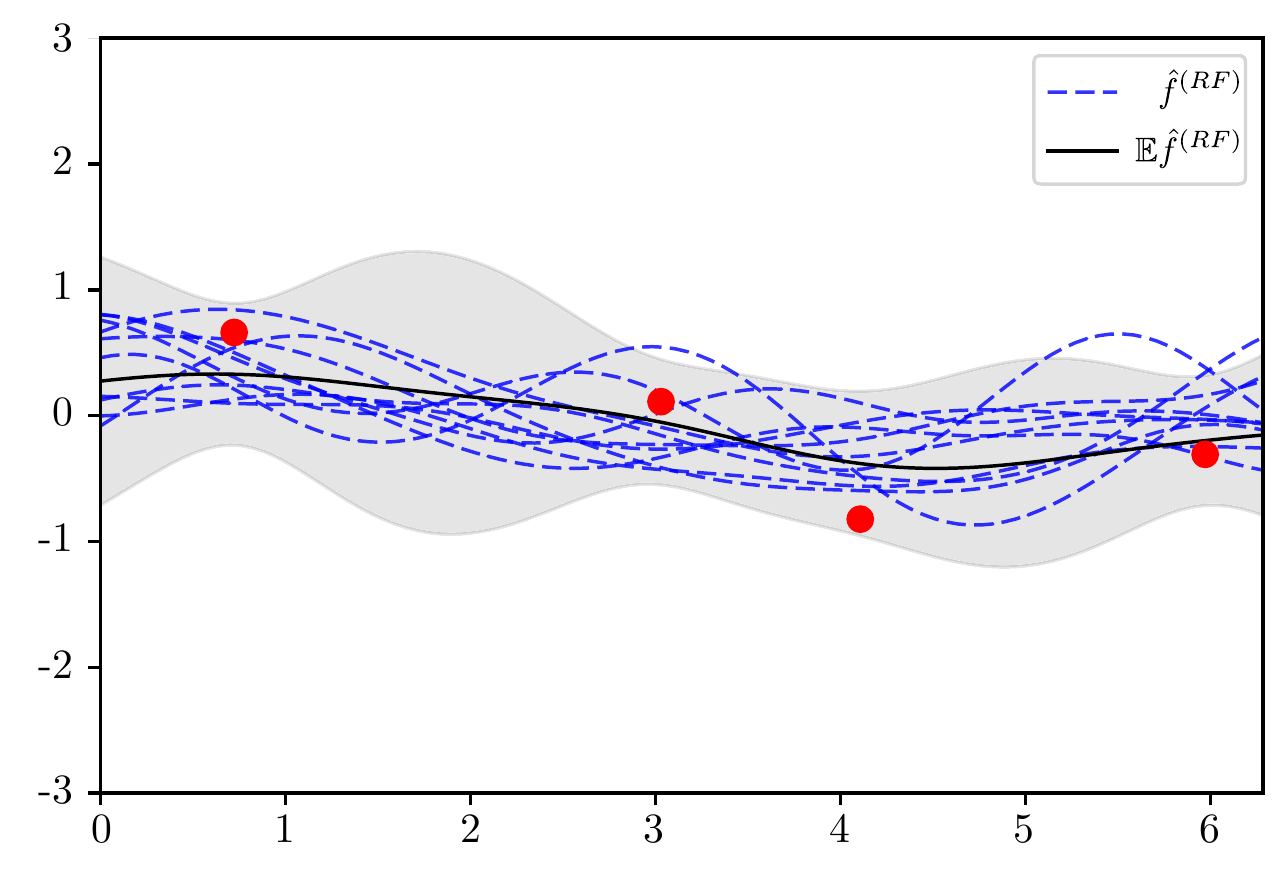}%
        \label{fig:RF1}
        } \!\!\!
    \subfloat[$P=4, \lambda=10^{-4}$]{
        \includegraphics[width=0.246\textwidth]{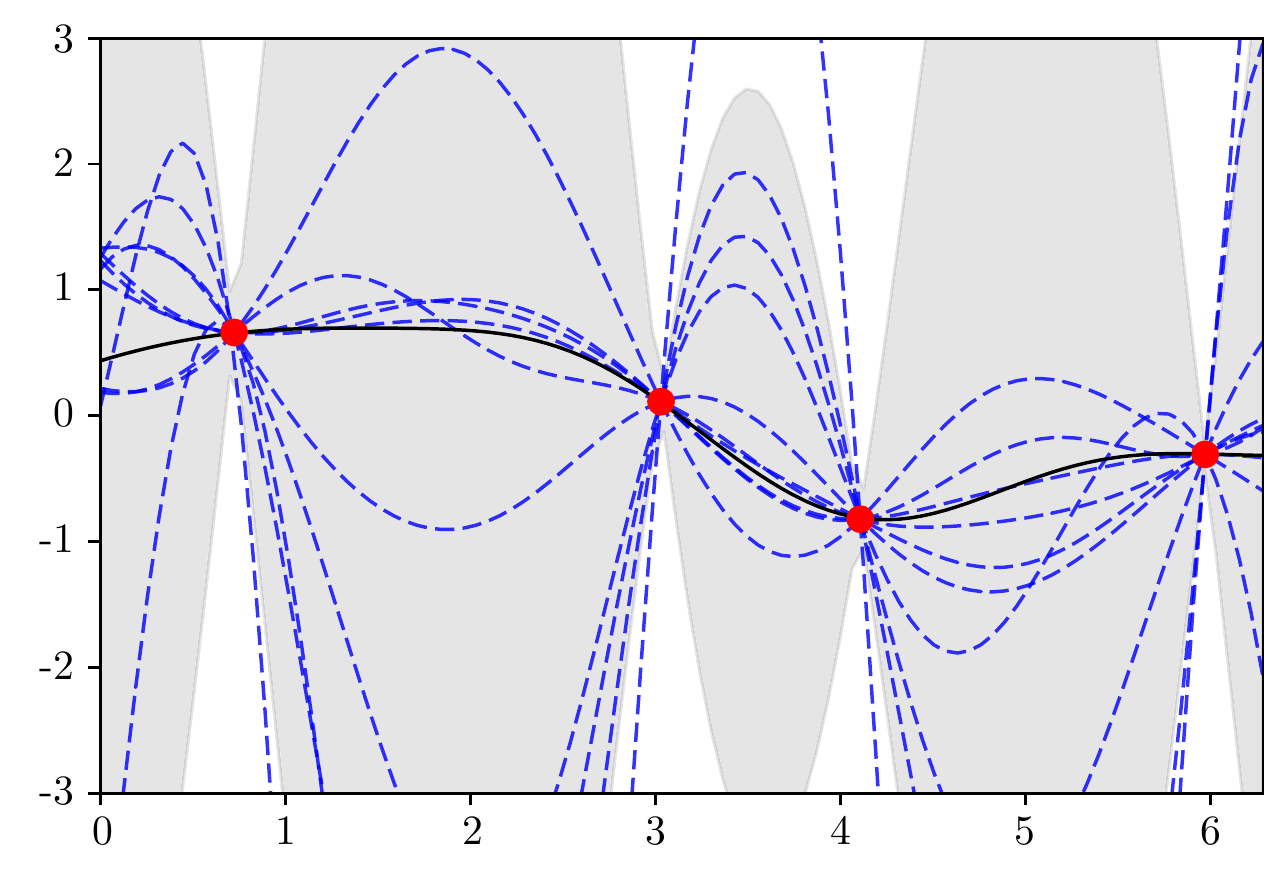}
        \label{fig:RF2}
        } \!\!\!
    \subfloat[$P=4, \lambda=0.1$]{
        \includegraphics[width=0.246\textwidth]{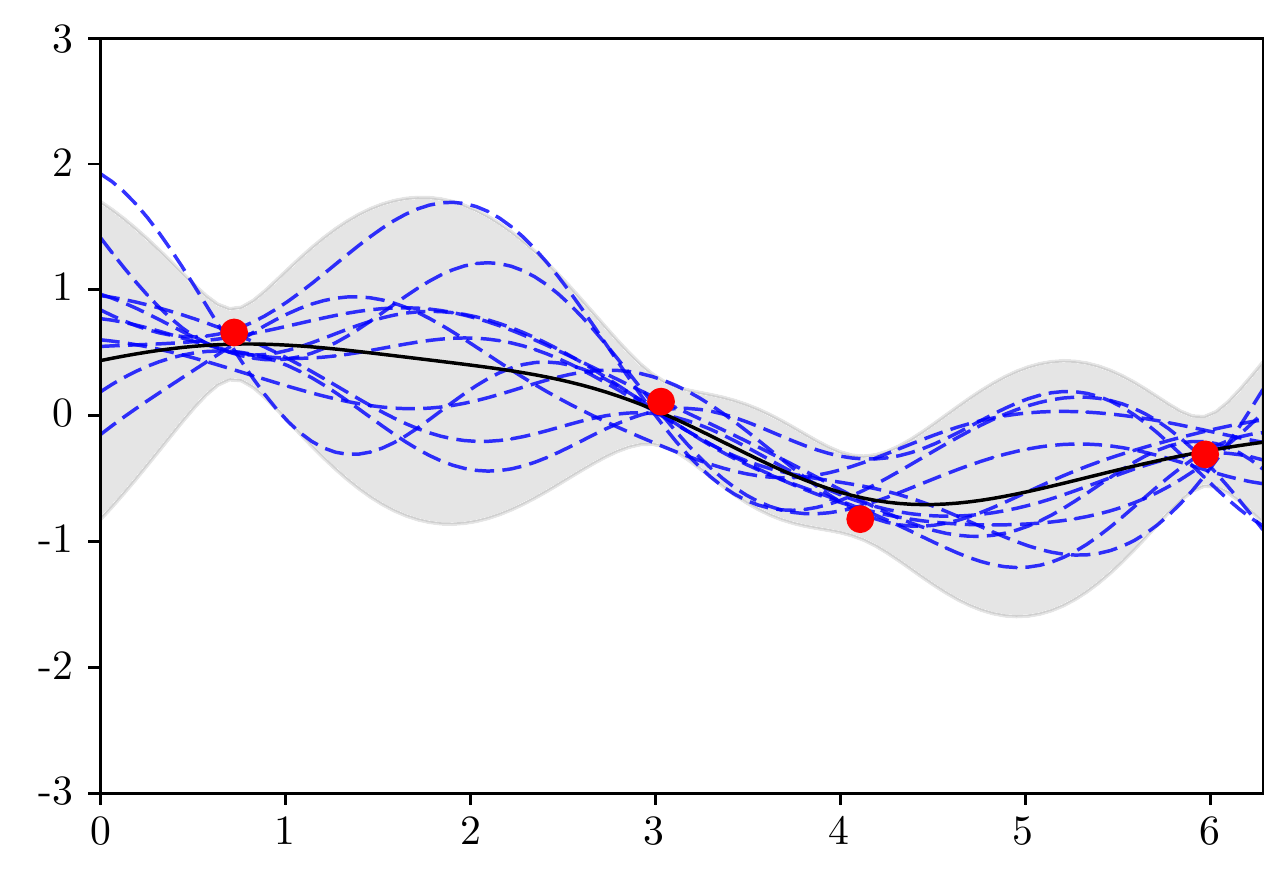}
          \label{fig:RF3}
          } \!\!\!
    \subfloat[$P=100, \lambda=10^{-4}$]{
        \includegraphics[width=0.246\textwidth]{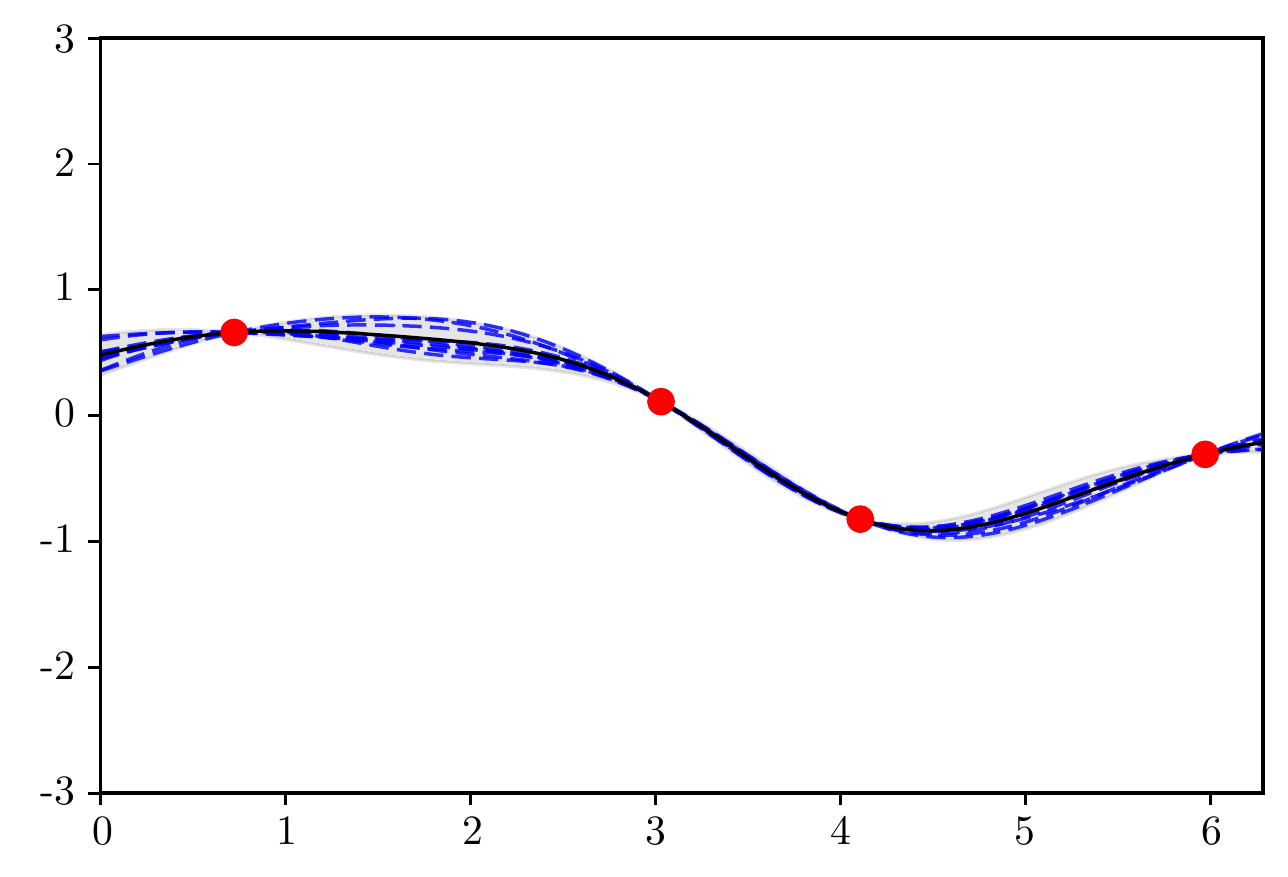}
          \label{fig:RF4}
          } \!\!\!
    \caption{\textit{Distribution of the RF Predictor.} Red dots represent a sinusoidal dataset $y_i=\sin(x_i)$ for $N=4$ points $x_i$ in $[0, 2\pi)$. For selected $P$ and $\lambda$, we sample ten RF predictors (blue dashed lines) and compute empirically the average RF predictor (black lines) with $\pm 2$ standard deviations intervals (shaded regions).}
    \label{fig:RF-predictor}
\end{figure*}

\begin{definition}[Random Feature Predictor]
Consider a kernel $ K:\mathbb{R}^{d}\times\mathbb{R}^{d}\to\mathbb{R}$, a ridge $ \lambda > 0 $, and random features $ f^{(1)}, \ldots, f^{(P)} $ sampled i.i.d. from a centered Gaussian Process of covariance $ K $. Let $\hat \theta$ be the optimal solution to \eqref{eq:mse-reg} taking $ \phi^{(j)} = f^{(j)} $. The Random Feature predictor with ridge $\lambda$ is the random function $\frfl: \mathbb{R}^{d} \to \mathbb{R}$ defined by
\begin{equation} \label{eq:rf-predictor}
  \frfl(x)={1\over\sqrt{P}}\sum_{j=1}^{P}\hat\theta_{j}f^{(j)}(x).
\end{equation}
\end{definition}

The $ \lambda $-RF can be viewed as an approximation of kernel ridge predictors: observing from \eqref{eq:opt-fun} that $ \frfl $ only depends on the scalar product
$ K_{P}(x,x') = \frac{1}{P} \sum_{j=1}^{P}f^{(j)}(x)f^{(j)}(x') $ between datapoints, we see that as $ P \to \infty $, $ K_P \to K $ and hence $ \frfl $ converges \cite{rahimi-08} to a kernel predictor with ridge $ \lambda $ \cite{Scholkopf-1998}, which we call $ \lambda $-KRR predictor.


\begin{definition}[Kernel Predictor] Consider a kernel function $K:\mathbb{R}^{d}\times\mathbb{R}^{d}\to\mathbb{R}$ and a ridge $\lambda > 0$. The Kernel Predictor is the function $\fklambda:\mathbb R^{d} \to \mathbb R$\begin{equation*}\label{eq:kernel-predictor}
  \ \fklambda(x) = K(x,X)(K(X,X)+\lambda \mathrm{I}_N)^{-1}y
\end{equation*}
where $K(X,X)$ is the $N\times N$ matrix of entries $\left(K(X,X)\right)_{ij}=K(x_{i},x_{j})$
and $K(\ \cdot\ ,X):\mathbb{R}^{d} \to \mathbb{R}^{N}$ is the map $\left(K(x,X)\right)_{i}=K(x,x_{i})$.
\end{definition}

\subsection{Bias-Variance Decomposition.}\label{subsec:bias-variance}
Let us assume that there exists a true regression function $f^{*}:\mathbb R^{d} \to \mathbb R$ and a data generating distribution $\mathcal{D}$ on $\mathbb R^{d}$. The risk of a predictor $f:\mathbb R^{d} \to \mathbb{R}$ is measured by the MSE defined as
\begin{equation*}
 \ L(f)=\mathbb{E}_{\mathcal{D}}\left[(f(x)-f^{*}(x))^{2}\right].
\end{equation*}
Let $\pi$ denote the joint distribution of the i.i.d. sample $f^{(1)},...,f^{(P)} $ from the centered Gaussian process with covariance kernel $ K $. The risk of $ \frfl $ can be decomposed into a bias-variance form as
\begin{equation*}\label{eq:pi-decomposition}
 \mathbb{E}_{\pi} \! \left[L(\frfl)\right]\!=\!L\left(\mathbb{E}_{\pi}[\frfl]\right)+\mathbb{E}_{\mathcal{D}} \! \left[ \mathrm{Var}_{\pi}(\frfl(x)) \right] \!.
\end{equation*}
This decomposition into the risk of the \textit{average} RF predictor and of the $ \mathcal{D} $-expectation of its variance will play a crucial role in the next sections.
This is in contrast with the classical bias-variance decomposition in \citet{geman-92}
\begin{equation}\label{eq:D-decomposition}
 \mathbb{{E}}_{\mathcal{D}^{\otimes N}}[L(f)]=L(\mathbb{E}_{\mathcal{D}^{\otimes N}}[f])+\mathbb{E}_{\mathcal{D}}[\mathrm{Var}_{\mathcal{D}^{\otimes N}}[f(x)]] \notag
\end{equation}
where $\mathcal{D}^{\otimes N}$ denotes the joint distribution on $x_{1},...,x_{N}$, sampled i.i.d. from $ \mathcal{D} $. Note that in our decomposition no probabilistic assumption is made on the data, which is fixed.

\subsection{Additional Notation}
In this paper, we consider a fixed dataset $(X,y)$ with distinct data points and a kernel $K$ (i.e. a positive definite symmetric function $ \RR^d \times \RR^d \to \RR $). We denote by $ \| y \|_{K^{-1}} $ the inverse kernel norm of the labels defined as $ y^T (K(X, X))^{-1} y $.

Let $UDU^{T}$ be the spectral decomposition of the kernel matrix $K(X,X)$, with $D=\mathrm{diag} (d_1, \ldots, d_N)$. Let $D^{\frac 1 2}=\mathrm{diag} (\sqrt{d_1}, \ldots, \sqrt{d_N})$ and set $K^{\frac{1}{2}} = UD^{\frac{1}{2}}U^{T}$. The law of the (random) data matrix $ F $ is now that of $\frac{1}{\sqrt{P}}K^{\frac{1}{2}}W^{T}$ where $W$ is a $P \times N$ matrix of i.i.d. standard Gaussian entries, so that $\mathbb E [F F^{T}] = K(X,X)$.

We will denote by $\gamma = \frac{P}{N}$ the parameter-to-datapoint ratio: the \emph{underparameterized regime} corresponds to $\gamma < 1$, while the \emph{overparameterized regime} corresponds to $\gamma \geq 1$. In order to stress the dependence on the ratio parameter $\gamma $, we write $ \frflg $ instead of $ \frfl $.


\section{First Observations}
\label{sec:first-observations}
The distribution of the RF predictor features a variety of behaviors depending on $ \gamma $ and $ \lambda $, as displayed in \autoref{fig:RF-predictor}. In the underparameterized regime $P < N$, sample RF predictors induce some \textit{implicit regularization} and do not interpolate the dataset (\ref{fig:RF1}); at the interpolation threshold $ P = N $, RF predictors interpolate the dataset but the variance explodes when there is no ridge (\ref{fig:RF2}), however adding some ridge suppresses variance explosion (\ref{fig:RF3}); in the overparameterized regime $ P \geq N $ with large $ P $, the variance vanishes thus the RF predictor converges to its average (\ref{fig:RF4}). We will investigate the average RF predictor (solid lines) in detail in Section \ref{sec:average-RF} and study its variance in Section \ref{sec:variance-RF}.


We start by characterizing the distribution of the RF predictor as a Gaussian mixture:
\begin{proposition}\label{gaussian-mixture}
Let $\frflg(x) $ be the random features predictor as in \eqref{eq:rf-predictor} and let $\hat{y} = F \hat{\theta}$ be the prediction vector on training data, i.e. $\hat{y}_i = \frflg(x_i)$. The process $ \frflg $ is a mixture of Gaussians: conditioned on $ F $, we have that $ \frflg $ is a Gaussian process. The mean and covariance of $ \frflg $ conditioned on $ F $ are given by
\begin{align}
& \mathbb E[\frflg(x) | F]  = K(x,X)K(X,X)^{-1}\hat{y}, \label{eq:mean-gaussian-mixture-main} \\
& \mathrm{Cov}[\frflg(x),\frflg(x') | F]  = \frac{\| \hat{\theta}\|^{2}}{P}\tilde{K}(x, x'), \label{eq:cov-gaussian-mixture-main}
\end{align}
with $\tilde{K}(x, x') = K(x,x')-K(x,X)K(X,X)^{-1}K(X,x')$ denoting the posterior covariance kernel.
\end{proposition}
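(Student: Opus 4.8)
The plan is to exploit the fact that the data matrix $F$ is a measurable function of the training-point values of the features, so that conditioning on $F$ amounts to conditioning each Gaussian process $f^{(j)}$ on its values on $X$. Concretely, the $j$-th column of $F$ is $F_{:,j} = \frac{1}{\sqrt{P}}[f^{(j)}(x_1),\dots,f^{(j)}(x_N)]^T$, which is a function of $f^{(j)}$ alone; since the $f^{(j)}$ are i.i.d., conditionally on $F$ they remain independent, with $f^{(j)}$ distributed as its Gaussian-process posterior given the vector $\sqrt{P}\,F_{:,j}$. Note also that $\hat{\theta} = F^{T}(FF^{T}+\lambda I_N)^{-1}y$ is $F$-measurable, hence deterministic conditionally on $F$.

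Next I would invoke the standard Gaussian-process conditioning formula, which applies since $K(X,X)$ is invertible (the data points are distinct and $K$ is positive definite): conditionally on $F$, one may write
\[
  f^{(j)}(x) = K(x,X)K(X,X)^{-1}\sqrt{P}\,F_{:,j} + g^{(j)}(x),
\]
where the $g^{(j)}$ are i.i.d.\ centered Gaussian processes with covariance $\tilde{K}(x,x') = K(x,x') - K(x,X)K(X,X)^{-1}K(X,x')$, jointly independent of $F$. Substituting into \eqref{eq:rf-predictor} and using the $F$-measurability of $\hat{\theta}$ gives
\[
  \frflg(x) = K(x,X)K(X,X)^{-1}\Bigl(\textstyle\sum_{j=1}^{P}\hat{\theta}_j F_{:,j}\Bigr) + \frac{1}{\sqrt{P}}\sum_{j=1}^{P}\hat{\theta}_j\, g^{(j)}(x).
\]
The first summand is $F$-measurable and equals $K(x,X)K(X,X)^{-1}F\hat{\theta} = K(x,X)K(X,X)^{-1}\hat{y}$, which is \eqref{eq:mean-gaussian-mixture-main}. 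The second summand is, conditionally on $F$, a finite linear combination of independent centered Gaussian processes with deterministic coefficients, hence itself a centered Gaussian process; this already shows that $\frflg$ is a Gaussian process conditionally on $F$, i.e.\ a Gaussian mixture. Its conditional covariance is $\frac{1}{P}\sum_{j,k}\hat{\theta}_j\hat{\theta}_k\,\mathrm{Cov}\bigl(g^{(j)}(x),g^{(k)}(x')\mid F\bigr) = \frac{1}{P}\sum_{j}\hat{\theta}_j^{2}\,\tilde{K}(x,x') = \frac{\|\hat{\theta}\|^{2}}{P}\tilde{K}(x,x')$, which is \eqref{eq:cov-gaussian-mixture-main}.

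The only step requiring genuine care is the first one: verifying that conditioning on $F$ decouples into independent per-feature posteriors and, in particular, that the residual processes $g^{(j)}$ are jointly independent of $F$ (not merely of their own training-point values). This holds because the map $f^{(j)}\mapsto(\sqrt{P}\,F_{:,j},\,g^{(j)})$ pushes the Gaussian-process prior forward to a product of a Gaussian vector and an independent Gaussian process, and because distinct features are independent. Everything else is bookkeeping; the one thing to watch is the normalization $F_{ij} = \frac{1}{\sqrt{P}}\phi^{(j)}(x_i)$, so that the various $P^{-1/2}$ factors combine to give exactly $\frac{\|\hat{\theta}\|^{2}}{P}$ in the covariance and cancel correctly in the mean.
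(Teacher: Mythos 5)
Your proposal is correct and follows essentially the same route as the paper's proof: condition each feature $f^{(j)}$ on its training-point values (which is equivalent to conditioning on $F$ by independence across features), apply the standard Gaussian-process posterior formulas, and sum the per-feature contributions with the $F$-measurable coefficients $\hat{\theta}_j$. Your explicit mean-plus-independent-residual decomposition is just a slightly more detailed justification of the same conditioning step the paper cites from classical results on Gaussian conditional distributions.
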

The proof of Proposition \ref{gaussian-mixture} relies on the fact that $f^{(j)} $ conditioned on $ \left(f^{(j)}(x_{i})\right)_{i=1,\ldots,N}$ is a Gaussian Process.

Note that \eqref{eq:mean-gaussian-mixture-main} and \eqref{eq:cov-gaussian-mixture-main} depend on $ \lambda $ and $ P $ through $ \hat{y} $ and $ \| \hat{\theta}\|^{2} $; in fact, as the proof shows, these identities extend to the ridgeless case $ \lambda \searrow 0 $. For the ridgeless case, when one is in the overparameterized regime ($ P \geq N $), one can (with probability one) fit the labels $ y $ and hence $ \hat y = y $:
\begin{corollary}\label{cor:average-ridgless-overparametrized}
When $ P \geq N $, the average ridgeless RF predictor is equivalent to the ridgeless KRR predictor \begin{equation}
\EE \left[ \frfrlg (x) \right] = K(x,X)K(X,X)^{-1}y = \fkrl (x). \notag
\end{equation}
\end{corollary}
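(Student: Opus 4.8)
The plan is to derive this as a direct consequence of Proposition \ref{gaussian-mixture}, specifically the mean formula \eqref{eq:mean-gaussian-mixture-main}, together with the observation that in the overparameterized ridgeless regime the training predictions are exact. First I would take $\lambda \searrow 0$ in \eqref{eq:mean-gaussian-mixture-main}; as noted in the remark following Proposition \ref{gaussian-mixture}, the conditional mean identity $\EE[\frflg(x) \mid F] = K(x,X)K(X,X)^{-1}\hat y$ persists in the ridgeless limit, so that $\EE[\frfrlg(x) \mid F] = K(x,X)K(X,X)^{-1}\hat y$ where now $\hat y = F\hat\theta$ is the ridgeless interpolation prediction on the training set.

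Next I would argue that, with probability one over the sampling of $F$, we have $\hat y = y$ when $P \geq N$. Recall $F = \frac{1}{\sqrt P} K^{\frac12} W^T$ with $W$ a $P\times N$ matrix of i.i.d.\ standard Gaussians; since $K(X,X)$ is positive definite (the data points are distinct and $K$ is a kernel), $K^{\frac12}$ is invertible, and $W^T$ has rank $N$ almost surely when $P \geq N$, so $F$ has full column rank $N$ almost surely. Hence $FF^T$ is invertible and the ridgeless solution $\hat\theta = \lim_{\lambda\searrow 0} F^T(FF^T + \lambda I_N)^{-1} y = F^T(FF^T)^{-1}y$ satisfies $F\hat\theta = FF^T(FF^T)^{-1}y = y$, i.e.\ $\hat y = y$.

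Substituting $\hat y = y$ into the conditional mean gives $\EE[\frfrlg(x)\mid F] = K(x,X)K(X,X)^{-1} y$, which is a deterministic quantity independent of $F$. Taking the expectation over $F$ (by the tower property) then yields $\EE[\frfrlg(x)] = K(x,X)K(X,X)^{-1} y$, and the right-hand side is exactly $\fkrl(x)$ by the definition of the ridgeless Kernel Predictor. This completes the proof.

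The only genuinely delicate point is the justification of interchanging the limit $\lambda\searrow 0$ with the conditional expectation and confirming that \eqref{eq:mean-gaussian-mixture-main} extends to $\lambda = 0$; but this is already asserted in the discussion preceding the corollary (``as the proof shows, these identities extend to the ridgeless case''), so I would simply invoke it. Everything else is a routine almost-sure rank argument plus the tower property, so I do not anticipate any real obstacle.
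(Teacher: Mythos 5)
Your proposal is correct and follows essentially the same route as the paper: combine the conditional mean formula of Proposition \ref{gaussian-mixture} (which, as the paper notes, persists in the ridgeless limit), the almost-sure identity $\hat y = y$ when $P\geq N$ (via the full-rank argument for $F=\frac{1}{\sqrt P}K^{\frac12}W^T$, though note $F$ is $N\times P$ so the relevant property is full \emph{row} rank $N$, making $FF^T$ invertible), and the tower property. No gap; the argument is complete.
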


This corollary shows that in the overparameterized case, the ridgeless RF predictor is an unbiased estimator of the ridgeless kernel predictor. The difference between the expected loss of ridgeless RF predictor and that of the ridgeless KRR predictor is hence equal to the variance of the RF predictor. As will be demonstrated in this article, outside of this specific regime, a systematic bias appears, which reveals an implicit regularizing effect of random features.

\begin{figure*}[t]
    \centering 
      \!\!
      \subfloat[Evolution of $\tilde{\lambda}$]{
        \includegraphics[width=0.495\textwidth]{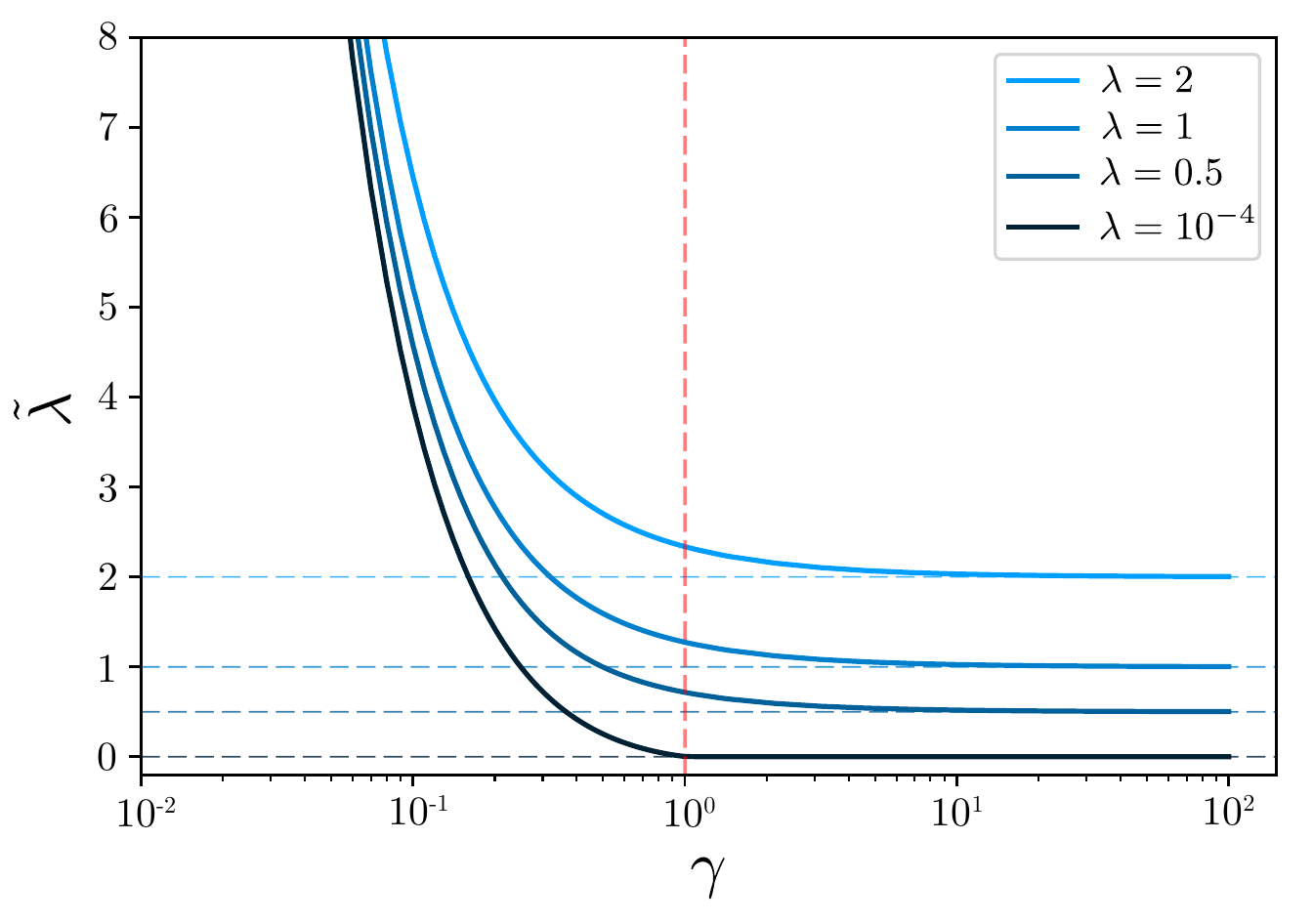}
        \label{fig:eff-ridge}
     }\!\!\!\!\!\!
    \subfloat[Average $\lambda$-RF predictor vs. $\tilde{\lambda}$-KRR]{
        \includegraphics[width=0.495\textwidth]{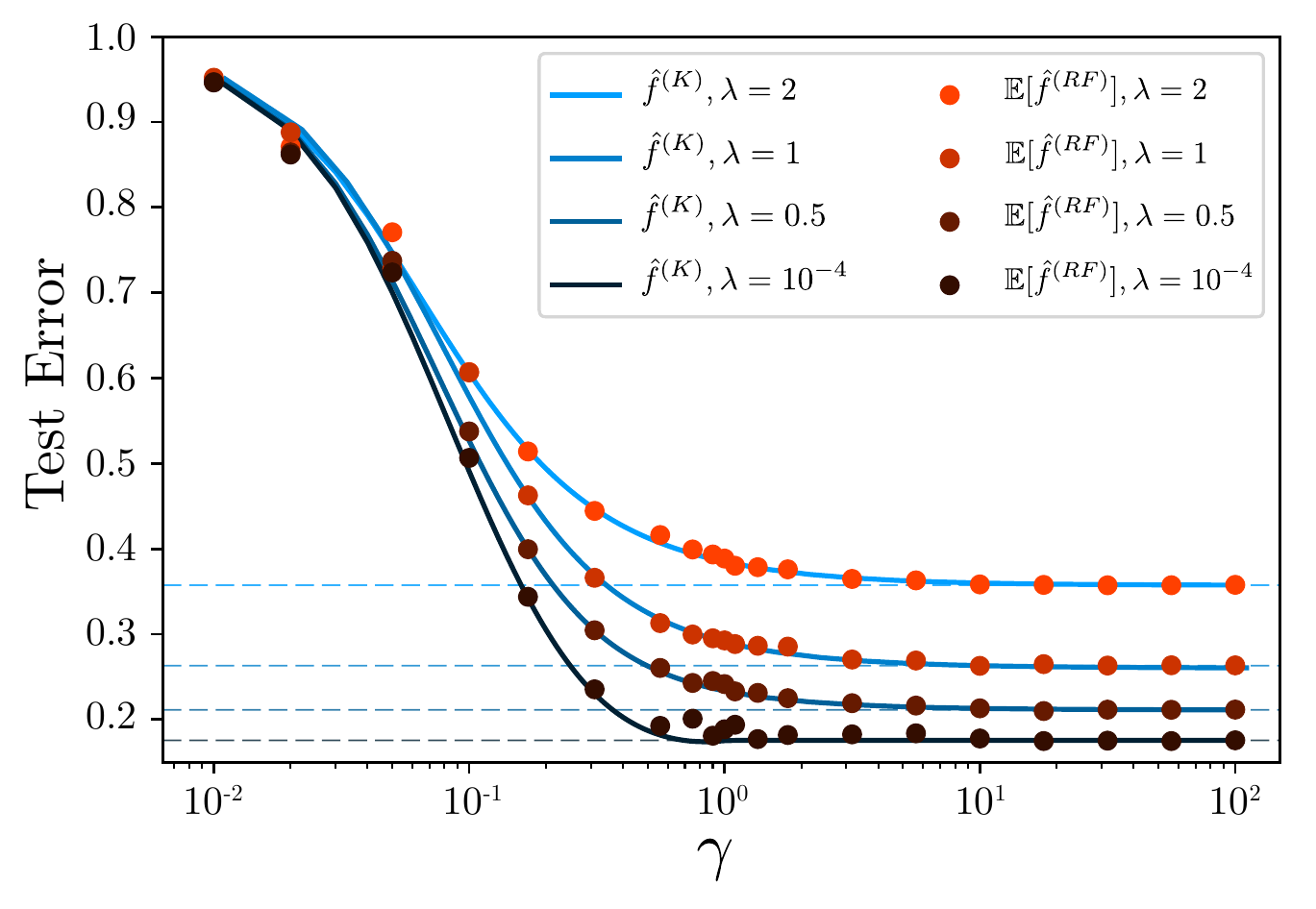}%
        \label{fig:averageRF-KRR}
      } \!\!
    \caption{\textit{Comparison of the test errors of the average $\lambda$-RF predictor and the $\tilde{\lambda}$-KRR predictor.} We train the RF predictors on $ N=100 $ MNIST data points where $ K  $ is the RBF kernel, i.e. $ K (x, x') = \exp \left( -\|x - x'\|^2 / \ell \right) $. We approximate the average $\lambda$-RF on $100$ random test points for various ridges $\lambda$.  In \textbf{(a)}, given $\gamma$ and $\lambda$, the effective ridge $ \tilde \lambda $ is computed numerically using \eqref{eq:defining-effective-ridge}.  In \textbf{(b)}, the test errors of the $\tilde{\lambda}$-KRR predictor (blue lines) and the empirical average of the ${\lambda}$-RF predictor (red dots) agree perfectly.}
    \label{fig:average-RF-main}
  \end{figure*}

\section{Average Predictor}
\label{sec:average-RF}
In this section, we study the average RF predictor $\mathbb E [ \frflg ]$. As shown by Corollary \ref{cor:average-ridgless-overparametrized} above, in the ridgeless overparmeterized regime, the RF predictor is an unbiased estimator of the ridgeless kernel predictor. However, in the presence of a non-zero ridge, we see the following \emph{implicit regularization effect}: the average $\lambda$-RF predictor is close to the $\tilde{\lambda}$-KRR predictor for an effective ridge $\tilde{\lambda} > \lambda $ (in other words, sampling a finite number $ P $ of features amounts to taking a greater kernel ridge $ \Lt $).
%
%


\begin{theorem}\label{average-rf}
For $N,P>0$ and $\lambda>0$, we have
\begin{equation}\label{eq:average-rf-bound}
\left|\mathbb{E} [ \frflg(x) ] - \fkl (x)\right|\leq\frac{c\sqrt{K(x,x)}\left\Vert y\right\Vert _{K^{-1}}}{P}
\end{equation}
where the effective ridge $\tilde{\lambda}(\lambda, \gamma) > \lambda$ is the unique
positive number satisfying
\begin{align}\label{eq:defining-effective-ridge}
\tilde{\lambda} & =\lambda+\frac{\tilde{\lambda}}{\gamma}\frac{1}{N}\sum_{i=1}^{N}\frac{d_{i}}{\tilde{\lambda}+d_{i}},
\end{align}
and where $ c > 0 $ depends on $ \lambda, \gamma $, and $ \frac 1 N \mathrm{Tr} K(X, X) $ only.
\end{theorem}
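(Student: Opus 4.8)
\textbf{Proof strategy for Theorem \ref{average-rf}.}

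The plan is to compute $\mathbb{E}[\frflg(x)]$ using the Gaussian-mixture structure of Proposition \ref{gaussian-mixture}, and then to show that the resulting expression is a KRR predictor with a \emph{random} ridge which concentrates around the deterministic $\Lt$ solving \eqref{eq:defining-effective-ridge}. Taking the expectation of \eqref{eq:mean-gaussian-mixture-main} over $F$, we get
\begin{equation*}
\mathbb{E}[\frflg(x)] = K(x,X)K(X,X)^{-1}\,\mathbb{E}[\hat y],
\end{equation*}
so everything reduces to understanding the expected in-sample prediction vector $\mathbb{E}[\hat y] = \mathbb{E}\!\left[ FF^T (FF^T+\lambda I_N)^{-1} \right] y$. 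Writing $F = \tfrac1{\sqrt P} K^{\frac12}W^T$ with $W$ a $P\times N$ i.i.d. standard Gaussian matrix, $FF^T = \tfrac1P K^{\frac12}W^TW K^{\frac12}$ is a general Wishart-type matrix, and the object to control is the \emph{resolvent} $(FF^T+\lambda I_N)^{-1}$, or equivalently, after conjugating by $K^{-\frac12}$, the resolvent of $\tfrac1P W^TW$ deformed by $K$. The key quantity is the (matrix) Stieltjes transform $\mathbb{E}[(\tfrac1P W^TW K + \lambda I)^{-1}]$, which I would show is close to $(K/\kappa + \lambda I)^{-1}$ for a scalar correction $\kappa = \Lt/\lambda$; this is exactly the statement that the effective ridge rescales the kernel spectrum by a factor depending on how much of the kernel ``mass'' sits above versus below $\Lt$, which is what the fixed-point equation \eqref{eq:defining-effective-ridge} encodes. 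Once this resolvent approximation is established with an $\mathcal O(1/P)$ error in an appropriate norm, substituting back gives $K(x,X)K(X,X)^{-1}\mathbb{E}[\hat y] \approx K(x,X)(K(X,X)+\Lt I)^{-1}y = \fkl(x)$, and tracking constants in the error yields the bound with $\sqrt{K(x,x)}$ from $\|K(x,X)K(X,X)^{-1}\|$-type estimates and $\|y\|_{K^{-1}}$ from pulling out $K^{-\frac12}y$.

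The main obstacle is obtaining the \emph{non-asymptotic} resolvent/Stieltjes transform estimate: classical random matrix theory gives the fixed-point equation \eqref{eq:defining-effective-ridge} only in the $N,P\to\infty$ limit via asymptotic freeness of $W^TW$ and $K$, whereas here I need a quantitative $\mathcal O(1/P)$ bound valid for finite $N,P$ and for an \emph{arbitrary} kernel matrix $K(X,X)$ (no assumption on its spectrum, no i.i.d. data assumption). I would establish this by a self-consistent / leave-one-out argument on the columns of $W$: expand $(\tfrac1P W^TW K + \lambda I)^{-1}$ one Gaussian column at a time using the Sherman--Morrison identity, use Gaussian concentration (Hanson--Wright) to replace quadratic forms $w_j^T M w_j$ by $\tfrac1P\Tr M$ up to fluctuations of order $P^{-1/2}$, and show that summing these fluctuations and closing the resulting approximate fixed-point equation costs only $\mathcal O(1/P)$ after the bias cancellation inherent in Gaussian quadratic forms. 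The relevant scalar fixed point is that the map defining $\Lt$ is a contraction (indeed $\Lt \mapsto \lambda + \tfrac{\Lt}{\gamma N}\sum_i \tfrac{d_i}{\Lt+d_i}$ is increasing and concave in a way that forces a unique $\Lt>\lambda$), so small perturbations of the equation produce small perturbations of the solution; this also gives existence, uniqueness and $\Lt>\lambda$ for free (the correction term is strictly positive).

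To organize the write-up I would first prove the deterministic facts about \eqref{eq:defining-effective-ridge} (existence, uniqueness, $\Lt>\lambda$, and Lipschitz dependence of $\Lt$ on the coefficients of the fixed-point map), then state and prove the non-asymptotic Wishart resolvent lemma as a standalone result (this is the ``non-asymptotic variant'' advertised in the Related Works paragraph on general Wishart matrices, with the detailed proof deferred to the Supplementary Material), and finally combine the two: $\mathbb{E}[\frflg(x)] - \fkl(x) = K(x,X)K(X,X)^{-1}\big(\mathbb{E}[\hat y] - (K(X,X)+\Lt I)^{-1}K(X,X)\,y\big)$, bound the bracketed vector by the resolvent lemma, and bound the prefactor using Cauchy--Schwarz together with $\|K(x,X)K(X,X)^{-\frac12}\| \le \sqrt{\tilde K(x,x)} \le \sqrt{K(x,x)}$. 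The constant $c$ comes out depending only on $\lambda$, $\gamma$, and $\tfrac1N\Tr K(X,X)$ because those are the only data-dependent quantities entering the scalar fixed-point analysis and the operator-norm bounds on the Wishart resolvent after the $K^{-\frac12}$ conjugation.
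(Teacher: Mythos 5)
Your proposal follows essentially the same route as the paper's proof: reduce to $\mathbb{E}[A_\lambda]$ via the Gaussian-mixture proposition, diagonalize $K$, use a leave-one-out/Sherman--Morrison expansion to express the diagonal entries through Gaussian quadratic forms $g_i$, concentrate these around the Stieltjes transform and then around the deterministic fixed point $\tilde m = 1/\Lt$ (with the stability of the fixed-point map giving uniqueness and the perturbation bound), and assemble the final estimate by Cauchy--Schwarz in the $K^{-1}$ inner product to produce the $\sqrt{K(x,x)}\,\|y\|_{K^{-1}}$ factors. The only differences are cosmetic (Hanson--Wright in place of Wick's formula, and a matrix-resolvent phrasing of what the paper does entrywise), so the plan is correct and matches the paper.
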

\begin{proof}
(Sketch; see Supp. Mat. for details) Set $A_{\lambda}=F(F^{T}F+\lambda I_{P})^{-1}F^{T}$. The vector of the predictions on the training set is given by $\hat{y}=A_{\lambda}y$ and the expected predictor is given by
\[
\mathbb{E}\left[\hat{f}_{\lambda,\gamma}^{(RF)}(x)\right]=K(x,X)K(X,X)^{-1}\mathbb{E}\left[A_{\lambda}\right]y.
\]

By a change of basis, we may assume the kernel Gram matrix to be diagonal,
i.e. $K(X,X)=\mathrm{diag}(d_1 , \ldots, d_N) $. In this basis $\mathbb{E}\left[A_{\lambda}\right]$
turns out to be diagonal too. For each $ i = 1,\ldots, N$ we can isolate the contribution of the $i$-th row of $F$: by the Sherman-Morrison formula, we have $ (A_\lambda)_{i i} = \frac{d_{i}g_{i}}{1+d_{i}g_{i}} $, where  \[
g_i = \frac 1 P W_i^T ( F_{(i)}^T F_{(i)} + \lambda \mathrm{I}_P)^{-1} W_i,
\] with $ W_i $ denoting the $i$-th column of $W=\sqrt{P} F^T K^{-\frac{1}{2}} $ and $ F_{(i)} $ being obtained by removing the $i$-th row of $F$. The $ g_i $'s are all within $ \mathcal O (1 / \sqrt P) $ distance to the Stieltjes transform
\[
	m_{P}(-\lambda)=\frac{1}{P}\mathrm{Tr}\left(F^{T}F+\lambda \mathrm{I}_{P}\right)^{-1}.
\]
By a fixed point argument, the Stieltjes transform $m_{P}(-\lambda)$ is itself within $ \mathcal O (1/\sqrt P) $ distance to the deterministic value $\tilde{m}(-\lambda)$, where $ \tilde m $ is the unique positive solution to
\[
\gamma=\frac{1}{N}\sum_{i=1}^{N}\frac{d_{i}\tilde{m}(z)}{1+d_{i}\tilde{m}(z)}-\gamma z\tilde{m}(z).
\]
(The detailed proof in the Supp. Mat. uses non-asymptotic variants of arguments found in \cite{bai-2008}; the constants in the $ \mathcal O $ bounds are in particular made explicit).

As a consequence, from the above results, we obtain
\[
\EE \left[(A_{\lambda})_{ii}\right] = \mathbb E \left[\frac{d_{i}g_{i}}{1+d_{i}g_{i}}\right] \approx\frac{d_{i}\tilde{m}}{1+d_{i}\tilde{m}}=\frac{d_{i}}{\tilde{\lambda}+d_{i}},
\]
revealing the effective ridge $\tilde{\lambda}=\nicefrac{1}{\tilde{m}(-\lambda)}$.

This implies that $\mathbb{E}\left[A_{\lambda}\right]\approx K(X,X)(K(X,X)+\tilde{\lambda}\mathrm{I}_{N})^{-1}$
and
\[
\mathbb{E}\!\left[\hat{f}_{\lambda,\gamma}^{(RF)}(x)\right]\!\approx \!K(x,X)(K(X,X)+\tilde{\lambda}\mathrm{I}_{N})^{-1}y\!=\!\fkl(x),
\]
yielding the desired result.
%
\end{proof}
Note that asymptotic forms of equations similar to the ones in the above proof appear in different settings \cite{dobriban2018, mei-19, liu-20}, related to the study of the Stieltjes transform of the product of asymptotically free random matrices.

While the above theorem does not make assumptions on $ P, N $, and $ K $, the case of interest is when the right hand side $\frac{cK(x,x)\left\Vert y\right\Vert _{K^{-1}}}{P}$ is small. The constant $ c > 0 $ is uniformly bounded whenever $ \gamma $ and $ \lambda $ are bounded away from $ 0 $ and $ \frac 1 N \Tr K(X,X) $ is bounded from above. As a result, to bound the right hand side of \eqref{eq:average-rf-bound}, the two quantities we need to bound are $ T = \frac 1 N \Tr K(X,X) $ and $ \ynik $.
\begin{itemize}
\item
The boundedness of $ T $ is guaranteed for kernels that are translation-invariant, i.e. of the form $K(x,y)=k(\left\Vert x-y\right\Vert )$: in this case, one has $ T = k(0) $.
\item
If we assume $\EE_{\mathcal{D}}\left[K(x,x)\right]<\infty$ (as is commonly done in the literature \cite{rudi-17}), $ T $ converges to $ \mathbb{E}_{\mathcal{D}}\left[K(x,x)\right] $ as $ N \to \infty $ (assuming i.i.d. data points).
\item
For $ \ynik $, under the assumption that the labels are of the form $y_{i}=f^{*}(x_{i})$ for
a true regression function $ f^{*} $ lying in Reproducing Kernel Hilbert Space (RKHS) $ \mathcal{H} $ of the kernel $K$ \cite{Scholkopf-1998}, we have $ \ynik \leq \| f^{*} \|_{\mathcal H} $.
\end{itemize}
Our numerical experiments in Figure (\ref{fig:averageRF-KRR}) show excellent agreement between the test error of the expected $\lambda$-RF predictor and the one of the $\tilde \lambda$-KRR predictor suggesting that the two functions are indeed very close, even for small $N,P$.


Thanks to the implicit definition of the effective ridge $ \tilde \lambda $ (which depends on $ \lambda, \gamma, N $ and on the eigenvalues $ d_i $ of $ K(X, X) $) we obtain the following:
\begin{proposition}
\label{prop:fact-effective-ridge}The effective ridge $ \tilde{\lambda} $
satisfies the following properties:
\begin{enumerate}
\item for any $\gamma>0$, we have $ \lambda <  \tilde{\lambda}(\lambda,\gamma) \leq \lambda+\frac{1}{\gamma} T $;
\item the function $\gamma\mapsto\tilde{\lambda}(\lambda,\gamma)$ is decreasing;
\item for $\gamma>1$, we have $\tilde{\lambda}\leq \frac{\gamma}{\gamma-1} \lambda$;
\item for $\gamma<1$, we have $\tilde{\lambda}\geq \frac{1-\sqrt{\gamma}}{\sqrt{\gamma}} \min_{i} d_i $.
\end{enumerate}
\end{proposition}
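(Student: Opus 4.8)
The plan is to reduce everything to the scalar function $S(\mu):=\frac{1}{N}\sum_{i=1}^{N}\frac{d_i}{\mu+d_i}$, which --- since $K$ is positive definite, so all $d_i>0$ --- is smooth and strictly decreasing on $(0,\infty)$ with $0<S(\mu)<1$ there. The defining relation \eqref{eq:defining-effective-ridge} reads $\tilde{\lambda}=\lambda+\frac{\tilde{\lambda}}{\gamma}S(\tilde{\lambda})$, equivalently $\lambda=\tilde{\lambda}\bigl(1-\tfrac{1}{\gamma}S(\tilde{\lambda})\bigr)$; since $\lambda,\tilde{\lambda}>0$ this immediately yields the inequality $S(\tilde{\lambda})<\gamma$, which I will invoke repeatedly.

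Items 1, 3, and 4 then follow by elementary manipulation of this relation. For (1): the correction $\frac{\tilde{\lambda}}{\gamma}S(\tilde{\lambda})$ is strictly positive, so $\tilde{\lambda}>\lambda$; and $\frac{\tilde{\lambda}}{\gamma}S(\tilde{\lambda})=\frac{1}{\gamma N}\sum_i\frac{\tilde{\lambda}d_i}{\tilde{\lambda}+d_i}\le\frac{1}{\gamma N}\sum_i d_i=\frac{T}{\gamma}$. For (3): when $\gamma>1$, using $S(\tilde{\lambda})<1$ gives $\tilde{\lambda}\le\lambda+\frac{\tilde{\lambda}}{\gamma}$, and rearranging is legitimate since $1-\tfrac{1}{\gamma}>0$. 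For (4): from $S(\tilde{\lambda})<\gamma$ and the termwise bound $S(\tilde{\lambda})\ge\frac{d_{\min}}{\tilde{\lambda}+d_{\min}}$ (with $d_{\min}=\min_i d_i$, using that $d\mapsto\frac{d}{\mu+d}$ is increasing) one obtains the sharper estimate $\tilde{\lambda}>\frac{1-\gamma}{\gamma}d_{\min}$, and finally $\frac{1-\gamma}{\gamma}\ge\frac{1-\sqrt{\gamma}}{\sqrt{\gamma}}$ for $\gamma\in(0,1)$ by factoring $1-\gamma=(1-\sqrt{\gamma})(1+\sqrt{\gamma})$.

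For the monotonicity statement (2) I would differentiate implicitly the relation $\Phi(\mu,\gamma):=\mu-\lambda-\frac{\mu}{\gamma}S(\mu)=0$. One computes $\partial_\gamma\Phi=\frac{\mu S(\mu)}{\gamma^{2}}>0$ and $\partial_\mu\Phi=1-\frac{1}{\gamma}R(\mu)$ with $R(\mu):=\frac{1}{N}\sum_i\frac{d_i^{2}}{(\mu+d_i)^{2}}$; since $R(\mu)\le S(\mu)$ termwise (each factor $\frac{d_i}{\mu+d_i}\le1$), at the solution $R(\tilde{\lambda})\le S(\tilde{\lambda})<\gamma$, so $\partial_\mu\Phi(\tilde{\lambda},\gamma)>0$. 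The implicit function theorem then makes $\gamma\mapsto\tilde{\lambda}(\lambda,\gamma)$ differentiable with $\frac{d\tilde{\lambda}}{d\gamma}=-\partial_\gamma\Phi/\partial_\mu\Phi<0$, and the local branches glue to the global solution by the uniqueness asserted in Theorem \ref{average-rf}. (Alternatively, branch-free: for fixed $\mu>0$ the map $\gamma\mapsto\mu(1-S(\mu)/\gamma)$ is strictly increasing, while $\mu\mapsto\mu(1-S(\mu)/\gamma)$ is strictly increasing on the region $\{S(\mu)<\gamma\}$, which contains every solution $\tilde{\lambda}(\lambda,\gamma)$; comparing the defining equations for $\gamma_1<\gamma_2$ forces $\tilde{\lambda}(\lambda,\gamma_1)>\tilde{\lambda}(\lambda,\gamma_2)$.) The only mildly delicate point in the whole proposition is this last one --- one must confirm $\partial_\mu\Phi\ne0$ at the solution, equivalently that $\tilde{\lambda}$ sits on the increasing branch of $\mu\mapsto\mu(1-S(\mu)/\gamma)$, so that the sign of $\frac{d\tilde{\lambda}}{d\gamma}$ is unambiguous; this is exactly what $S(\tilde{\lambda})<\gamma$ together with $R\le S$ provides. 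Everything else is routine.
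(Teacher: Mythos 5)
Your proof is correct. Items (1) and (3) coincide with the paper's argument: the paper derives the upper bound in (1) from its lower bound $\tilde m(-\lambda)\geq(\lambda+T/\gamma)^{-1}$ on the fixed point, which is the same termwise estimate $\frac{\tilde\lambda d_i}{\tilde\lambda+d_i}\leq d_i$ in disguise, and (3) is identical. For (2) the paper performs the same implicit differentiation and arrives at $\partial_\gamma\tilde\lambda\,\bigl(\frac{\lambda}{\tilde\lambda}+\frac{\tilde\lambda}{\gamma N}\sum_i\frac{d_i}{(\tilde\lambda+d_i)^2}\bigr)=-\frac{\tilde\lambda-\lambda}{\gamma}$, whose prefactor is visibly positive; your $\partial_\mu\Phi=1-R(\tilde\lambda)/\gamma$ is algebraically the same quantity, but you supply the justification (nonvanishing of $\partial_\mu\Phi$ via $R\leq S<\gamma$ plus the implicit function theorem, or the branch-free comparison) that the paper leaves implicit when it differentiates $\tilde\lambda$ in $\gamma$ --- a genuine gain in rigor at no cost. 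The real divergence is in (4): the paper exploits concavity of the fixed-point map $f(t)=\lambda+\frac{t}{\gamma}S(t)$ together with $f(0)=\lambda>0$ to deduce $f'(\tilde\lambda)<1$, i.e. $\frac{1}{\gamma N}\sum_i\frac{d_i^2}{(\tilde\lambda+d_i)^2}<1$, and then keeps only the smallest-eigenvalue term to get $\sqrt{\gamma}(\tilde\lambda+d_{\min})>d_{\min}$. You instead read off $S(\tilde\lambda)<\gamma$ directly from $\lambda=\tilde\lambda\bigl(1-S(\tilde\lambda)/\gamma\bigr)>0$ and bound $S$ below termwise, obtaining $\tilde\lambda>\frac{1-\gamma}{\gamma}d_{\min}$; since $\frac{1-\gamma}{\gamma}-\frac{1-\sqrt{\gamma}}{\sqrt{\gamma}}=\frac{1-\sqrt{\gamma}}{\gamma}>0$ for $\gamma\in(0,1)$, this implies the stated inequality. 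Your route for (4) is both shorter and strictly sharper than the paper's; the paper's concavity argument buys nothing extra here beyond re-deriving the strict contraction property $f'(\tilde\lambda)<1$ that is already used elsewhere.
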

The above proposition shows the implicit regularization effect of the RF model: sampling fewer features (i.e. decreasing $ \gamma $) increases the effective ridge $ \Lt $.

Furthermore, as $ \lambda \to 0 $ (ridgeless case), the effective ridge $ \tilde \lambda $ behave as follows:
\begin{itemize}
\item
in the overparameterized regime ($\gamma>1$), $\tilde \lambda $ goes to $0$;
\item
in the underparameterized regime ($\gamma<1$), $ \tilde \lambda $ goes to a limit $ \tilde \lambda_0 > 0 $.
\end{itemize}
These observations match the profile of $\tilde \lambda$ in Figure (\ref{fig:eff-ridge}).

\emph{Remark.}
When $ \lambda \searrow 0 $, the constant $ c $ in our bound \eqref{eq:average-rf-bound} explodes (see Supp. Mat.). As a result, this bound is not directly useful when $\lambda=0$. However, we know from Corollary \ref{cor:average-ridgless-overparametrized} that in the ridgeless overparametrized case ($\gamma>1$), the average RF predictor is equal to the ridgeless KRR predictor. In the underparametrized case ($\gamma<1$), our numerical experiments suggest that the ridgeless RF predictor is an excellent approximation of the $ \tilde \lambda_0 $-KRR predictor.

\begin{figure*}[t!]
    \centering
    \!\!\!
    \subfloat[Ridgeless vs. Ridge]{
        \includegraphics[width=0.33\textwidth]{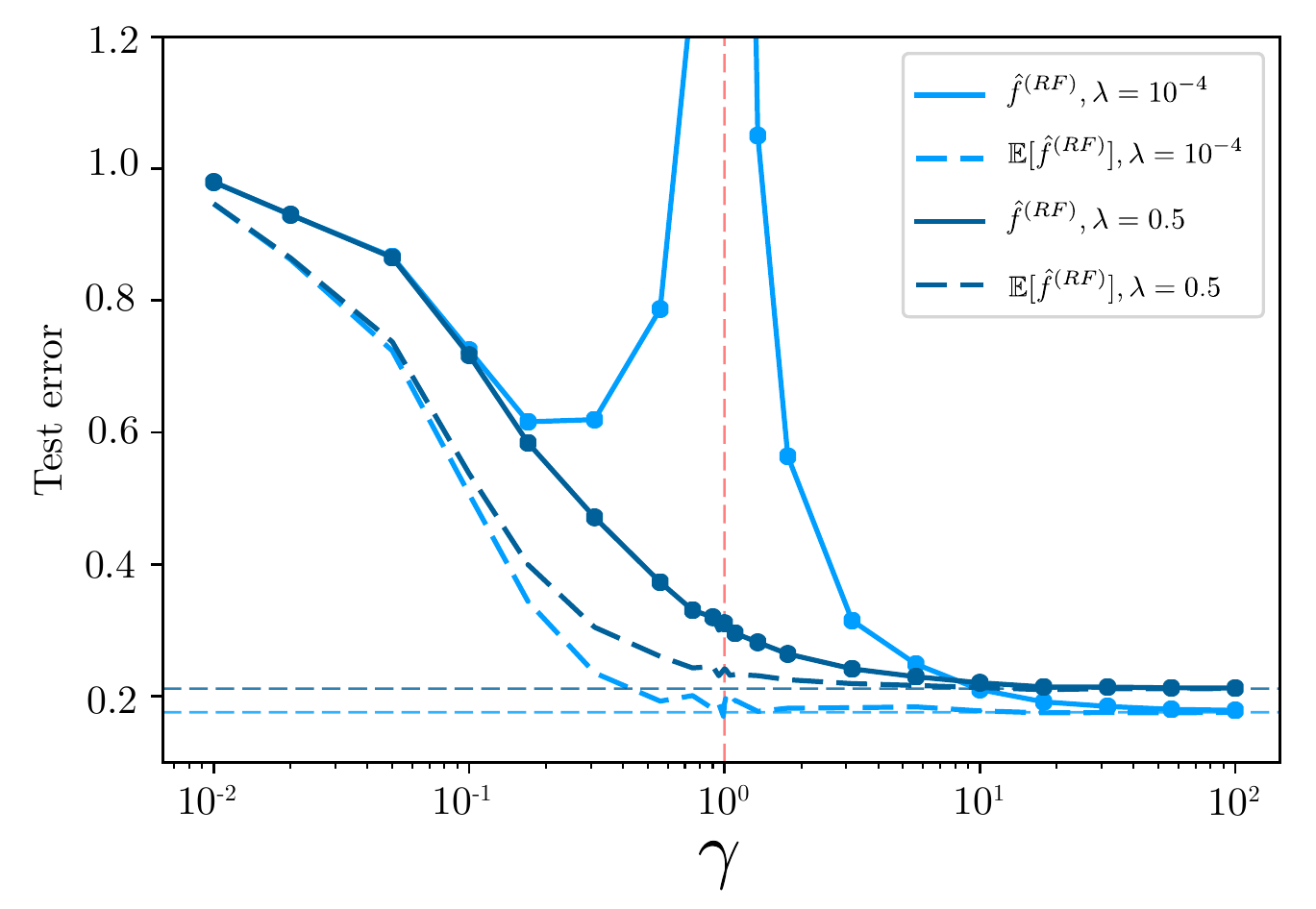}%
        \label{fig:DD-diff-N}
        } \!\!\!
    \subfloat[Variance of $\lambda$-RF]{
        \includegraphics[width=0.33\textwidth]{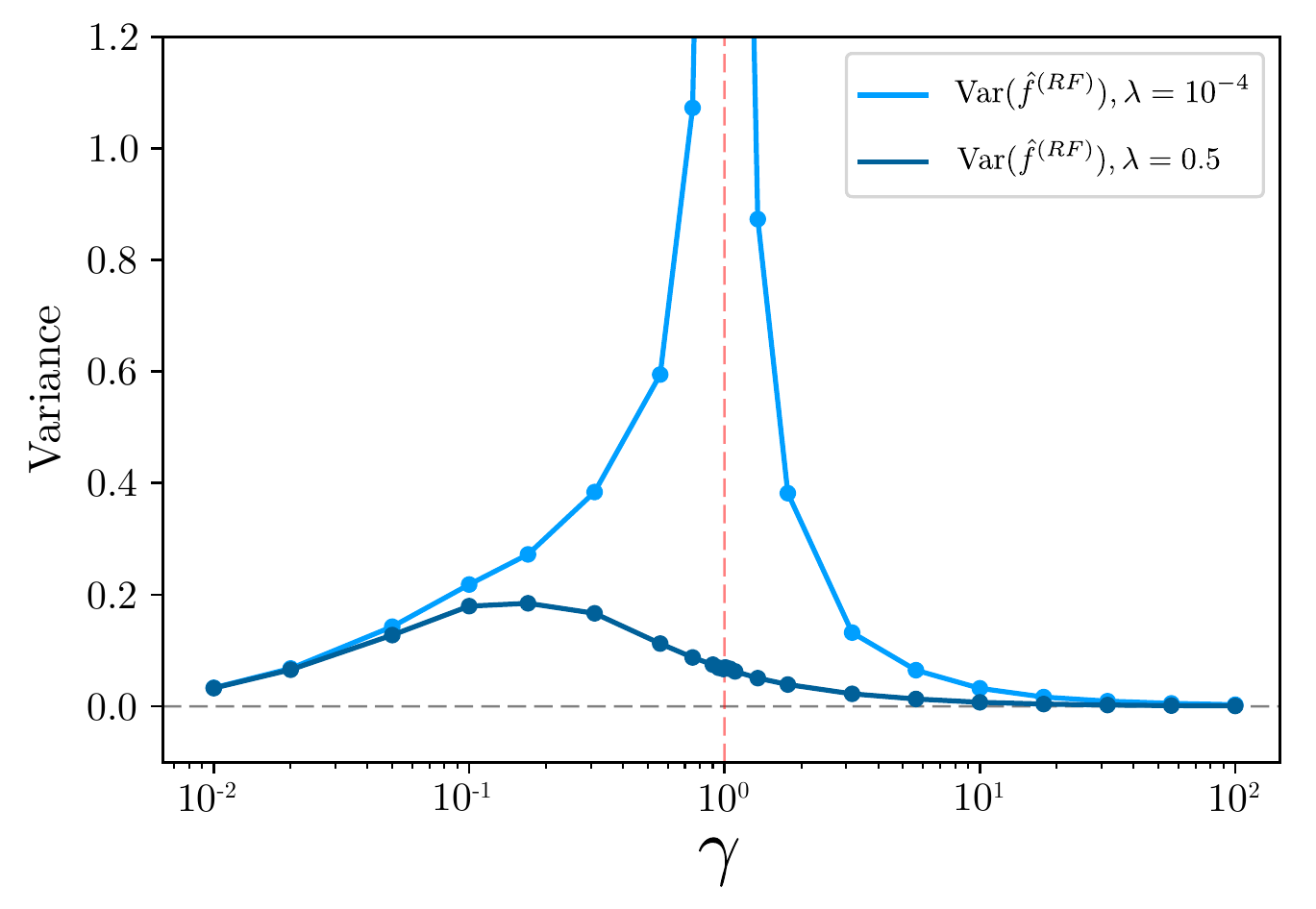}
        \label{fig:variance}
        } \!\!\!
    \subfloat[Evolution of $ \dLt $]{
        \includegraphics[width=0.33\textwidth]{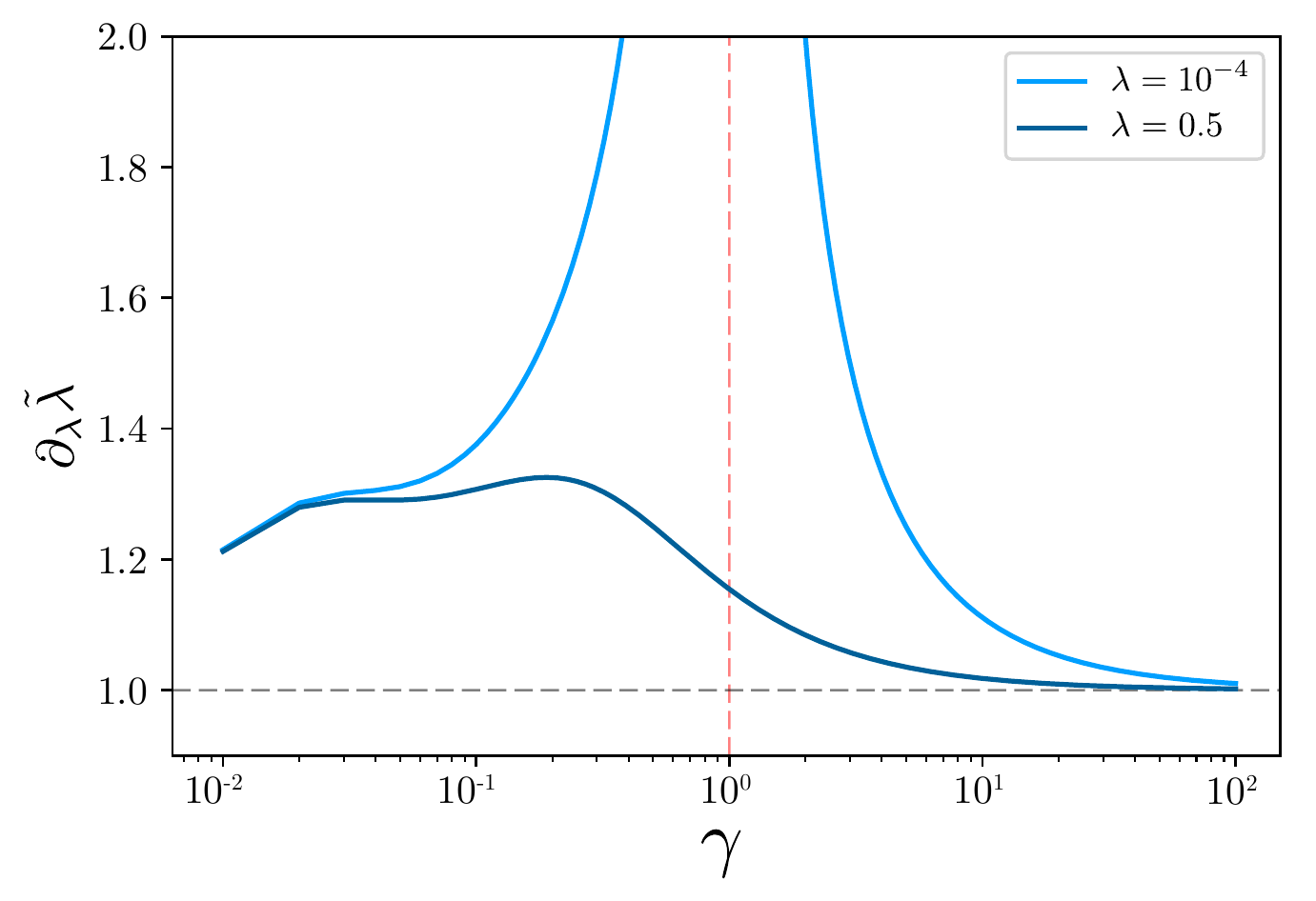}
        \label{fig:effective-ridge-der}
          } \!\!\!
    \caption{\textit{Average test error of the ridgeless vs. ridge $\lambda$-RF predictors.} In \textbf{(a)}, the average test errors of the ridgeless and the ridge RF predictors (solid lines) and the effect of ensembling (dashed lines) for $N=100$ MNIST data points. In \textbf{(b)}, the variance of the RF predictors and in \textbf{(c)}, the evolution of $ \dLt $ in the ridgeless and ridge cases. The experimental setup is the same as in Figure \ref{fig:average-RF-main}.}
    \label{fig:variance-RF-main}
\end{figure*}

\subsection{Effective Dimension}

The effective ridge $ \tilde \lambda $ is closely related to the so-called effective dimension appearing in statistical learning theory. For a linear (or kernel) model with ridge $ \lambda $, the \emph{effective dimension} $ \mathcal{N}(\lambda) \leq N $ is defined as $ \sum_{i=1}^N \frac{d_i}{\lambda+d_i} $ \cite{zhang-03, caponnetto-07}. It allows one to measure the effective complexity of the Hilbert space in the presence of a ridge.

For a given $ \lambda > 0 $, the effective ridge $ \tilde \lambda $ introduced in Theorem \ref{average-rf} is related to the effective dimension $ \mathcal N (\tilde \lambda) $ by
\[
	\mathcal{N}( \tilde \lambda ) = P \left( 1 - \frac{\lambda}{\tilde \lambda} \right).
\]
In particular, we have that $ \mathcal{N} ( \tilde \lambda ) \leq \min ( N, P ) $: this shows that the choice of a finite number of features corresponds to an automatic lowering of the effective dimension of the related kernel method.

Note that in the ridgeless underparameterized case ($ \lambda \searrow 0 $ and $ \gamma < 1 $), the effective dimension $ \mathcal{N}(\tilde{\lambda}) $ equals precisely the number of features $ P $.

\subsection{Risk of the Average Predictor}

A corollary of Theorem \ref{average-rf} is that the loss of the expected RF
predictor is close to the loss of the KRR predictor with ridge $\tilde{\lambda}$:
\begin{corollary}
\label{cor:difference_loss_expected_kernel_loss}If $\EE_{\mathcal{D}} [K(x,x) ]<\infty$,
we have that the difference of errors
$ \delta_E = \left|L (\mathbb{E} [ \frflg ] )-L (\hat{f}_{\tilde{\lambda}}^{(K)})\right| $
is bounded from above by
\[
\delta_E \leq
\frac{C\left\Vert y\right\Vert _{K^{-1}}}{P}\left(2\sqrt{L\left(\hat{f}_{\tilde{\lambda}}^{(K)}\right)}+\frac{C\left\Vert y\right\Vert _{K^{-1}}}{P}\right),
\]
where $ C $ is given by $ c \sqrt { \EE_{\mathcal{D}} [K(x,x) ] } $, with $ c $ the constant appearing in \eqref{eq:average-rf-bound} above.
\end{corollary}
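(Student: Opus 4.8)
The plan is to use that the MSE risk is a squared Hilbert-space norm: writing $\langle f,g\rangle_{\mathcal D}=\EE_{\mathcal D}[f(x)g(x)]$ and $\|f\|_{\mathcal D}^2=\langle f,f\rangle_{\mathcal D}$, we have $L(f)=\|f-f^*\|_{\mathcal D}^2$ for any $f\in L^2(\mathcal D)$. Set $g=\EE[\frflg]$ and $h=\fkl$; the hypothesis $\EE_{\mathcal D}[K(x,x)]<\infty$ together with the Cauchy--Schwarz bound $|K(x,x_i)|\le\sqrt{K(x,x)}\sqrt{K(x_i,x_i)}$ for the PSD kernel $K$ ensures $h\in L^2(\mathcal D)$, and then the pointwise estimate of Theorem~\ref{average-rf} (used below) forces $g\in L^2(\mathcal D)$ as well, so the Hilbert-space manipulations below are legitimate. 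Applying the elementary identity $\|u\|_{\mathcal D}^2-\|v\|_{\mathcal D}^2=\langle u-v,\,u+v\rangle_{\mathcal D}$ with $u=g-f^*$ and $v=h-f^*$, together with Cauchy--Schwarz, gives
\[
\delta_E=\bigl|L(g)-L(h)\bigr|=\bigl|\langle g-h,\;(g-h)+2(h-f^*)\rangle_{\mathcal D}\bigr|\le\|g-h\|_{\mathcal D}\,\Bigl(\|g-h\|_{\mathcal D}+2\sqrt{L(\fkl)}\Bigr),
\]
where in the last step I expand $u+v=(g-h)+2(h-f^*)$ and use the triangle inequality $\|(g-h)+2(h-f^*)\|_{\mathcal D}\le\|g-h\|_{\mathcal D}+2\|h-f^*\|_{\mathcal D}=\|g-h\|_{\mathcal D}+2\sqrt{L(h)}$.

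It then remains to control $\|g-h\|_{\mathcal D}$, and here I would simply integrate the pointwise bound \eqref{eq:average-rf-bound}: for every $x$,
\[
\bigl|g(x)-h(x)\bigr|=\bigl|\EE[\frflg(x)]-\fkl(x)\bigr|\le\frac{c\sqrt{K(x,x)}\,\ynik}{P},
\]
so squaring and taking $\EE_{\mathcal D}$,
\[
\|g-h\|_{\mathcal D}^2\le\frac{c^2\,\ynik^2}{P^2}\,\EE_{\mathcal D}[K(x,x)]=\frac{C^2\,\ynik^2}{P^2},\qquad C=c\sqrt{\EE_{\mathcal D}[K(x,x)]}.
\]
Substituting $\|g-h\|_{\mathcal D}\le C\,\ynik/P$ into the displayed bound for $\delta_E$ yields exactly the claim.

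There is no serious obstacle here: the corollary is the quantitative form of the statement ``closeness of the two predictors in $L^2(\mathcal D)$ implies closeness of their risks'', and both ingredients --- the elementary Hilbert-space identity with Cauchy--Schwarz, and the pointwise bound of Theorem~\ref{average-rf} --- are already in hand. The only points requiring a little care are (i) tracking the constant so that it comes out exactly as $C=c\sqrt{\EE_{\mathcal D}[K(x,x)]}$ with the same $c$ as in \eqref{eq:average-rf-bound}, and (ii) expanding $u+v$ asymmetrically around $h-f^*$ (rather than symmetrically around the midpoint) so that the mixed term is $2\sqrt{L(\fkl)}$, i.e. the risk of the \emph{kernel} predictor, which is the quantity one actually wants on the right-hand side. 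The finiteness assumption $\EE_{\mathcal D}[K(x,x)]<\infty$ is used precisely once, to pass from the pointwise bound to the $L^2(\mathcal D)$ bound (and, implicitly, to guarantee $g,h\in L^2(\mathcal D)$).
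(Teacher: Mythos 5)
Your proposal is correct and follows essentially the same route as the paper: the paper likewise integrates the pointwise bound of Theorem \ref{average-rf} over $\mathcal D$ to get $\|\mathbb{E}[\frflg]-\fkl\|_{L^2(\mathcal D)}\le C\ynik/P$, and then applies the elementary estimate $|a^2-b^2|\le|a-b|(|a-b|+2|b|)$ with the triangle inequality for the $L^2(\mathcal D)$-norm, which is exactly your polarization-plus-Cauchy--Schwarz step in slightly different notation. No gaps.
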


As a result, $ \delta_E $ can be bounded in terms of $ \lambda, \gamma, T, \ynik $, which are discussed above, and of the kernel generalization error $L(f_{\tilde{\lambda}}^{(K)})$.
Such a generalization error can be controlled in a number of settings as $ N $ grows:
in \cite{caponnetto-07, marteau-19}, for instance, the loss is shown to vanish as $ N \to \infty $. Figure (\ref{fig:averageRF-KRR}) shows that the two test losses are indeed very close.


\section{Variance}
\label{sec:variance-RF}


In the previous sections, we analyzed the loss of the expected predictor $ \EE [ \frflg ] $.
In order to analyze the expected loss of the RF predictor $ \frflg $, it remains to control
the variance of the RF predictor: this follows from the bias-variance decomposition \[
 \EE \! \left[L(\frflg)\right]\!=\!L\left(\EE[\frflg]\right)+\EE_{\mathcal{D}} \! \left[ \mathrm{Var}(\frflg(x)) \right] \!,
\]
introduced in Section \ref{subsec:bias-variance}.

The variance $\mathrm{Var}\left(\frflg(x)\right)$ of the RF predictor can itself be written as the sum
\[
\mathrm{Var}\left(\mathbb{E}\left[\frflg(x)\mid F\right]\right)+\mathbb{E}\left[\mathrm{Var}\left(\frflg(x)\mid F\right)\right].
\]

By Proposition \ref{gaussian-mixture}, we have
\begin{align*}
\mathbb{E}\left[\frflg(x)\mid F\right] & =K(x,X)K(X,X)^{-1}\hat{y}\\
\mathrm{Var}\left(\frflg(x)\mid F\right) & =\frac{\| \hat{\theta}\| ^{2}}{P}\tilde{K}(x,x).
\end{align*}

\subsection{RF Predictor Concentration}

The following theorem allows us to bound both terms:
\begin{theorem}\label{variance-ridge}
There are constants $ c_1, c_2 > 0 $ depending on $ \lambda, \gamma, T $ only such that
\begin{align*}
& \mathrm{Var}\left(K(x,X)K(X,X)^{-1}\hat{y}\right) \leq\frac{c_1 K(x,x) \ynik^2}{P} \\
& \left|\mathbb{E}\| [\hat{\theta} \| ^{2}]-\dLt y^{T}M_{\Lt}y\right| \leq\frac{c_2 \ynik^{2}}{P},
\end{align*}
where $\dLt $ is the derivative of $\tilde{\lambda}$
with respect to $\lambda$ and for $ M_{\Lt} = K(X,X)(K(X,X)+\Lt
I_N )^{-2} $. As a result
\[
\mathrm{Var}\left(\frflg(x)\right)\leq\frac{c_3 K(x,x)\ynik^2}{P},
\]
where $ c_3 > 0 $ depends on $ \lambda, \gamma, T $.
\end{theorem}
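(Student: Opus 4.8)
The plan is to combine the law of total variance with Proposition~\ref{gaussian-mixture}. Writing $\mathrm{Var}(\frflg(x)) = \mathrm{Var}\big(\mathbb{E}[\frflg(x)\mid F]\big) + \mathbb{E}\big[\mathrm{Var}(\frflg(x)\mid F)\big]$ and substituting the two expressions from Proposition~\ref{gaussian-mixture}, namely $\mathbb{E}[\frflg(x)\mid F] = K(x,X)K(X,X)^{-1}\hat y$ and $\mathrm{Var}(\frflg(x)\mid F) = \tfrac{\|\hat\theta\|^2}{P}\tilde K(x,x)$ with $0\le \tilde K(x,x)\le K(x,x)$ (the posterior variance never exceeds the prior), the whole statement reduces to the two displayed bounds. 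The last line of the theorem then follows by adding them, using $\mathbb{E}[\mathrm{Var}(\frflg(x)\mid F)] \le \tfrac{K(x,x)}{P}\,\mathbb{E}\|\hat\theta\|^2$ and checking that $\dLt\, y^T M_{\Lt} y$ is itself $\mathcal O(\ynik^2)$ with constant depending only on $\lambda,\gamma,T$: indeed $y^T M_{\Lt} y = y^T K(X,X)(K(X,X)+\Lt I_N)^{-2} y \le \ynik^2$ since $\|K(X,X)(K(X,X)+\Lt I_N)^{-1}\|\le 1$, and differentiating the fixed-point equation \eqref{eq:defining-effective-ridge} gives $\dLt = \big(\tfrac{\lambda}{\Lt} + \tfrac{\Lt}{\gamma N}\sum_i \tfrac{d_i}{(\Lt+d_i)^2}\big)^{-1} \le \Lt/\lambda \le 1 + T/(\gamma\lambda)$ by Proposition~\ref{prop:fact-effective-ridge}.

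For the first bound I would set $c = K(X,X)^{-1}K(X,x)$, so that $K(x,X)K(X,X)^{-1}\hat y = c^T A_\lambda y$ with $A_\lambda = FF^T(FF^T+\lambda I_N)^{-1}$ and $\|K(X,X)^{1/2}c\|^2 = c^T K(X,X) c = K(x,x)-\tilde K(x,x)\le K(x,x)$. Viewing $c^T A_\lambda y$ as a function of the i.i.d.\ Gaussian entries of $W$ (recall $F = \tfrac1{\sqrt P}K(X,X)^{1/2}W^T$), I would bound the variance either by the Gaussian Poincar\'e inequality or, equivalently, by Efron--Stein applied one feature at a time. Each partial derivative (resp.\ each one-feature difference) factors into a coordinate of $K(X,X)^{1/2}Rc$ or $K(X,X)^{1/2}Ry$ (with $R = (FF^T+\lambda I_N)^{-1}$) times a Gaussian linear form in the corresponding feature column; squaring and summing over the $P$ features produces the factor $1/P$ and leaves an expectation of products of quadratic forms in $R$, of the schematic shape $\tfrac{C\lambda^2}{P}\,\mathbb{E}\big[(c^T RKR c)(y^T R y) + (y^T RKR y)(c^T R c)\big]$. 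The remaining and genuinely delicate task is to show this expectation is bounded by $K(x,x)\,\ynik^2/\lambda^2$ up to a constant depending only on $\lambda,\gamma,T$: this is where $\lambda>0$ is essential (all resolvent denominators stay $\ge 1$, there is no near-interpolation blow-up) and where one must replace the naive operator-norm bounds — which would introduce the conditioning of $K(X,X)$ — by the non-asymptotic concentration of these quadratic forms around their deterministic equivalents. By the effective-ridge analysis those equivalents are governed by $\tfrac{\Lt}{\lambda}(K(X,X)+\Lt I_N)^{-1}$ and $K(X,X)(K(X,X)+\Lt I_N)^{-1}$, whose spectra are controlled in terms of $\lambda,\gamma,T$ alone, uniformly in the small eigenvalues of $K(X,X)$.

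For the second bound I would use the identity $\|\hat\theta\|^2 = y^T(FF^T+\lambda I_N)^{-1}FF^T(FF^T+\lambda I_N)^{-1}y = -\partial_\lambda\big(y^T A_\lambda y\big)$, whence $\mathbb{E}\|\hat\theta\|^2 = -\partial_\lambda\,\mathbb{E}[y^T A_\lambda y]$. From the quadratic-form version of Theorem~\ref{average-rf} (examining its proof, whose per-eigenvalue error in $\mathbb{E}[A_\lambda]$ scales like $\Lt^2/(d_iP)$) one has $\mathbb{E}[y^T A_\lambda y] = y^T K(X,X)(K(X,X)+\Lt I_N)^{-1}y + \varepsilon(\lambda)$ with $|\varepsilon(\lambda)|\le c\,\ynik^2/P$; differentiating the main term through $\Lt$ and using $\partial_{\Lt}[K(X,X)(K(X,X)+\Lt I_N)^{-1}] = -M_{\Lt}$ yields precisely $-\partial_\lambda[y^TK(X,X)(K(X,X)+\Lt I_N)^{-1}y] = \dLt\, y^T M_{\Lt} y$, so the claim reduces to $|\partial_\lambda\varepsilon(\lambda)| \le c_2\,\ynik^2/P$. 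Establishing this requires revisiting the proof of Theorem~\ref{average-rf} and propagating its error estimates with one $\lambda$-derivative — i.e.\ bounding $\partial_\lambda$ of the relevant resolvent and Stieltjes-transform fluctuations uniformly on compact ranges of $\lambda$ — which is routine in spirit but heavier in bookkeeping; differentiation under the expectation is immediate by smoothness and domination.

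The main obstacle is the norm accounting in the first bound: a crude estimate of $\mathrm{Var}(c^T A_\lambda y)$ comes out proportional to $\|c\|^2\|y\|^2/P$, whereas the theorem demands the sharper $K(x,x)\,\ynik^2/P$ with a constant free of the conditioning of $K(X,X)$. Closing this gap is exactly the role of the non-asymptotic random-matrix estimates for bilinear resolvent forms — non-asymptotic analogues of the arguments in \cite{bai-2008} — developed in the Supplementary Material; controlling the $\lambda$-derivative of the error term in the second bound is the other point that needs care.
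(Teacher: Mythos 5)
Your skeleton is the paper's: law of total variance plus Proposition \ref{gaussian-mixture}, the identity $\|\hat\theta\|^2=-\partial_\lambda(y^TA_\lambda y)=y^TA'(-\lambda)y$, the bound $\dLt\le\Lt/\lambda$ from differentiating \eqref{eq:defining-effective-ridge}, and the final assembly via $\tilde K(x,x)\le K(x,x)$ and $y^TM_{\Lt}y\le\ynik^2$ are all exactly as in the Supplementary Material. But both displayed inequalities — the actual content of the theorem — are left as acknowledged gaps, and in each case there is a concrete missing ingredient. For the first bound, the paper does not use Poincar\'e/Efron--Stein on $c^TA_\lambda y$; it computes the covariance structure of the entries of $A_\lambda$ in the eigenbasis of $K(X,X)$ (Proposition \ref{prop:variance_labels}). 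The sign-flip symmetry $w_i\mapsto -w_i$ kills all covariances except $\mathrm{Cov}((A_\lambda)_{ii},(A_\lambda)_{jj})$ and $\mathbb{E}[(A_\lambda)_{ij}^2]$, and the decisive estimate is $\mathbb{E}[(A_\lambda)_{ij}^2]\le\frac{c'_1}{P}\min\{d_i/d_j,\,d_j/d_i\}$, obtained from the Sherman--Morrison representation $(A_\lambda)_{ij}=\frac{\sqrt{d_id_j}}{1+d_ig_i}\frac1P w_i^TB_{(i)}^{-1}w_j$ together with the moment bounds on $1/(1+d_ig_i)$ of order $d_i^{-4}$. That $\min\{d_i/d_j,d_j/d_i\}$ weighting is precisely what turns the naive $\|c\|^2\|y\|^2$ into $\|K^{1/2}c\|^2\ynik^2\le K(x,x)\ynik^2$ with constants free of the conditioning of $K(X,X)$; you correctly diagnose that this is the crux, but you do not produce it, and it is not obvious that the Poincar\'e route yields it without essentially redoing the same resolvent computations.

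For the second bound, your reduction to $|\partial_\lambda\varepsilon(\lambda)|\le c_2\ynik^2/P$ is right, and you correctly note that the error in $\mathbb{E}[(A(z))_{ii}]$ must be sharpened to $O(1/(d_iP))$ (the paper's Inequality \eqref{eq:new_bound}) for the $K^{-1}$-norm to appear. The missing idea is how to differentiate that error estimate: the paper does \emph{not} ``propagate the estimates with one $\lambda$-derivative'' — that would require fresh concentration bounds for $g_i'(z)$ and $m_P'(z)$ — but instead observes that $z\mapsto\mathbb{E}[(A(z))_{ii}]-(\tilde A(z))_{ii}$ is holomorphic on $\mathbb{H}_{<0}$ and applies the Cauchy integral formula on a small contour around $-\lambda$ staying in $\mathbb{H}_{<0}$, which converts the pointwise bound $\hat c/(d_iP)$ directly into the derivative bound $c_2/(d_iP)$. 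Without either this device or the heavier bookkeeping you defer, the second inequality is not established. In short: correct architecture, correct identification of where the difficulties sit, but the two quantitative estimates that constitute the theorem are not proved.
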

Putting the pieces together, we obtain the following bound on the difference $ \Delta_E = | \EE [ L(\frflg) ] - L (\fkl) | $  between the expected RF loss and the KRR loss:

\begin{corollary}\label{cor:expected-loss-krr-loss}
If $\mathbb{E}_{\mathcal{D}}[K(x,x)]<\infty$,
we have
\[
\Delta_E \leq \frac{C_1 \ynik }{P}\left(\sqrt{L ( \fkl ) }+C_2 \ynik\right).
\]
where $C_1$ and $ C_2 $ depend on $\lambda,\gamma,T$ and $\mathbb{E}_{\mathcal{D}}[K(x,x)]$ only.
\end{corollary}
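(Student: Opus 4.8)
The plan is to obtain this estimate purely by assembling the two bounds already established — Corollary \ref{cor:difference_loss_expected_kernel_loss} for the bias and Theorem \ref{variance-ridge} for the variance — through the bias–variance decomposition recalled in Section \ref{subsec:bias-variance}. Writing $\Delta_E = \left| \EE [ L(\frflg) ] - L (\fkl) \right|$ and starting from
\[
\EE \! \left[L(\frflg)\right] = L\left(\EE[\frflg]\right) + \EE_{\mathcal{D}} \! \left[ \mathrm{Var}(\frflg(x)) \right],
\]
the triangle inequality gives
\[
\Delta_E \;\leq\; \left| L\left(\EE[\frflg]\right) - L (\fkl) \right| \;+\; \EE_{\mathcal{D}} \! \left[ \mathrm{Var}(\frflg(x)) \right] \;=\; \delta_E \;+\; \EE_{\mathcal{D}} \! \left[ \mathrm{Var}(\frflg(x)) \right],
\]
so it remains only to bound each of these two terms and add them.

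For the first term, under the hypothesis $\EE_{\mathcal{D}}[K(x,x)]<\infty$, Corollary \ref{cor:difference_loss_expected_kernel_loss} already yields $\delta_E \leq \frac{2C\ynik}{P}\sqrt{L(\fkl)} + \frac{C^{2}\ynik^{2}}{P^{2}}$ with $C = c\sqrt{\EE_{\mathcal{D}}[K(x,x)]}$ and $c$ the constant of \eqref{eq:average-rf-bound}, depending only on $\lambda,\gamma,T$. For the second term, I would integrate the pointwise bound $\mathrm{Var}(\frflg(x)) \leq \frac{c_3 K(x,x)\ynik^{2}}{P}$ of Theorem \ref{variance-ridge} against $\mathcal{D}$; since the integrand is nonnegative and $\EE_{\mathcal{D}}[K(x,x)]<\infty$, Tonelli's theorem justifies exchanging $\EE_\pi$ and $\EE_{\mathcal{D}}$ and produces $\EE_{\mathcal{D}}[\mathrm{Var}(\frflg(x))] \leq \frac{c_3\,\EE_{\mathcal{D}}[K(x,x)]\,\ynik^{2}}{P}$, with $c_3$ depending only on $\lambda,\gamma,T$.

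Adding the two contributions,
\[
\Delta_E \;\leq\; \frac{2C\ynik}{P}\sqrt{L(\fkl)} \;+\; \frac{C^{2}\ynik^{2}}{P^{2}} \;+\; \frac{c_3\,\EE_{\mathcal{D}}[K(x,x)]\,\ynik^{2}}{P},
\]
and since $P\geq 1$ forces $P^{-2}\leq P^{-1}$, the two terms carrying $\ynik^{2}$ are together at most $\frac{(C^{2} + c_3\,\EE_{\mathcal{D}}[K(x,x)])\ynik^{2}}{P}$. Factoring $\frac{\ynik}{P}$ out of everything puts the bound in the claimed form with $C_1 = 2C$ and $C_1 C_2 = C^{2} + c_3\,\EE_{\mathcal{D}}[K(x,x)]$; substituting $C = c\sqrt{\EE_{\mathcal{D}}[K(x,x)]}$ then shows that $C_1$ and $C_2$ depend only on $\lambda,\gamma,T$ and $\EE_{\mathcal{D}}[K(x,x)]$, as asserted.

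Since the substantive work lies entirely in Theorems \ref{average-rf} and \ref{variance-ridge}, this corollary is essentially an exercise in bookkeeping; the only points requiring genuine care are the integrability justification for pushing $\EE_{\mathcal{D}}$ through the variance bound and the tracking of how $c$, $c_3$ and $C$ amalgamate into $C_1$ and $C_2$ with the advertised dependence. I do not expect any real obstacle beyond this accounting.
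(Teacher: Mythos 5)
Your proposal is correct and matches the paper's own proof essentially step for step: both use the bias--variance decomposition with the triangle inequality, bound the bias term via Corollary \ref{cor:difference_loss_expected_kernel_loss} and the variance term by integrating the pointwise bound of Theorem \ref{variance-ridge} against $\mathcal{D}$, and then absorb the $P^{-2}$ contribution and merge constants into $C_1, C_2$. No issues.
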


\subsection{Double Descent Curve}
\label{sec:double-descent-curve}
We now investigate the neighborhood of the frontier $ \gamma=1 $ between the under- and overparameterized regimes, known empirically to exhibit a double descent curve, where the test error explodes at $ \gamma=1 $ (i.e. when $ P \approx N $) as exhibited in Figure \ref{fig:variance-RF-main}.

Thanks to Theorem \ref{variance-ridge}, we get a lower bound on the variance of $ \frflg $:
\begin{corollary}\label{lower-bound-variance}
There exists $ c_4 > 0 $ depending on $ \lambda, \gamma, T $ only such that $ \mathrm{Var} (\frflg(x)) $ is bounded from below by
\[
  \dLt \frac{y^T M_{\Lt} y }{P}\tilde{K} (x,x)
	 - \frac{c_4 K(x,x) \ynik^2}{P^2}.
\]
\end{corollary}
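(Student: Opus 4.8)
The plan is to derive the lower bound directly from the law of total variance together with the two estimates already furnished by Theorem \ref{variance-ridge}. Writing $Z=\frflg(x)$, I would start from
\[
\mathrm{Var}(Z)=\mathrm{Var}\bigl(\EE[Z\mid F]\bigr)+\EE\bigl[\mathrm{Var}(Z\mid F)\bigr]\ \geq\ \EE\bigl[\mathrm{Var}(Z\mid F)\bigr],
\]
simply discarding the first, non-negative, term (this is exactly the step that turns an upper-bound analysis into a lower bound). By Proposition \ref{gaussian-mixture} the conditional variance is $\mathrm{Var}(Z\mid F)=\frac{\|\hat\theta\|^{2}}{P}\tilde{K}(x,x)$, and since $\tilde{K}(x,x)$ does not depend on the random features it pulls out of the expectation, giving $\EE[\mathrm{Var}(Z\mid F)]=\frac{\tilde{K}(x,x)}{P}\,\EE[\|\hat\theta\|^{2}]$.

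Next I would insert the second bound of Theorem \ref{variance-ridge}, namely $\EE[\|\hat\theta\|^{2}]\geq\dLt\, y^{T}M_{\Lt}y-\frac{c_2\ynik^{2}}{P}$, to obtain
\[
\mathrm{Var}(Z)\ \geq\ \frac{\tilde{K}(x,x)}{P}\Bigl(\dLt\, y^{T}M_{\Lt}y-\frac{c_2\ynik^{2}}{P}\Bigr)\ =\ \dLt\,\frac{y^{T}M_{\Lt}y}{P}\,\tilde{K}(x,x)-\frac{c_2\,\tilde{K}(x,x)\ynik^{2}}{P^{2}}.
\]
Finally, using the elementary inequality $0\leq\tilde{K}(x,x)=K(x,x)-K(x,X)K(X,X)^{-1}K(X,x)\leq K(x,x)$, which follows from positive-definiteness of the Gram matrix, I would replace $\tilde{K}(x,x)$ by the larger quantity $K(x,x)$ in the subtracted error term only; this weakens the lower bound but keeps it valid, and yields the stated inequality with $c_4=c_2$, which depends on $\lambda,\gamma,T$ only.

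There is no real obstacle: the statement is an immediate corollary of Theorem \ref{variance-ridge}. The only points requiring a little care are (i) that dropping $\mathrm{Var}(\EE[Z\mid F])$ is legitimate precisely because a \emph{lower} bound is sought; (ii) that $\tilde{K}(x,x)$ is deterministic and so commutes with $\EE$; and (iii) that $\tilde{K}(x,x)\leq K(x,x)$ is used in the correct direction, i.e. to enlarge the magnitude of the negative term, so the bound is preserved. One may additionally remark that $M_{\Lt}\succeq 0$ and, when $\dLt\geq 0$, the leading term $\dLt\frac{y^{T}M_{\Lt}y}{P}\tilde{K}(x,x)$ is genuinely non-negative, so the bound is informative near $\gamma=1$; non-negativity of $\dLt$ can be read off by differentiating the fixed-point relation \eqref{eq:defining-effective-ridge}, but it is not needed for the inequality itself.
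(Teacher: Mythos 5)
Your proof is correct and follows essentially the same route as the paper: the law of total variance with the nonnegative term $\mathrm{Var}(\EE[Z\mid F])$ discarded, the conditional-variance formula $\frac{\|\hat\theta\|^{2}}{P}\tilde K(x,x)$ from Proposition \ref{gaussian-mixture}, the one-sided bound $\EE[\|\hat\theta\|^{2}]\geq \dLt\, y^{T}M_{\Lt}y-\frac{c_2\ynik^{2}}{P}$ extracted from Theorem \ref{variance-ridge}, and finally $\tilde K(x,x)\leq K(x,x)$ applied to the negative term. Nothing is missing.
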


If we assume the second term of Corollary \ref{lower-bound-variance} to be negligible, then the only term which depends on $P$ is $ \dLt \frac{y^T M_{\Lt} y }{P}$. The derivative $ \dLt  $ has an interesting behavior as a function of $ \lambda $ and $ \gamma $:
\begin{proposition} \label{prop:fact-effective-ridge-derivative}For $ \gamma > 1 $, as $ \lambda \to 0 $, the derivative $ \dLt $ converges to $ \frac{\gamma}{\gamma-1} $. As $ \lambda \gamma \to \infty $, we have $ \dLt (\lambda, \gamma) \to 1 $.

\end{proposition}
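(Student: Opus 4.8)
The plan is to work directly with the fixed-point equation \eqref{eq:defining-effective-ridge}. Setting $\phi(t) = t\cdot\frac1N\sum_{i=1}^N\frac{d_i}{t+d_i} = \frac1N\sum_{i=1}^N\frac{t\,d_i}{t+d_i}$, that equation reads simply $\Lt = \lambda + \frac1\gamma\phi(\Lt)$. I would first record the elementary properties of $\phi$ on $(0,\infty)$: it is smooth with $\phi'(t) = \frac1N\sum_{i=1}^N\frac{d_i^2}{(t+d_i)^2}$, and this derivative satisfies $0\le\phi'(t)\le 1$, the bound $\phi'(t)\le\frac1N\sum_i\frac{d_i}{t} = \frac Tt$ (with $T=\frac1N\Tr K(X,X)$, using $\big(\tfrac{d_i}{t+d_i}\big)^2\le\tfrac{d_i}{t+d_i}\le\tfrac{d_i}{t}$), and $\phi'(t)\to 1$ as $t\searrow 0$ (here we use that all $d_i>0$, which holds since $K(X,X)$ is positive definite with distinct data points). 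Because $\frac1\gamma\phi'(\Lt)<1$ in both regimes considered — it is $<\frac1\gamma\le 1$ when $\gamma\ge1$, and $\le\frac{T}{\gamma\lambda}<1$ once $\lambda\gamma$ is large — the implicit function theorem applies to $G(t,\lambda)=t-\lambda-\frac1\gamma\phi(t)$, so $\lambda\mapsto\Lt$ is differentiable and differentiating $\Lt=\lambda+\frac1\gamma\phi(\Lt)$ in $\lambda$ gives $\dLt = 1 + \frac1\gamma\phi'(\Lt)\,\dLt$, hence
\[
\dLt \;=\; \frac{1}{\,1-\tfrac1\gamma\phi'(\Lt)\,}.
\]

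For the first claim, I fix $\gamma>1$ and let $\lambda\to0$. By Proposition \ref{prop:fact-effective-ridge}(3), $\lambda<\Lt\le\frac{\gamma}{\gamma-1}\lambda$, so $\Lt\searrow 0$, and therefore $\phi'(\Lt)\to 1$. Since the denominator $1-\frac1\gamma\phi'(\Lt)$ stays within a fixed neighbourhood of $1-\frac1\gamma>0$, I conclude $\dLt\to\frac{1}{1-1/\gamma}=\frac{\gamma}{\gamma-1}$.

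For the second claim, suppose $\lambda\gamma\to\infty$. Since $\Lt>\lambda$ always holds, the bound $\phi'(t)\le T/t$ gives $\frac1\gamma\phi'(\Lt)\le\frac{T}{\gamma\Lt}\le\frac{T}{\gamma\lambda}\to 0$; this simultaneously justifies that the formula for $\dLt$ is valid for $\lambda\gamma$ large, and yields $\dLt\to\frac{1}{1-0}=1$. The only point that needs a little care — and which I regard as the one non-routine step — is that $\lambda\gamma\to\infty$ covers two distinct scalings ($\gamma\to\infty$ with $\lambda$ bounded, and $\lambda\to\infty$ with $\gamma$ bounded, and mixtures thereof); the single estimate $\frac1\gamma\phi'(\Lt)\le T/(\gamma\lambda)$ handles all of them uniformly, so no case split is actually needed. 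Everything else is elementary manipulation of the fixed-point relation.
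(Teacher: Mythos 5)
Your proof is correct and follows essentially the same route as the paper: implicitly differentiating the fixed-point equation to get $\dLt\,\bigl(1-\tfrac1\gamma\phi'(\Lt)\bigr)=1$ (algebraically identical to the paper's Equation \eqref{eq:deriv-lambda}) and then sending $\Lt\to0$ via Proposition \ref{prop:fact-effective-ridge}(3) for the first limit and using $\Lt>\lambda$ for the second. Your treatment is if anything slightly tidier — the uniform bound $\phi'(t)\le T/t$ replaces the paper's ``$\Lt\sim\lambda$'' rewriting in the $\lambda\gamma\to\infty$ case, and you justify differentiability of $\lambda\mapsto\Lt$ explicitly via the implicit function theorem, which the paper leaves tacit.
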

The explosion of $\dLt$ in $(\gamma=1, \lambda=0)$ is displayed in Figure (\ref{fig:effective-ridge-der}).

Corollary \ref{lower-bound-variance} can be used to explain the double-descent curve numerically observed for small $ \lambda > 0 $. It is natural to assume that in this case $ \dLt \gg 1 $ around $ \gamma = 1 $, dominating the lower bound in Corollary \ref{lower-bound-variance}. In turn, by Proposition \ref{prop:fact-effective-ridge-derivative} this implies that the variance of $ \frf $ gets large. Finally, by the bias-variance decomposition, we obtain a sharp increase of the test error around $ \gamma = 1 $, which is in line with the results of \cite{hastie-19, mei-19}.


\section{Conclusion}
\label{sec:conclusion}

In this paper, we have identified the implicit regularization arising
from the finite sampling of Random Features (RF): using a Gaussian
RF model with ridge parameter $\lambda>0$ ($\lambda$-RF) and feature-to-datapoints
ratio $\gamma=\frac{P}{N}$ is essentially equivalent to using a Kernel
Ridge Regression with effective ridge $\tilde{\lambda}>\lambda$ ($\tilde{\lambda}$-KRR) which we characterize explicitly.
More precisely, we have shown the following:
\begin{itemize}
\item The expectation of the $\lambda$-RF predictor is very close to the
$\tilde{\lambda}$-KRR predictor (Theorem \ref{average-rf}).
\item The $\lambda$-RF predictor concentrates around its expectation when $\lambda$
is bounded away from zero (Theorem \ref{variance-ridge}); this implies in particular
that the test errors of the $\lambda$-RF and $\tilde{\lambda}$-KRR
predictors are close to each other (Corollary \ref{cor:expected-loss-krr-loss}).
\end{itemize} Both theorems are proven using tools from random matrix theory, in
particular finite-size results on the concentration of the Stieltjes
transform of general Wishart matrix models. While our current
proofs require the assumption that the RF model is Gaussian, it seems
natural to postulate that the results and the proofs extend to more
general setups, along the lines of \cite{louart-17,benigni-2019}.

Our numerical verifications on the expected $\lambda$-RF predictor and the $\tilde{\lambda}$-KRR predictor have shown that
both are in excellent agreement. This shows in particular that in order to use RF predictors to approximate KRR predictors with a given ridge, one should choose both the number of features and the explicit ridge appropriately.

Finally, we investigate the ridgeless limit case $\lambda\searrow0$.
In this case, we see a sharp transition at $\gamma=1$: in the overparameterized regime $ \gamma > 1 $,
the effective ridge goes to zero, while in the underparameterized regime $ \gamma < 1 $,
it converges to a positive value. At the interpolation threshold $\gamma=1$, the variance of the $\lambda$-RF  explodes, leading to the double descent
curve emphasized in \cite{advani-17, spigler-18, belkin-18, nakkiran-19}. We investigate this numerically and prove a lower bound yielding a plausible explanation for this phenomenon.

\begin{figure}[t!]
  \centering
  \subfloat[$N=100$ vs. $N=1000$]{
      \includegraphics[width=0.45\textwidth]{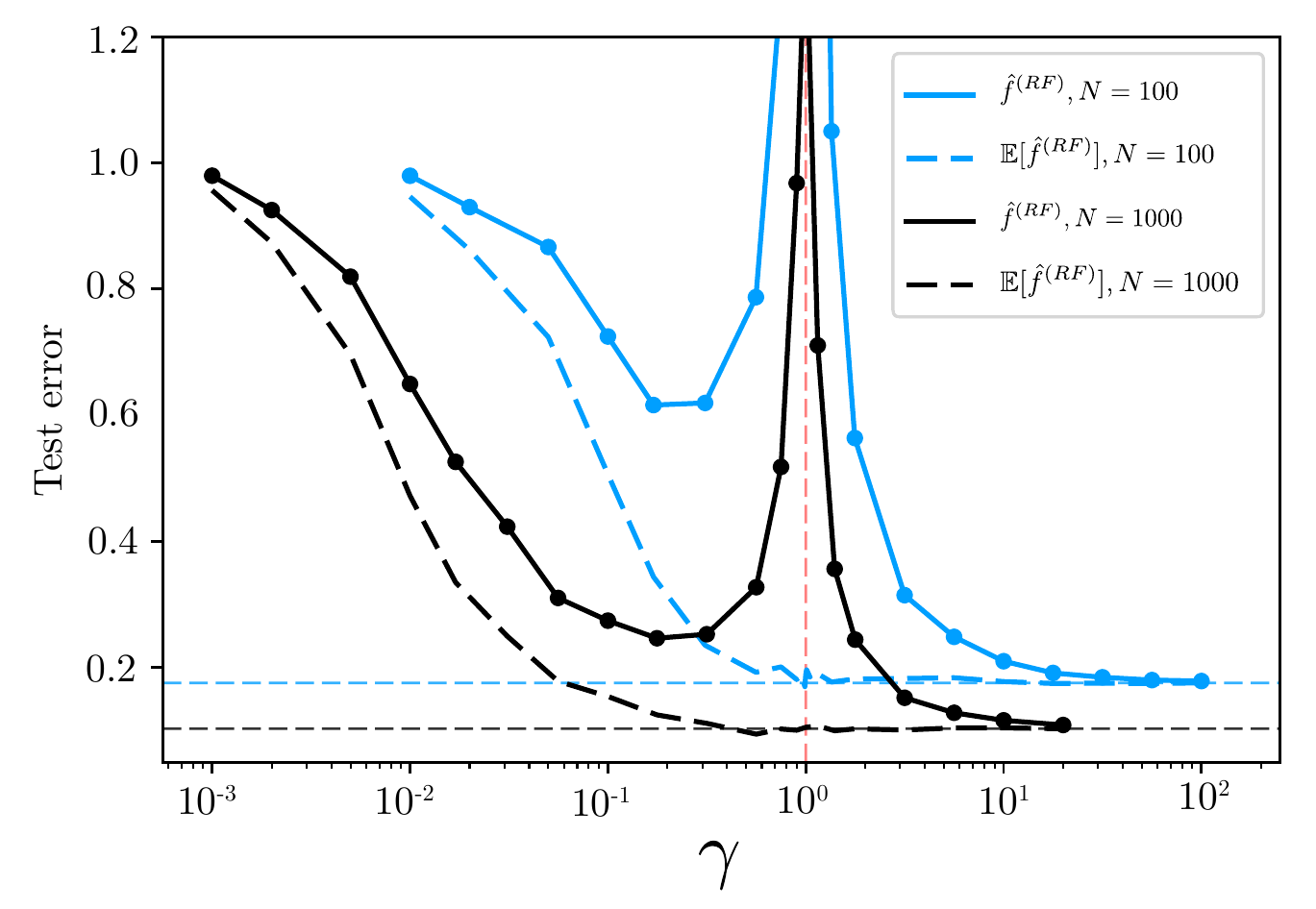}%
        }
  \caption{\textit{Average test error of the $\lambda$-RF predictor for two values of $N$ and $\lambda=10^{-4}$.} For $N=1000$, the test error is naturally lower and the cusp at $\gamma=1$ is narrower than for $N=100$. The experimental setup is the same as in Figure \ref{fig:average-RF-main}. \label{fig:DD-diff-ridge}}
\end{figure}

\section*{Thanks and Acknowledgements}
The authors would like to thank Andrea Montanari, Song Mei, L\'ena\"ic Chizat and Alessandro Rudi for the helpful discussions. Cl\'ement Hongler acknowledges support from the ERC SG CONSTAMIS grant, the NCCR SwissMAP grant, the Minerva Foundation, the Blavatnik Family Foundation, and the Latsis foundation.

\vfill

\pagebreak

\bibliographystyle{icml2020}
\bibliography{references}

\onecolumn

\icmltitle{Supplementary Material for \\
           Implicit Regularization of Random Feature Models}

\appendix

We organize the Supplementary Material (Supp. Mat.) as follows:
\begin{itemize}
  \item In Section \ref{sec:exp-details}, we present the details for the numerical results presented in the main text (and in the Supp. Mat.).
  \item In Section \ref{sec:additional-exps}, we present additional experiments and some discussions.
  \item In Section \ref{sec:proofs}, we present the proofs of the mathematical results presented in the main text.
\end{itemize}

\section{Experimental Details}
\label{sec:exp-details}

The experimental setting consists of $N$ training and $N_\text{tst}$ test datapoints $\{(x_i, y_i)\}_{i=1}^{N + N_\text{tst}} \in \mathbb R^{d} \times \mathbb R$. We sample $P$ Gaussian features $f^{(1)}, \ldots, f^{(P)}$ of $N + N_\text{tst}$ dimension with zero mean and covariance matrix entries thereof $C_{i,j}=K(x_i, x_j)$ where $K(x, x')=\exp(-\|x-x'\|^{2} / \ell)$ is a Radial Basis Function (RBF) Kernel with lengthscale $\ell$.
The extended data matrix $\bar{F} = \frac{1}{\sqrt{P}}[f^{(1)}, \ldots, f^{(P)}]$ of size $(N + N_\text{tst}) \times P$ is decomposed into two matrices: the (training) data matrix $F = \bar{F}_{[:N, :]}$ of size $N \times P$, and a test data matrix $F_\text{tst} = \bar{F}_{[N:, :]}$ of size $N_\text{tst} \times P$ so that $\bar F=[F;F_\text{tst}]$.
For a given ridge $\lambda$, we compute the optimal solution using the data matrix $F$, i.e. $\hat{\theta}=F^{T}\left(FF^{T}+\lambda \mathrm{I}_N\right)^{-1}y$ and obtain the predictions on the test datapoints $\hat{y}_{\text{tst}} = F_\text{tst} F^{T}\left(FF^{T}+\lambda \mathrm{I}_N\right)^{-1}y$.

Using the procedure above, we performed the following experiments:

\subsection{Experiments with Sinusoidal data}
We consider a dataset of $N=4$ training datapoints $(x_{i},\sin(x_{i}))\in[0,2\pi)\times[-1,1]$ and $N_\text{tst} = 100$ equally spaced test data points in the interval $[0, 2\pi)$.
In this experiment, the lengthscale of the RBF Kernel is $\ell = 2$.
We compute the average and standard deviation the $\lambda$-RF predictor using 500 samplings of $\bar F$ (see Figure 1 in the main text and Figure \ref{fig:RF-predictor-regimes} in the Supp. Mat.).

\subsection{MNIST experiments}
We sample $N=100$ and $N_\text{tst}=100$ images of digits $7$ and $9$ from the MNIST dataset (image size $d=24\times24$, edge pixels cropped, all pixels rescaled down to $[0,1]$ and recentered around the mean value) and label each of them with $+1$ and $-1$ labels, respectively.
In this experiment, the lengthscale of the RBF Kernel is $ \ell = d \ell_0$ where $\ell_0 = 0.2$.
We approximate the expected $\lambda$-RF predictor on the test datapoints using the average of $\hat y_\text{tst}$ over $50$ instances of $\bar{F}$ and compute the MSE (see Figures 2, 3 in the main text; in the ridgeless case --$\lambda = 10^{-4}$ in our experiments--  when $P$ is close to $N$, the average is over $500$ instances).
In Figure 4 of the main text, using $N_\text{tst}=100$ test points, we compare two predictors trained over $N = 100$ and $N=1000$ training datapoints.

\subsection{Random Fourier Features}
\label{sec:RFF-exp}
We sample random Fourier Features corresponding to the RBF Kernel with lengthscale $ \ell = d \ell_0$ where $\ell_0 = 0.2$ (same as above) and consider the same dataset as in the MNIST experiment. The extended data matrix $\bar{F}$ for Fourier features is obtained as follows: we sample $d$-dimensional i.i.d. centered Gaussians $w^{(1)}, \ldots, w^{(P)}$ with standard deviation $\sqrt{2 / \ell}$, sample $b^{(1)}, \ldots, b^{(P)}$ uniformly in $[0, 2 \pi)$, and define $\bar{F}_{i,j} = \sqrt{ \frac{2}{P}} \cos(x^{T}_iw^{(j)} + b^{(j)})$.
We approximate the expected Fourier Features predictor on the test datapoints using the average of $\hat y_\text{tst}$ over $50$ instances of $\bar F$ (see Figure \ref{fig:RFF}).

\clearpage

\section{Additional Experiments}
\label{sec:additional-exps}

We present the following complementary simulations:
\begin{itemize}
  \item In Section \ref{sec:B1}, we present the distribution of the $\lambda$-RF predictor for the selected $P$ and $\lambda$.
  \item In Section \ref{sec:B2}, we present the evolution of $\tilde{\lambda}$ and its derivative $\partial_\lambda\tilde\lambda$ for different eigenvalue spectra.
  \item In Section \ref{sec:B3}, we show the evolution of the eigenvalue spectrum of $\mathbb E [A_{\lambda}]$.
  \item In Section \ref{sec:RFF}, we present numerical experiments on MNIST using random Fourier features.
\end{itemize}

\vskip0.5cm

\subsection{Distribution of the RF predictor}
\label{sec:B1}
\begin{figure}[h]
    \centering
    \subfloat[$P=2, \lambda=0$]{
        \includegraphics[width=0.23\textwidth]{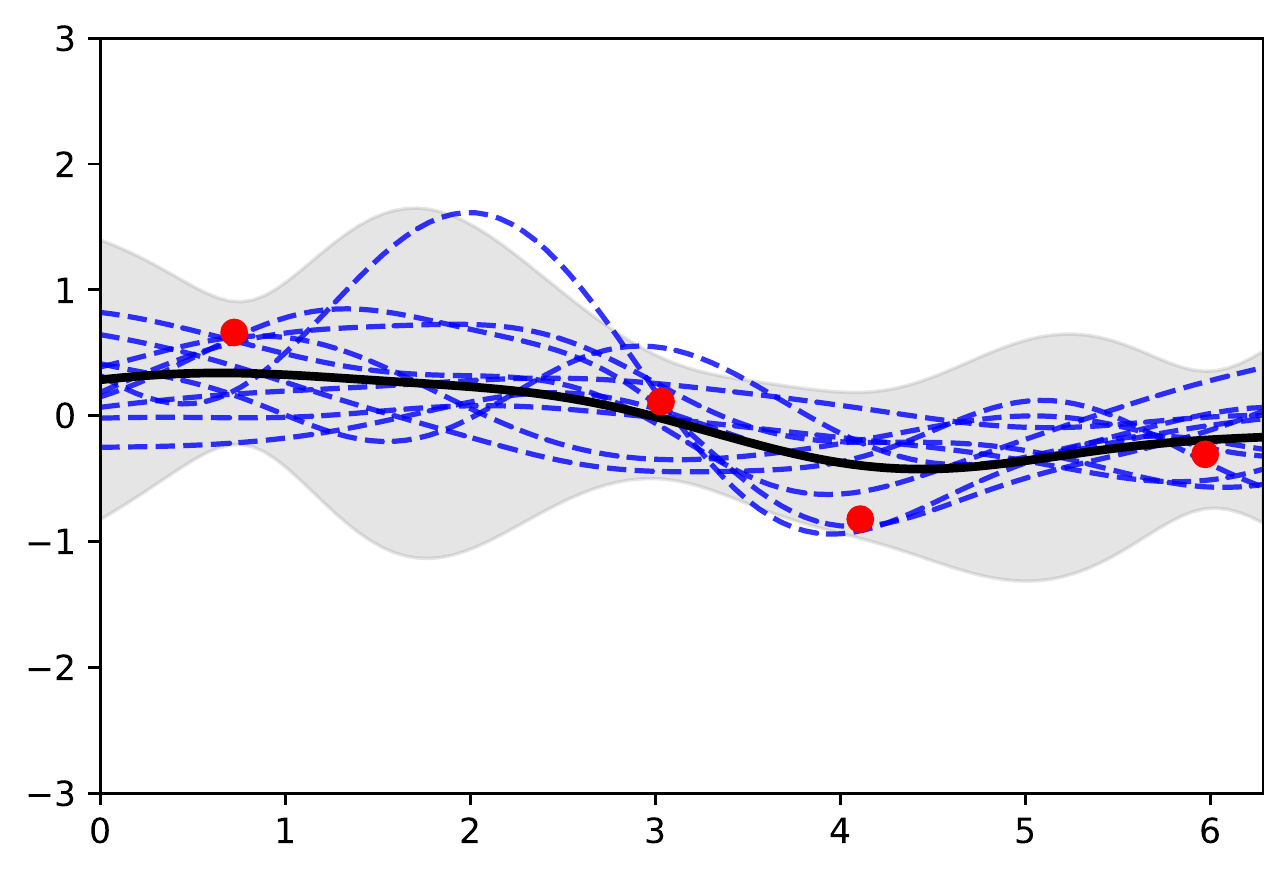}
        \label{fig:RF-A1}
        } \hfill
    \subfloat[$P=4, \lambda=0$]{
        \includegraphics[width=0.23\textwidth]{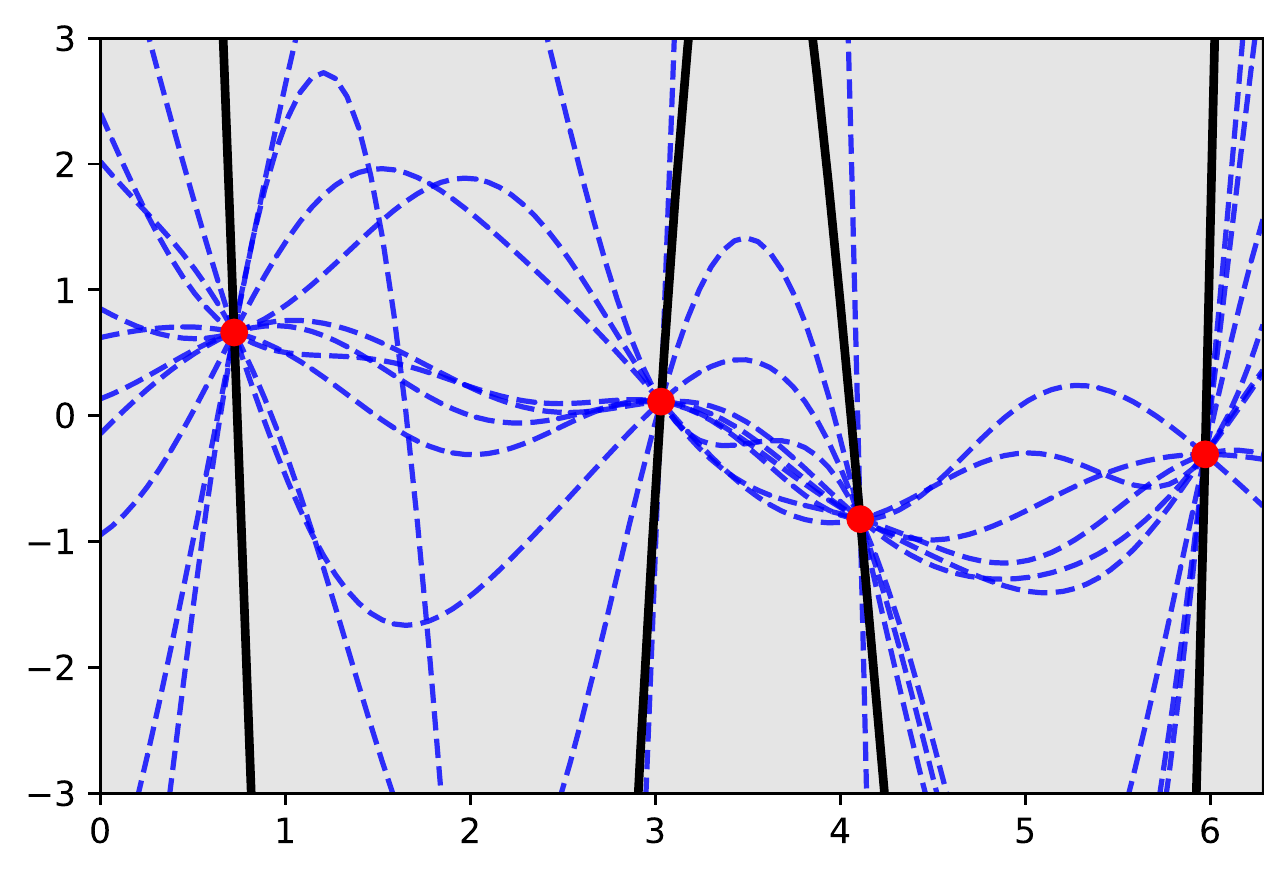}
        \label{fig:RF-A2}
        } \hfill
    \subfloat[$P=10, \lambda=0$]{
        \includegraphics[width=0.23\textwidth]{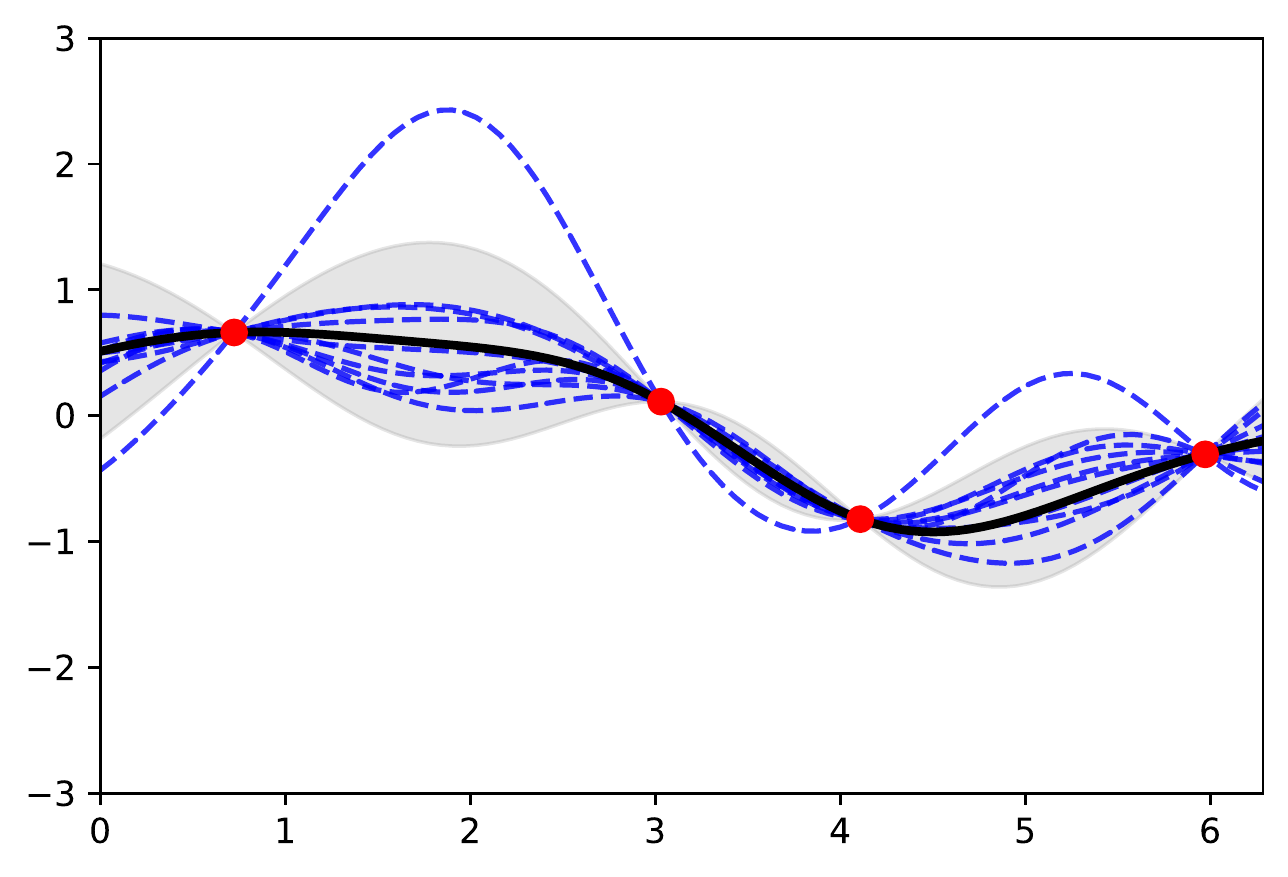}
        \label{fig:RF-A3}
        } \hfill
    \subfloat[$P=100, \lambda=0$]{
        \includegraphics[width=0.23\textwidth]{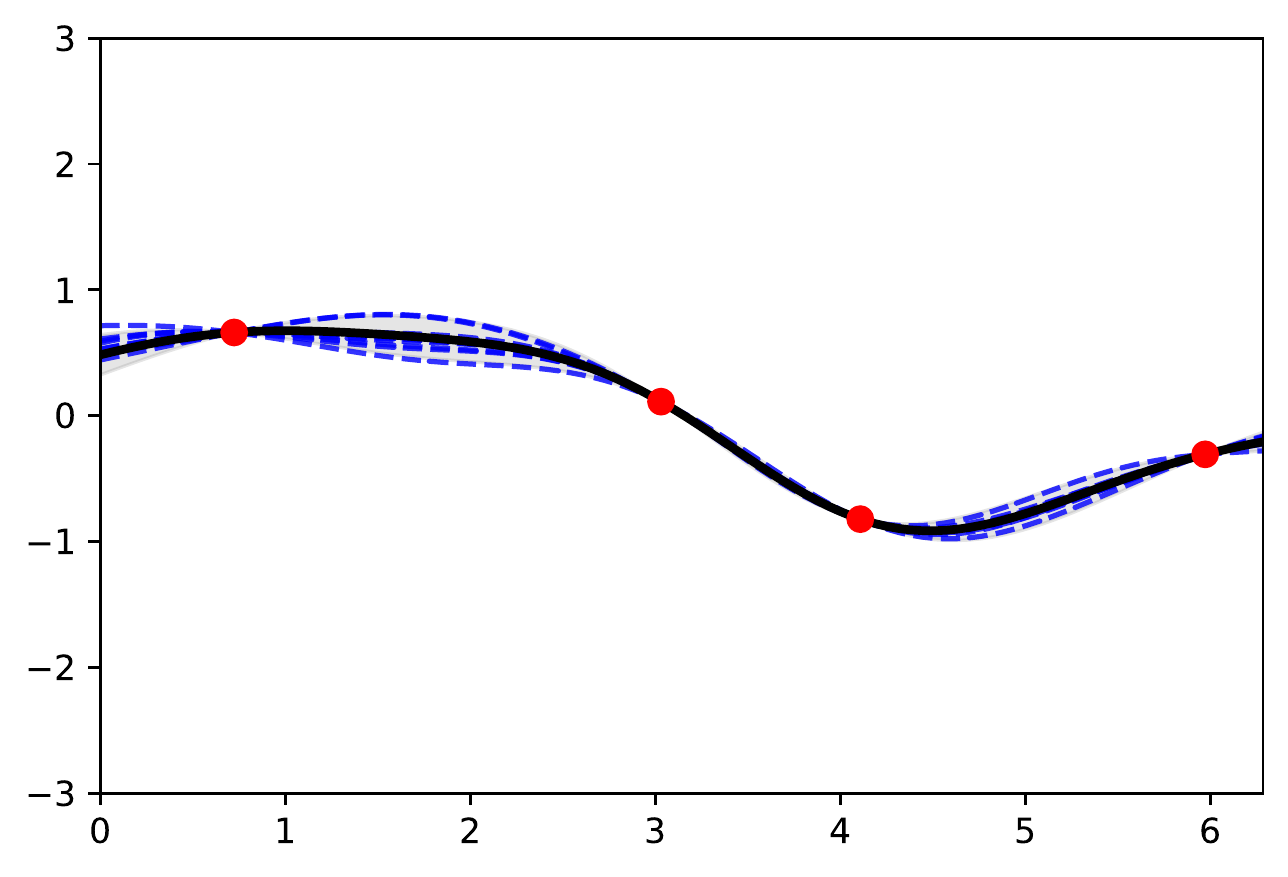}
        \label{fig:RF-A4}
        } \vfill
    \subfloat[$P=2, \lambda=10^{-4}$]{
          \includegraphics[width=0.23\textwidth]{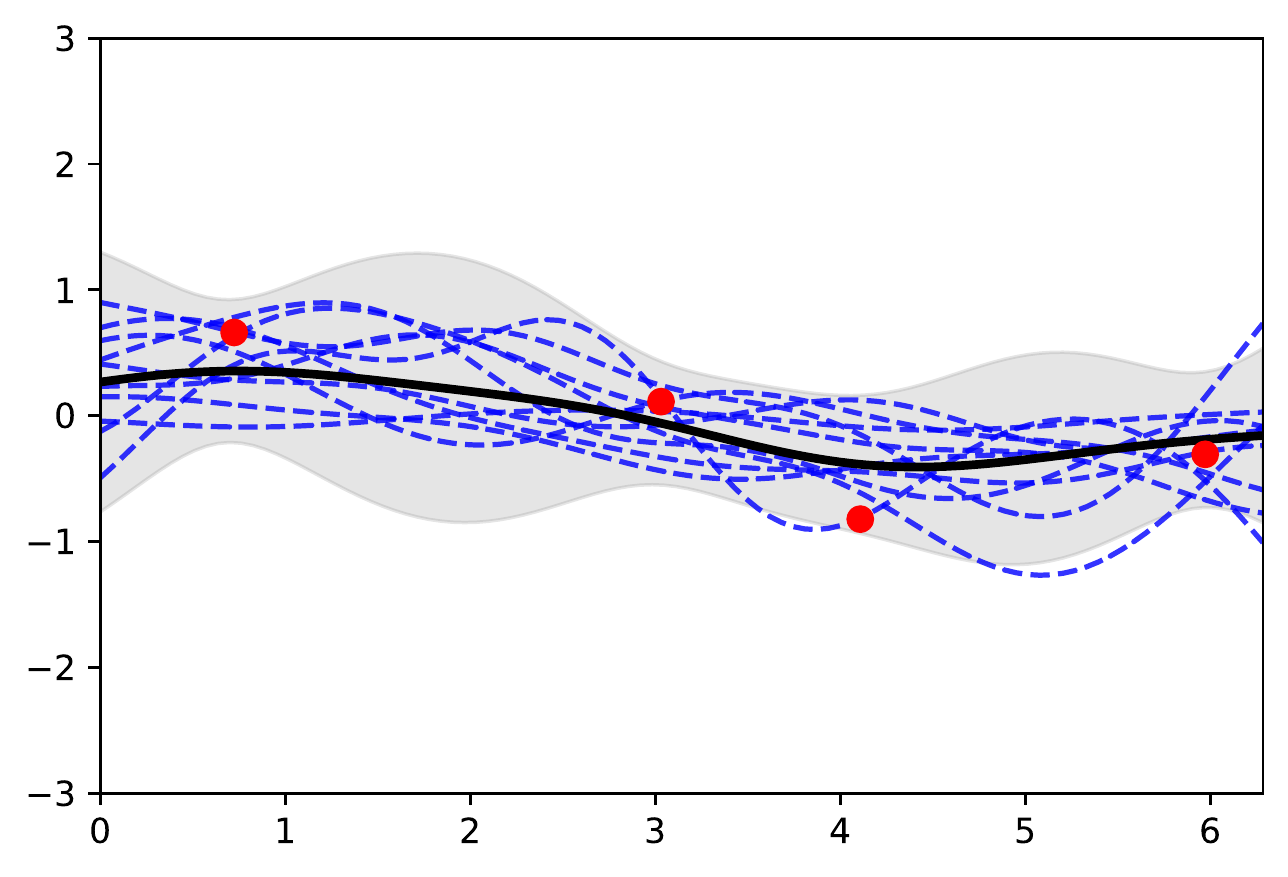}
          \label{fig:RF-A5}
        } \hfill
    \subfloat[$P=4, \lambda=10^{-4}$]{
          \includegraphics[width=0.23\textwidth]{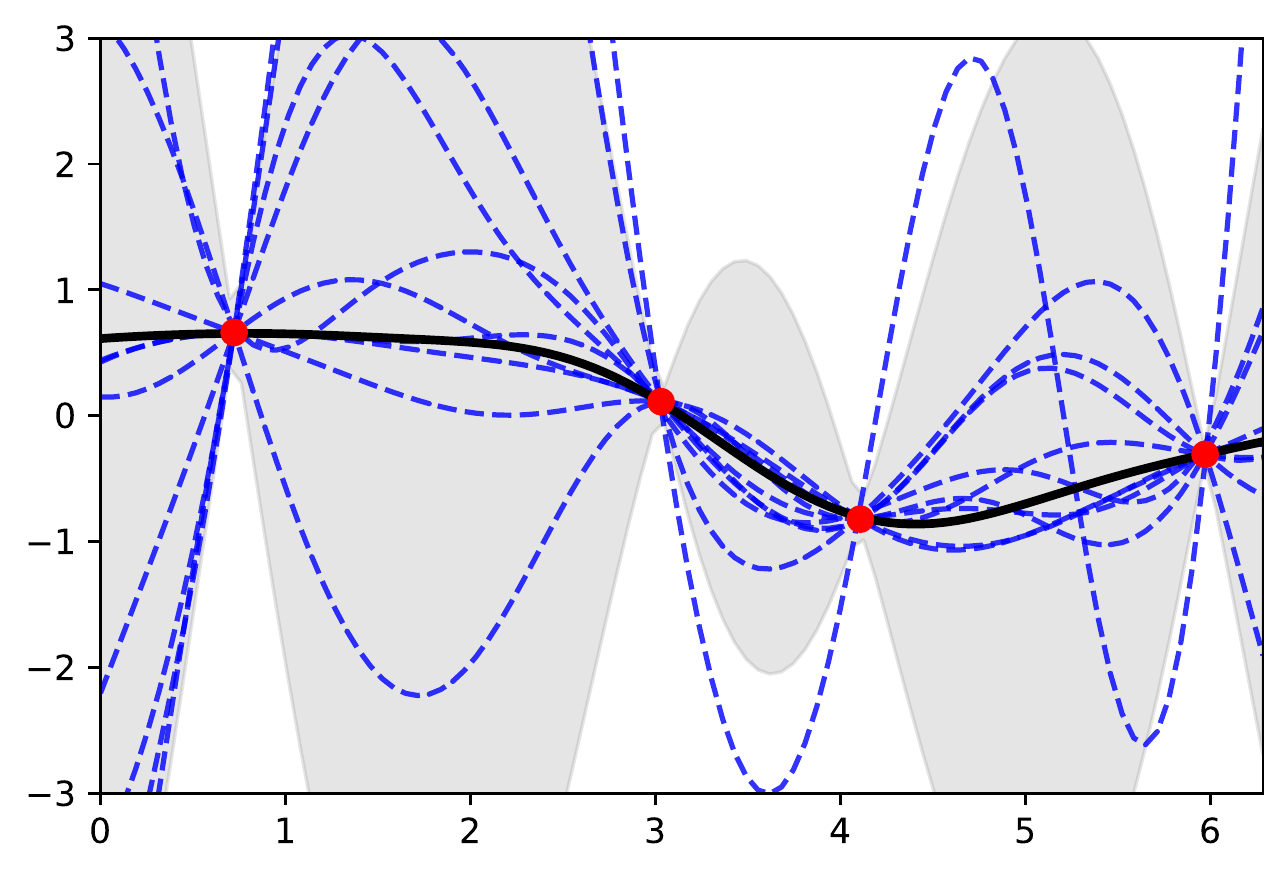}
          \label{fig:RF-A6}
        } \hfill
    \subfloat[$P=10, \lambda=10^{-4}$]{
          \includegraphics[width=0.23\textwidth]{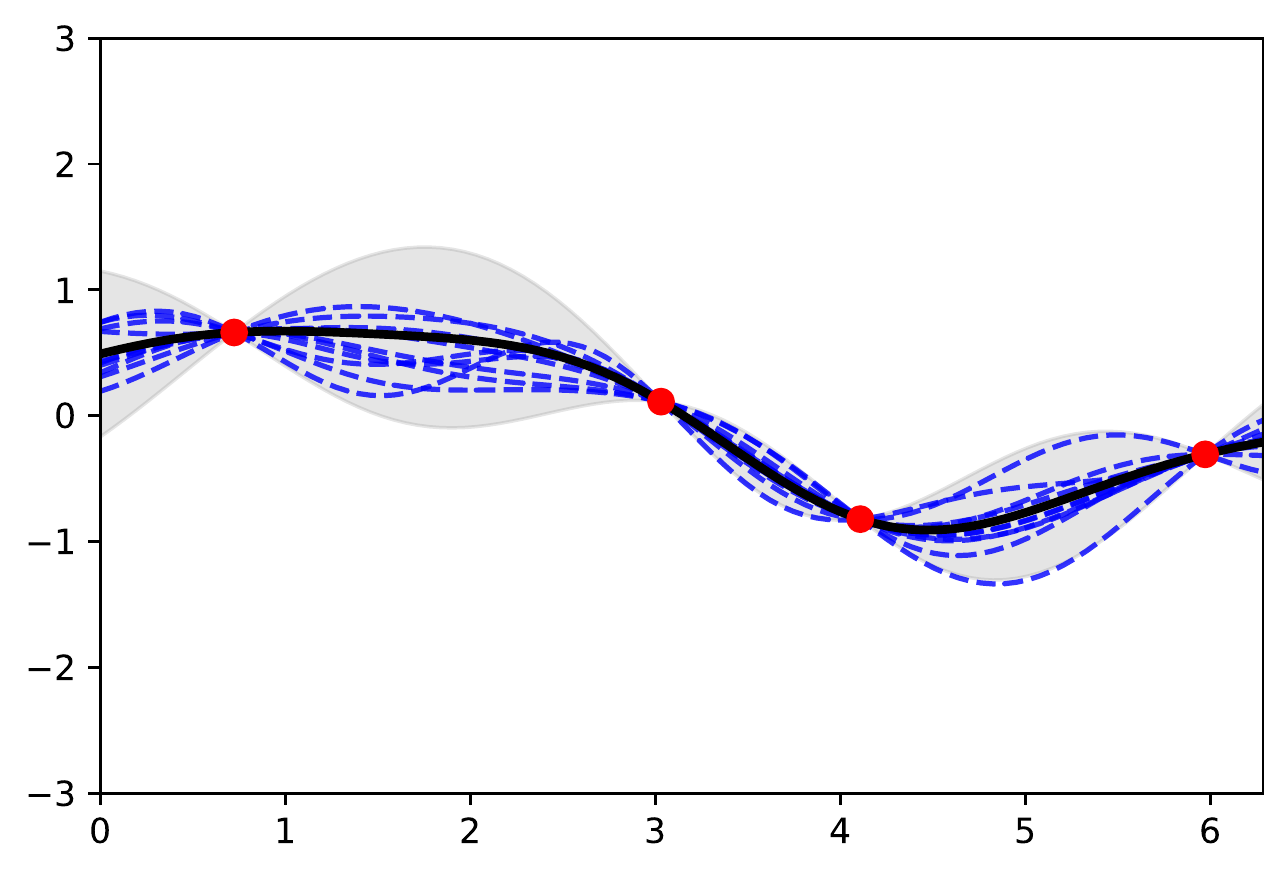}
          \label{fig:RF-A7}
        } \hfill
    \subfloat[$P=100, \lambda=10^{-4}$]{
          \includegraphics[width=0.23\textwidth]{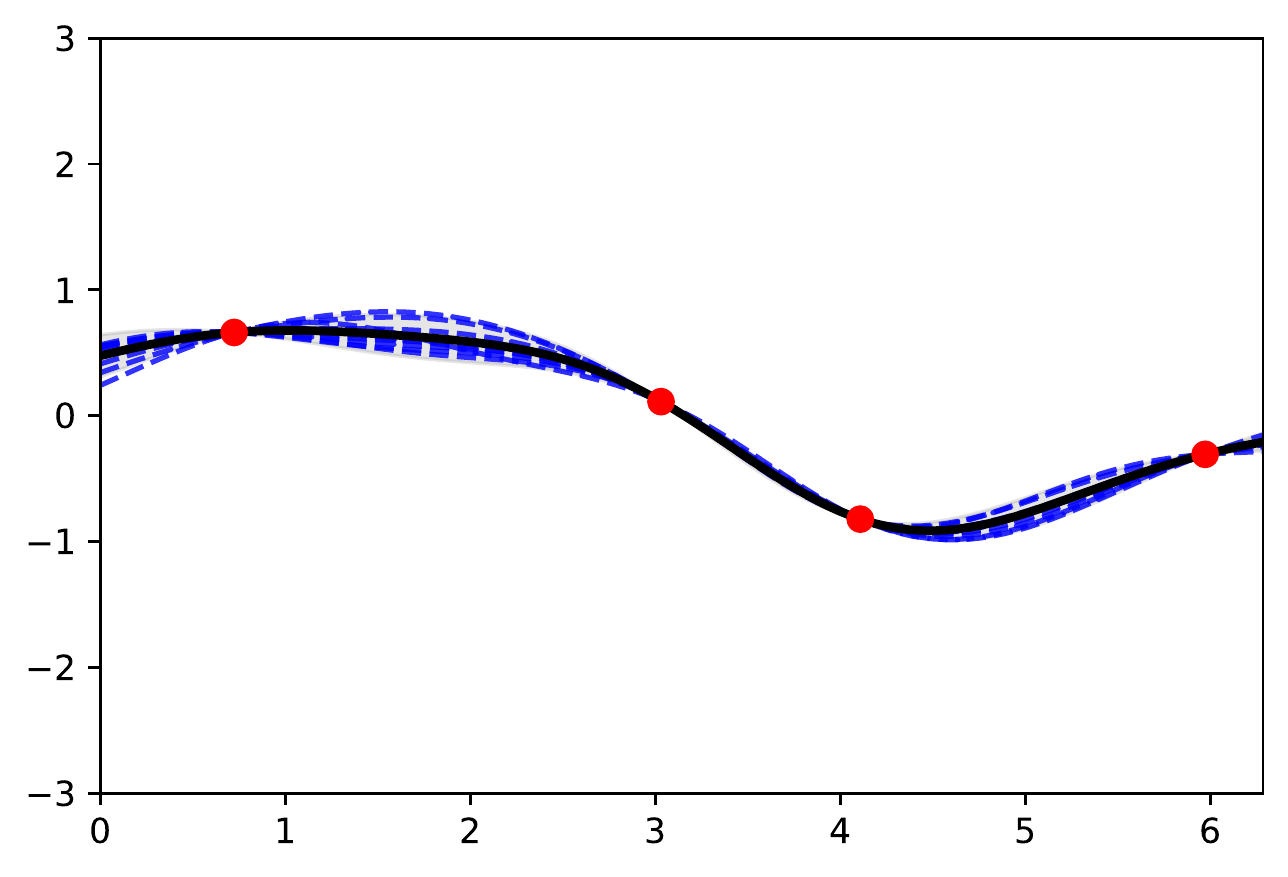}
          \label{fig:RF-A8}
        } \vfill
    \subfloat[$P=2, \lambda=10^{-1}$]{
          \includegraphics[width=0.23\textwidth]{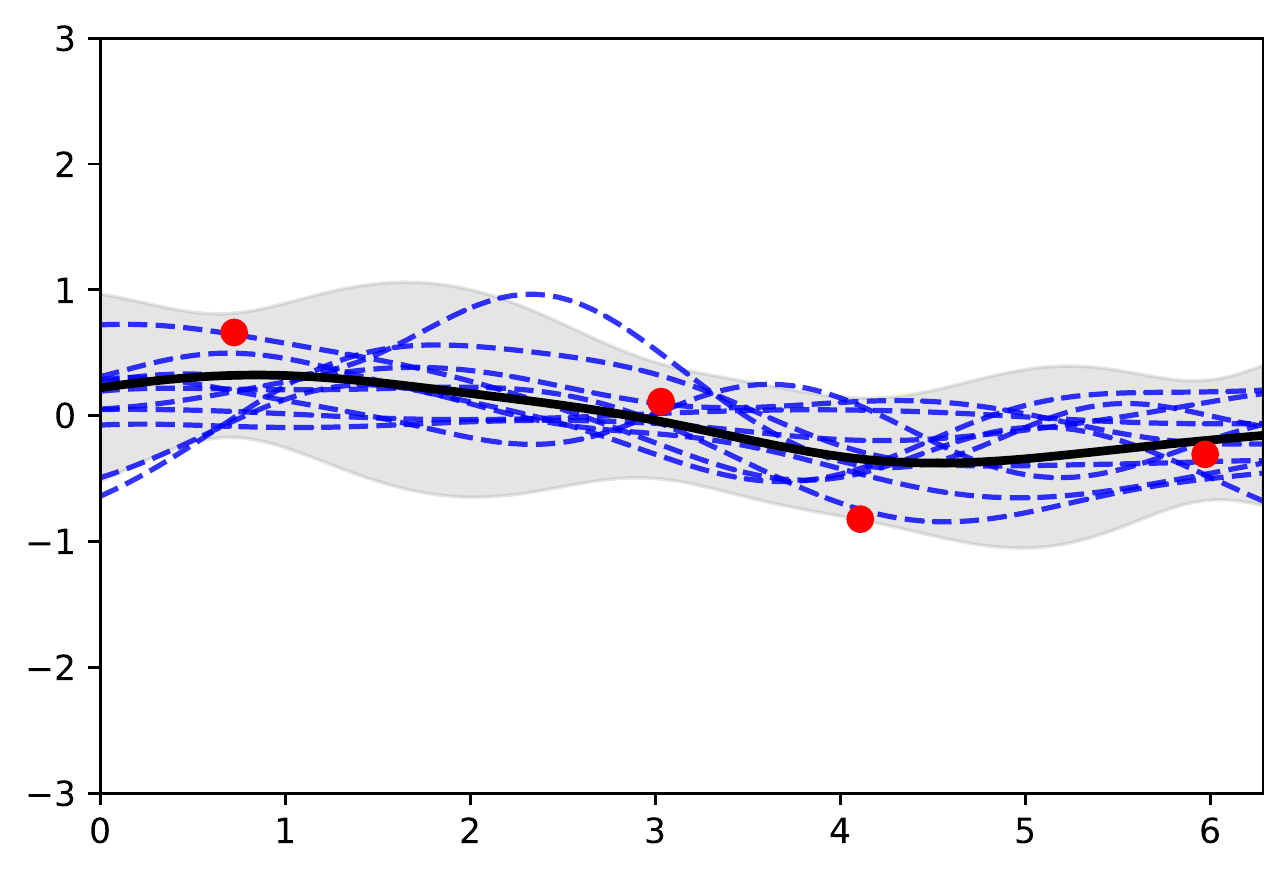}
          \label{fig:RF-A9}
        } \hfill
    \subfloat[$P=4, \lambda=10^{-1}$]{
          \includegraphics[width=0.23\textwidth]{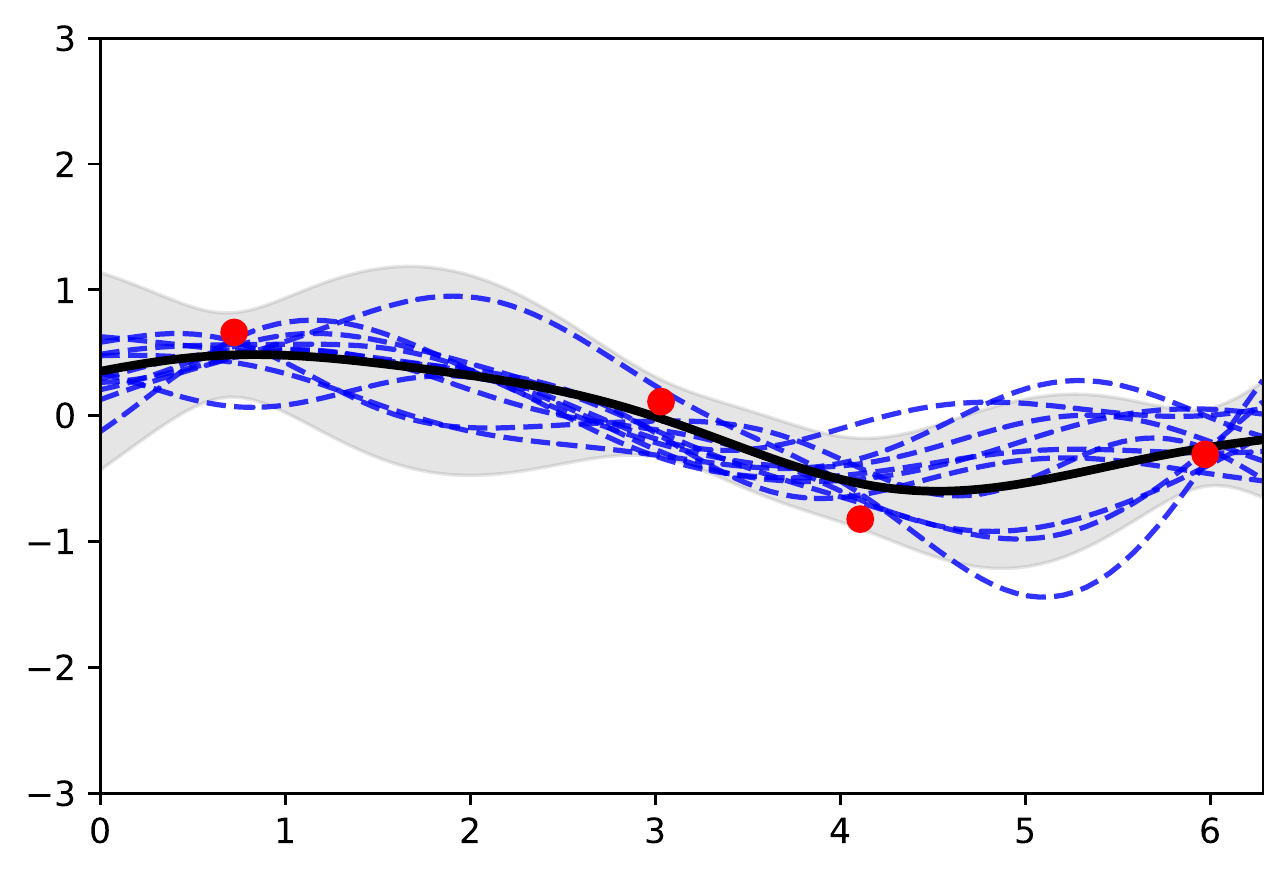}
          \label{fig:RF-A10}
        } \hfill
    \subfloat[$P=10, \lambda=10^{-1}$]{
          \includegraphics[width=0.23\textwidth]{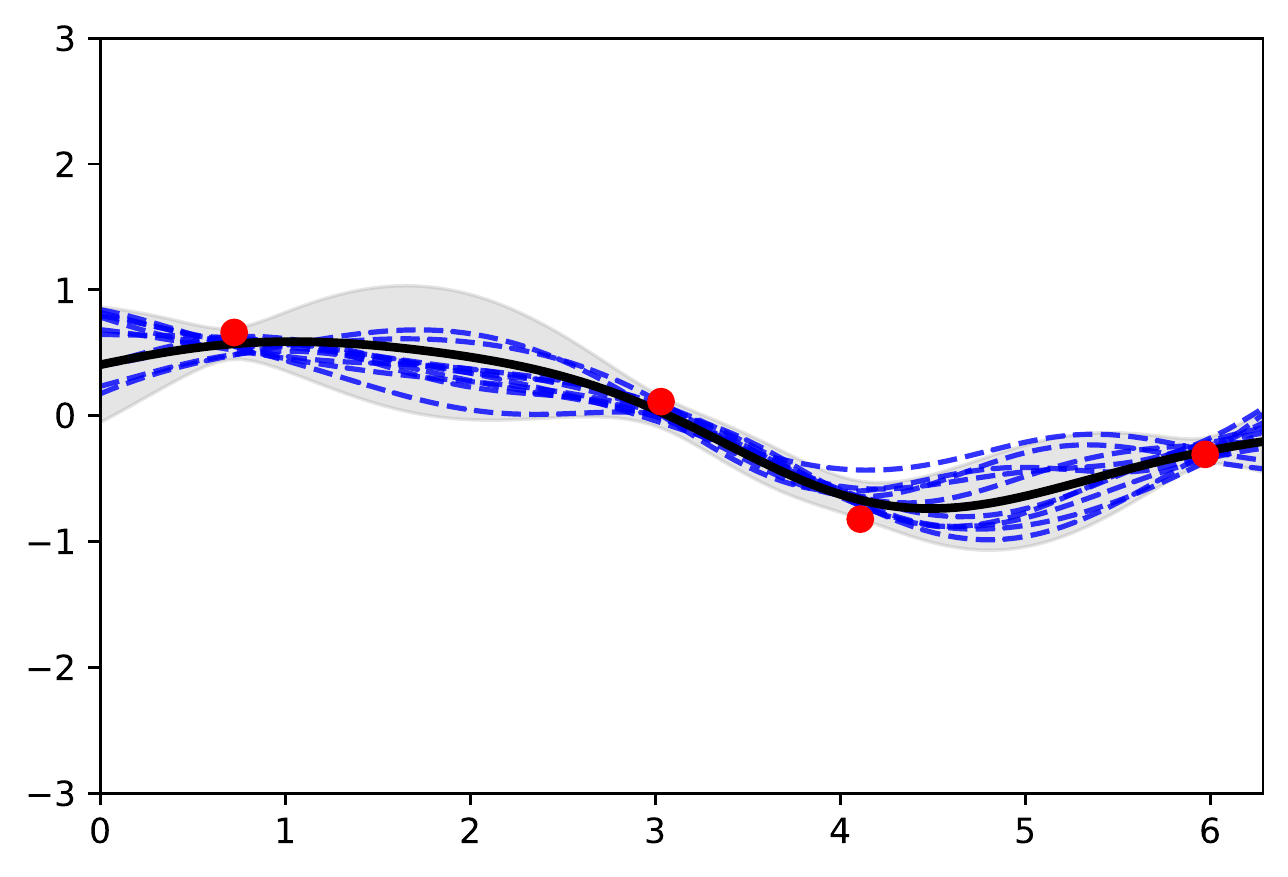}
          \label{fig:RF-A11}
        } \hfill
    \subfloat[$P=100, \lambda=10^{-1}$]{
          \includegraphics[width=0.23\textwidth]{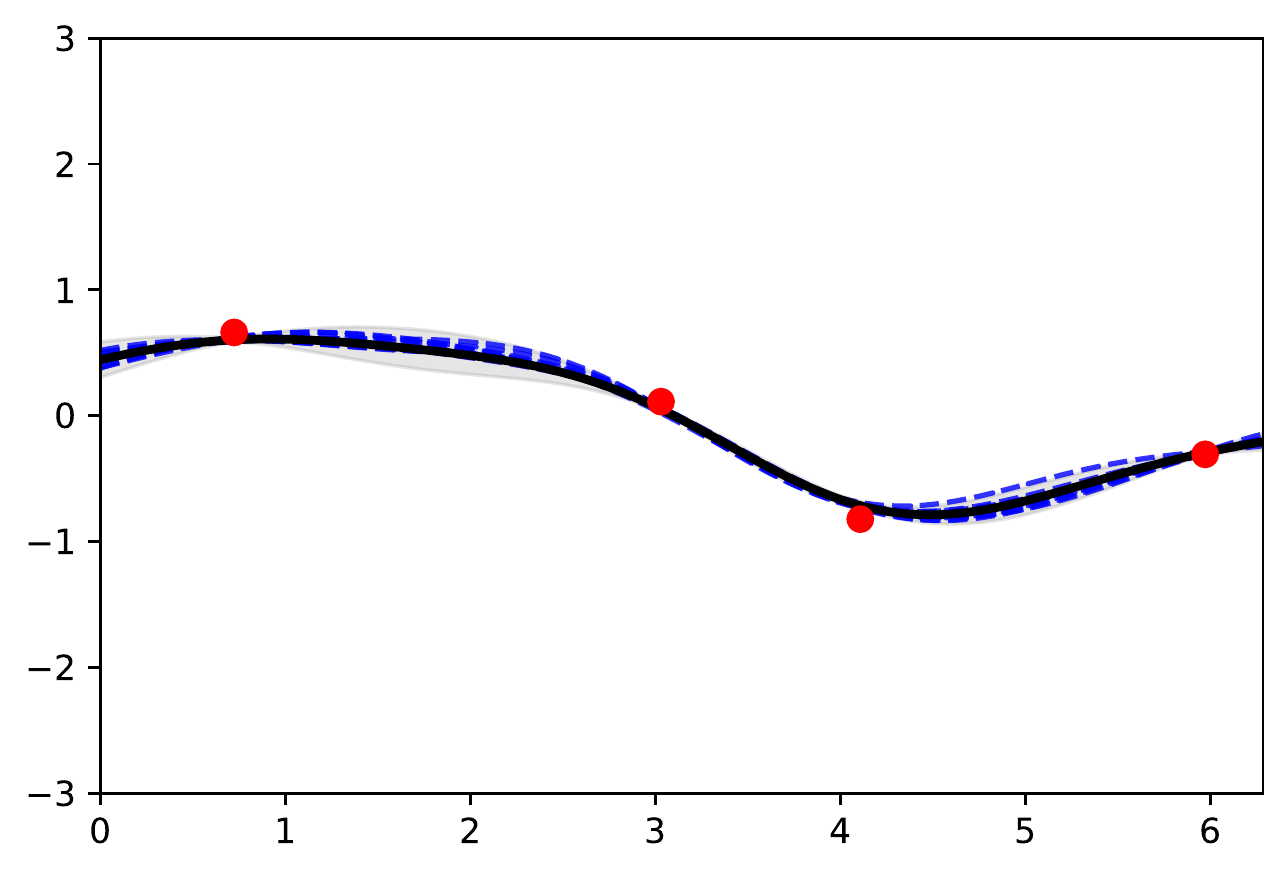}
          \label{fig:RF-A12}
        } \vfill
    \subfloat[$P=2, \lambda=1$]{
          \includegraphics[width=0.23\textwidth]{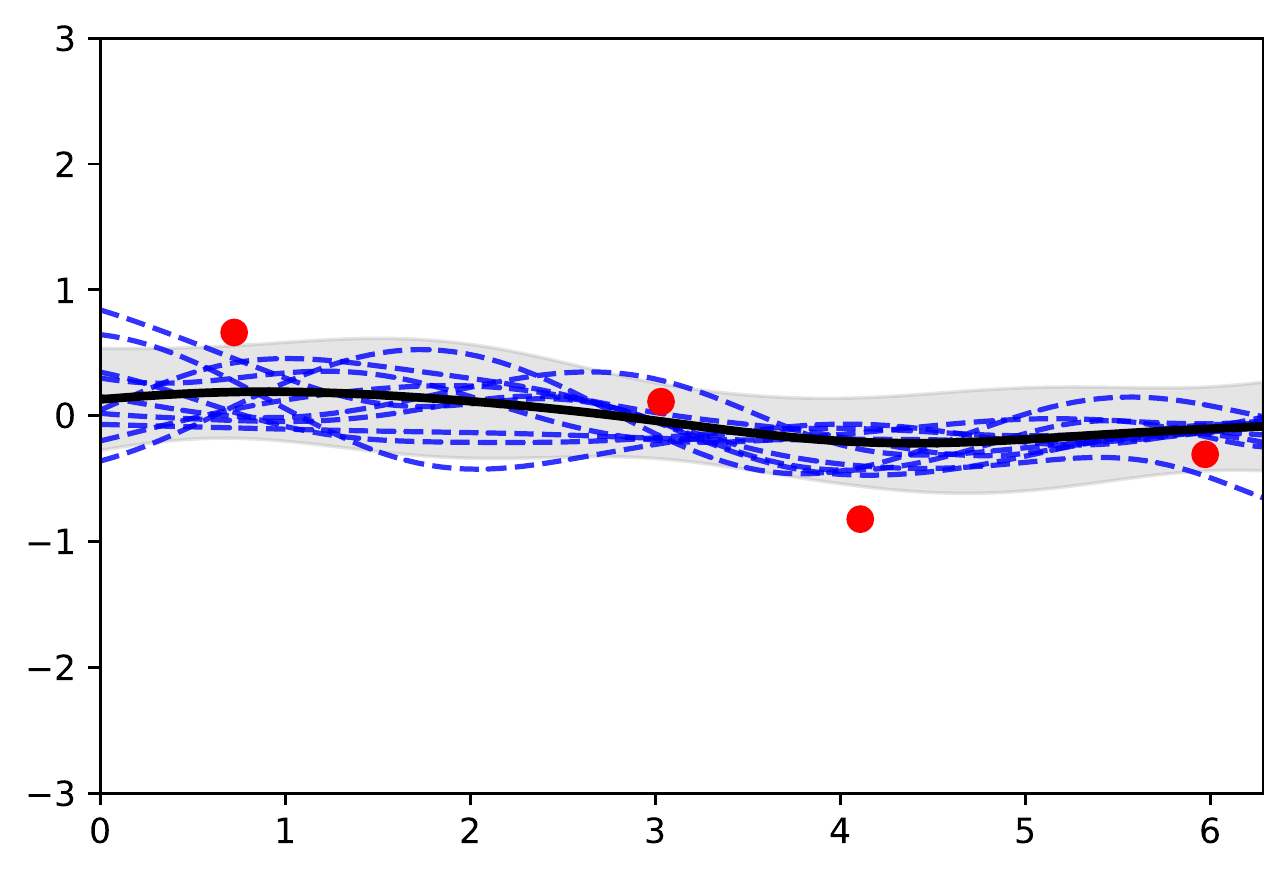}
          \label{fig:RF-A13}
        } \hfill
    \subfloat[$P=4, \lambda=1$]{
          \includegraphics[width=0.23\textwidth]{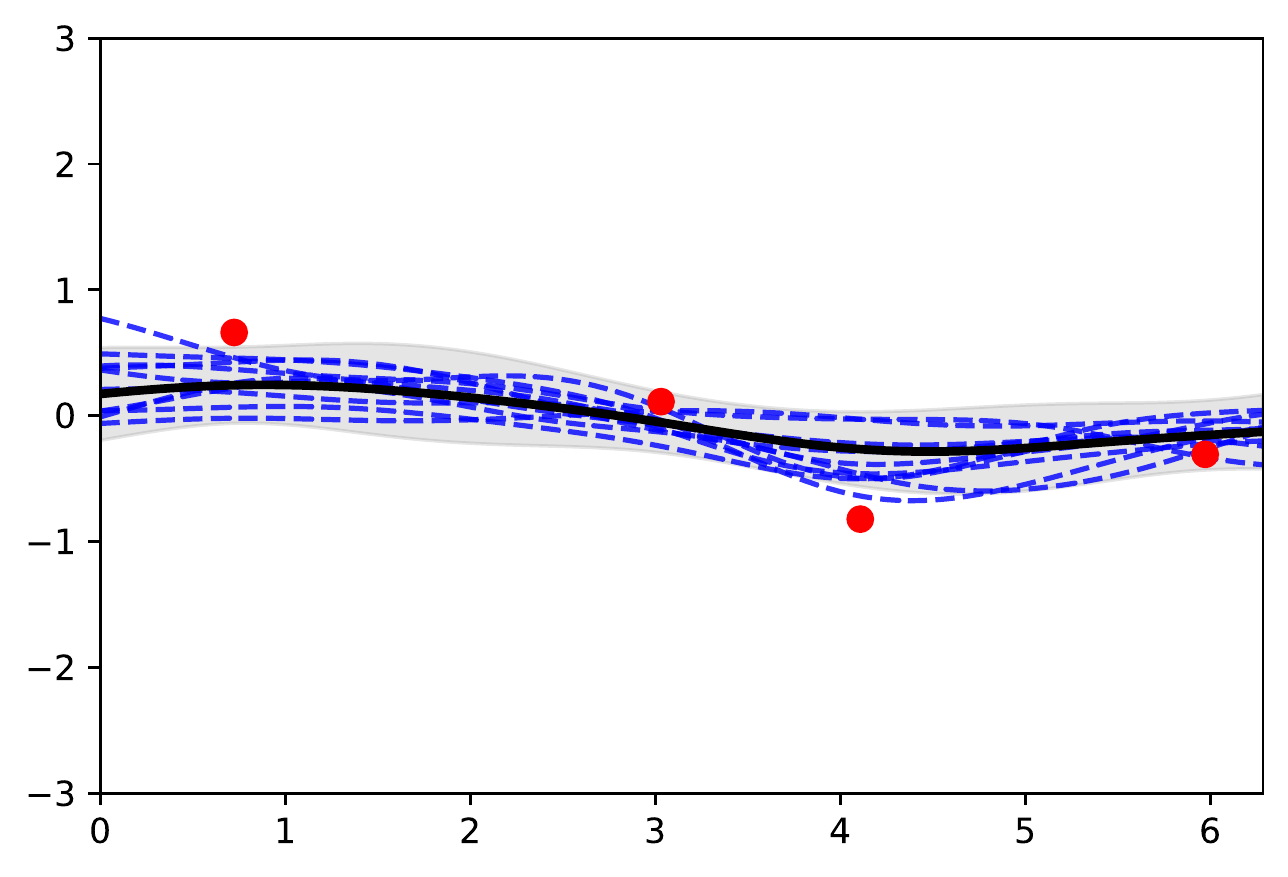}
          \label{fig:RF-A14}
        } \hfill
    \subfloat[$P=10, \lambda=1$]{
          \includegraphics[width=0.23\textwidth]{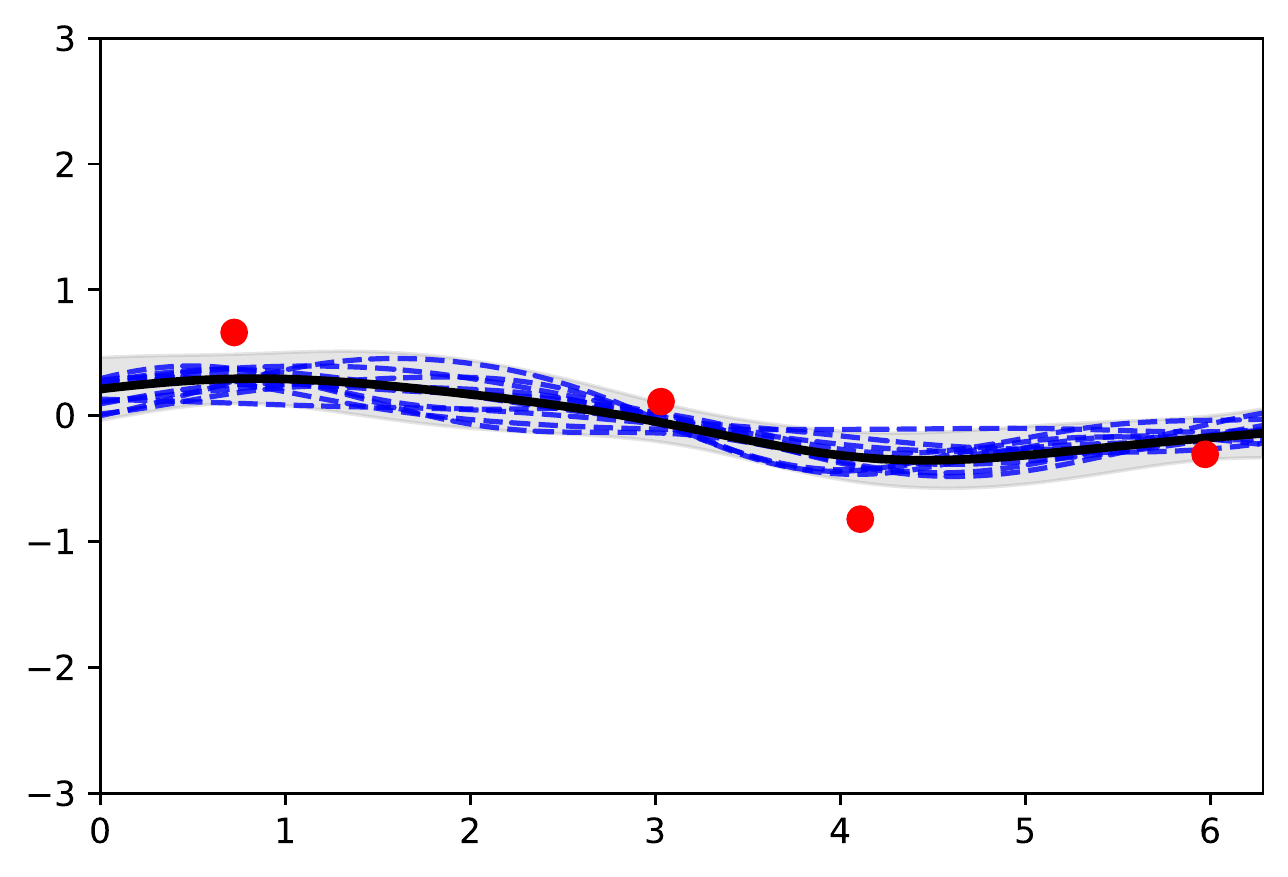}
          \label{fig:RF-A15}
        } \hfill
    \subfloat[$P=100, \lambda=1$]{
          \includegraphics[width=0.23\textwidth]{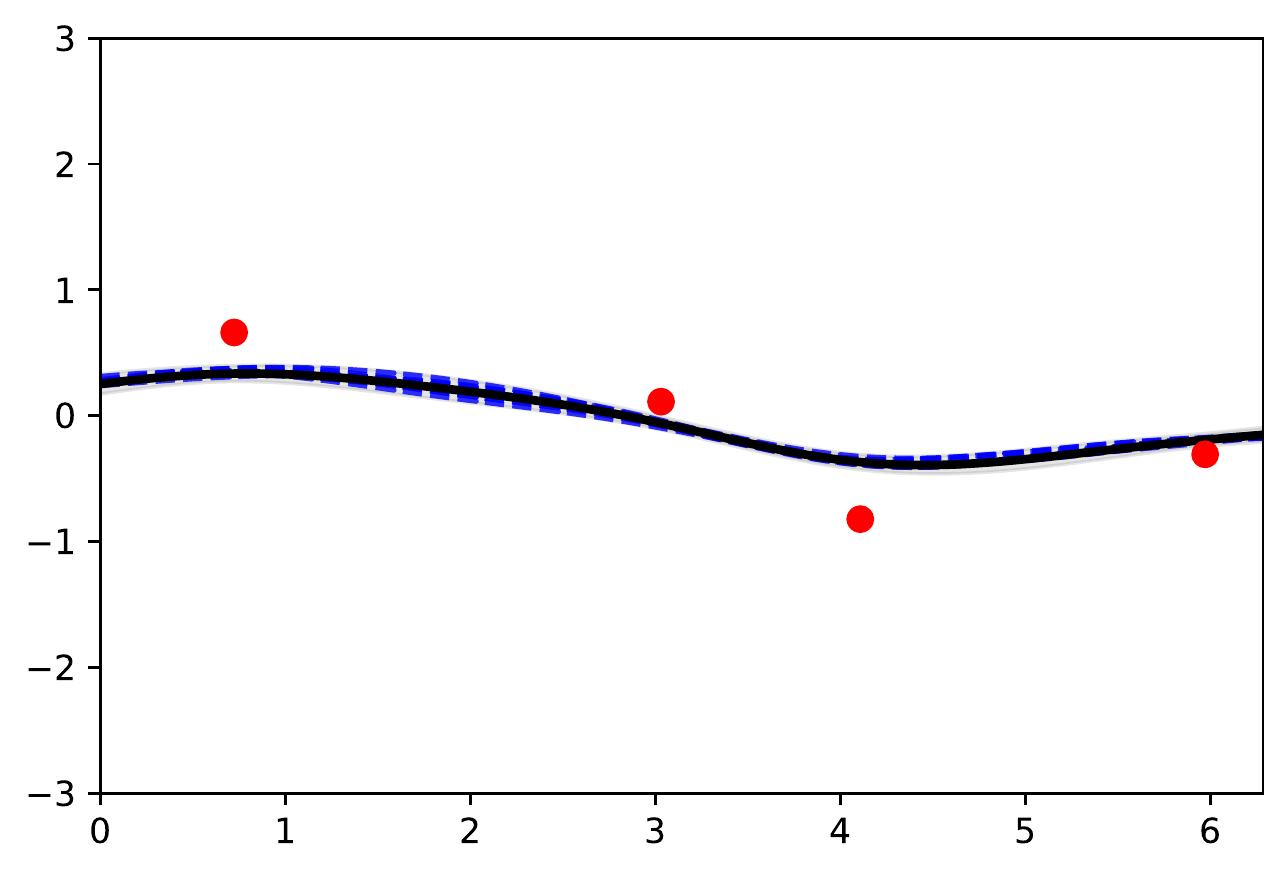}
          \label{fig:RF-A16}
        }
    \caption{\textit{Distribution of the RF predictor.} Red dots represent a sinusoidal dataset $y_i=\sin(x_i)$ for $N=4$ points $x_i$ in $[0, 2\pi)$. For $P\in \{2,4,10,100\} $ and $\lambda\in \{0,10^{-4},10^{-1},1\}$, we sample ten RF predictors (blue dashed lines) and compute empirically the average RF predictor (black lines) with $\pm 2$ standard deviations intervals (shaded regions). \label{fig:RF-predictor-regimes}}
\end{figure}

\clearpage

\subsection{Evolution of the Effective Ridge $\tilde{\lambda}$}
\label{sec:B2}
In Figure \ref{fig:effective-ridge-compare}, we show how the effective ridge $\tilde{\lambda}$ and its derivative $\partial_\lambda\tilde\lambda$ evolve for the selected eigenvalue spectra with various decays (exponential or polynomial) as a function of $\gamma$ and $\lambda$. In Figure \ref{fig:effective-ridge-various-N}, we compare the evolution of $\tilde{\lambda}$ for various $N$.
\vskip0.8cm

\begin{figure}[h!]
    \centering
    \subfloat[Exponential, $\tilde\lambda$]{
        \includegraphics[width=0.325\textwidth]{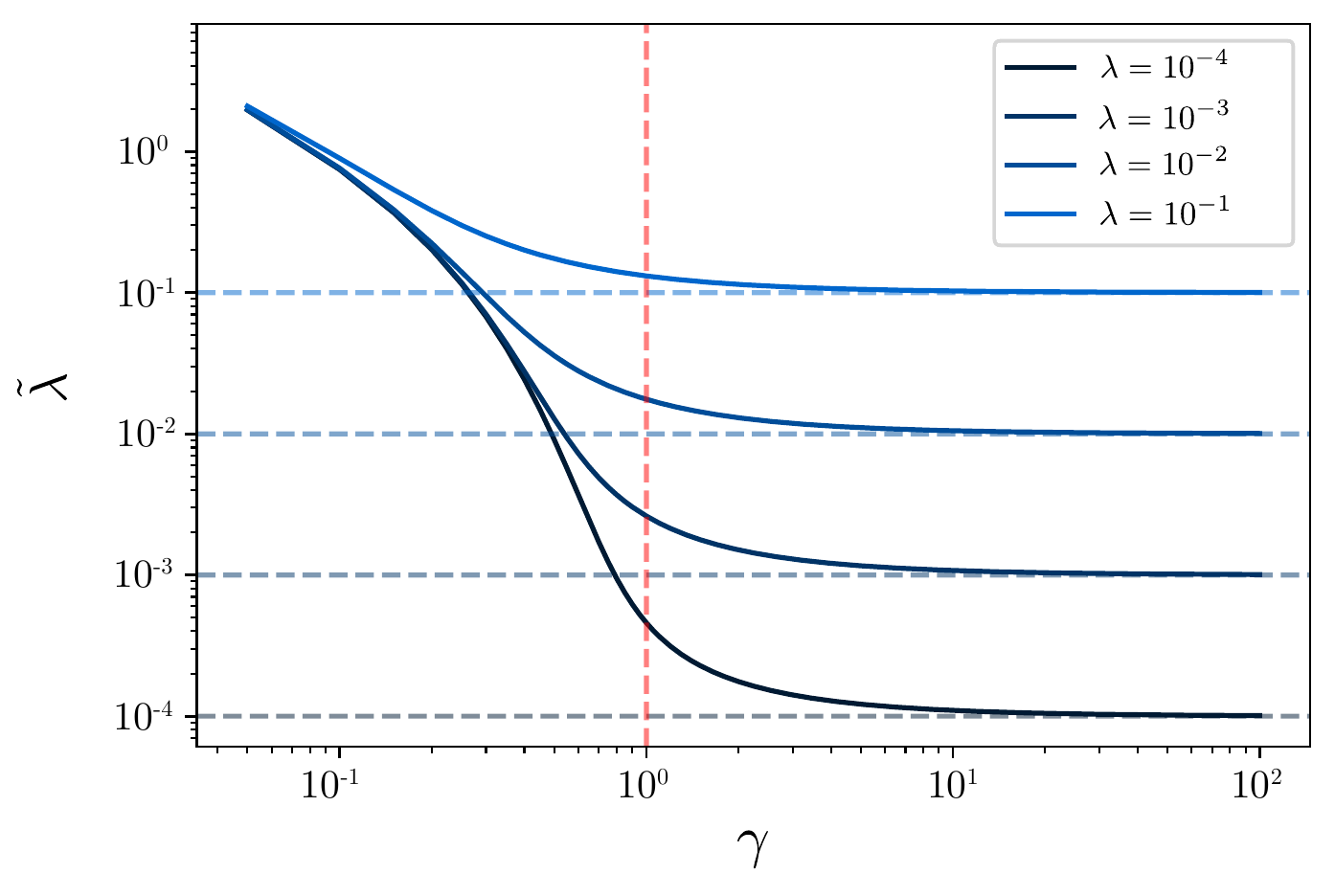}
        \label{fig:eff-ridge-exp}
        } \hfill
    \subfloat[Exponential, $\partial_\lambda\tilde\lambda$]{
        \includegraphics[width=0.315\textwidth]{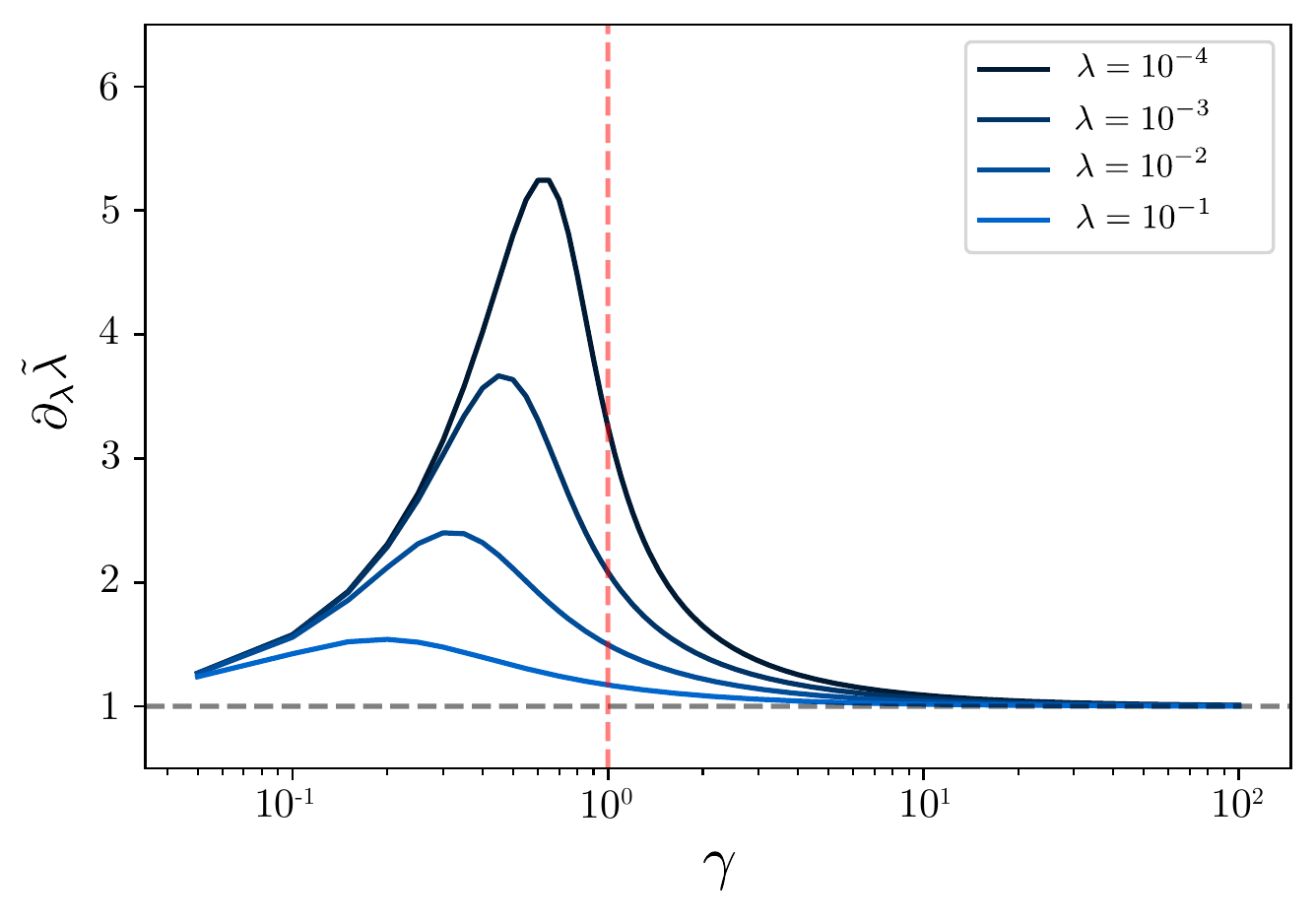}
        \label{fig:der-eff-ridge-exp}
        } \hfill
    \subfloat[Exponential, $\tilde\lambda$]{
        \includegraphics[width=0.325\textwidth]{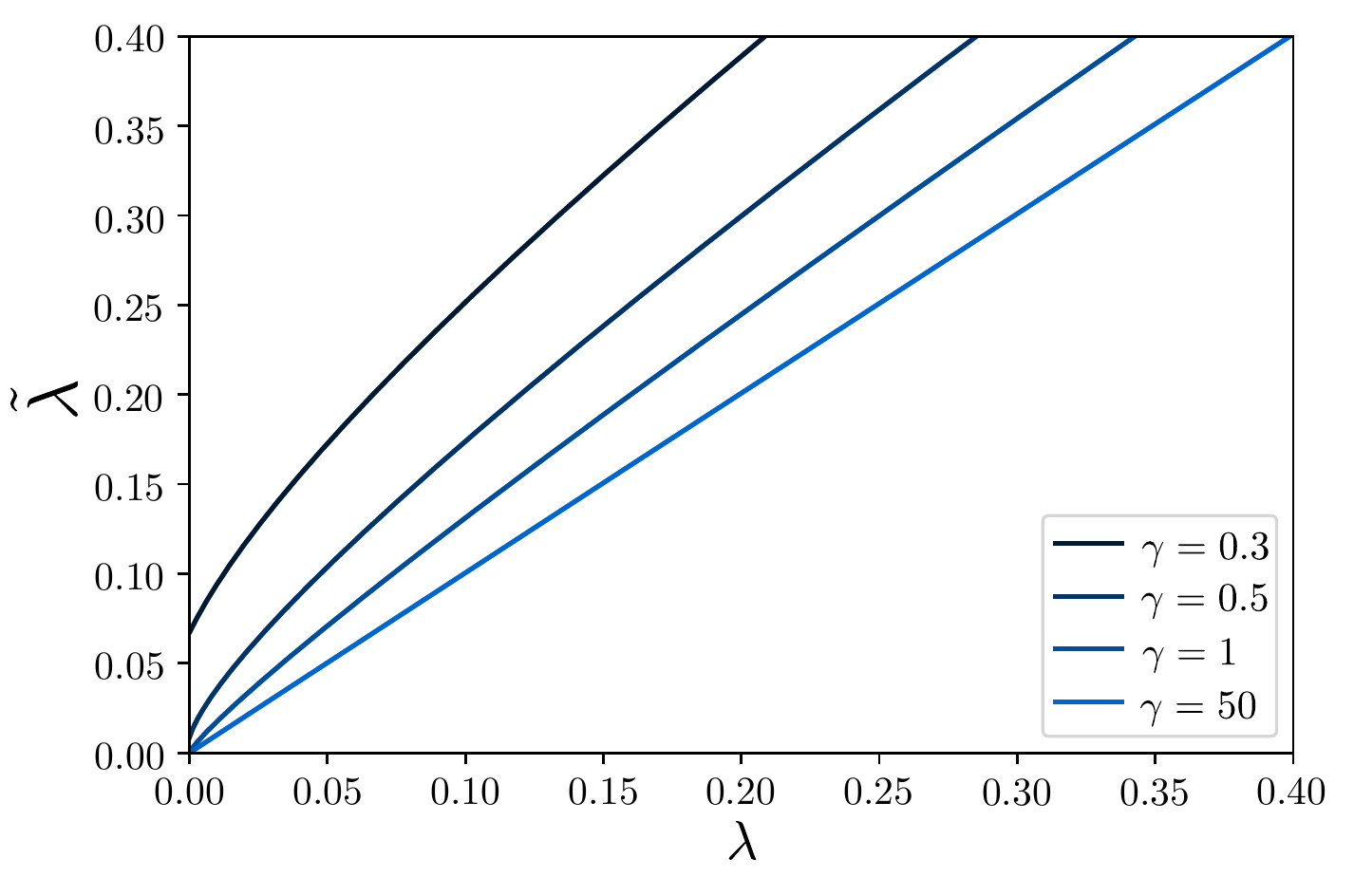}
        \label{fig:eff-ridge-pol}
        } \vfill
    \subfloat[Polynomial, $\tilde\lambda$]{
        \includegraphics[width=0.325\textwidth]{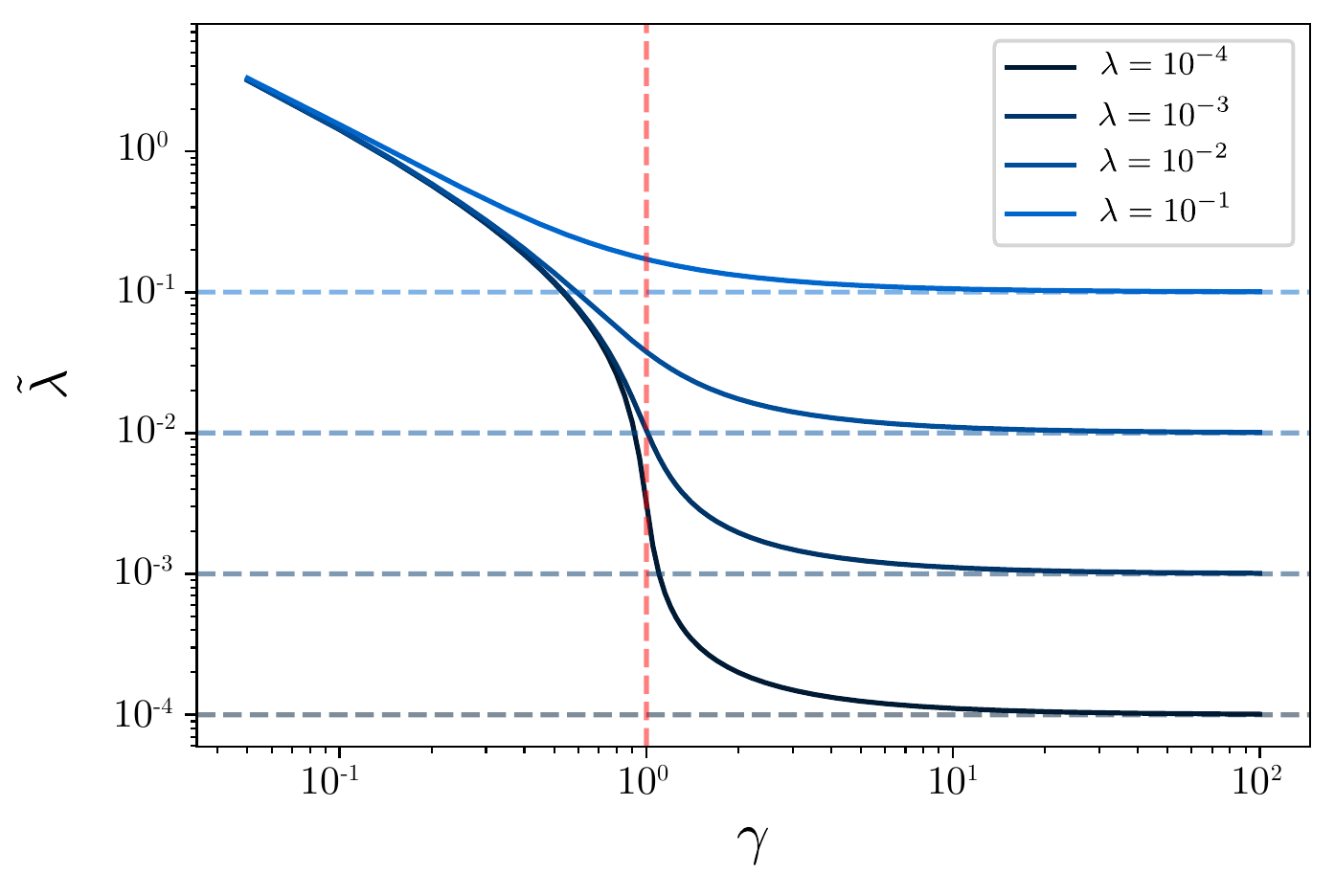}
        \label{fig:der-eff-ridge-pol}
        } \hfill
    \subfloat[Polynomial, $\partial_\lambda\tilde\lambda$]{
        \includegraphics[width=0.315\textwidth]{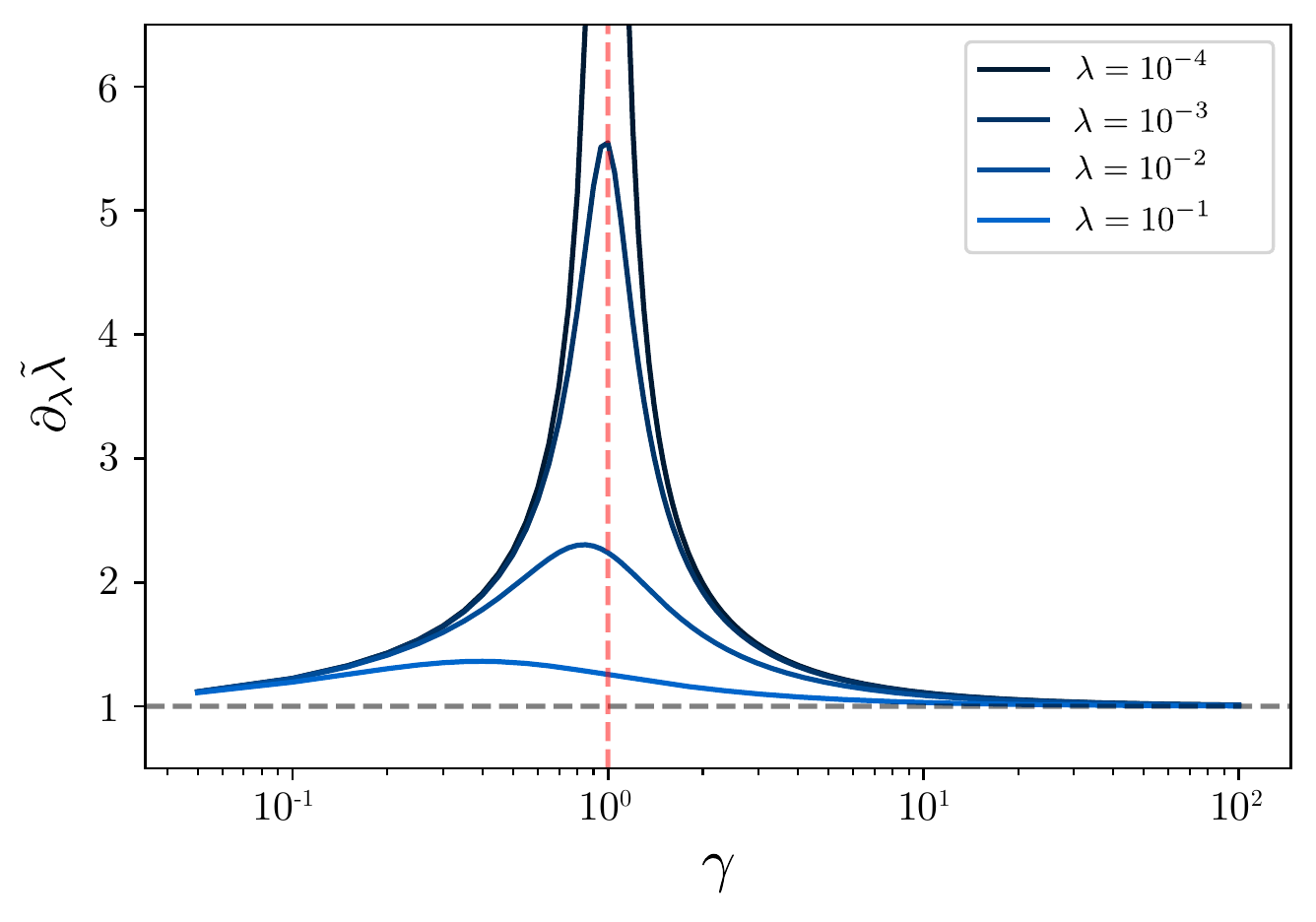}
        \label{fig:der-eff-ridge-pol}
        } \hfill
    \subfloat[Polynomial, $\tilde\lambda$]{
        \includegraphics[width=0.325\textwidth]{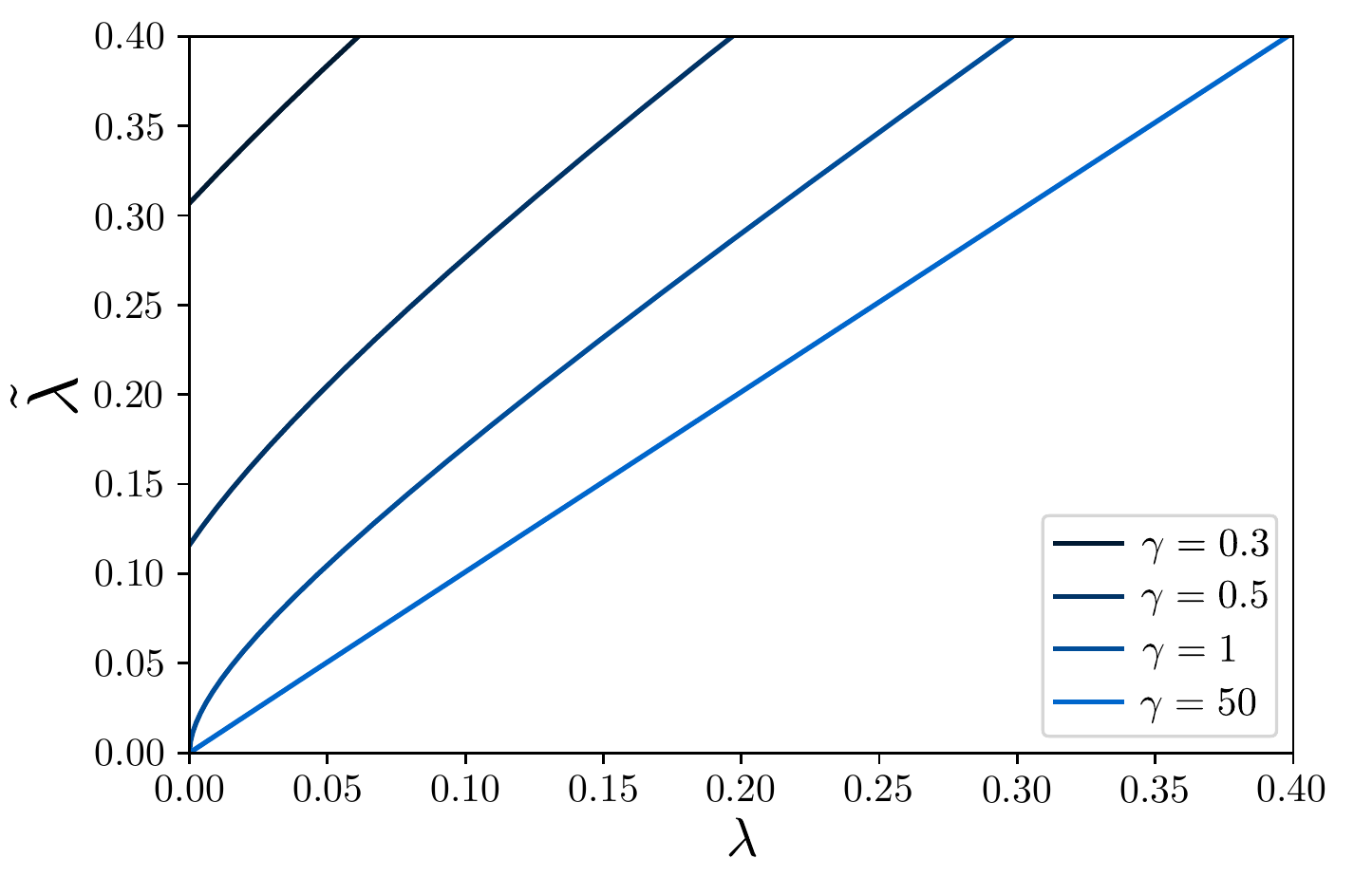}
        \label{fig:der-eff-ridge-pol}
        }

    \caption{\textit{Evolution of the effective ridge $\tilde{\lambda}$ and its derivative $\partial_\lambda \tilde{\lambda}$ for various levels of ridge $\lambda$ (or $\gamma$) and for $N=20$.}  We consider two different decays for $d_1,\ldots,d_N$: (i) exponential decay in $i$ (i.e. $d_i = e^{-\frac{(i-1)}{2}}$, top plots) and (ii) polynomial decay in $i$ (i.e. $d_i = \frac{1}{i}$, bottom plots). \label{fig:effective-ridge-compare}}
\end{figure}

\vskip1cm

\begin{figure}[h!]
    \centering
    \subfloat[$\lambda = 10^{-4}$]{
        \includegraphics[width=0.33\textwidth]{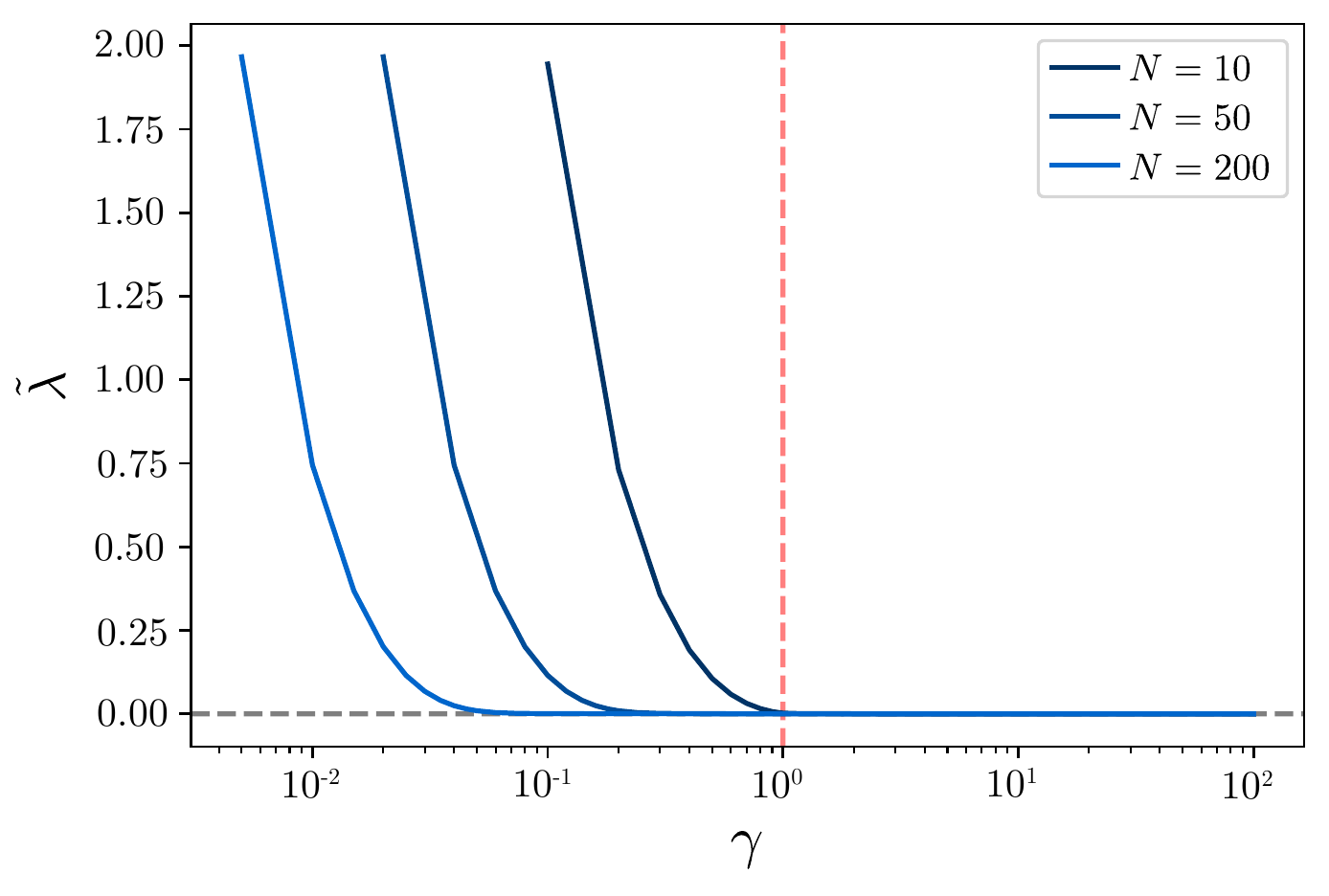}
        \label{fig:ridgeless}
        }
    \subfloat[$\lambda = 0.5$]{
        \includegraphics[width=0.33\textwidth]{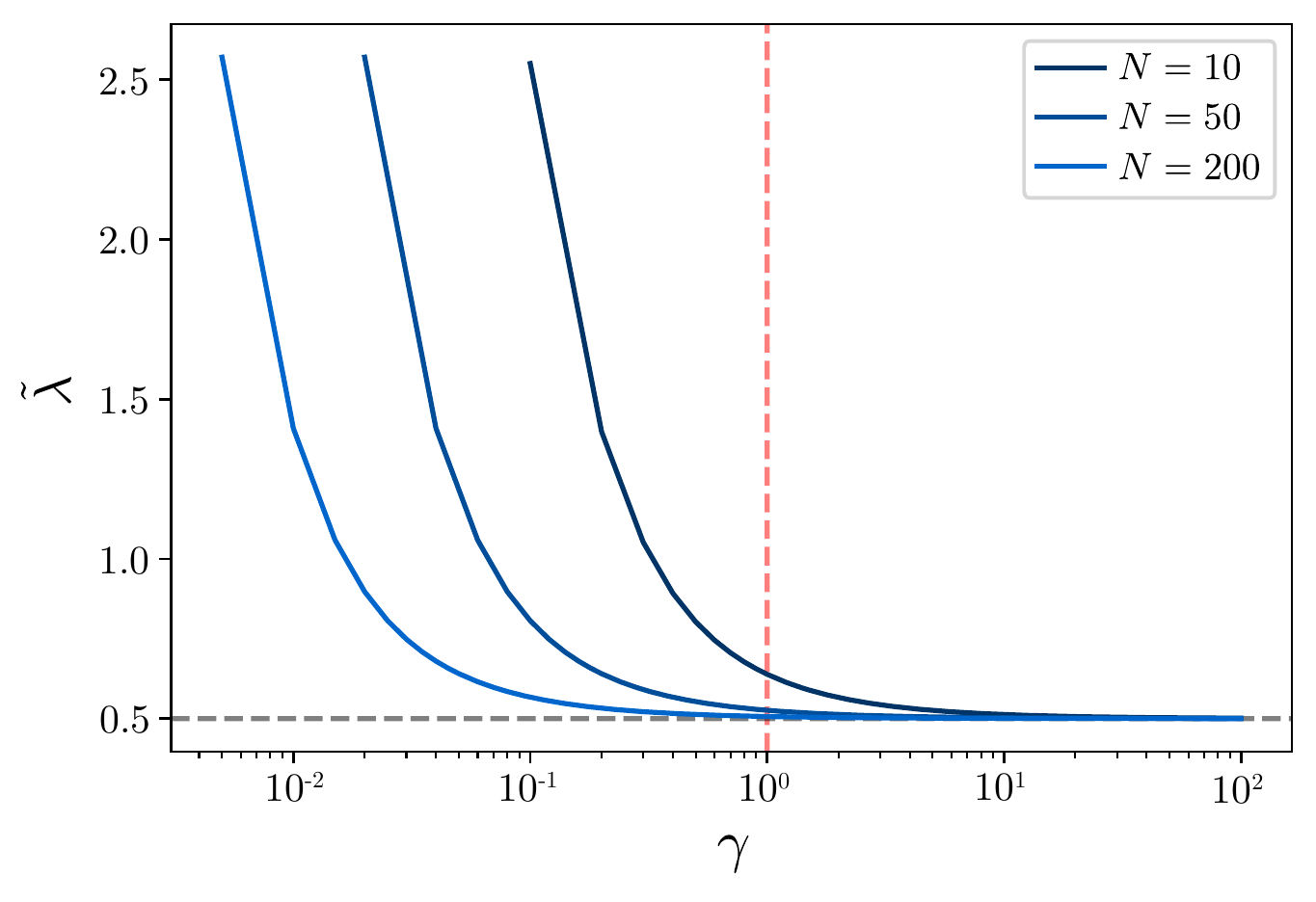}
        \label{fig:ridge}
        }
    \caption{\textit{Evolution of effective ridge $\tilde{\lambda}$ as a function of $\gamma$ for two ridges \textbf{(a)} $\lambda = 10^{-4}$ and \textbf{(b)} $\lambda = 0.5$ and for various $N$.} We consider an exponential decay for $d_1,\ldots,d_N$, i.e. $d_i = e^{-\frac{(i-1)}{2}}$.\label{fig:effective-ridge-various-N}}
\end{figure}

\clearpage

\subsection{Eigenvalues of $A_{\lambda}$}
\label{sec:B3}
The (random) prediction $\hat{y}$ on the training data is given by $\hat{y}=A_{\lambda}y$ where $A_{\lambda}=F(F^{T}F + \lambda I)^{-1}F^{T}$. The average $\lambda$-RF predictor is $\mathbb E[\hat{f}_{\lambda}^{(RF)}(x)]= K(x,X)K(X,X)^{-1} \mathbb E [A_{\lambda}] y$. We denote by $\tilde{d}_1, \ldots \tilde{d}_N$ the eigenvalues of $\mathbb E [A_{\lambda}]$. By Proposition \ref{prop:ridge_expectation}, the $\tilde{d}_i$'s converge to the eigenvalues $\frac{d_1}{d_1+\tilde{\lambda}},\ldots,\frac{d_N}{d_N+\tilde{\lambda}}$ of $K(K+\tilde{\lambda}I_{N})^{-1}$ as $P$ goes to infinity. We illustrate the evolution of $\tilde{d}_i$ and their convergence to $\frac{d_i}{d_i+\tilde{\lambda}}$ for two different eigenvalue spectrums $d_1, \ldots d_N$.

\hskip0.1cm

\begin{figure}[h!]
        \label{fig:figure_E_A_ev}
    \centering
    {
        \includegraphics[width=0.36\textwidth]{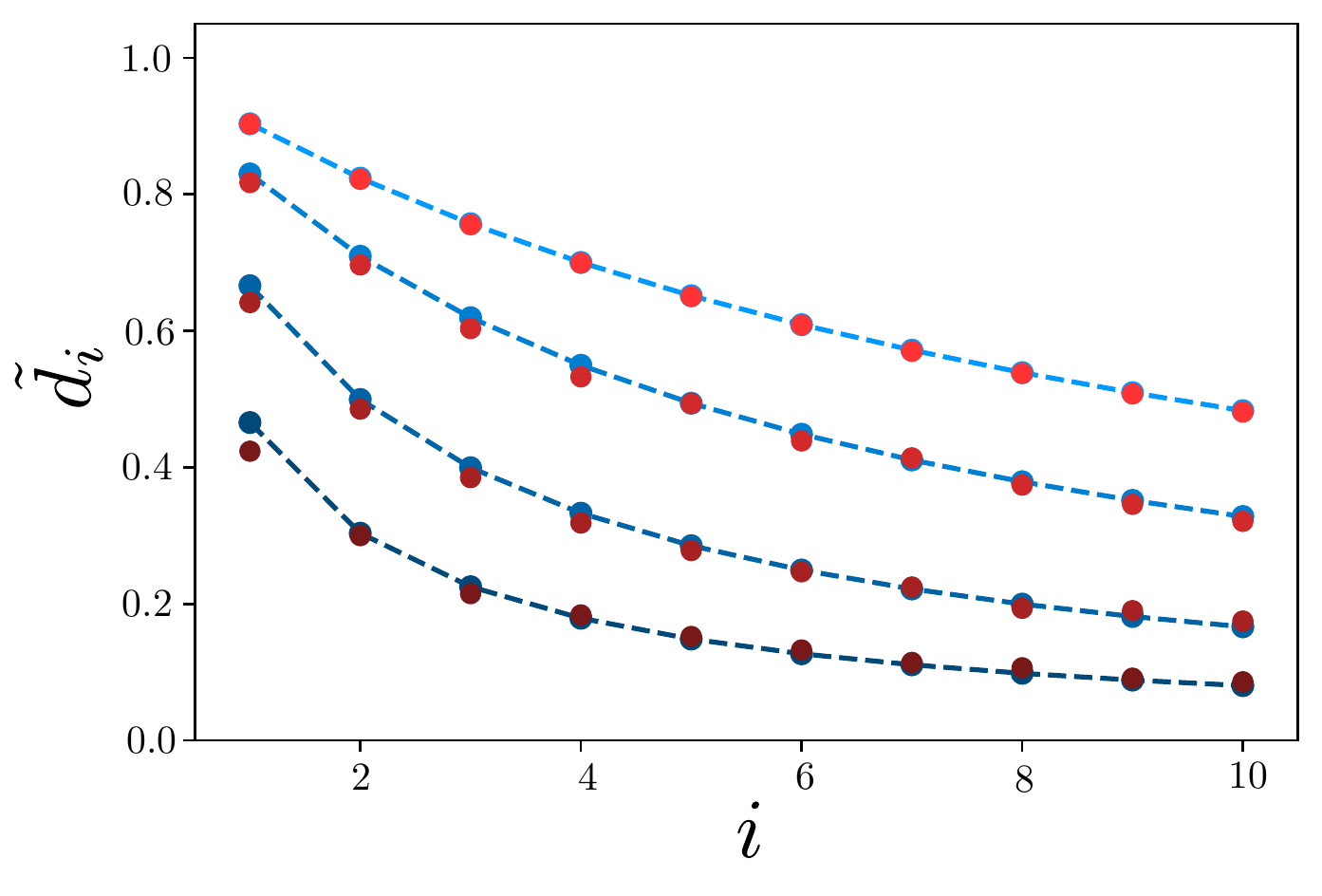}%
      }  \hskip0.5cm
    {
        \includegraphics[width=0.36\textwidth]{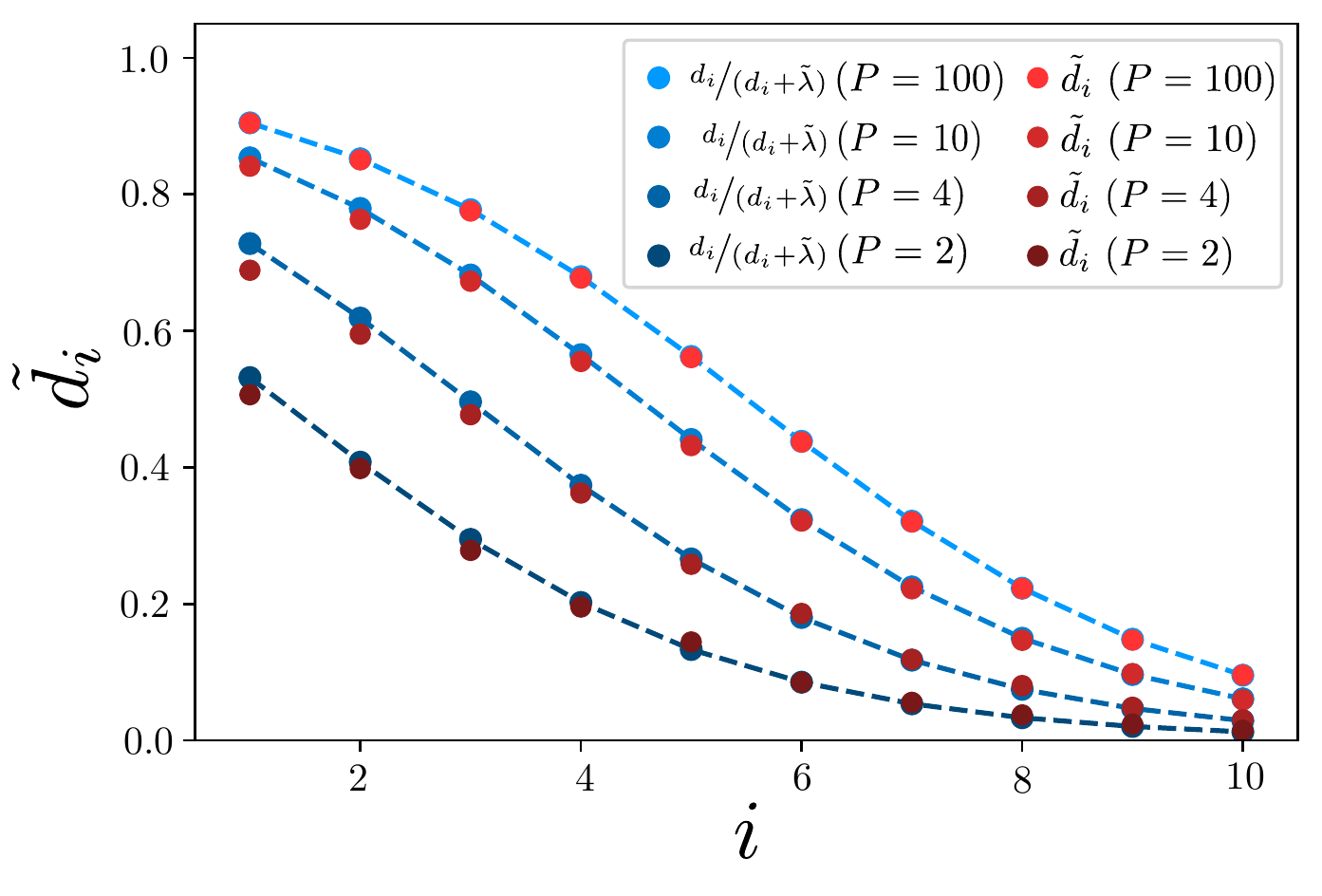}
        \hskip0.2cm
        \rotatebox[origin=r]{-90}{$\lambda = 10^{-1} \qquad \qquad $}
        } \vfill
    {
        \includegraphics[width=0.36\textwidth]{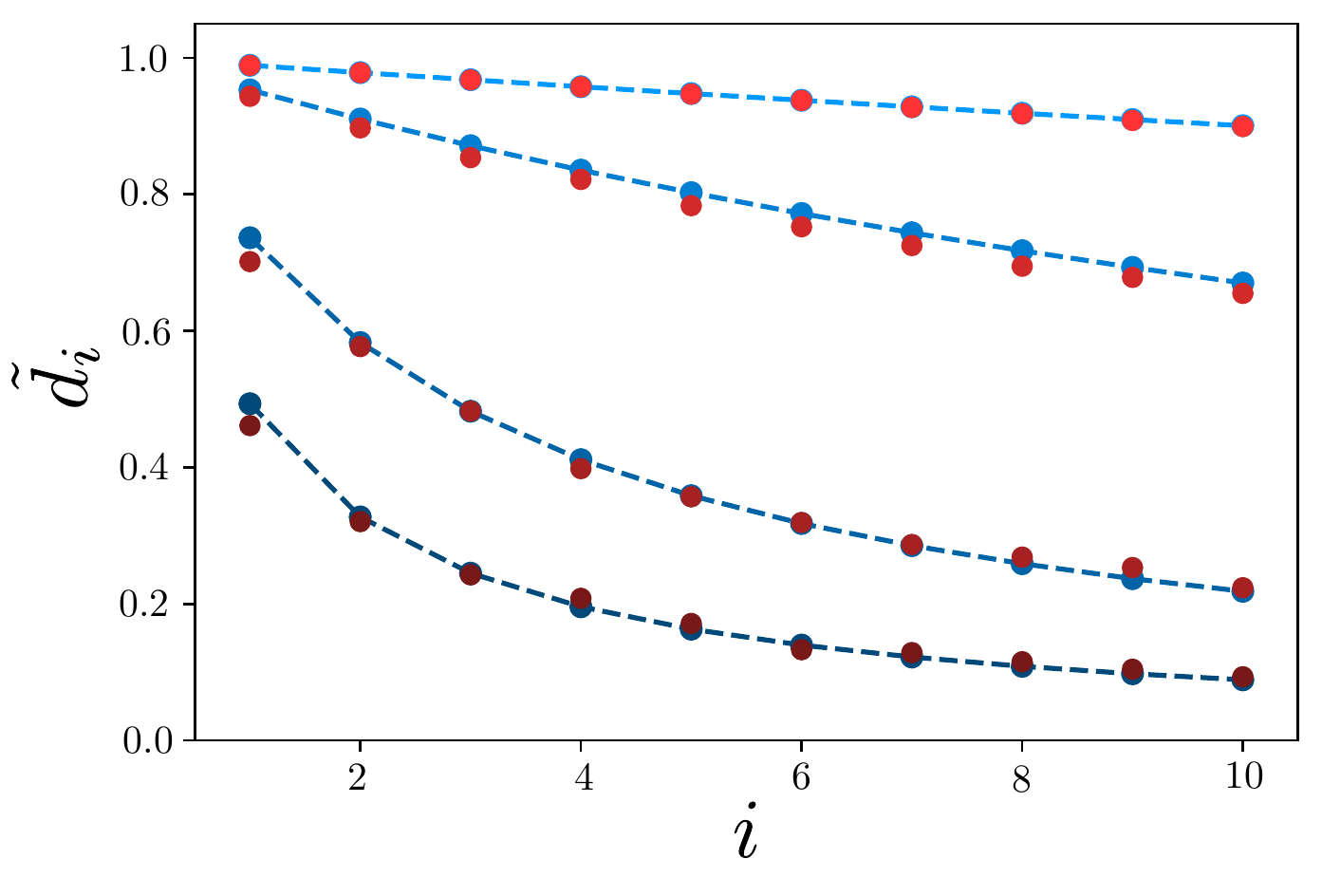}
        } \hskip0.5cm
    {
        \includegraphics[width=0.36\textwidth]{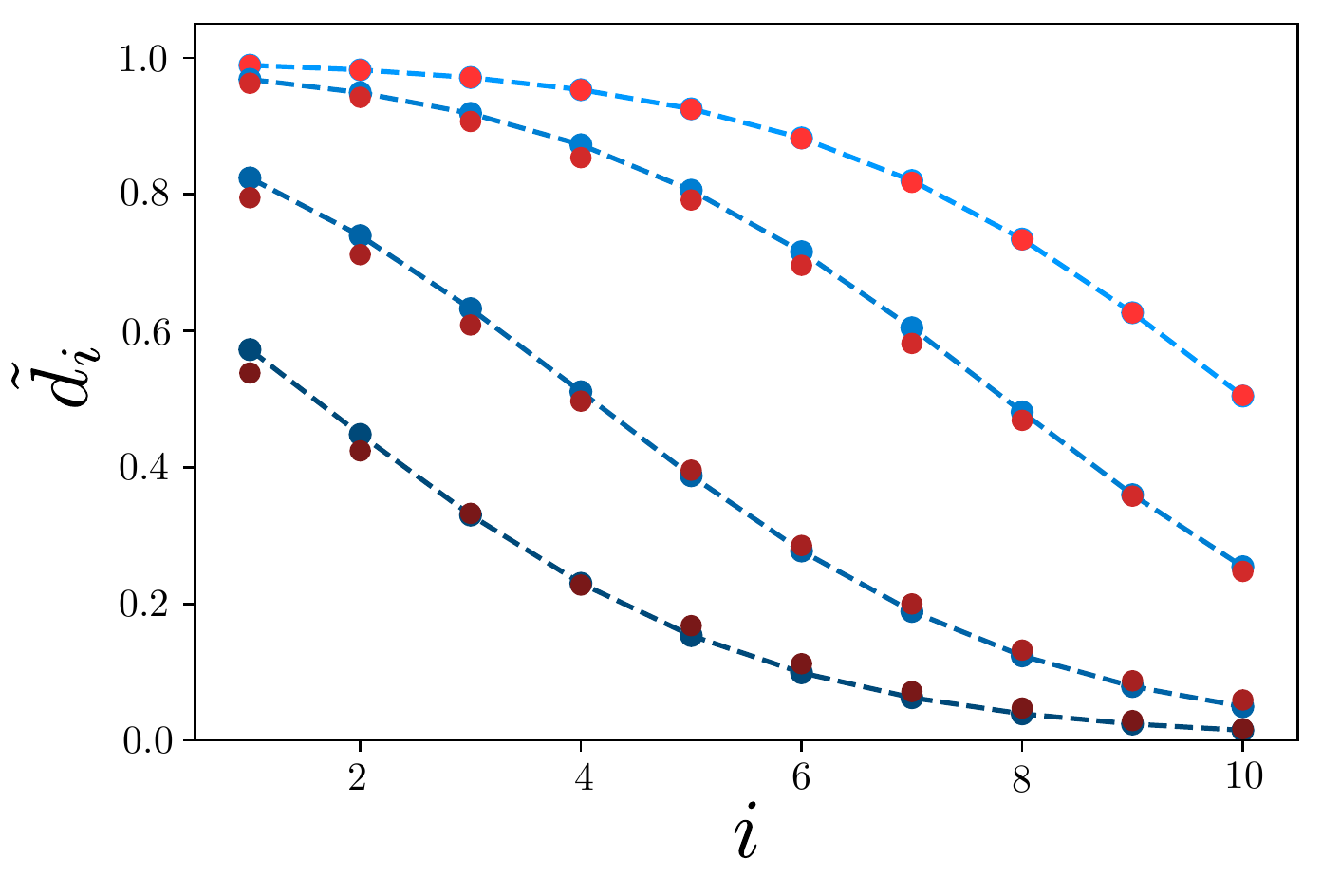}
        \hskip0.2cm
        \rotatebox[origin=r]{-90}{$\lambda = 10^{-2} \qquad \qquad $}
          } \vfill
    {
        \includegraphics[width=0.36\textwidth]{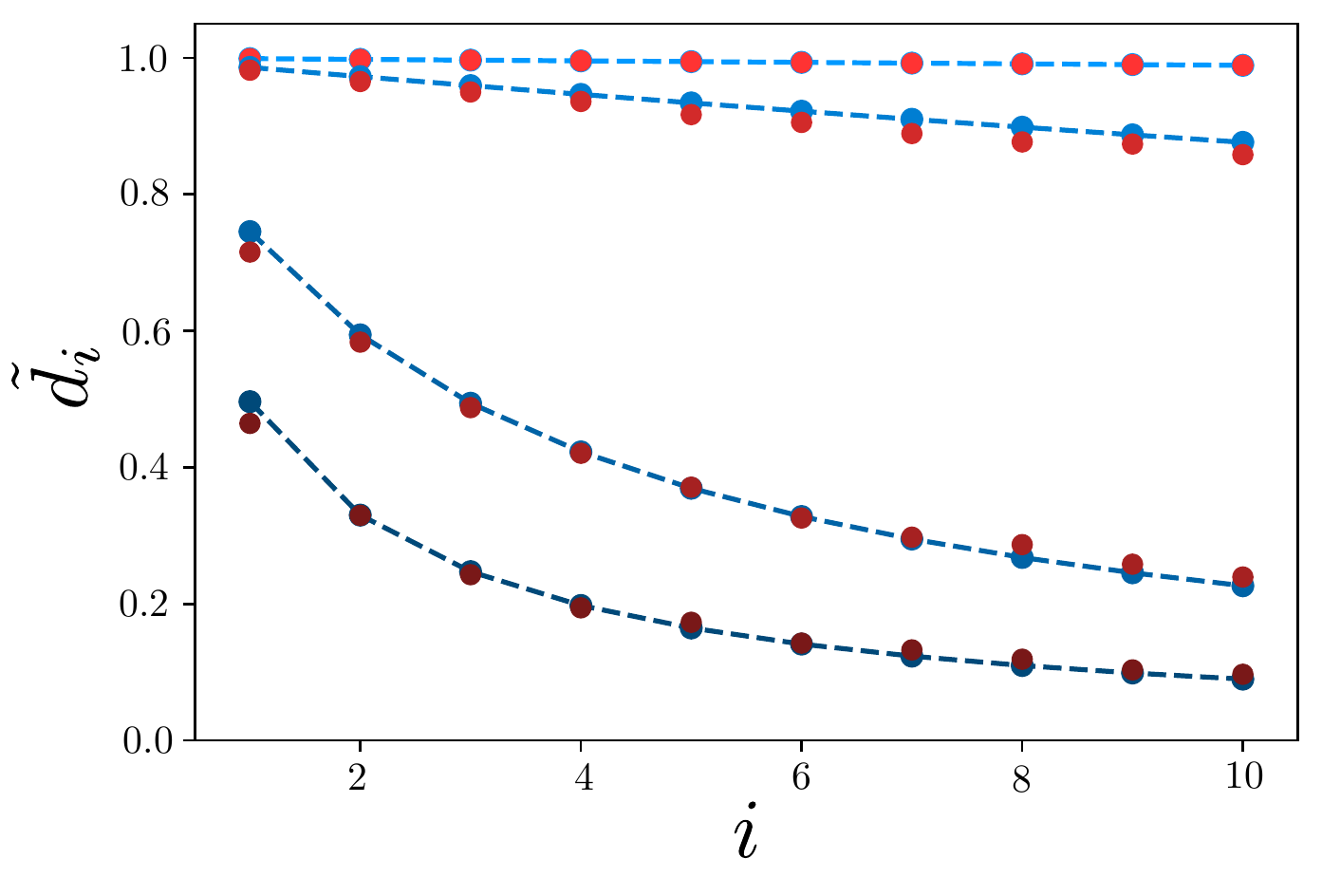}%
        } \hskip0.5cm
    {
        \includegraphics[width=0.36\textwidth]{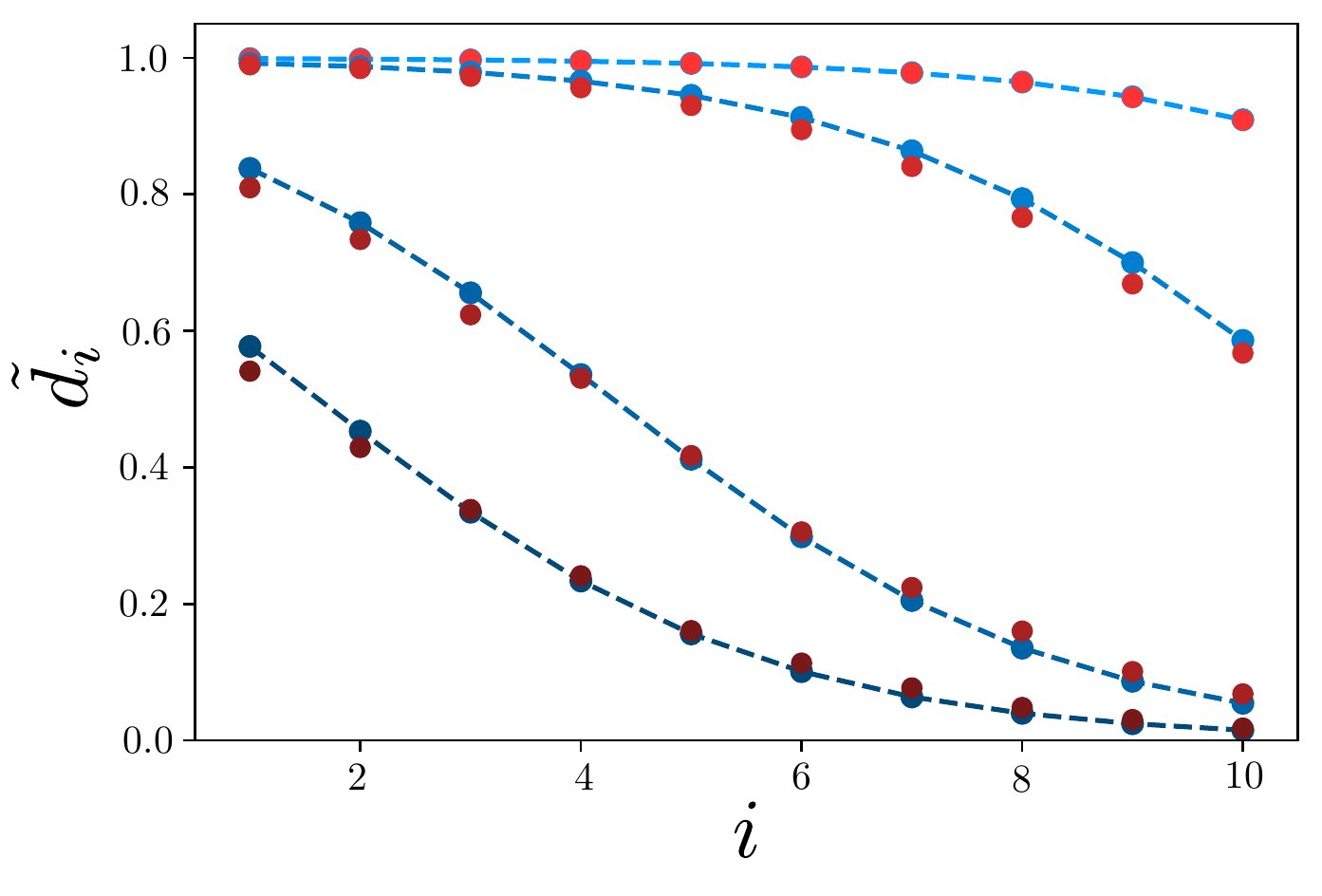}
        \hskip0.2cm
        \rotatebox[origin=r]{-90}{$\lambda = 10^{-3} \qquad \qquad $}
        } \vfill
    {
        \includegraphics[width=0.36\textwidth]{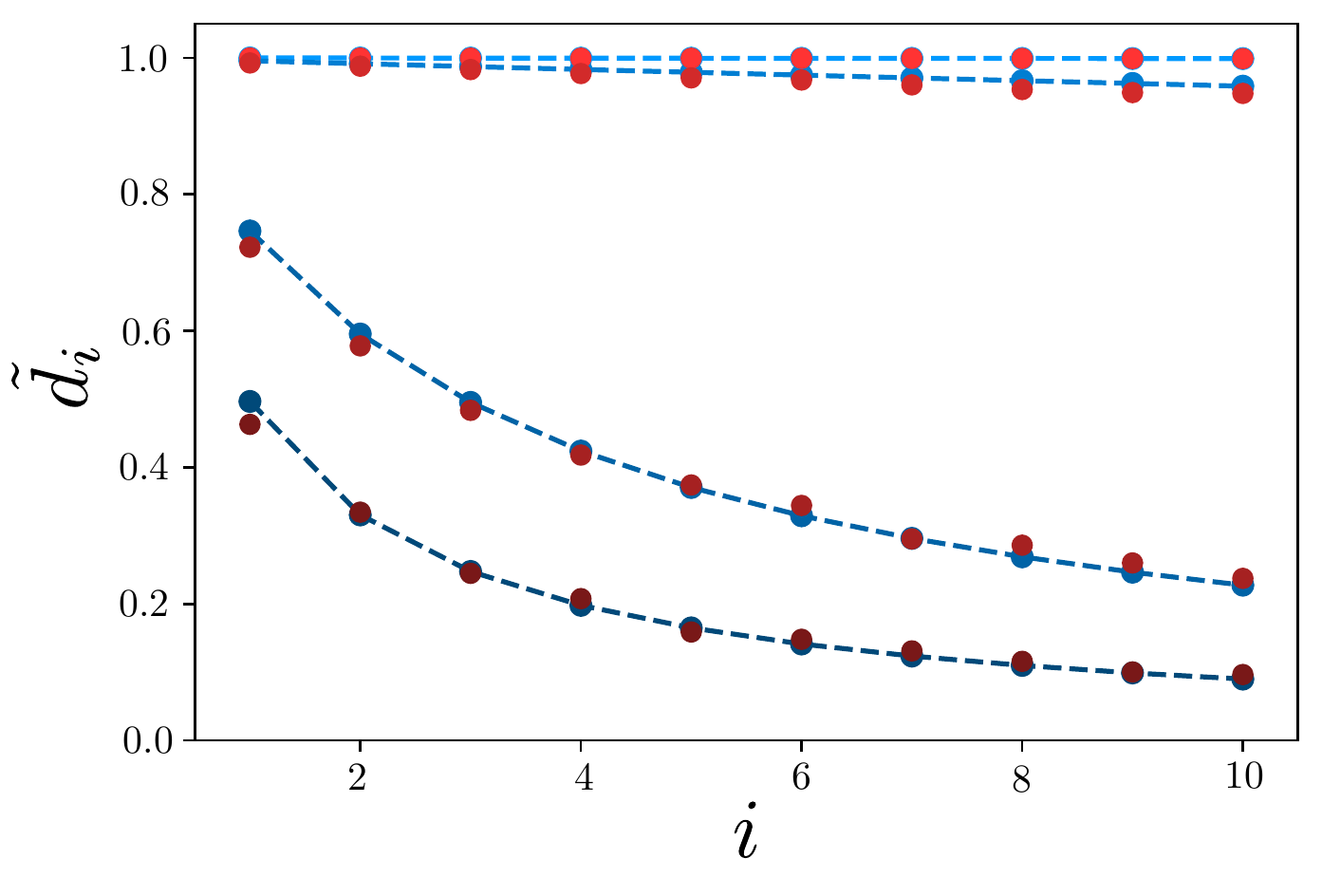}
        } \hskip0.5cm
    {
        \includegraphics[width=0.36\textwidth]{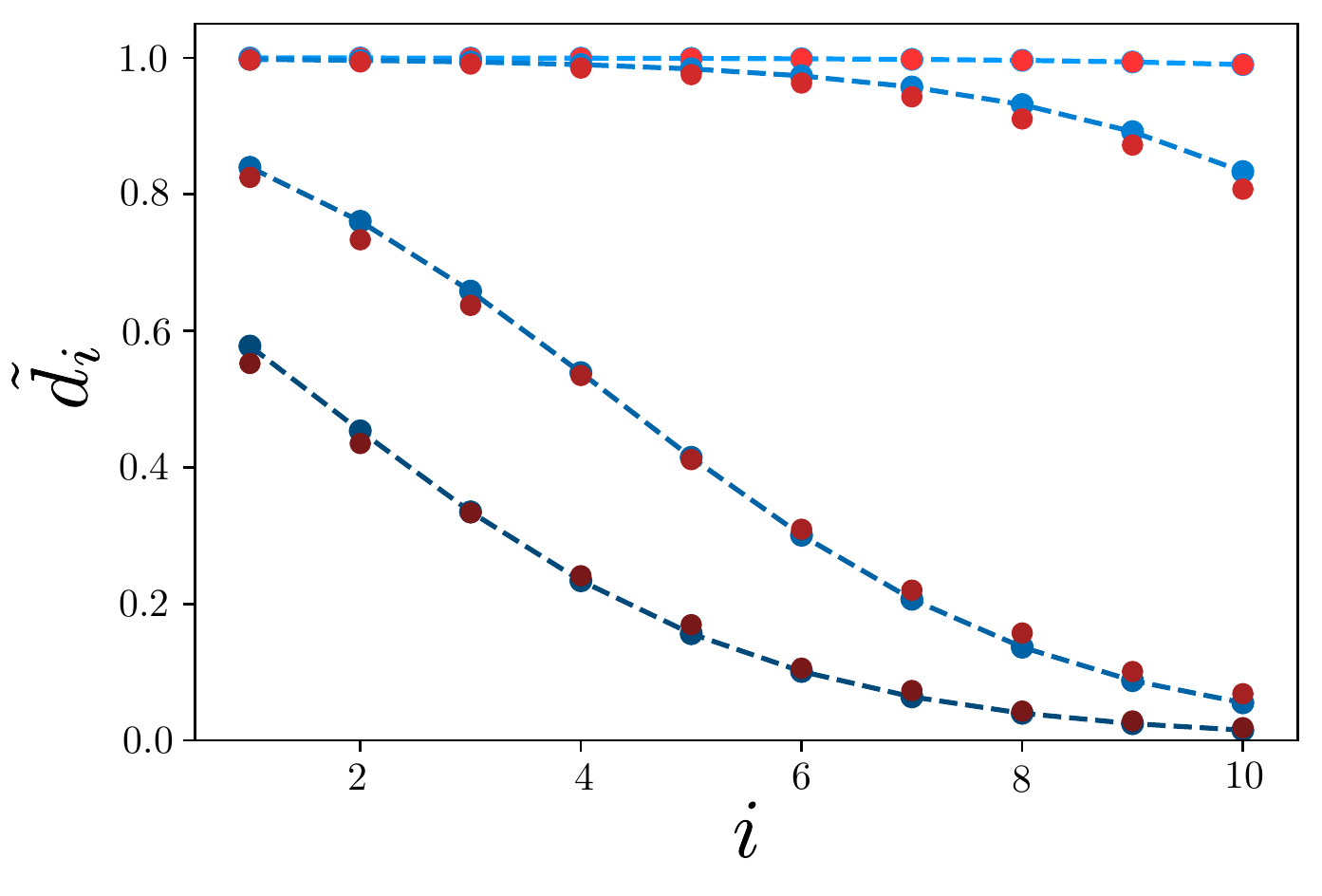}
        \hskip0.2cm
        \rotatebox[origin=r]{-90}{$\lambda = 10^{-4} \qquad \qquad $}
          } \vfill
          \rotatebox{0}{\quad Polynomial} \hskip5.2cm
          \rotatebox{0}{Exponential}
    \label{fig:effective-eigenvalues}
    \caption{\textit{Eigenvalues $\tilde{d}_1, \ldots \tilde{d}_N$ (red\ dots) vs. eigenvalues $\frac{d_1}{d_1+\tilde{\lambda}},\ldots,\frac{d_N}{d_N+\tilde{\lambda}} (blue\ dots)$ for $N=10$.}  We consider various values of $P$ and two different decays for $d_1,\ldots,d_N$: (i) exponential decay in $i$, i.e. $d_i = e^{-\frac{(i-1)}{2}}$ (right plots) and (ii) polynomial decay in $i$, i.e. $d_i=\frac{1}{i}$ (left plots).}
\end{figure}

\pagebreak

\subsection{Average Fourier Features Predictor}
\label{sec:RFF}
The Fourier Features predictor ${\lambda}$-FF is $\hat{f}^{(FF)}(x) = \frac{1}{\sqrt{P}} \sum_{j=1}^{P} \hat{\theta}_j \phi^{(j)}(x)$ where $\phi^{(j)}(x) = \cos(x^{T}w^{(j)} + b^{(j)})$ and  $\hat{\theta}=F^{T}\left(FF^{T}+\lambda \mathrm{I}_N\right)^{-1}y$ with the data matrix $F$ as described in Section \ref{sec:RFF-exp}.

We investigate how close the average ${\lambda}$-FF predictor is to the $\tilde{\lambda}$-KRR predictor and we observe the following:
\begin{enumerate}
  \item The difference of the test errors of the two predictors decreases as  $\gamma$ increases.
  \item In the overparameterized regime, i.e. $P \geq N$, the test error of the $\tilde{\lambda}$-KRR predictor matches with the test error of the ${\lambda}$-FF predictor.
  \item For $N=1000$, strong agreement between the two test errors is observed already for $\gamma > 0.1$. We also observe that Gaussian features achieve lower (or equal) test error than the Fourier features for all $\gamma$ in our experiments.
\end{enumerate}
\vskip1cm

\begin{figure}[h]
    \centering
    \subfloat[$N = 100$]{
        \includegraphics[width=0.45\textwidth]{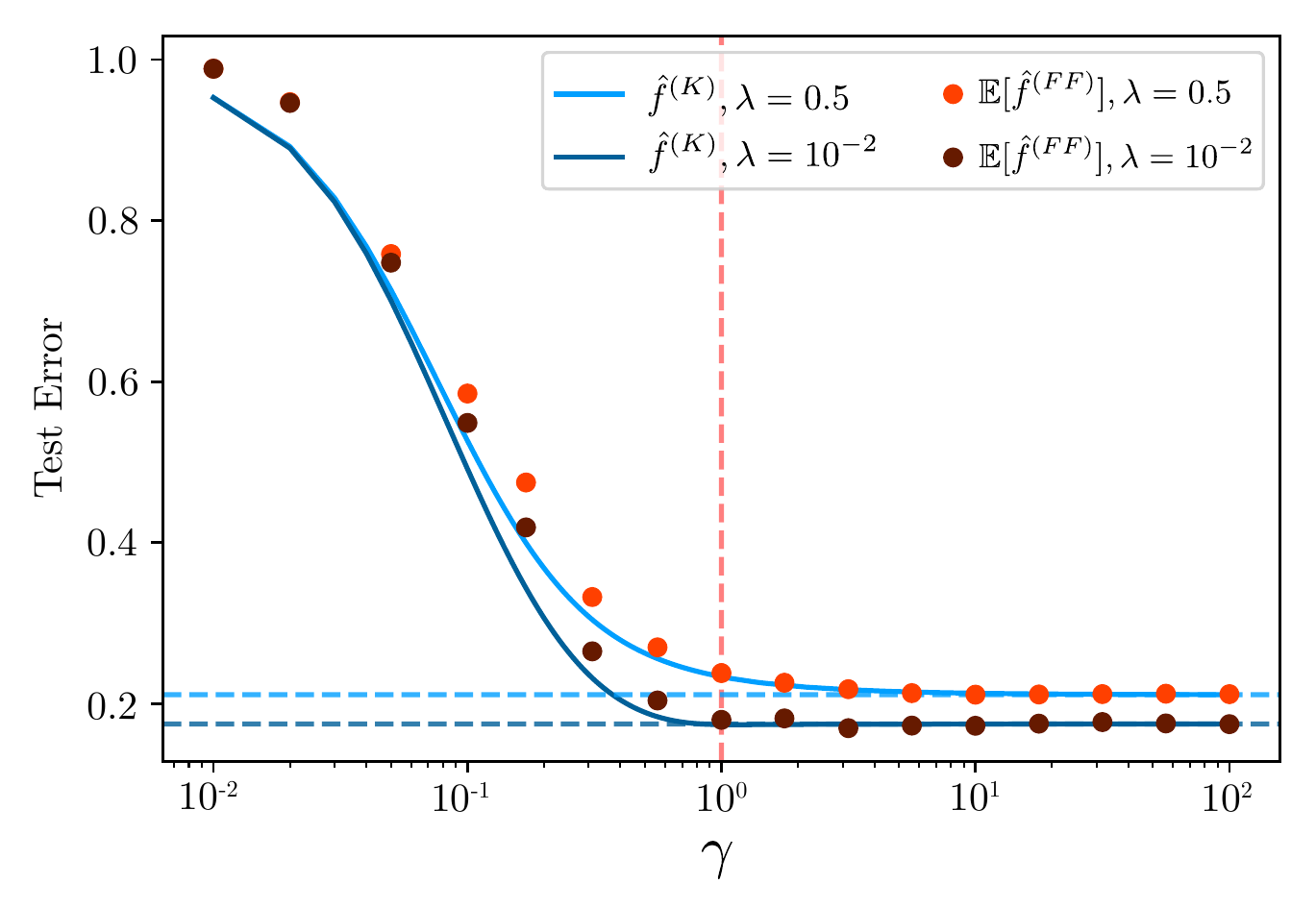}
        \label{fig:eff-ridge-exp}
        }
    \subfloat[$N = 100$]{
        \includegraphics[width=0.45\textwidth]{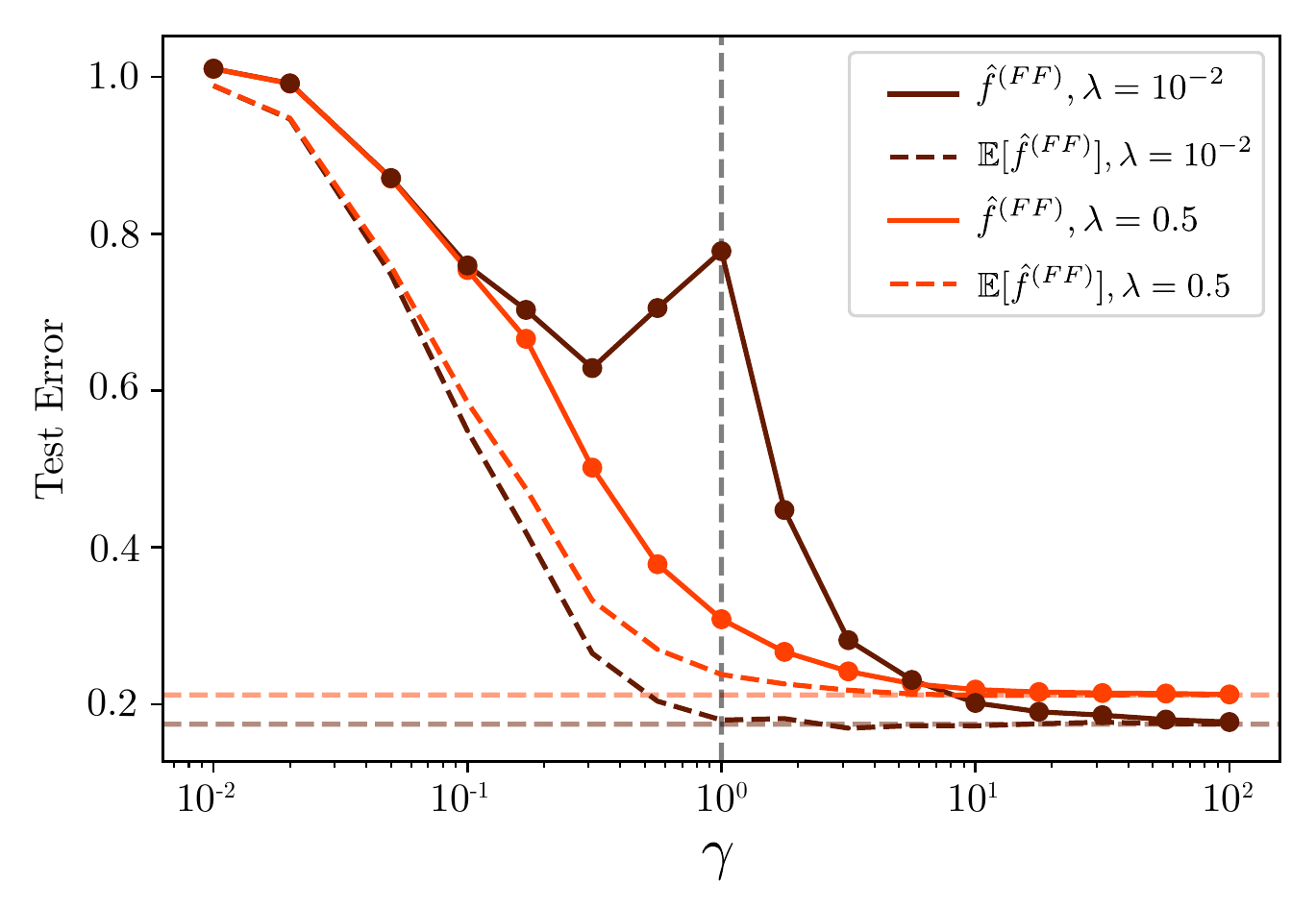}
        \label{fig:der-eff-ridge-exp}
        } \vfill
    \subfloat[$N = 1000$]{
        \includegraphics[width=0.45\textwidth]{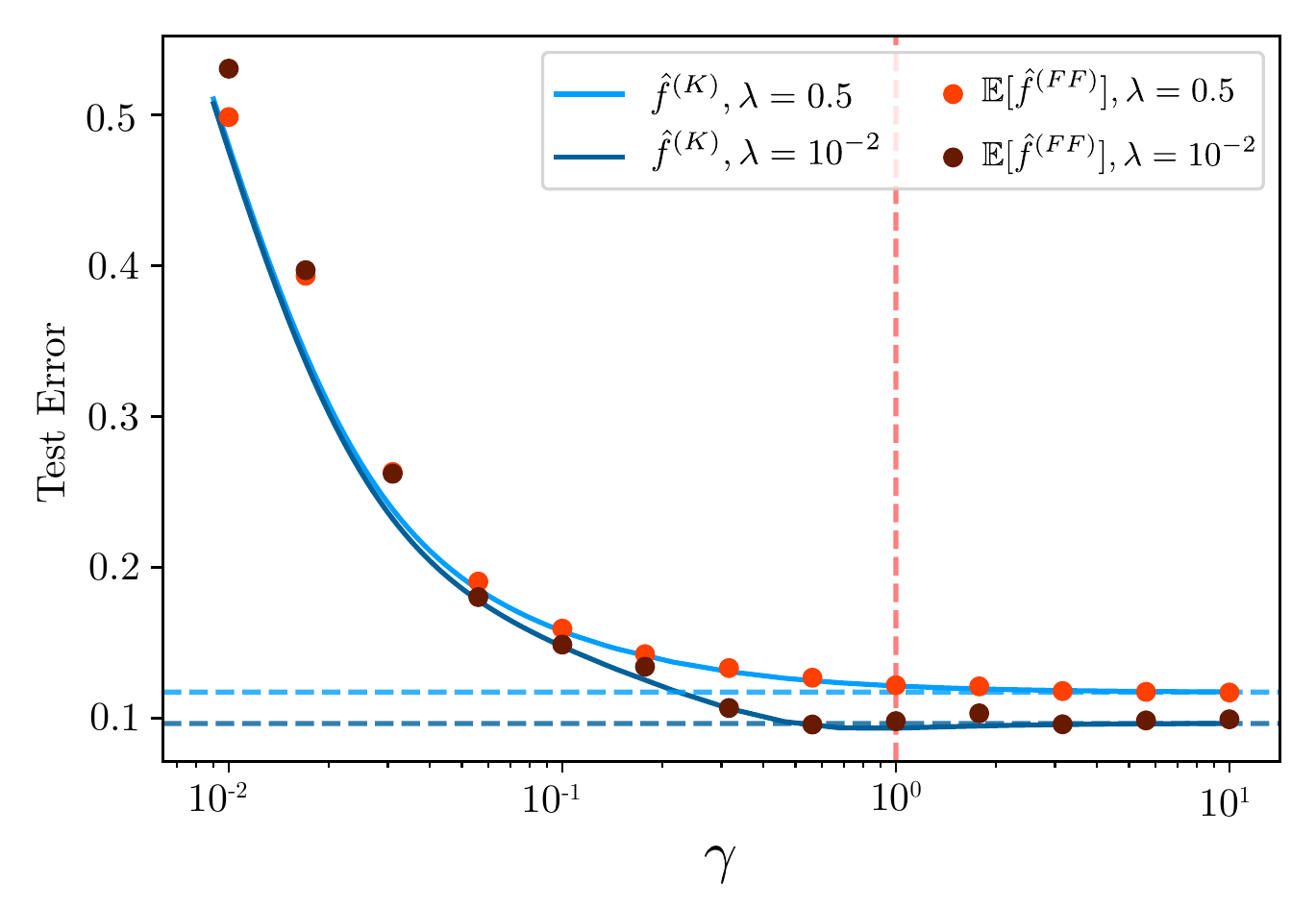}
        \label{fig:eff-ridge-pol}
        }
    \subfloat[$N = 1000$]{
        \includegraphics[width=0.45\textwidth]{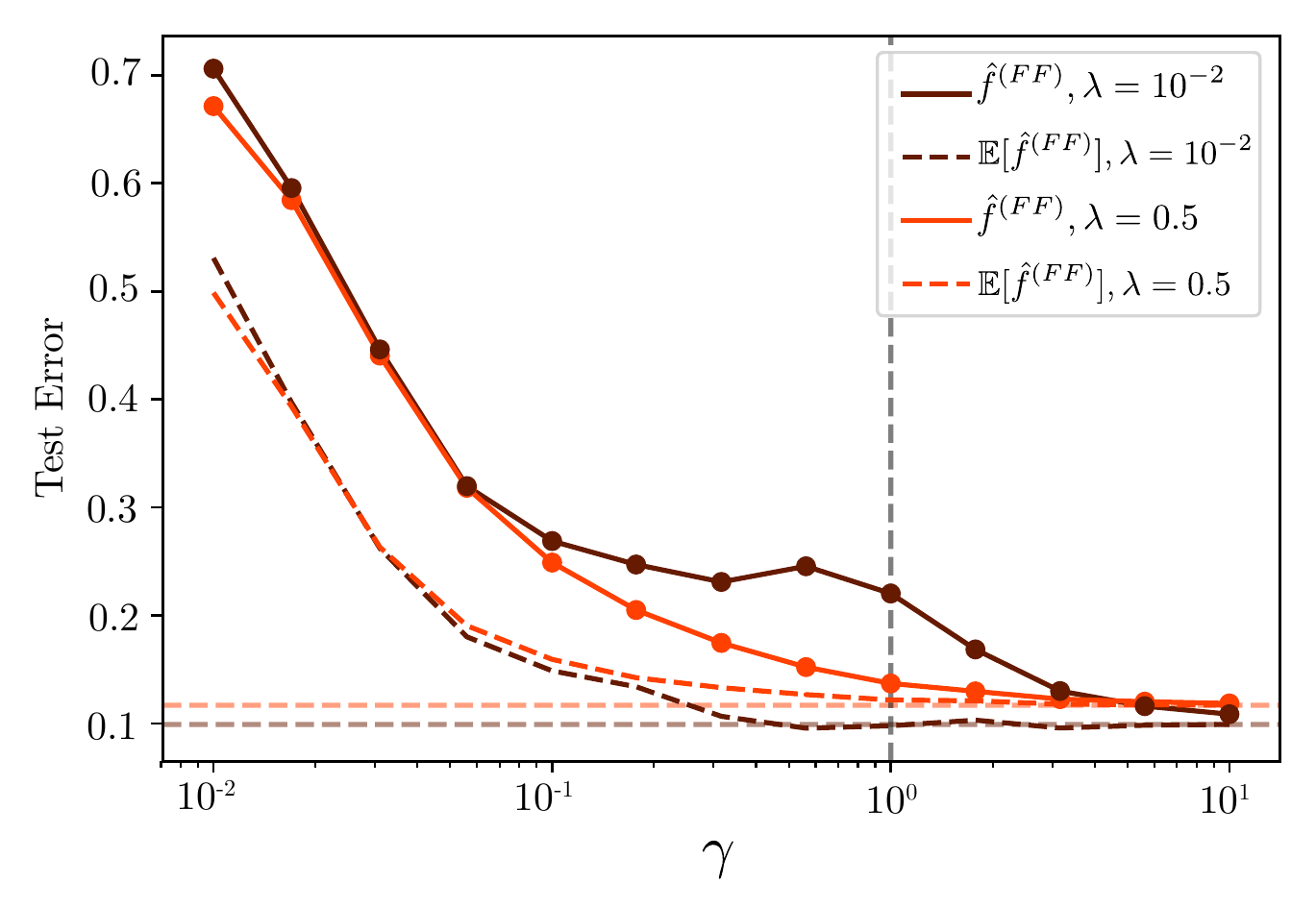}
        \label{fig:der-eff-ridge-pol}
        }
    \caption{\textit{Comparision of the test errors of the average $\lambda$-FF predictor and the $\tilde{\lambda}$-KRR predictor.} In \textbf{(a)} and \textbf{(c)}, the test errors of the average $\lambda$-FF predictor and of the $\tilde{\lambda}$-KRR predictor are reported for various ridge for $N=100$ and $N=1000$ MNIST data points (top and bottom rows). In \textbf{(b)} and \textbf{(d)}, the average test error of the $\lambda$-FF predictor and the test error of its average are reported. \label{fig:RFF}}
\end{figure}

\vskip1cm
{\color{white} .}
\pagebreak

\section{Proofs}
\label{sec:proofs}
\subsection{Gaussian Random Features}
\begin{proposition}\label{prop:distribution-estimator}
Let $\frfl $ be the $\lambda$-RF predictor and let $\hat{y} = F \hat{\theta}$ be the prediction vector on training data, i.e. $\hat{y}_i = \frfl(x_i)$. The process $ \frfl $ is a mixture of Gaussians: conditioned on $ F $, we have that $ \frfl $ is a Gaussian process. The mean and covariance of $ \frfl $ conditioned on $ F $ are given by
\begin{align}\label{eq:mean-gaussian-mixture}
& \mathbb E[\frfl(x) | F]  = K(x,X)K(X,X)^{-1}\hat{y}, \\
\label{eq:cov-gaussian-mixture}
& \mathrm{Cov}[\frfl(x),\frfl(x') | F]  = \frac{\| \hat{\theta}\|^{2}}{P}\tilde{K}(x, x')
\end{align}
where $\tilde{K}(x, x') = K(x,x')-K(x,X)K(X,X)^{-1}K(X,x')$ denotes the posterior covariance kernel.
\end{proposition}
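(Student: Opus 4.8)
The plan is to condition on the data matrix $F$ and observe that the only randomness left in each feature $f^{(j)}$ after this conditioning is its behaviour away from the training inputs, which is still Gaussian and independent across $j$. Concretely, $F$ is a measurable function of the evaluation vectors $\big(f^{(j)}(x_i)\big)_{1\le i\le N}$, $j=1,\dots,P$ (since $F_{ij}=\tfrac{1}{\sqrt P}f^{(j)}(x_i)$), and for $\lambda>0$ the optimal parameter $\hat\theta=F^{T}(FF^{T}+\lambda I_N)^{-1}y$ is a well-defined deterministic function of $F$; hence conditioning on $F$ fixes $\hat\theta$ and fixes each column $F_{:,j}$, equivalently the vector $\big(f^{(j)}(x_i)\big)_i=\sqrt P\,F_{:,j}$. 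I would then invoke the standard Gaussian-process regression fact: since each $f^{(j)}$ is a centered GP with covariance $K$ and $K(X,X)$ is invertible (the $x_i$ being distinct and $K$ positive definite), for any finite collection of test points the joint law of the test and training evaluations is multivariate Gaussian, so $f^{(j)}$ conditioned on $\big(f^{(j)}(x_i)\big)_i$ is a Gaussian process with mean $x\mapsto K(x,X)K(X,X)^{-1}\big(f^{(j)}(x_i)\big)_i$ and covariance $\tilde K(x,x')=K(x,x')-K(x,X)K(X,X)^{-1}K(X,x')$. As the $f^{(j)}$ are i.i.d., they remain independent conditionally on $F$, each with this conditional law.

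Next, conditionally on $F$ the predictor $\frfl(x)=\tfrac{1}{\sqrt P}\sum_{j=1}^{P}\hat\theta_j f^{(j)}(x)$ is a fixed linear combination of independent Gaussian processes, hence a Gaussian process. Its conditional mean is
\[
\tfrac{1}{\sqrt P}\sum_{j=1}^{P}\hat\theta_j K(x,X)K(X,X)^{-1}\big(f^{(j)}(x_i)\big)_i = K(x,X)K(X,X)^{-1}F\hat\theta = K(x,X)K(X,X)^{-1}\hat y,
\]
using $\big(f^{(j)}(x_i)\big)_i=\sqrt P\,F_{:,j}$ and $\hat y=F\hat\theta$; and, since independence kills the cross terms $j\ne k$, its conditional covariance is $\tfrac{1}{P}\sum_{j=1}^{P}\hat\theta_j^{2}\,\tilde K(x,x')=\tfrac{\|\hat\theta\|^{2}}{P}\tilde K(x,x')$. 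This yields \eqref{eq:mean-gaussian-mixture} and \eqref{eq:cov-gaussian-mixture}.

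The two displayed computations are routine linear algebra; the step requiring care is the reduction in the first paragraph — justifying that conditioning on the (a priori infinite-dimensional) family of features amounts to conditioning each GP on its finitely many training evaluations, and that $\hat\theta$ is $\sigma(F)$-measurable. This follows from the Kolmogorov description of the GP law together with the explicit formula for $\hat\theta$, which is valid for $\lambda>0$; the $\lambda\searrow0$ extension mentioned in the text is handled by the same argument once $F^{T}F$ (or $FF^{T}$ in the underparameterized case) is invertible on the relevant subspace.
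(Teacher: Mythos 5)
Your proposal is correct and follows essentially the same route as the paper's proof: condition on $F$, observe that $\hat\theta$ becomes deterministic and the features remain independent Gaussians, apply the standard Gaussian conditioning formula to each $f^{(j)}$ given its training evaluations, and sum. The extra care you take with the measurability reduction is a welcome refinement but does not change the argument.
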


\begin{proof}
Let $F=(\frac{1}{\sqrt{P}}f^{(j)}(x_{i}))_{i,j}$ be the $N\times P$
matrix of values of the random features on the training set. By definition,
$\hat{f}_{\lambda}^{(RF)}=\frac{1}{\sqrt{P}}\sum_{p=1}^{P}\hat{\theta}_{p}f^{(p)}$.
Conditioned on the matrix $F$, the optimal parameters $(\hat{\theta}_{p})_{p}$
are not random and $(f^{(p)})_{p}$ is still Gaussian, hence, conditioned
on the matrix $F$, the process $\hat{f}_{\lambda}^{(RF)}$ is a mixture
of Gaussians. Moreover, conditioned on the matrix $F$, for any $p,p'$,
$f^{(p)}$ and $f^{(p')}$ remain independent, hence
\begin{eqnarray*}
\mathbb{E}\left[\hat{f}_{\lambda}^{(RF)}(x)\mid F\right] & = & \frac{1}{\sqrt{P}}\sum_{p=1}^{P}\hat{\theta}_{p}\mathbb{E}\left[f^{(p)}(x)\mid f_{N}^{(p)}\right]\\
\mathrm{Cov}\left[\hat{f}_{\lambda}^{(RF)}(x),\hat{f}_{\lambda}^{(RF)}(x')\mid F\right] & = & \frac{1}{P}\sum_{p=1}^{P}\hat{\theta}_{p}^{2}\mathrm{Cov}\left[f^{(p)}(x),f^{(p)}(x')\mid f_{N}^{(p)}\right].
\end{eqnarray*}
where we have set $f_{N}^{(p)}=(f^{(p)}(x_{i}))_{i}\in\mathbb{R}^{N}.$
The value of $\mathbb{E}\left[f^{(p)}(x)\mid f_{N}^{(p)}\right]$
and $\mathrm{Cov}\left[f^{(p)}(x),f^{(p)}(x')\mid f_{N}^{(p)}\right]$
are obtained from classical results on Gaussian conditional distributions \cite{eaton-07}:

\begin{eqnarray*}
\mathbb{E}\left[f^{(p)}(x)\mid f_{N}^{(p)}\right] & = & K(x,X)K(X,X)^{-1}f_{N}^{(p)},\\
\mathrm{Cov}\left[f^{(p)}(x),f^{(p)}(x')\mid f_{N}^{(p)}\right] & = & \tilde{K}(x,x'),
\end{eqnarray*}
where $\tilde{K}(x,x')=K(x,x')-K(x,X)K(X,X)^{-1}K(X,x').$ Thus, conditioned
on $F$, the predictor $\hat{f}_{\lambda}^{(RF)}$ has expectation:
\[
\mathbb{E}\left[\hat{f}_{\lambda}^{(RF)}(x)\mid F\right]=K(x,X)K(X,X)^{-1}\frac{1}{\sqrt{P}}\sum_{p=1}^{P}\hat{\theta}_{p}f_{N}^{(p)}=K(x,X)K(X,X)^{-1}\hat{y}
\]
and covariance:
\[
\mathrm{Cov}\left[\hat{f}_{\lambda}^{(RF)}(x),\hat{f}_{\lambda}^{(RF)}(x')\mid F\right]=\frac{1}{P}\sum_{p=1}^{P}\hat{\theta}_{p}^{2}\tilde{K}(x,x')=\frac{\Vert \hat{\theta} \Vert ^{2}}{P}\tilde{K}(x,x').
\]
\end{proof}

\subsection{Generalized Wishart Matrix}
\textbf{Setup. }
In this section, we consider a fixed deterministic matrix $K$ of size $N\times N$ which is diagonal positive semi-definite, with eigenvalues $d_1,\ldots,d_N$. We also consider a $P\times N$ random matrix $W$ with i.i.d. standard Gaussian entries.

The key object of study is the $P\times P$ generalized Wishart random matrix $F^{T}F=\frac{1}{P}WKW^{T}$ and in particular its Stieltjes transform defined on $z\in\mathbb{C}\setminus\mathbb{R}^+$, where $\mathbb{R}^+=[0,+\infty[$:
\[
m_{P}(z)=\frac{1}{P}\mathrm{Tr}\left[\left(F^{T}F-z\mathrm{I}_{P}\right)^{-1}\right]=\frac{1}{P}\mathrm{Tr}\left[\left(\frac{1}{P}WKW^{T}-z\mathrm{I}_{P}\right)^{-1}\right],
\]
where $K$ is a fixed positive semi-definite matrix.

Since $F^{T}F$  has positive real eigenvalues $\lambda_{1},\ldots,\lambda_{P} \in\mathbb{R}_{+}$, and
\[
m_{P}(z)=\frac{1}{P}\sum_{p=1}^{P}\frac{1}{\lambda_{p}-z},
\]
we have that for any $z\in\mathbb{C}\setminus\mathbb{R}^{+}$,
$$\left|m_{P}(z)\right|\leq\frac{1}{d(z,\mathbb{R}_{+})},$$ where $d(z,\mathbb{R}_{+})=\inf\left\{ \left|z-y\right|,y\in\mathbb{R}^{+}\right\} $
is the distance of $z$ to the positive real line. More precisely, $m_{P}(z)$ lies in the convex hull $\Omega_{z}=\mathrm{Conv}\left(\left\{ \frac{1}{d-z}:d\in\mathbb{R}_{+}\right\} \right)$. As a consequence, the argument $\arg\left(m_{P}(z)\right)\in(-\pi,\pi)$ lies between $0$ and $\arg\left(-\frac{1}{z}\right)$, i.e. $m_{P}(z)$ lies in the cone spanned by $1$ and $-\frac{1}{z}$.

Our first lemma implies that the Stieljes transform concentrates around
its mean as $N$ and $P$ go to infinity with $\gamma=\frac{P}{N}$ fixed.

\begin{lemma}\label{lem:concentration_stieltjes}
For any integer $m\in \mathbb N$ and any $z\in\mathbb{C}\setminus\mathbb{R}^{+}$, we have
$$\mathbb{E}\left[\left|m_{P}(z)-\mathbb{E}\left[m_{P}(z)\right]\right|^{m}\right]\leq \mathbf{c} P^{-\frac{m}{2}},$$
\end{lemma}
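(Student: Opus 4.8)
The plan is to realize $m_P(z)$ as the terminal value of a Doob martingale obtained by revealing the rows of $W$ one at a time, to bound each martingale increment by a \emph{deterministic} $\mathcal{O}(1/P)$ quantity via an eigenvalue interlacing argument, and then to conclude with Burkholder's (or Azuma's) inequality. Concretely, writing $W_1,\dots,W_P\in\RR^N$ for the rows of $W$ and $\mathcal F_k=\sigma(W_1,\dots,W_k)$, I would set $X_k=\EE[m_P(z)\mid\mathcal F_k]$; since the rows are independent, $(X_k)_{k=0}^P$ is a martingale with $X_0=\EE[m_P(z)]$ and $X_P=m_P(z)$, so that $m_P(z)-\EE[m_P(z)]=\sum_{k=1}^P D_k$ with $D_k=X_k-X_{k-1}$.

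The heart of the argument is the increment bound. Fix $k$ and let $A=F^TF=\tfrac1P WKW^T$, a $P\times P$ positive semi-definite matrix, and let $A^{(k)}$ denote $A$ with its $k$-th row and column deleted. Because $(WKW^T)_{ij}=W_i^TKW_j$, one checks that $A^{(k)}=\tfrac1P W_{(k)}KW_{(k)}^T$, where $W_{(k)}$ is $W$ with row $k$ removed; in particular $A^{(k)}$, and hence $s_k:=\tfrac1P\Tr[(A^{(k)}-zI_{P-1})^{-1}]$, is independent of $W_k$ and measurable with respect to the remaining rows, so $\EE[s_k\mid\mathcal F_k]=\EE[s_k\mid\mathcal F_{k-1}]$ and therefore $D_k=\EE[m_P(z)-s_k\mid\mathcal F_k]-\EE[m_P(z)-s_k\mid\mathcal F_{k-1}]$, giving $|D_k|\le 2\sup|m_P(z)-s_k|$. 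To control $m_P(z)-s_k$ I would invoke Cauchy's interlacing theorem: the eigenvalues of $A$ and $A^{(k)}$ interlace and all lie in $\RR^+$, so comparing the spectral counting functions and integrating $x\mapsto(x-z)^{-1}$ by parts yields $\bigl|\Tr[(A-zI_P)^{-1}]-\Tr[(A^{(k)}-zI_{P-1})^{-1}]\bigr|\le\int_0^\infty|x-z|^{-2}\,dx\le C_0/d(z,\RR^+)$ for a universal $C_0$. Hence $|D_k|\le 2C_0/\bigl(P\,d(z,\RR^+)\bigr)$, a bound that is deterministic and, crucially, free of any dependence on $K$.

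Finally, since $(D_k)$ is a martingale difference sequence, Burkholder's inequality gives $\EE[|m_P(z)-\EE[m_P(z)]|^m]\le B_m\,\EE\bigl[(\sum_{k=1}^P D_k^2)^{m/2}\bigr]$ with $B_m$ depending only on $m$ (the case $m=1$ being handled separately by Cauchy--Schwarz from $m=2$); substituting the deterministic estimate $\sum_{k=1}^P D_k^2\le P\cdot\bigl(2C_0/(P\,d(z,\RR^+))\bigr)^2$ produces the claimed bound with $\mathbf c=B_m\bigl(2C_0/d(z,\RR^+)\bigr)^m$, depending only on $m$ and on the distance $d(z,\RR^+)$. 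Equivalently, Azuma--Hoeffding gives sub-Gaussian tails with variance proxy $\mathcal O(1/P)$, and integrating those tails gives the same moment bound.

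The main obstacle is the interlacing estimate for the trace of a resolvent under deletion of a row and column: this is the one point that is not pure bookkeeping, and some care is needed to get a bound uniform in $K$ and with the correct power of $d(z,\RR^+)$ (one may equally derive it from the Schur-complement / rank-one update formula for $\det(A-zI)$ in place of interlacing). Everything else is the standard Doob-martingale bounded-difference machinery; note in passing that this lemma uses only the independence of the rows of $W$, not Gaussianity, which is why it may be stated for an arbitrary positive semi-definite $K$.
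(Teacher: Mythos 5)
Your proof is correct, but it runs along a genuinely different axis than the paper's. The paper peels off the $N$ \emph{columns} $w_1,\dots,w_N$ of $W$ (one per eigenvalue $d_i$ of $K$): deleting column $i$ perturbs $WKW^T$ by the rank-one term $\frac{d_i}{P}w_iw_i^T$ while keeping the matrix $P\times P$, and the trace increment is computed exactly via Sherman--Morrison as $-d_ig_i'(z)/(1+d_ig_i(z))$, then bounded by $\Vert B_{(i)}(z)^{-1}\Vert_{op}\le 1/d(z,\RR^+)$. You instead peel off the $P$ \emph{rows} of $W$, so that deletion produces a $(P-1)\times(P-1)$ principal submatrix of $\frac1P WKW^T$, and you control the trace increment by Cauchy interlacing of eigenvalues (the standard ``rank inequality'' $\int_0^\infty|x-z|^{-2}\,dx\lesssim 1/d(z,\RR^+)$, with the boundary terms in the integration by parts vanishing since the counting functions differ by at most $1$). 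Both are classical Bai--Silverstein-type arguments and both deliver the stated $\mathbf{c}P^{-m/2}$ bound via Burkholder. Two small trade-offs: your sum of squared increments is over $P$ terms rather than $N$, so your constant $\mathbf{c}=B_m(2C_0/d(z,\RR^+))^m$ is free of $\gamma$, whereas the paper's carries a harmless $\gamma^{-m/2}$; on the other hand, the paper's Sherman--Morrison identity for the column deletion is not just bookkeeping for this lemma --- the quantities $g_i(z)$ and the identity $(A_\lambda)_{ii}=d_ig_i/(1+d_ig_i)$ are reused throughout the later proofs (Lemma C.3, Propositions C.4 and C.7), so the column-wise decomposition sets up machinery the paper needs anyway, while your interlacing bound is self-contained to this lemma. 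Your closing remark that only independence of the rows is used (not Gaussianity) is accurate and mirrors the paper's reliance only on independence of the columns.
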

where $\mathbf{c}$ depends on $z$, $\gamma$, and $m$ only.
\proof
The proof follows Step 1 of \cite{bai-2008}. Let $w_{1},...,w_{N}$ be the columns of $W$ from left to right. Let us introduce the $P\times P$ matrices $B(z)=\frac{1}{P}WKW^{T}-z\mathrm{I}_{P}$ and $B_{(i)}(z)=\frac{1}{P}W_{(i)}K_{(i)}W_{(i)}^{T}-z\mathrm{I}_{P}$ where $W_{(i)}$ is the $P\times(N-1)$ submatrix of $W$ obtained by removing its $i$-th column $w_{i}$, and $K_{(i)}$ is the $(N-1)\times(N-1)$ submatrix of $K$ obtained by removing both its $i$-th column and $i$-th row. Since the eigenvalues of $WKW^{T}$ and $W_{(i)}K_{(i)}W_{(i)}^{T}$ are all real and positive, $B(z)$ and $B_{(i)}(z)$ are invertible matrices for $z\notin \mathbb R^{+}$.

Noticing that $$B(z)=\frac{1}{P}WKW^{T}-z\mathrm{I}_{P}=\frac{1}{P}W_{(i)}K_{(i)}W_{(i)}^{T}-z\mathrm{I}_{P}+\frac{d_{i}}{P}w_{i}w_{i}^{T}$$
is a rank one perturbation of the matrix $B_{(i)}(z)$, by the Sherman--Morrison's formula, the inverse of $B(z)$ is given by:
\[
B(z)^{-1}=\left(B_{(i)}(z)\right)^{-1}-\frac{d_{i}}{P}\frac{1}{1+\frac{d_{i}}{P}w_{i}^{T}\left(B_{(i)}(z)\right)^{-1}w_{i}}\left(B_{(i)}(z)\right)^{-1}w_{i}w_{i}^{T}\left(B_{(i)}(z)\right)^{-1}.
\]

We denote $\mathbb{E}_{i}$ the conditional expectation given $w_{i+1},...,w_{N}$. We have $\mathbb{E}_{0}[m_{P}(z)]=m_{P}(z)$ and $\mathbb{E}_{N}[m_{P}(z)]=\mathbb{E}[m_{P}(z)].$ As a consequence, we get:
\begin{equation*}\label{}
\begin{split}
m_{P}(z)-\mathbb{E}[m_{P}(z)] & =\sum_{i=1}^{N}\left(\mathbb{E}_{i-1}[m_{P}(z)]-\mathbb{E}_{i}[m_{P}(z)]\right)\\
 & =\frac{1}{P}\sum_{i=1}^{N}\left(\mathbb{E}_{i-1}-\mathbb{E}_{i}\right)\left[\mathrm{Tr}\left(B(z)^{-1}\right)\right]\\
 & =\frac{1}{P}\sum_{i=1}^{N}\left(\mathbb{E}_{i-1}-\mathbb{E}_{i}\right)\left[\mathrm{Tr}\left(B(z)^{-1}\right)-\mathrm{Tr}\left(B_{(i)}(z)^{-1}\right)\right].
\end{split}
\end{equation*}
 The last equality comes from the fact that $\mathrm{Tr}\left(B_{(i)}(z)^{-1}\right)$
does not depend on $w_{i}$, hence
\[
\mathbb{E}_{i-1}\left[\mathrm{Tr}\left(B_{(i)}(z)^{-1}\right)\right]=\mathbb{E}_{i}\left[\mathrm{Tr}\left(B_{(i)}(z)^{-1}\right)\right].
\]

Let $g_{i}:\mathbb C\setminus \mathbb R^{+}\to \mathbb C$ be the holomorphic function given by $g_{i}(z):=\frac{1}{P}w_{i}^{T}\left(B_{(i)}(z)\right)^{-1}w_{i}$. Its derivative is given by $g_{i}'(z)=\frac{1}{P}w_{i}^{T}\left(B_{(i)}(z)\right)^{-2}w_{i}$. Hence
\begin{eqnarray*}
\mathrm{Tr}\left(B(z)^{-1}\right)-\mathrm{Tr}\left(B_{(i)}(z)^{-1}\right) & = & -\frac{\frac{d_{i}}{P}\mathrm{Tr}\left(\left(B_{(i)}(z)\right)^{-1}w_{i}w_{i}^{T}\left(B_{(i)}(z)\right)^{-1}\right)}{1+d_{i}g_{i}(z)}\\
 & = & -\frac{d_{i}g_{i}'(z)}{1+d_{i}g_{i}(z)},
\end{eqnarray*}
 where we used the cyclic property of the trace. We can now bound
this difference:
\begin{align*}
\left|\mathrm{Tr}\left(B(z)^{-1}\right)-\mathrm{Tr}\left(B_{(i)}(z)^{-1}\right)\right| & =\left|\frac{d_{i}g_{i}'(z)}{1+d_{i}g_{i}(z)}\right|\\
 & \leq\left|\frac{w_{i}^{T}\left(B_{(i)}(z)\right)^{-2}w_{i}}{w_{i}^{T}\left(B_{(i)}(z)\right)^{-1}w_{i}}\right|\\
 & \leq\max_{w}\left|\frac{w^{T}\left(B_{(i)}(z)\right)^{-2}w}{w^{T}\left(B_{(i)}(z)\right)^{-1}w}\right|\\
 & \leq\| \left(B_{(i)}(z)\right)^{-1}\| _{op} = \max_{j} |\frac{1}{\nu_j-z}|  \leq\frac{1}{d(z,\mathbb{R}^{+})},
\end{align*}
where $\nu_j$ are the eigenvalues of $\frac{1}{P}W_{(i)}K_{(i)}W_{(i)}^{T}$.

The sequence $$\left(\left(\mathbb{E}_{N-i}-\mathbb{E}_{N-i+1}\right)\left[\mathrm{Tr}\left(B(z)^{-1}\right)-\mathrm{Tr}\left(B_{(N-i+1)}(z)^{-1}\right)\right]\right)_{i=1,\ldots,N}$$ is a martingale difference sequence. Hence, by Burkholder's inequality, there exists a positive constant $K_{m}$ such that

\begin{equation*}\label{}
\begin{split}
\mathbb{E}\left[\left|m_{P}(z)-\mathbb{E}\left[m_{P}(z)\right]\right|^{m}\right]&\leq K_{m}\frac{1}{P^{m}}\mathbb{E}\left[\left(\sum_{i=1}^{N}\left|\left[\mathbb{E}_{i-1}-\mathbb{E}_{i}\right]\left(\mathrm{Tr}\left(B(z)^{-1}\right)-\mathrm{Tr}\left(B_{(i)}(z)^{-1}\right)\right)\right|^{2}\right)^{\frac{m}{2}}\right]\\
&\leq K_{m}\frac{1}{P^{m}}\left(N\left(\frac{2}{d(z,\mathbb{R}_{+})}\right)^{2}\right)^{\frac{m}{2}}\\
&\leq K_{m}\gamma^{-\frac{m}{2}}\left(\frac{2}{d(z,\mathbb{R}_{+})}\right)^{m}P^{-\frac{m}{2}},
\end{split}
\end{equation*}
hence the desired result with $\mathbf{c}=K_{m}\gamma^{-\frac m2}\left({2\over d(z,\mathbb R_{+})}\right)^{m}$.
\endproof

The following lemma, which is reminiscent of Lemma 4.5 in \cite{au-18}, is a consequence of Wick\textquoteright s formula
for Gaussian random variables and is key to prove Lemma C.4.
\begin{lemma} \label{fact:Wick-formula}
If $A^{(1)},\ldots,A^{(k)}$ are $k$ square random matrices of size
$P$ independent from a standard Gaussian vector $w$ of size $P$,
\begin{eqnarray}
\mathbb{E}\left[w^{T}A^{(1)}ww^{T}A^{(2)}w\ldots w^{T}A^{(k)}w\right] & = & \sum_{p\in\boldsymbol{P}_{2}(2k)}\sum_{\stackrel[p\leq{\rm Ker}(i_{1},\ldots,i_{2k})]{i_{1},\ldots,i_{2k}\in\{1,\ldots,P\}}{}}\mathbb{E}\left[A_{i_{1}i_{2}}^{(1)}\ldots A_{i_{2k-1}i_{2k}}^{(k)}\right],\label{eq:Wick-lem-1}
\end{eqnarray}
where $\boldsymbol{P}_{2}(2k)$ is the set of pair partitions of $\{1,\ldots,2k\}$,
$\leq$ is the coarser (i.e. $p\leq q$ if $q$ is coarser than $p$),
and for any $i_{1},\ldots,i_{2k}$ in $\{1,\ldots,P\}$, $\mathrm{Ker}(i_{1},\ldots,i_{2k})$
is the partition of $\left\{ 1,\ldots,2k\right\} $ such that two
elements $u$ and $v$ in $\left\{ 1,...,2k\right\} $ are in the
same block (i.e. pair) of $\mathrm{Ker}\left(i_{1},\ldots,i_{2k}\right)$
if and only if $i_{u}=i_{v}$.

Furthermore,
\begin{eqnarray}
\mathbb{E}\left[\left(w^{T}A^{(1)}w-\mathrm{Tr}\left(A^{(1)}\right)\right)\left(w^{T}A^{(2)}w-\mathrm{Tr}\left(A^{(2)}\right)\right)\ldots\left(w^{T}A^{(k)}w-\mathrm{Tr}\left(A^{(k)}\right)\right)\right] \nonumber \\
= \sum_{p\in:\boldsymbol{P}_{2}(2k):}\sum_{\stackrel[p\leq{\rm Ker}(i_{1},\ldots,i_{2k})]{i_{1},\ldots,i_{2k}\in\{1,\ldots,P\}}{}}\mathbb{E}\left[A_{i_{1}i_{2}}^{(1)}\ldots A_{i_{2k-1}i_{2k}}^{(k)}\right],\label{eq:Wick-lem-2}
\end{eqnarray}
 where $:\boldsymbol{P}_{2}(2k):$ is the subset of partitions $p$
in $\boldsymbol{P}_{2}(2k)$ for which $\left\{ 2j-1,2j\right\} $
is not a block of $p$ for any $j\in\{1,\ldots,k\}$.
\end{lemma}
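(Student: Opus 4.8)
The plan is to reduce both identities to the classical Wick (Isserlis) formula for the standard Gaussian vector $w$, supplemented by a short inclusion--exclusion argument for the centered statement \eqref{eq:Wick-lem-2}. First I would expand each quadratic form entrywise, writing $w^{T}A^{(j)}w=\sum_{i_{2j-1},i_{2j}=1}^{P}w_{i_{2j-1}}w_{i_{2j}}\,A^{(j)}_{i_{2j-1}i_{2j}}$ and $\mathrm{Tr}(A^{(j)})=\sum_{i_{2j-1},i_{2j}}\delta_{i_{2j-1}i_{2j}}A^{(j)}_{i_{2j-1}i_{2j}}$, so that $w^{T}A^{(j)}w-\mathrm{Tr}(A^{(j)})=\sum_{i_{2j-1},i_{2j}}(w_{i_{2j-1}}w_{i_{2j}}-\delta_{i_{2j-1}i_{2j}})A^{(j)}_{i_{2j-1}i_{2j}}$. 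Multiplying over $j=1,\dots,k$ and taking expectations, the independence of $w$ from $A^{(1)},\dots,A^{(k)}$ (condition on the matrices) lets me factor every term of the resulting finite sum over $(i_1,\dots,i_{2k})\in\{1,\dots,P\}^{2k}$ as $\mathbb{E}[w_{i_1}\cdots w_{i_{2k}}]\,\mathbb{E}[A^{(1)}_{i_1 i_2}\cdots A^{(k)}_{i_{2k-1}i_{2k}}]$ for \eqref{eq:Wick-lem-1}, and as $\mathbb{E}\big[\prod_{j}(w_{i_{2j-1}}w_{i_{2j}}-\delta_{i_{2j-1}i_{2j}})\big]\,\mathbb{E}[A^{(1)}_{i_1 i_2}\cdots A^{(k)}_{i_{2k-1}i_{2k}}]$ for \eqref{eq:Wick-lem-2}.

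For \eqref{eq:Wick-lem-1} I would then invoke Isserlis' theorem, $\mathbb{E}[w_{i_1}\cdots w_{i_{2k}}]=\sum_{p\in\boldsymbol{P}_2(2k)}\prod_{\{u,v\}\in p}\delta_{i_u i_v}$, and observe that $\prod_{\{u,v\}\in p}\delta_{i_u i_v}=\mathbf{1}\{p\leq\mathrm{Ker}(i_1,\dots,i_{2k})\}$, since a block of $p$ contributes the factor $1$ precisely when the two indices it pairs agree. Swapping the two finite sums (over the multi-index and over $p$) then yields \eqref{eq:Wick-lem-1}. For \eqref{eq:Wick-lem-2} the key is the centered Wick identity $\mathbb{E}\big[\prod_{j=1}^{k}(w_{i_{2j-1}}w_{i_{2j}}-\delta_{i_{2j-1}i_{2j}})\big]=\sum_{p\in\,:\boldsymbol{P}_2(2k):}\mathbf{1}\{p\leq\mathrm{Ker}(i_1,\dots,i_{2k})\}$, which I would prove by expanding $\prod_j(w_{i_{2j-1}}w_{i_{2j}}-\delta_{i_{2j-1}i_{2j}})$ over the subsets $S\subseteq\{1,\dots,k\}$ of factors in which the $-\delta$-term is selected, applying the uncentered Isserlis formula to the remaining product of $w$'s, and collecting, for each fixed $p\in\boldsymbol{P}_2(2k)$, the coefficient of the monomial $\prod_{\{u,v\}\in p}\delta_{i_u i_v}$. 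That coefficient turns out to be $\sum_{S\subseteq T(p)}(-1)^{|S|}$ with $T(p)=\{j:\{2j-1,2j\}\in p\}$, because $S$ must be exactly the set of within-pair blocks of $p$ supplied by the $\delta$-factors rather than by a Gaussian contraction; this alternating sum equals $\mathbf{1}\{T(p)=\emptyset\}$, i.e. it selects exactly the partitions in $:\boldsymbol{P}_2(2k):$. Substituting back into the factored sum of the first paragraph gives \eqref{eq:Wick-lem-2}.

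The main obstacle is the combinatorial bookkeeping in the centered case: checking that, after the inclusion--exclusion expansion and Isserlis, the pairs $(S,q)$ (a subset $S$ of centered factors together with a pairing $q$ of the remaining indices) that assemble a given partition $p$ are in bijection with the subsets of $T(p)$, so that the net coefficient collapses to $\sum_{S\subseteq T(p)}(-1)^{|S|}$ and kills every $p$ having a within-pair block $\{2j-1,2j\}$. An alternative route that sidesteps part of this, which I would keep in reserve, is to note that both sides of \eqref{eq:Wick-lem-1} and \eqref{eq:Wick-lem-2} are multilinear in the entries of $A^{(1)},\dots,A^{(k)}$, so it suffices to verify them for deterministic rank-one matrices $A^{(j)}=e_{a_j}e_{b_j}^{T}$ (where $w^{T}A^{(j)}w=w_{a_j}w_{b_j}$, $\mathrm{Tr}A^{(j)}=\delta_{a_j b_j}$, and the right-hand sides reduce to explicit sums of Kronecker deltas), and then extend by linearity and, for random $A^{(j)}$ independent of $w$, by conditioning on the matrices. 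Everything else---the entrywise expansion, the use of independence, and the exchange of finite summations---is routine.
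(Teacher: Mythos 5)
Your proposal is correct and follows essentially the same route as the paper: expand the quadratic forms entrywise, use independence to factor off the Gaussian moments, apply Isserlis/Wick with the observation that a product of pair deltas is the indicator of $p\leq\mathrm{Ker}(i_1,\ldots,i_{2k})$, and for the centered identity perform the same inclusion--exclusion over which factors contribute the subtracted term (your set $S$ is the complement of the paper's $I$), with the alternating sum $\sum_{S\subseteq T(p)}(-1)^{|S|}$ killing every partition containing a within-pair block. Your isolation of the scalar "centered Wick identity" is just a clean repackaging of the computation the paper carries out inline, and your bijection between contributing pairs $(S,q)$ and subsets of $T(p)$ is exactly the needed combinatorial step.
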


\begin{proof}
Expanding the left-hand side of Equation (\ref{eq:Wick-lem-1}), we
obtain:
\[
\mathbb{E}\left[\sum_{i_{1},\ldots,i_{2k}\in\{1,\ldots,P\}}w_{i_{1}}A_{i_{1}i_{2}}^{(1)}w_{i_{2}}w_{i_{3}}A_{i_{3}i_{4}}^{(2)}w_{i_{4}}\ldots w_{i_{2k-1}}A_{i_{2k-1}i_{2k}}^{(k)}w_{i_{2k}}\right].
\]
 Using Wick's formula, we get:
\[
\sum_{i_{1},\ldots,i_{2k}\in\{1,\ldots,P\}}\sum_{\stackrel[p\leq{\rm Ker}(i_{1},\ldots,i_{2k})]{p\in\boldsymbol{P}_{2}(2k),}{}}\mathbb{E}\left[A_{i_{1}i_{2}}^{(1)}A_{i_{3}i_{4}}^{(2)}\ldots A_{i_{2k-1}i_{2k}}^{(k)}\right],
\]
hence, interchanging the order of summation, we recover the left-hand
side of Equation (\ref{eq:Wick-lem-1}):
\[
\sum_{p\in\boldsymbol{P}_{2}(2k)}\sum_{\stackrel[p\leq{\rm Ker}(i_{1},\ldots,i_{2k})]{i_{1},\ldots,i_{2k}\in\{1,\ldots,P\}}{}}\mathbb{E}\left[A_{i_{1}i_{2}}^{(1)}\ldots A_{i_{2k-1}i_{2k}}^{(k)}\right].
\]
 We now prove Equation (\ref{eq:Wick-lem-2}). Expanding the product,
the left-hand side is equal to:
\[
\sum_{I\subset\{1,\ldots,k\}}(-1)^{k-\#I}\mathbb{E}\left[\prod_{i\in I}w^{T}A^{(i)}w\prod_{i\notin I}\mathrm{Tr}(A^{(i)})\right].
\]
 Expanding the product and the trace, and using Wick's equation, we
obtain: a
\[
\sum_{I\subset\{1,\ldots,k\}}(-1)^{k-\#I}\sum_{i_{1},\ldots,i_{2k}\in\{1,\ldots,P\}}\sum_{\stackrel[p\leq{\rm Ker}(i_{1},\ldots,i_{2k})]{p\in\boldsymbol{P}_{2}(2k),p\leq p_{I}}{}}\mathbb{E}\left[A_{i_{1}i_{2}}^{(1)}\ldots A_{i_{2k-1}i_{2k}}^{(k)}\right].
\]
where $p_{I}$ is the partition composed of blocks of size $2$ given
by $\{2l,2l+1\}$ with $l\notin I$ and the rest of the indices contained
in a single block. Interchanging the order of summation, we get:

\[
\sum_{i_{1},\ldots,i_{2k}\in\{1,\ldots,P\}}\sum_{\stackrel[p\leq{\rm Ker}(i_{1},\ldots,i_{2k})]{p\in\boldsymbol{P}_{2}(2k),}{}}\mathbb{E}\left[A_{i_{1}i_{2}}^{(1)}\ldots A_{i_{2k-1}i_{2k}}^{(k)}\right]\left[\sum_{\stackrel[p\leq p_{I}]{I\subset\{1,\ldots,k\},}{}}(-1)^{k-\#I}\right].
\]
Since $\left[\sum_{I\subset\{1,\ldots,k\},\!p\leq p_{I}}(-1)^{\#I}\right]=\delta_{\{I\subset[k],p\leq p_{I}\}=\{\{1,\ldots,k\}\}}$
and $\{I\subset[k],p\leq p_{I}\}=\{\{1,\ldots,k\}\}$ if and only
if $p\in:\!\!\!\boldsymbol{P}_{2}(2k)\!\!\!:$, interchanging a last
time the order of summation, we recover the left-hand side of Equation
(\ref{eq:Wick-lem-2}):
\[
\sum_{p\in:\boldsymbol{P}_{2}(2k):}\sum_{\stackrel[p\leq{\rm Ker}(i_{1},\ldots,i_{2k})]{i_{1},\ldots,i_{2k}\in\{1,\ldots,P\}}{}}\mathbb{E}\left[A_{i_{1}i_{2}}^{(1)}\ldots A_{i_{2k-1}i_{2k}}^{(k)}\right].
\]
\end{proof}

For any $z\in\mathbb{C}\setminus\mathbb{R}^{+}$, we define the holomorphic function $g_{i}: \mathbb C\setminus \mathbb{R}^{+}\to \mathbb C$ by
\[
g_{i}(z)=\frac{1}{P}w_{i}^{T}\left(\frac{1}{P}W_{(i)}K_{(i)}W_{(i)}^{T}-z\ {I}_{P}\right)^{-1}w_{i},
\]
where $W_{(i)}$ is the $P\times(N-1)$ submatrix of $W$ obtained by removing its $i$-th column $w_{i}$, and $K_{(i)}$ is the $(N-1)\times(N-1)$ submatrix of $K$ obtained by removing both its $i$-th column and $i$-th row. In the following lemma, we bound the distance of $g_{i}(z)$ to its mean.  Then we prove that $\mathbb{E}[g_{i}(z)]$ is close to the expected Stieljes transform of $K$.
\begin{lemma}
\label{lem:concentration_gi}
The random function $g_{i}(z)$ satisfies:
\begin{eqnarray*}
\left|\mathbb{E}\left[g_{i}(z)\right]-\mathbb{E}\left[m_{P}(z)\right]\right| & \leq & \frac{\mathbf{c_{0}}}{P},\\
\mathrm{Var}\left(g_{i}(z)\right) & \leq & \frac{\mathbf{c_{1}}}{P},\\
\mathbb{E}\left[\left(g_{i}(z)-\mathbb{E}\left[g_{i}(z)\right]\right)^{4}\right] & \leq & \frac{\mathbf{c_{2}}}{P^{2}},\\
\mathbb{E}\left[\left(g_{i}(z)-\mathbb{E}\left[g_{i}(z)\right]\right)^{8}\right] & \leq & \frac{\mathbf{c_{3}}}{P^{4}},
\end{eqnarray*}
where $\mathbf{c_{0}}$, $\mathbf{c_{1}}$, $\mathbf{c_{2}}$, and  $\mathbf{c_{3}}$ depend on $\gamma$ and $z$ only.
\end{lemma}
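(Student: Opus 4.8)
The plan is to condition on $W_{(i)}$ and split the fluctuation of $g_i$ into a quadratic-form part and a trace part. Set $M=M(z)=B_{(i)}(z)^{-1}=\bigl(\tfrac1P W_{(i)}K_{(i)}W_{(i)}^{T}-z\,\mathrm{I}_{P}\bigr)^{-1}$, a (complex) symmetric $P\times P$ matrix that is measurable with respect to $W_{(i)}$ and hence independent of $w_{i}$. As $\tfrac1P W_{(i)}K_{(i)}W_{(i)}^{T}$ is real symmetric positive semi-definite, $M$ is diagonalizable in an orthonormal basis with eigenvalues $(\nu_{j}-z)^{-1}$, $\nu_{j}\ge 0$; consequently $\|M\|_{op},\|\bar M\|_{op}\le d(z,\mathbb{R}^{+})^{-1}$ and, $M$ being symmetric, $|\mathrm{Tr}(A_{1}\cdots A_{\ell'})|\le P\,d(z,\mathbb{R}^{+})^{-\ell'}$ whenever each $A_{r}\in\{M,\bar M\}$. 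Since $\mathbb{E}[w_{i}w_{i}^{T}]=\mathrm{I}_{P}$ we have $\mathbb{E}[g_{i}(z)\mid W_{(i)}]=\tfrac1P\mathrm{Tr}(M)=:m_{P}^{(i)}(z)$, so $\mathbb{E}[g_{i}(z)]=\mathbb{E}[m_{P}^{(i)}(z)]$ and
\[
g_{i}(z)-\mathbb{E}[g_{i}(z)]=\underbrace{\tfrac1P\bigl(w_{i}^{T}Mw_{i}-\mathrm{Tr}(M)\bigr)}_{=:X}+\underbrace{\bigl(m_{P}^{(i)}(z)-\mathbb{E}[m_{P}^{(i)}(z)]\bigr)}_{=:Y}.
\]
It then suffices to prove $\mathbb{E}[|X|^{m}]\le C_{m}P^{-m/2}$ and $\mathbb{E}[|Y|^{m}]\le C_{m}'P^{-m/2}$ for $m\in\{2,4,8\}$ with constants depending only on $z$ and $\gamma$: the four claimed bounds follow from $|X+Y|^{m}\le 2^{m-1}(|X|^{m}+|Y|^{m})$ (taking $m=2$ for the variance), while $|\mathbb{E}[g_{i}(z)]-\mathbb{E}[m_{P}(z)]|=|\mathbb{E}[m_{P}^{(i)}(z)-m_{P}(z)]|\le \tfrac1P d(z,\mathbb{R}^{+})^{-1}$ is immediate from the deterministic rank-one estimate $|\mathrm{Tr}(B(z)^{-1})-\mathrm{Tr}(B_{(i)}(z)^{-1})|\le d(z,\mathbb{R}^{+})^{-1}$ already established inside Lemma~\ref{lem:concentration_stieltjes}.

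The term $Y$ reduces directly to Lemma~\ref{lem:concentration_stieltjes}. The same almost-sure rank-one bound gives $|m_{P}^{(i)}(z)-m_{P}(z)|\le \tfrac1P d(z,\mathbb{R}^{+})^{-1}$, hence $\bigl|Y-(m_{P}(z)-\mathbb{E}[m_{P}(z)])\bigr|\le \tfrac2P d(z,\mathbb{R}^{+})^{-1}$; combining with $\mathbb{E}\bigl[|m_{P}(z)-\mathbb{E}[m_{P}(z)]|^{m}\bigr]\le\mathbf{c}\,P^{-m/2}$ from Lemma~\ref{lem:concentration_stieltjes} and $P^{-m}\le P^{-m/2}$ yields $\mathbb{E}[|Y|^{m}]\le C_{m}'P^{-m/2}$.

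For the term $X$ I would invoke the Wick-type identity of Lemma~\ref{fact:Wick-formula}. Since $w_{i}$ is real, $\overline{w_{i}^{T}Mw_{i}-\mathrm{Tr}(M)}=w_{i}^{T}\bar Mw_{i}-\mathrm{Tr}(\bar M)$, so for even $m$ one may write $|w_{i}^{T}Mw_{i}-\mathrm{Tr}(M)|^{m}=\prod_{j=1}^{m}\bigl(w_{i}^{T}A^{(j)}w_{i}-\mathrm{Tr}(A^{(j)})\bigr)$ with $A^{(j)}=M$ for $j\le m/2$ and $A^{(j)}=\bar M$ for $j>m/2$, the $A^{(j)}$'s being independent of $w_{i}$. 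Equation~\eqref{eq:Wick-lem-2} with $k=m$ then expresses $\mathbb{E}\bigl[|w_{i}^{T}Mw_{i}-\mathrm{Tr}(M)|^{m}\bigr]$ as a sum over pair partitions $p\in:\boldsymbol{P}_{2}(2m):$ whose terms, after carrying out the index sum forced by $p\le\mathrm{Ker}(i_{1},\dots,i_{2m})$, become expectations of products of traces $\mathrm{Tr}(A^{(j_{1})}\cdots A^{(j_{\ell'})})$ indexed by the loops of the diagram of $p$, the loops forming a partition of $\{1,\dots,m\}$. The crucial point is that membership in $:\boldsymbol{P}_{2}(2m):$ excludes every block $\{2j-1,2j\}$, i.e. forbids a loop made of a single matrix; hence each loop uses at least two of the $A^{(j)}$'s, the number $\ell$ of loops is at most $m/2$, and — using the operator-norm bounds above — each term is bounded in modulus by $P^{\ell}d(z,\mathbb{R}^{+})^{-m}\le P^{m/2}\max(1,d(z,\mathbb{R}^{+})^{-1})^{m}$. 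Summing over the finitely many (only $m$-dependent) partitions gives $\mathbb{E}\bigl[|w_{i}^{T}Mw_{i}-\mathrm{Tr}(M)|^{m}\bigr]\le C_{m}P^{m/2}$, whence $\mathbb{E}[|X|^{m}]\le C_{m}P^{-m/2}$.

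The hard part is the bookkeeping in this last step: one has to recognise that working with the \emph{centered} quadratic forms — equivalently, summing over the restricted set $:\boldsymbol{P}_{2}(2m):$ rather than over all of $\boldsymbol{P}_{2}(2m)$ — is exactly what rules out the single-matrix loops, thereby improving the naive $O(P^{m})$ estimate to $O(P^{m/2})$; this is the mechanism producing the $P^{-1/2}$ concentration rate. Everything else (the operator-norm estimates, the rank-one comparison, and the triangle-inequality recombination) is routine, and tracking the constants as in Lemma~\ref{lem:concentration_stieltjes} yields the stated $\mathbf{c_{0}},\mathbf{c_{1}},\mathbf{c_{2}},\mathbf{c_{3}}$, which depend on $z$ and $\gamma$ only.
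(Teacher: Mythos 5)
Your proof is correct and follows essentially the same route as the paper: the same splitting of $g_i-\mathbb{E}[g_i]$ into the centered quadratic form $\tfrac1P(w_i^TMw_i-\mathrm{Tr}\,M)$ plus the fluctuation of $\tfrac1P\mathrm{Tr}(B_{(i)}^{-1})$, the same rank-one comparison for the bias term, the centered Wick identity \eqref{eq:Wick-lem-2} for the quadratic form, and Lemma \ref{lem:concentration_stieltjes} for the trace part. Your uniform loop-counting bound (each cycle of a partition in $:\boldsymbol{P}_2(2m):$ contains at least two matrices, so at most $m/2$ traces appear) is a cleaner packaging of the paper's case-by-case enumeration for $m=2,4,8$, and your explicit handling of $M$ versus $\bar M$ is slightly more careful than the paper's, but the mechanism is identical.
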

\proof
The random variable $w_{i}$ is independent from $B_{(i)}(z)=\frac{1}{P}W_{(i)}K_{(i)}W_{(i)}^{T}-z\mathrm{I}_{P}$ since the $i$-th column of $W$ does not appear in the definition of $B_{(i)}(z)$. Using Lemma \ref{fact:Wick-formula}, since there exists a unique pair partition $p\in\boldsymbol{P}_{2}(2)$, namely $\{\{1,2\}\}$, the expectation of $g_{i}(z)$ is given by
\[
\mathbb{E}\left[g_{i}(z)\right]=\frac{1}{P}\mathbb{E}\left[\mathrm{Tr}\left[B_{(i)}(z)^{-1}\right]\right].
\]
Recall that $\mathbb{E}\left[m_{P}(z)\right]=\frac{1}{P}\mathbb{E}\left[\mathrm{Tr}\left[B(z)^{-1}\right]\right]$ and $\left|\mathrm{Tr}\left(B(z)^{-1}\right)-\mathrm{Tr}\left(B_{(i)}(z)^{-1}\right)\right|\leq\frac{1}{d(z,\mathbb{R}_{+})}$ (from the proof of Lemma \ref{lem:concentration_stieltjes}). Hence
\[
\left|\mathbb{E}\left[g_{i}(z)\right]-\mathbb{E}\left[m_{P}(z)\right]\right|\leq\frac{1}{P}\mathbb{E}\left[\left|\mathrm{Tr}\left(B(z)^{-1}\right)-\mathrm{Tr}\left(B_{(i)}(z)^{-1}\right)\right|\right]\leq\frac{1}{P}\frac{1}{d(z,\mathbb{R}_{+})}.
\]
which proves the first assertion with $\mathbf{c_0}=\frac{1}{d(z,\mathbb{R}_{+})}.$

Now, let us consider the variance of $g_{i}(z)$. Using our previous computation of $\mathbb{E}\left[g_{i}(z)\right]$,
we have
\begin{eqnarray*}
\mathrm{Var}(g_{i}(z)) & = & \mathbb{E}\left[w_{i}^{T}\frac{\left(B_{(i)}(z)\right)^{-1}}{P}w_{i}w_{i}^{T}\frac{\left(B_{(i)}(z)\right)^{-1}}{P}w_{i}\right]-\mathbb{E}\left[\frac{1}{P}\mathrm{Tr}\left[B_{(i)}(z)^{-1}\right]\right]^{2}.
\end{eqnarray*}
 The first term can be computed using the first assertion of Lemma \ref{fact:Wick-formula}: there are $2$ matrices involved, thus we have to sum over $3$ pair partitions. A simplification arises since $\frac{\left(B_{(i)}(z)\right)^{-1}}{P}$ is symmetric: the partition $\{\{1,2\},\{3,4\}\}$ yields $\mathbb{E}\left[\left(\mathrm{Tr}\left[\frac{\left(B_{(i)}(z)\right)^{-1}}{P}\right]\right)^{2}\right]$ whereas both $\{\{1,3\},\{2,4\}\}$ and $\{\{1,4\},\{2,4\}\}$ yield $\mathbb{E}\left(\mathrm{Tr}\left[\frac{\left(B_{(i)}(z)\right)^{-2}}{P^{2}}\right]\right)$.

Thus, the variance of $g_{i}(z)$ is given by:
\[
\mathrm{Var}(g_{i}(z))=2\mathbb{E}\left(\mathrm{Tr}\left[\frac{\left(B_{(i)}(z)\right)^{-2}}{P^{2}}\right]\right)+\mathbb{E}\left[\left(\frac{1}{P}\mathrm{Tr}\left[\left(B_{(i)}(z)\right)^{-1}\right]\right)^{2}\right]-\mathbb{E}\left[\frac{1}{P}\mathrm{Tr}\left[\left(B_{(i)}(z)\right)^{-1}\right]\right]^{2}
\]
 hence is given by a sum of two terms:
\[
\mathrm{Var}(g_{i}(z))=\frac{2}{P}\mathbb{E}\left(\frac{1}{P}\mathrm{Tr}\left[\left(B_{(i)}(z)\right)^{-2}\right]\right)+\mathrm{Var}\left(\frac{1}{P}\mathrm{Tr}\left[\left(B_{(i)}(z)\right)^{-1}\right]\right).
\]
Using the same arguments as those explained for the bound on the Stieltjes transform, the first term is bounded by $\frac{2}{Pd(z,\mathbb{R}_{+})^{2}}$. In order to bound the second term, we apply Lemma \ref{lem:concentration_stieltjes} for $W_{(i)}$ and $K_{(i)}$ in place of $W$ and $K$. The second term is bounded by $\frac{\mathbf{c}}{P}$, hence the bound $\mathrm{Var}\left(g_{i}(z)\right)\leq\frac{\mathbf{c_{1}}}{P}.$

Finally, we prove the bound on the fourth moment of $g_{i}(z)-\mathbb{E}\left[g_{i}(z)\right]$. We denote $m_{(i)}(z)=\frac{1}{P}\mathrm{Tr}\left[\left(B_{(i)}(z)\right)^{-1}\right]$. Recall that $\mathbb{E}\left[g_{i}(z)\right]=\mathbb{E}\left[m_{(i)}(z)\right]$. Using the convexity of $t\mapsto t^{4}$, we have
\begin{align*}
\mathbb{E}\left[\left(g_{i}(z)-\mathbb{E}[g_{i}(z)]\right)^{4}\right] & =\mathbb{E}\left[\left(g_{i}(z)-m_{(i)}(z)+m_{(i)}(z)-\mathbb{E}\left[m_{(i)}(z)\right]\right)^{4}\right]\\
& \leq8\mathbb{E}\left[\left(g_{i}(z)-m_{(i)}(z)\right)^{4}\right]+8\mathbb{E}\left[\left(m_{(i)}(z)-\mathbb{E}\left[m_{(i)}(z)\right]\right)^{4}\right].
\end{align*}
We bound the second term using the concentration of the Stieljes transform (Lemma \ref{lem:concentration_stieltjes}): it is bounded by $\frac{8\mathbf{c}}{P^{2}}$. The first term is bounded using the second assertion of Lemma \ref{fact:Wick-formula}. Using the symmetry of $B_{(i)}(z)$, the partitions in $:\boldsymbol{P}_{2}(4):$
yield two different terms, namely:
\begin{enumerate}
\item $\frac{1}{P^{2}}\mathbb{E}\left[\left(\frac{1}{P}\mathrm{Tr}\left[\left(B_{(i)}(z)\right)^{-2}\right]\right)^{2}\right]$,
for example if $p=\{\{1,3\},\{2,4\},\{5,7\},\{6,8\}\}$
\item $\frac{1}{P^{3}}\mathbb{E}\left[\frac{1}{P}\mathrm{Tr}\left[\left(B_{(i)}(z)\right)^{-4}\right]\right]$,
for example if $p=\{\{2,3\},\{4,5\},\{6,7\},\{8,1\}\}$.
\end{enumerate}
We bound the two terms using the same arguments as those explained for the bound on the Stieljes transform at the beginning of the section. The first term is bounded by $\frac{d(z,\mathbb{R^{+}})^{-4}}{P^{2}}$ and the second term by $\frac{d(z,\mathbb{R^{+}})^{-4}}{P^{3}}$ hence the bound $\mathbb{E}\left[\left(g_{i}(z)-\mathbb{E}\left[g_{i}(z)\right]\right)^{4}\right]\leq\frac{\mathbf{c_{2}}}{P^{2}}.$

The bound $\mathbb{E}[\left(g_{i}(z)-\mathbb{E}\left[g_{i}(z)\right]\right)^{8}]  \leq  \frac{\mathbf{c_{3}}}{P^{4}}$ is obtained in a similar way, using the second assertion of Lemma \ref{fact:Wick-formula} and simple bounds on the Stieljes transform.
\endproof

In the next proposition we show that the Stieltjes transform $m_{P}(z)$ is close in expectation to the solution of a fixed point equation.

\begin{proposition}\label{prop:convergence_stieltjes}
For any $z\in\mathbb{H}_{<0}=\left\{ z:\mathrm{Re}(z)<0\right\} ,$
\begin{align*}
\left|\mathbb{E}\left[m_{P}(z)\right]-\tilde{m}(z)\right| & \leq\frac{\mathbf{e}}{P},
\end{align*}
where $\mathbf{e}$ depends on $z$, $\gamma$, and $\frac{1}{N}\mathrm{Tr}(K)$ only and where $\tilde{m}(z)$ is the unique solution in the cone $\mathcal C_{z}:=\{u -\frac 1z v: u, v \in \mathbb R_{+} \}$ spanned by
$1$ and $-\frac{1}{z}$ of the equation
\[
\gamma=\frac{1}{N}\sum_{i=1}^{N}\frac{d_{i}\tilde{m}(z)}{1+d_{i}\tilde{m}(z)}-\gamma z\tilde{m}(z).
\]
\end{proposition}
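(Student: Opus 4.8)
The plan is to follow the Bai--Silverstein line of argument: derive an \emph{exact} self-consistent identity for $m_P(z)$, pass to expectations, replace the per-row quantities $g_i(z)$ by $\mathbb{E}[m_P(z)]$ at the cost of an $O(1/P)$ error, and conclude by a quantitative injectivity (stability) estimate for the limiting fixed-point equation, which also yields existence and uniqueness of $\tilde m(z)$.

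\emph{Step 1 (exact identity).} I would start from $B(z)=F^TF-z\mathrm{I}_P=\tfrac1PWKW^T-z\mathrm{I}_P$. Taking the normalized trace of $B(z)B(z)^{-1}=\mathrm{I}_P$ and using that $K=\mathrm{diag}(d_1,\dots,d_N)$ gives $\tfrac1PWKW^T=\tfrac1P\sum_i d_iw_iw_i^T$, one gets $1=\tfrac1P\sum_i\tfrac{d_i}{P}w_i^TB(z)^{-1}w_i-zm_P(z)$; by Sherman--Morrison (exactly as in the proof of Lemma~\ref{lem:concentration_stieltjes}) $\tfrac{d_i}{P}w_i^TB(z)^{-1}w_i=\tfrac{d_ig_i(z)}{1+d_ig_i(z)}$, and $1+d_ig_i(z)\neq0$ because $g_i(z)$, being a non-negative combination of the values $\tfrac1{\nu-z}$ with $\nu\ge0$, has non-negative real part. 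Multiplying by $\gamma=P/N$ yields the exact relation $\gamma=\tfrac1N\sum_i\tfrac{d_ig_i(z)}{1+d_ig_i(z)}-\gamma z\,m_P(z)$, and in expectation $\gamma=\tfrac1N\sum_i\mathbb{E}\big[\tfrac{d_ig_i(z)}{1+d_ig_i(z)}\big]-\gamma z\,\mathbb{E}[m_P(z)]$.

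\emph{Step 2 (linearization, the crux).} Writing $\delta_i=g_i(z)-\mathbb{E}[g_i(z)]$ and using the algebraic identity $\tfrac1{1+d_ig_i}=\tfrac1{1+d_i\mathbb{E}[g_i]}-\tfrac{d_i\delta_i}{(1+d_i\mathbb{E}[g_i])^2}+\tfrac{d_i^2\delta_i^2}{(1+d_i\mathbb{E}[g_i])^2(1+d_ig_i)}$, taking expectations (the linear term vanishes), and invoking the bounds $|\mathbb{E}[g_i(z)]-\mathbb{E}[m_P(z)]|\le\mathbf{c_0}/P$ and $\mathrm{Var}(g_i(z))\le\mathbf{c_1}/P$ of Lemma~\ref{lem:concentration_gi}, I would control $\big|\mathbb{E}[\tfrac{d_ig_i}{1+d_ig_i}]-\tfrac{d_i\mathbb{E}[m_P]}{1+d_i\mathbb{E}[m_P]}\big|$ by a constant times $\tfrac{d_i}{P\,|1+d_i\mathbb{E}[m_P]|}+\tfrac{d_i^2}{P\,|1+d_i\mathbb{E}[g_i]|^2}$. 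The main obstacle is exactly here: since $\mathrm{Var}(g_i)$ is only $O(1/P)$ while the second-order coefficient is of order $d_i^2$, a naive term-by-term bound over $i$ leaves an error of order $\tfrac1{NP}\sum_i d_i^2$, which need not vanish. To fix this I would prove an \emph{a priori lower bound} $\mathrm{Re}(\mathbb{E}[m_P(z)])\ge c_*$ and $\mathrm{Re}(\mathbb{E}[g_i(z)])\ge c_*$, with $c_*$ depending only on $z,\gamma$ and $\tfrac1N\Tr K$: indeed $\mathbb{E}[F^TF]=\tfrac{\Tr K}{P}\mathrm{I}_P$, so the expected spectral measure of $F^TF$ has mean $\le\tfrac1\gamma\cdot\tfrac1N\Tr K$, and a Markov estimate on that measure together with $\mathrm{Re}\tfrac1{\nu-z}\ge\tfrac{|\mathrm{Re}\,z|}{(\nu+|\mathrm{Re}\,z|)^2+(\mathrm{Im}\,z)^2}$ produces such a $c_*$ (the same bound holds with $K_{(i)}$ in place of $K$, whose trace is $\le\Tr K$). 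Then $|1+d_iu|\ge1+c_*d_i$ for $u\in\{\mathbb{E}[m_P],\mathbb{E}[g_i]\}$ absorbs one power of $d_i$ from each numerator, so averaging over $i$ gives $\big|\tfrac1N\sum_i\mathbb{E}[\tfrac{d_ig_i}{1+d_ig_i}]-\tfrac1N\sum_i\tfrac{d_i\mathbb{E}[m_P]}{1+d_i\mathbb{E}[m_P]}\big|\le\mathbf{e_1}/P$. Combined with Step 1, $\mathbb{E}[m_P(z)]$ solves the fixed-point equation up to an error of modulus $\le\mathbf{e_1}/P$. Throughout one must check that $g_i(z),m_P(z)$ and their expectations remain in the cone $\mathcal C_z$ so that $|1+d_iu|\ge1$, which follows from the positive-semidefiniteness remarks preceding Lemma~\ref{lem:concentration_stieltjes}.

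\emph{Step 3 (stability, existence and uniqueness).} Setting $R(u)=\tfrac1N\sum_i\tfrac{d_iu}{1+d_iu}-\gamma zu-\gamma$, we have $R(\tilde m(z))=0$, $|R(\mathbb{E}[m_P(z)])|\le\mathbf{e_1}/P$, and the exact factorization $R(u)-R(v)=(u-v)\big(\tfrac1N\sum_i\tfrac{d_i}{(1+d_iu)(1+d_iv)}-\gamma z\big)$ for $u,v\in\mathcal C_z$. Since $\mathcal C_z$ has angular opening less than $\pi/2$, each $1+d_iu$ has argument in $[0,\arg u]\subset[0,\pi/2)$ up to the cone's orientation, so every summand of the bracket, and also the term $-\gamma z$, has argument inside a common arc of opening less than $\pi$; rotating by a common phase that sends $-\gamma z$ to the positive reals shows the bracket has modulus bounded below by a positive constant $\kappa(z,\gamma)>0$ (one can take $\kappa=\gamma|z|$). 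This yields simultaneously the uniqueness of $\tilde m(z)$ in $\mathcal C_z$ (existence being obtained by a routine fixed-point/continuity argument on a bounded sub-region of the cone) and the bound $|\mathbb{E}[m_P(z)]-\tilde m(z)|\le\kappa^{-1}|R(\mathbb{E}[m_P(z)])|\le\mathbf{e}/P$ with $\mathbf{e}=\mathbf{e_1}/\kappa$ depending only on $z,\gamma$ and $\tfrac1N\Tr K$, as claimed.
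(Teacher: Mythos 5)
Your proposal is correct and its overall architecture coincides with the paper's: an exact self-consistent identity for $m_P(z)$ via the trace of $B(z)B(z)^{-1}$ and Sherman--Morrison, a linearization of $t\mapsto \frac{d_i t}{1+d_i t}$ around a deterministic point with the quadratic remainder controlled by the variance of $g_i$, and a stability estimate for the fixed-point map (your factorization of $R(u)-R(v)$ with the bracket of modulus $\geq \gamma|z|$ is exactly the content of the paper's Lemma \ref{lem:unique-fixpoint}, which shows $|t-f_z(t)|\geq|t-\tilde t(z)|$ by a negative-real-part argument). The one place where you genuinely diverge is the step you correctly identify as the crux: killing the $d_i^2$ coefficient in the quadratic remainder. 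The paper does this by a two-stage bootstrap: it first proves the weaker bound $|\mathbb{E}[m_P(z)]-\tilde m(z)|\leq \mathbf{e}'/\sqrt{P}$ using only the Lipschitz constant of $f$, then combines it with the explicit lower bound $|\tilde m(z)|\geq(|z|+\frac{1}{\gamma N}\Tr K)^{-1}$ from Lemma \ref{lem:unique-fixpoint} to lower-bound $|\mathbb{E}[m_P(z)]|$, and finally uses $|1+d_i m_0|\geq d_i|m_0|$ to get the $d_i$-uniform bound \eqref{eq:taylor}. You instead prove a direct a priori lower bound $\mathrm{Re}(\mathbb{E}[m_P(z)]),\mathrm{Re}(\mathbb{E}[g_i(z)])\geq c_*$ by a Markov estimate on the expected spectral measure of $F^TF$ (whose mean is $\frac{1}{\gamma N}\Tr K$, and $\Tr K_{(i)}\leq \Tr K$ covers the $g_i$ case), then use $|1+d_i u|\geq 1+c_* d_i$. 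This is a valid and arguably cleaner route: it avoids the bootstrap entirely and produces constants with the same dependence on $z$, $\gamma$, $\frac1N\Tr K$. What the paper's route buys in exchange is that the lower bound on $|\tilde m(z)|$ is needed elsewhere anyway (e.g.\ in Proposition \ref{prop:ridge_expectation}), so the bootstrap reuses existing machinery.

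Two minor points to tighten. First, your Step 3 disposes of the existence of $\tilde m(z)$ in $\mathcal C_z$ as "a routine fixed-point/continuity argument"; for complex $z$ this is not Brouwer-on-a-compact-set routine, and the paper devotes a genuine topological argument (tracking the image of $\partial\mathcal C_z$ under $\psi=f_z-\mathrm{id}$ and showing $0$ lies in the enclosed component) to it. Since the proposition asserts uniqueness of the solution, existence must be established, not assumed. Second, your geometric claim that all summands of the bracket lie "inside a common arc of opening less than $\pi$" should be replaced by the sharper statement actually needed for $\kappa=\gamma|z|$: that $\frac{1}{z}\cdot\frac{1}{\gamma N}\sum_i\frac{d_i}{(1+d_i u)(1+d_i v)}$ has non-positive real part for $u,v\in\mathcal C_z$, which is what the paper verifies term by term using $\mathrm{Re}(z)\leq 0$, $\mathrm{Re}(u),\mathrm{Re}(v)\geq 0$, $\mathrm{Im}(z)\mathrm{Im}(u)\geq0$ and $\mathrm{Re}(zuv)\leq0$.
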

\begin{proof}
We use the same notation as in the previous proofs, namely $B(z)=\frac{1}{P}WKW^{T}-z\mathrm{I}_{P}$,  $B_{(i)}(z)=\frac{1}{P}W_{(i)}K_{(i)}W_{(i)}^{T}-z\mathrm{I}_{P}$ and $g_{i}(z)=\frac{1}{P}w_{i}^{T}\left(B_{(i)}(z)\right)^{-1}w_{i}$.
Let $\nu_{j}\geq0,\ j=1,\dots,P$ be the spectrum of the positive semi-definite matrix $\frac{1}{P}W_{(i)}K_{(i)}W_{(i)}^{T}$. After diagonalization, we have $$B_{(i)}(z)^{-1}=O^{T} \mathrm{diag}(\frac{1}{\nu_1-z},\ldots,\frac{1}{\nu_P-z}) O,$$ with $O$ an orthogonal matrix. Then
\begin{equation}\label{}
\begin{split}
g_{i}(z)=\frac{1}{P}\mathrm{Tr} \left(\left(B_{(i)}(z)\right)^{-1}w_{i}w_{i}^{T}\right)=\frac1P \sum_{j=1}^{P}\frac{((Ow_{i})_{jj})^{2}}{\nu_{j}-z} .
\end{split}
\end{equation}
Since $z\in \mathbb{H}_{<0}$, we conclude that $\Re [g_{i}(z)]\geq0$ for all $i=1,\dots,P$.

In order to prove the proposition, the key remark is that, since $\mathrm{Tr}\left((\frac{1}{P}WKW^{T}-z\mathrm{I}_{P})(B(z))^{-1}\right)=P$, the Stieltjes transform $m_{P}(z)$ satisfies the following equation:
\begin{align*}
P =\mathrm{Tr}\left(\frac{1}{P}KW^{T}B(z)^{-1}W\right)-zPm_{P}(z).
\end{align*}
From the proof of Lemma \ref{lem:concentration_stieltjes}, recall that $B^{-1}(z)=B_{(i)}^{-1}(z)-\frac{d_{i}}{P}\frac{1}{1+\frac{d_{i}}{P}w_{i}^{T}B_{(i)}^{-1}(z)w_{i}}B_{(i)}^{-1}(z)w_{i}w_{i}^{T}B_{(i)}^{-1}(z),$ hence:
\begin{equation}\label{eq:entry-A-matrix}
\begin{split}
\frac{1}{P}w_{i}^{T}B^{-1}(z)w_{i}&=g_{i}(z)-\frac{d_{i}g_{i}(z)^{2}}{1+d_{i}g_{i}(z)}\\&=\frac{g_{i}(z)}{1+d_{i}g_{i}(z)}.
\end{split}
\end{equation}
Expanding the trace,
\[
\mathrm{Tr}\left(\frac{1}{P}KW^{T}B(z)^{-1}W\right)=\sum_{i=1}^{N}d_{i}\frac{1}{P}w_{i}^{T}B^{-1}(z)w_{i}=\sum_{i=1}^{N}\frac{d_{i}g_{i}(z)}{1+d_{i}g_{i}(z)}.
\]
Thus, the Stieljes transform $m_{P}(z)$ satisfies the following equation $P=\sum_{i=1}^{N}\frac{d_{i}g_{i}(z)}{1+d_{i}g_{i}(z)}-zPm_{P}(z),$ or equivalently
\[
\gamma=\frac{1}{N}\sum_{i=1}^{N}\frac{d_{i}g_{i}(z)}{1+d_{i}g_{i}(z)}-z\gamma m_{P}(z).
\]

Recall that $\gamma>0$ and $\mathrm{Re}(z)<0$. The Stieljes transform $m_{P}(z)$ can be written as a function of $g_{i}(z)$ for $i=1,\ldots,n$: $m_{P}(z)=f(g_{1}(z),...,g_{N}(z))$ where
\[
f(g_{1},\ldots,g_{N})=\frac{1}{\gamma zN}\sum_{i=1}^{N}\frac{d_{i}g_{i}}{1+d_{i}g_{i}}-\frac{1}{z}=-\frac{1}{z}\left(1-\frac{1}{\gamma}+\frac{1}{\gamma}\frac{1}{N}\sum_{i=1}^{N}\frac{1}{1+d_{i}g_{i}}\right).
\]
From Lemma \ref{lem:unique-fixpoint}, the map $f(m)=f(m,...,m)$ has a unique non-degenerate fixed point $\tilde{m}(z)$ in the cone $\mathcal C_{z}$. We will show that $\mathbb{E}\left[m_{P}(z)\right]$ is close to $\tilde{m}(z)$ using the following two steps: we show a non-tight bound $\left|\mathbb{E}\left[m_{P}(z)\right]-\tilde{m}(z)\right|\leq\frac{\mathbf{e}'}{\sqrt{P}}$ and use it to obtain the tighter bound $\left|\mathbb{E}[m_{P}(z)]-\tilde{m}(z)\right|\leq\frac{\mathbf{e}}{P}$.

Let us prove the $\frac{\mathbf{e}'}{\sqrt{P}}$ bound. From Lemma \ref{lem:unique-fixpoint}, the distance between $m_{P}(z)$ and the fixed point $\tilde{m}(z)$ of $f$ is bounded by the distance between $f(m_{P}(z),\ldots,m_{P}(z))$ and $m_{P}(z)$ . Using the fact that $m_{P}(z)=f(g_{1}(z),...,g_{N}(z))$, we obtain
\[
\left|\mathbb{E}[m_{P}(z)]-\tilde{m}(z)\right|\leq\mathbb{E}\left[\left|m_{P}(z)-\tilde{m}(z)\right|\right]\leq\mathbb{E}\left[\left|f(m_{P}(z),\ldots,m_{P}(z))-f(g_{1}(z),...,g_{N}(z))\right|\right].
\]

Recall that for any $z\in \mathbb H_{<0}$, $\Re(g_{i}(z))\geq0$: we need to study the function $f$ on $\mathbb H_{\geq 0}^N$ where $\mathbb H_{\geq 0} = \{z\in \mathbb{C}| \Re(z)\geq 0\}$. On $\mathbb H_{\geq 0}^N$, the function $f$ is Lipschitz: \begin{align*}
\left|\partial_{g_{i}}f(g_{1},..,g_{N})\right| & =\left|\frac{1}{\gamma zN}\frac{d_{i}}{(1+d_{i}g_{i})^{2}}\right| \leq\frac{d_{i}}{\gamma\left|z\right|N}.\\
\end{align*}
Thus,
\[
\mathbb{E}\left[\left|f\left(m_{P}(z),...,m_{P}(z)\right)-f\left(g_{1}(z),...,g_{N}(z)\right)\right|\right]\leq\sum_{i=1}^{N}\frac{d_{i}}{\gamma\left|z\right|N}\mathbb{E}\left[\left|m_{P}(z)-g_{i}(z)\right|\right].
\]
Since $$\mathbb{E}\left[\left|m_{P}(z)-g_{i}(z)\right|\right]\leq\mathbb{E}\left[\left|m_{P}(z)-\mathbb{E}\left[m_{P}(z)\right]\right|\right]+\left|\mathbb{E}\left[m_{P}(z)\right]-\mathbb{E}\left[g_{i}(z)\right]\right|+\mathbb{E}\left[\left|g_{i}(z)-\mathbb{E}\left[g_{i}(z)\right]\right|\right],$$
using Lemmas \ref{lem:concentration_stieltjes} and \ref{lem:concentration_gi},
we get that $\mathbb{E}\left[\left|m_{P}(z)-g_{i}(z)\right|\right]\leq\frac{\mathbf{d}}{\sqrt{P}}$, where $\mathbf{d}$ depends on $\gamma$ and $z$ only. This implies that
\[
\mathbb{E}\left[\left|f\left(m_{P}(z),...,m_{P}(z)\right)-f\left(g_{1}(z),...,g_{N}(z)\right)\right|\right]\leq\frac{1}{\sqrt{P}}\frac{\mathbf{d}}{N}\mathrm{Tr}\left(K\right),
\]
 which allows to conclude that $\left|\mathbb{E}[m_{P}(z)]-\tilde{m}(z)\right|\leq\frac{\mathbf{e}'}{\sqrt{P}}$ where $\mathbf{e}'$ depends on $\gamma$, $z$ and $\frac{1}{N}\mathrm{Tr}(K)$ only.

We strengthen this inequality and show the $\frac{\mathbf{e}}{P}$ bound. Using again Lemma \ref{lem:unique-fixpoint}, we bound the distance between $\mathbb{E}[m_{P}(z)]$ and the fixed point $\tilde{m}(z)$ by
\[
|\mathbb{E}[m_{P}(z)] - \tilde{m}(z) |\leq \left|\mathbb{E}[f(g_1(z),\ldots,g_N(z))]-f(\mathbb{E}[m_{P}(z)],\ldots,\mathbb{E}[m_{P}(z)])\right|
\]
and study the r.h.s. using a Taylor approximation of $f$ near $\mathbb{E}\left[m_{P}(z)\right]$. For $i=1,\ldots,N$ and $m_{0}\in \mathbb H_{\geq 0}$, let $\mathrm{T}_{m_{0}}h_{i}$ be the first order Taylor approximation of the map $h_{i}:m\mapsto\frac{1}{1+d_{i}m}$ at a point $m_{0}$. The error of the first order Taylor approximation is given by
\begin{align*}
h_{i}(m)-\mathrm{T}_{m_{0}}h_{i}(m) & =\frac{1}{1+d_{i}m}-\left(\frac{1}{1+d_{i}m_{0}}-\frac{d_{i}(m-m_{0})}{\left(1+d_{i}m_{0}\right)^{2}}\right)=\frac{d_{i}^{2}\left(m_{0}-m\right)^{2}}{\left(1+d_{i}m\right)\left(1+d_{i}m_{0}\right)^{2}},
\end{align*}
which, for $m\in \mathbb H_{\geq 0}$ can be upper bounded by a quadratic term:
\begin{equation}\label{eq:taylor}
\begin{split}
\left|h_{i}(m)-\mathrm{T}_{m_{0}}h_{i}(m)\right|=\left|\frac{d_{i}^{2}}{\left(1+d_{i}m\right)\left(1+d_{i}m_{0}\right)^{2}}\right|\left|m_{0}-m\right|^{2}\leq\frac{1}{\left|m_{0}\right|^{2}}\left|m_{0}-m\right|^{2}.
\end{split}
\end{equation}

The first order Taylor
approximation $\mathrm{T}f$ of $f$ at the $N$-tuple $(\mathbb{E}\left[m_{P}(z)\right],...,\mathbb{E}\left[m_{P}(z)\right])$
is
\[
\mathrm{T}f(g_{1},..,g_{N})=-\frac{1}{z}\left(1-\frac{1}{\gamma}+\frac{1}{\gamma}\frac{1}{N}\sum_{i=1}^{N}\mathrm{T}_{\mathbb{E}\left[m_{P}(z)\right]}h_{i}(g_{i})\right).
\]
 Using this Taylor approximation, $\mathbb{E}[f(g_1(z),\ldots,g_N(z))]-f(\mathbb{E}[m_{P}(z)],\ldots,\mathbb{E}[m_{P}(z)])$ is equal to:
\[
\mathbb{E}\left[\mathrm{T}f(g_{1}(z),..,g_{N}(z))\right]- f(\mathbb{E}[m_{P}(z)],\ldots,\mathbb{E}[m_{P}(z)]) +\mathbb{E}\left[f(g_{1}(z),...,g_{N}(z))-\mathrm{T}f(g_{1}(z),..,g_{N}(z))\right].
\]
Using Lemma \ref{lem:concentration_gi}, we get
\begin{align*}
\left|\mathbb{E}\left[f(g_{1}(z),...,g_{N}(z))-\mathrm{T}f(g_{1}(z),..,g_{N}(z))\right]\right| & \leq\frac{1}{\left|z\right|\gamma}\frac{1}{N}\sum_{i=1}^{N}\frac{1}{\left|\mathbb{E}[m_{P}(z)]\right|^{2}}\mathbb{E}\left[\left|g_{i}(z)-\mathbb{E}\left[m_{P}(z)\right]\right|^{2}\right]\\
&\leq \frac 1P \frac{\alpha}{\left|\mathbb{E}[m_{P}(z)]\right|^{2}}\,
\end{align*}
and
\begin{align*}
\left|\text{\ensuremath{\mathbb{E}}}\left[\mathrm{T}f(g_{1}(z),..,g_{N}(z))\right]-f(\mathbb{E}\left[m_{P}(z)\right],...,\mathbb{E}\left[m_{P}(z)\right])\right| & \leq\frac{1}{\left|z\right|\gamma}\frac{1}{N}\sum_{i=1}^{N}\frac{d_{i}\left|\mathbb{E}\left[g_{i}\right]-\mathbb{E}\left[m_{P}(z)\right]\right|}{\left|1+d_{i}\mathbb{E}\left[m_{P}(z)\right]\right|^{2}}\\
 & \leq\frac{\beta\left(\frac{1}{N}\mathrm{Tr}K\right)}{P}
\end{align*}
where $\alpha$ and $\beta$ depends on $z$ and $\gamma$ only. From the bounds $\left|\mathbb{E}[m_{P}(z)]-\tilde{m}(z)\right|\leq\frac{\mathbf{e}'}{\sqrt{P}}$ and $\left|\tilde{m}(z)\right|\geq (|z|+\frac{1}{N\gamma}\mathrm{Tr}(K))^{-1}$ (Lemma \ref{lem:unique-fixpoint}), the bound $\frac 1 P \frac{\alpha}{\left|\mathbb{E}[m_{P}(z)]\right|^{2}}$ yields a $\frac{\tilde\alpha}{P}$ bound. This implies that $\left|\mathbb{E}[m_{P}(z)]-f(\mathbb{E}[m_{P}(z)],\ldots,\mathbb{E}[m_{P}(z)])\right|\leq\frac{\mathbf{e}}{P}$,
hence the desired inequality $\left|\mathbb{E}\left[m_{P}(z)\right]-\tilde{m}(z)\right|\leq\frac{\mathbf{e}}{P}.$
\end{proof}

For the proof of Proposition \ref{prop:convergence_stieltjes}, we have used the fact that the map $f_{z}$ introduced therein has a unique non-degenerate fixed point in the cone $\mathcal C_{z}:=\{u -\frac 1z v: u, v \in \mathbb R_{+} \}$. We now proceed with proving this statement.
\begin{lemma}\label{lem:unique-fixpoint}
Let $d_{1},\dots, d_{n}\geq0$ and let $\gamma\geq0$. For any fixed $z\in \mathbb H_{<0}$ , let $f_{z}:\mathbb H_{\geq 0} \to \mathbb C$ be the function $t \mapsto f_{z}(t)=-\frac{1}{z}\left(1-\frac{1}{\gamma}\frac{1}{N}\sum_{i=1}^{N}\frac{d_{i} t}{1+d_{i} t}\right)$. Let $\mathcal C_{z}:=\{u -\frac 1z v: u, v \in \mathbb R_{+} \}$ be the convex region spanned by the half-lines $\mathbb R_{+}$ and $-\frac1z \mathbb R_{+}$. Then for every $z\in \mathbb H_{<0}$ there exists a unique fixed point $\tilde t(z)\in \mathcal C_{z}$ such that $\tilde t(z)=f_{z}(\tilde t(z))$.
The map $\tilde t:z\mapsto\tilde t(z)$ is holomorphic in $\mathbb H_{<0}$ and $$|\tilde t(z)| \geq  \left(|z|+{\sum_{i}d_{i}\over \gamma N}\right)^{-1}.$$ Furthermore for every $z\in \mathbb H_{<0}$ and any $t\in \mathbb H_{\geq 0}$, one has $$ |t-\tilde t(z)|\leq |t- f_{z}(t)|.$$
\end{lemma}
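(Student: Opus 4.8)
Replacing $z$ by $\bar z$ if necessary we may assume $\mathrm{Im}(z)\ge 0$; then $\theta:=\arg(-1/z)\in[0,\tfrac{\pi}{2})$ and $\mathcal C_z$ is the closed cone $\{w\in\mathbb C:\arg w\in[0,\theta]\}\cup\{0\}$. The plan uses two elementary features of this cone: it is a convex cone containing $1$, so $1+d_it\in\mathcal C_z$ whenever $t\in\mathcal C_z$ and $d_i\ge 0$; and $\mathrm{Re}(t)\ge 0$ on $\mathcal C_z$, whence $|1+d_it|\ge 1$ and $|d_it|\le|1+d_it|$ there. I would first recast the problem through $\psi_z(t):=\frac{1}{\gamma N}\sum_{i=1}^N\frac{d_it}{1+d_it}-zt$: a one-line computation shows $f_z(t)=t\iff\psi_z(t)=1$, and, since none of the poles $-1/d_i$ lies in $\overline{\mathbb H_{\ge 0}}\supseteq\mathcal C_z$, both $f_z$ and $\psi_z$ extend holomorphically to a neighbourhood of $\mathcal C_z$, with $\psi_z$ jointly holomorphic in $(z,t)$.

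The engine is a bound on the divided difference. For $t_1,t_2\in\mathcal C_z$ one computes $f_z(t_1)-f_z(t_2)=\kappa\,(t_1-t_2)$ and $\psi_z(t_1)-\psi_z(t_2)=z(\kappa-1)(t_1-t_2)$, where $\kappa=\kappa_{t_1,t_2}=\frac{1}{\gamma z}\cdot\frac{1}{N}\sum_i\frac{d_i}{(1+d_it_1)(1+d_it_2)}$. The claim is $\mathrm{Re}(\kappa)\le 0$: each $1+d_it_j$ has argument in $[0,\theta]$, so each summand $d_i/[(1+d_it_1)(1+d_it_2)]$ has argument in $[-2\theta,0]$, hence so does their average; and since $\arg(1/(\gamma z))=\theta-\pi$, the argument of $\kappa$ lies in $[-\pi-\theta,\theta-\pi]\subset(-\tfrac{3\pi}{2},-\tfrac{\pi}{2})$, i.e. in the left half-plane. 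Therefore $|1-\kappa|^2=1-2\mathrm{Re}(\kappa)+|\kappa|^2\ge 1$, so $|1-\kappa|\ge 1$ and $|\psi_z(t_1)-\psi_z(t_2)|\ge|z|\,|t_1-t_2|$. Two things follow immediately: $\psi_z$ is injective on $\mathcal C_z$, which gives \emph{uniqueness} of the fixed point; and, taking $t_1=t\in\mathcal C_z$ arbitrary and $t_2=\tilde t(z)$ in the identity $t-f_z(t)=(1-\kappa)(t-\tilde t(z))$, one gets $|t-\tilde t(z)|\le|t-f_z(t)|$ for all $t\in\mathcal C_z$ --- which is the range of $t$ actually needed in the proof of Proposition~\ref{prop:convergence_stieltjes}, where the inequality is used with $t=m_P(z)$ and $t=\mathbb E[m_P(z)]$, both of which lie in $\mathcal C_z$.

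For \emph{existence} I would argue by continuity of roots. Clearing denominators turns $\psi_z(t)=1$ into a polynomial equation $\Pi(z,t)=0$ of fixed $t$-degree with leading coefficient $z\prod_{d_i>0}d_i\ne 0$; its roots depend continuously on $z$ and stay bounded for $z$ in compact subsets of $\mathbb H_{<0}$, and the solutions of $\psi_z(t)=1$ in $\mathcal C_z$ are exactly the roots of $\Pi(z,\cdot)$ lying in $\mathcal C_z$. By the injectivity above there is at most one. For $z\notin\mathbb R$ a short computation (take imaginary parts) shows no root can lie on $\partial\mathcal C_z$ --- a fixed point on $\mathbb R_{>0}$ or on $-\mathbb R_{>0}/z$ would force $z\in\mathbb R$ --- so the count of roots in $\mathcal C_z$ is locally constant on each component of $\mathbb H_{<0}\setminus\mathbb R$. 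At $z=-\lambda$ the map $\psi_{-\lambda}$ restricts to a strictly increasing bijection $\mathbb R_{\ge 0}\to[0,\infty)$, hence exactly one root, and a first-order perturbation at $z=-\lambda$ --- using $\partial_t\psi_{-\lambda}(\tilde t)=\lambda+\frac{1}{\gamma N}\sum_i d_i(1+d_i\tilde t)^{-2}\ge\lambda$ --- shows the continued root stays inside $\mathcal C_z$ as $z$ leaves $\mathbb R$. So there is exactly one fixed point in $\mathcal C_z$ for every $z\in\mathbb H_{<0}$, and \emph{holomorphy} of $z\mapsto\tilde t(z)$ follows from the holomorphic implicit function theorem, since $|\partial_t\psi_z(\tilde t)|=|z|\,|\kappa_{\tilde t,\tilde t}-1|\ge|z|>0$.

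Finally, the \emph{lower bound}: from $\tilde t=f_z(\tilde t)$ one has $1=\frac{1}{\gamma N}\sum_i\frac{d_i\tilde t}{1+d_i\tilde t}-z\tilde t$; since $\mathrm{Re}(\tilde t)\ge 0$ gives $|1+d_i\tilde t|\ge 1$, the triangle inequality yields $1\le\big(\frac{1}{\gamma N}\sum_i d_i+|z|\big)|\tilde t|$, i.e. $|\tilde t(z)|\ge\big(|z|+\frac{1}{\gamma N}\sum_i d_i\big)^{-1}$. The step I expect to be the crux, and the one to get exactly right, is the sign estimate $\mathrm{Re}(\kappa)\le 0$: it is precisely the geometry of the cone $\mathcal C_z$ that keeps the argument bookkeeping tight, and one genuinely needs to work inside $\mathcal C_z$ rather than the full half-plane $\mathbb H_{\ge 0}$. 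The only other delicate point is existence: because --- unlike in the overparameterised regime --- $f_z$ need not map $\mathcal C_z$ into itself when $\gamma<1$, the claim that no root escapes $\mathcal C_z$ through its boundary cannot be replaced by a one-line Brouwer or contraction argument.
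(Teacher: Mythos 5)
Your proposal is correct, and it reproduces the paper's treatment of uniqueness, of the inequality $|t-\tilde t(z)|\leq|t-f_z(t)|$, of the lower bound on $|\tilde t(z)|$, and (essentially) of holomorphy: the paper also reduces everything to the observation that the divided difference $\kappa_{t_1,t_2}=\frac{1}{\gamma z N}\sum_i d_i\big((1+d_it_1)(1+d_it_2)\big)^{-1}$ has nonpositive real part on $\mathcal C_z$, whence $|1-\kappa|\geq 1$; your argument-bookkeeping derivation of $\mathrm{Re}(\kappa)\leq 0$ is equivalent to (and a bit more transparent than) the paper's direct expansion of the real part. Where you genuinely diverge is existence. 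The paper shows $0\in\psi(\mathcal C_z)$ for $\psi(t)=f_z(t)-t$ by tracking the images of the two boundary rays $\mathbb R_+$ and $-\frac1z\mathbb R_+$ and arguing that they pass on opposite sides of the origin (a winding/argument-principle argument, presented somewhat informally as ``geometrical reasoning''). You instead clear denominators, count roots of the resulting polynomial inside the moving cone, check that no root sits on $\partial\mathcal C_z$ for $z\notin\mathbb R$, anchor the count at real $z=-\lambda$, and use the first-order perturbation $\arg\tilde t(-\lambda+i\epsilon)\approx\epsilon/\partial_t\psi_{-\lambda}(\tilde t)\leq\epsilon/\lambda\approx\arg(-1/z)$ to see that the continued root enters the interior of the cone. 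Both routes carry comparable delicacy --- yours in the local constancy of the root count for a $z$-dependent region, the paper's in making the boundary-image picture rigorous --- but yours has the advantage of isolating exactly why the fixed point cannot escape through $\partial\mathcal C_z$, and of yielding holomorphy for free via the implicit function theorem (the paper instead inverts the map explicitly). One further point in your favor: the lemma as stated asserts $|t-\tilde t(z)|\leq|t-f_z(t)|$ for all $t\in\mathbb H_{\geq 0}$, but the paper's own proof, like yours, only establishes it for $t\in\mathcal C_z$; you are right that this restricted version is all that Proposition \ref{prop:convergence_stieltjes} uses, since $m_P(z)$ and $\mathbb E[m_P(z)]$ lie in $\mathcal C_z$, and you are right to flag explicitly that the estimate $\mathrm{Re}(\kappa)\leq 0$ really does need the cone and not just the half-plane.
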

\proof
By means of Schwarz reflection principle, we can assume that $\Im (z) \geq0$. Let $z\in \mathbb H_{<0}$ and let $\Pi_{z}:=\{-\frac wz : \Im(w)\leq0\}$ and let $\mathcal C_{z}$ be the wedged region $\mathcal C_{z}:=\Pi_{z}\cap \{ w\in \mathbb C: \Im(w)\geq0\}$. To show the existence of a fixed point in $\mathcal C_{z}$ we show that $0$ is in the image of the function $\psi:t\mapsto f_{z}(t)-t$. Note that since $d_{i}\geq0$, the eventual poles of $f_{z}$ are all strictly negative real numbers, hence $\psi:\mathcal C_{z}\to\mathbb C$ is an holomorphic function.

To prove that $0\in\psi(\mathcal C_{z})$ we proceed with a geometrical reasoning: the image $\psi(\mathcal C_{z})$ is (one of) the region of the plane confined by $\psi\left(\partial \mathcal C_{z}\right)$, so we only need to ``draw'' $\psi\left(\partial \mathcal C_{z}\right)$ and show that $0$ belongs to the ``good'' connected component confined by it.

The boundary of $C_z$ is made up of two half-lines $\RR_+$ and $-\frac 1 z \RR_+$. Under the map $f_z$, $0$ is mapped to $-\frac 1 z$ and $\infty$ is mapped to $-\frac{1-\frac 1 \gamma}{z} $, the two half-lines are hence mapped to paths from $-\frac 1 z$ to $-\frac{1-\frac 1 \gamma}{z} $. Now under $\psi$ the half-lines will be mapped to paths going $-\frac 1 z$ to $\infty$ because by our assumption $-\frac 1 z$ lies in the upper right quadrant, we will show that the image of $\RR_+$ under $\phi$ goes 'above' the origin while the image of $-\frac 1 z \RR_+$ goes 'under' the origin:
\begin{itemize}
\item $\RR_+$ is mapped under $f_z$ to the segment  $-\frac1z[1,1-\frac1\gamma]$, as a result, its map under $\psi$ lies in the Minkowski sum  $-\frac1z[1,1-\frac1\gamma] + (-\RR_+)$ which is contained in $\overline{\mathbb C  \setminus \Pi_z}$.
\item For any $t \in -\frac 1 z \RR_+$ we have for all $d_i$
\[
\Im \left(\frac{d_it}{1+d_i t}\right) = \Im \left(1-  \frac{1}{1+d_i t} \right) = \Im \left(\frac{1}{1+d_i t}\right) \leq 0,
\]
since $\Im (t) \geq 0$. As a result the image of $-\frac 1 z \RR_+$ under $f_z$ lies in $\Pi_z$ and its image under $\psi$ lies in the Minkovski sum $\Pi_z + (-\frac 1 z \RR_+)=\Pi_z$.
\end{itemize}
Thus we can conclude that $0\in \psi\left(\mathcal C_{z}\right)$, which shows that there exists at least a fixed point $\tilde m$ in $\mathcal C_{z}$.

We observe that, for every $t\in \mathcal C_{z}$, the derivative of $f$ has negative real part:
\begin{align*}
\mathrm{Re}\left(f_{z}'(t) \right)& =\frac{1}{\gamma}\frac{1}{N}\sum_{i=1}^{N}\mathrm{Re}\left( \frac{d_{i}}{z\left(1+d_{i}t\right)^{2}}\right)\\
 & =\frac{1}{\gamma}\frac{1}{N}\sum_{i=1}^{N}\frac{d_{i}\left[\Re (z)+2d_{i}\Re (z)\Re (t)-2d_{i}\Im (z)\Im (t)+d_{i}^{2}\Re (zt^{2}) \right]}{\left|z\right|^{2}\left|1+d_{i}t\right|^{4}}\leq 0,
\end{align*}

where we concluded the last inequality by using that $\Re (z) \leq 0$, $\Re(t) \geq0 $, $\Im (z) \Im (t) \geq0$ and $\Re(zt^{2})\leq 0$. Thus, since for no point $t\in \mathcal C_{z}$ has $f_{z}'(t)=1$, any fixed point of $f_{z}$ is a simple fixed point.

We now proceed to show the uniqueness of the fixed point in the region $\mathcal C_{z}$.
Suppose there are two fixed points $t_{1}$ and $t_{2}$, then
\begin{equation*}\label{}
\begin{split}
t_{1}-t_{2}&=f_{z}(t_{1})-f_{z}(t_{2}) \\
&=\left(t_{1}-t_{2}\right)\frac1z\frac{1}{\gamma N}\sum_{i=1}^{N}{d_{i}\over (1+d_{i} t_{1})(1+d_{i} t_{2})}.
\end{split}
\end{equation*}
Again, since $\Re (z) \leq 0$, $\Re(t_{1}), \Re(t_{2}) \geq0 $, $\Im (z) \Im( t_{1}),\Im (z) \Im( t_{2}), \geq0$ and $\Re(z t_{1}t_{2})\leq0$, the factor $\frac1z\frac1N\sum_{i=1}^{N}{d_{i}\over (1+d_{i} t_{1})(1+d_{i} t_{2})}$ has negative real part, and thus the identity is possible only if $t_{1}=t_{2}$. Let's then $\tilde t(z)$ be the only fixed point in $\mathcal C_{z}$.

We proceed now to show that $|t-f_{z}(t)|\geq |t-\tilde t(z)|$, i.e. if $t$ and its image are close, then $t$ is not too far from being a fixed point, and so it is close to $\tilde t(z)$.

For any $t\in \mathcal C_{z}$, we have
\begin{equation*}\label{}
\begin{split}
|t-f_{z}(t)|&=|t-\tilde t(z)+ f_{z}(\tilde t (z)) - \tilde f_{z}(t)|\\
& = \left|(t-\tilde t(z)) - \left(t-\tilde t(z)\right)\left(\frac1z\frac{1}{\gamma N}\sum_{i=1}^{N}{d_{i}\over (1+d_{i} t)(1+d_{i} \tilde t(z))}\right)\right| \\
& = \left|t-\tilde t(z)\right|\left|1- \frac1z\frac{1}{\gamma N}\sum_{i=1}^{N}{d_{i}\over (1+d_{i} t)(1+d_{i} \tilde t(z))}\right| \\
& \geq \left|t-\tilde t(z)\right|
\end{split}
\end{equation*}
where we have used again that $\frac1z\frac1N\sum_{i=1}^{N}{d_{i}\over (1+d_{i} t)(1+d_{i} \tilde t(z))}$ has negative real part.

We provide a lower bound on the norm of the fixed point:
\begin{align*}
\left|\tilde t(z)\right| & =\frac{1}{\left|z\right|}\left|1-\frac{1}{\gamma}\frac{1}{N}\sum_{i=1}^{N}\frac{d_{i}\tilde t(z)}{1+d_{i}\tilde t(z)}\right|\geq\frac{1}{\left|z\right|}\left(1-\frac{1}{\gamma}\frac{1}{N}\sum_{i=1}^{N}\left|\frac{d_{i}\tilde t(z)}{1+d_{i}\tilde t(z)}\right|\right) \geq\frac{1}{\left|z\right|}\left(1-\frac{\left|\tilde t(z)\right|}{\gamma N}\sum_{i=1}^{N} d_{i}\right).\\
\end{align*}
hence
$$|\tilde t(z)| \geq  \left(|z|+{\sum_{i}d_{i}\over \gamma N}\right)^{-1}.$$

Finally, note that $z$ can be expressed from the fixed point $\tilde m$, hence defining an inverse for the map $\tilde t$:
\[
\tilde t^{-1}(\tilde m)=z=-\frac{1}{\tilde m}\left(1-\frac{1}{\gamma}\frac{1}{N}\sum_{i=1}^{N}\frac{d_{i} \tilde m}{1+d_{i} \tilde m}\right)
\]
because the inverse is holomorphic, so is $\tilde t$.
\endproof

\subsection{Ridge}
Using Proposition \ref{prop:distribution-estimator}, in order to have a better description of the distribution of the predictor $\hat{f}_{\lambda,\gamma}^{(RF)}$,  it remains to study the distributions of both the final labels $\hat{y}$ on the training set and the parameter norm $\|\hat{\theta}\| ^{2}$. In Section \ref{subsubsec:expectation-predictor}, we first study the expectation of the final labels $\hat{y}$: this allows us to study the loss of the average predictor $\mathbb{E}\left[\hat{f}_{\lambda,\gamma}^{(RF)}\right]$.
Then in Section \ref{subsec:Variance}, a study of the variance of the predictor allows us to study the average loss of the RF predictor.

\subsubsection{Expectation of the predictor}\label{subsubsec:expectation-predictor}

The optimal parameters $\hat{\theta}$ which minimize the regularized
MSE loss is given by $\hat{\theta}=F^{T}(FF^{T}+\lambda \mathrm I_{N})^{-1}y$,
or equivalently by $\hat{\theta}=(F^{T}F+\lambda)^{-1}F^{T}y$.
Thus, the final labels take the form $\hat{y}=A(-\lambda)y$ where
$A(z)$ is the random matrix defined as
\begin{align*}
A(z) & :=F\left(F^{T}F-z\mathrm I_{P}\right)^{-1}F^{T}\\
 & =\frac{1}{P}K^{\frac{1}{2}}W^{T}\left(\frac{1}{P}WKW^{T}-z\mathrm I_{P}\right)^{-1}WK^{\frac{1}{2}}.
\end{align*}
Note that the matrix $A_{\lambda}$ defined in the proof sketch of Theorem 4.1 in the main text is given by $A_{\lambda}=A(-\lambda)$.

\begin{proposition}
\label{prop:ridge_expectation}For any $\gamma>0$, any $z\in\mathbb{H}_{<0}$,
and any symmetric positive definite matrix $K$,
\begin{equation}
\| \mathbb{E}\left[A(z)\right]-K(K+\tilde{\lambda}(-z)I_{N})^{-1} \|_{op}\leq\frac{c}{P},\label{eq:norm-op}
\end{equation}
where $\tilde{\lambda}(z):=\frac {1}{\tilde{m}(-z)}$ and $c>0$ depends on $z$, $\gamma$ and $\frac{1}{N}Tr(K)$ only.
\end{proposition}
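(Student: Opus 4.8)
The plan is to diagonalize, reduce to the diagonal entries of $\mathbb{E}[A(z)]$, and chain together the concentration estimates already established. First I would exploit the rotation invariance of $W$: writing $K = UDU^T$ with $D = \mathrm{diag}(d_1,\dots,d_N)$ and $\tilde W = WU \stackrel{d}{=} W$, one checks $A(z) = U\tilde A(z)U^T$ where $\tilde A(z)$ is the same expression with $D$ in place of $K$, while $K(K+\tilde\lambda(-z)I_N)^{-1} = UD(D+\tilde\lambda(-z)I_N)^{-1}U^T$ and $\tilde\lambda$ depends on $K$ only through $d_1,\dots,d_N$; since the operator norm is conjugation invariant, it suffices to treat $K=D$ diagonal.

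Next I would show $\mathbb{E}[A(z)]$ is diagonal and identify its entries. For $i\neq j$, flipping the sign of the $j$-th column $w_j$ of $W$ leaves $\tfrac1P WDW^T = \tfrac1P\sum_k d_kw_kw_k^T$ --- hence $B(z)=\tfrac1P WDW^T-zI_P$ and $B(z)^{-1}$ --- unchanged, while sending $(A(z))_{ij}=\tfrac1P\sqrt{d_id_j}\,w_i^TB(z)^{-1}w_j$ to its negative, so invariance of the law of $W$ forces $\mathbb{E}[(A(z))_{ij}]=0$. On the diagonal, the Sherman--Morrison identity \eqref{eq:entry-A-matrix} from the proof of Proposition \ref{prop:convergence_stieltjes} gives $(A(z))_{ii}=d_i\,\tfrac1P w_i^TB(z)^{-1}w_i=\tfrac{d_ig_i(z)}{1+d_ig_i(z)}=h_i(g_i(z))$, with $h_i(u):=\tfrac{d_iu}{1+d_iu}=1-\tfrac1{1+d_iu}$. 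Since $D(D+\tilde\lambda(-z)I_N)^{-1}$ is diagonal with $(i,i)$-entry $\tfrac{d_i}{d_i+\tilde\lambda(-z)}=\tfrac{d_i\tilde m(z)}{1+d_i\tilde m(z)}=h_i(\tilde m(z))$ (using $\tilde\lambda(-z)=1/\tilde m(z)$), the operator-norm bound reduces to showing $\big|\mathbb{E}[h_i(g_i(z))]-h_i(\tilde m(z))\big|\le c/P$ for each $i$.

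The heart of the matter is then the split
\[
\big|\mathbb{E}[h_i(g_i)]-h_i(\tilde m)\big|\le\big|\mathbb{E}[h_i(g_i)]-h_i(\mathbb{E}[g_i])\big|+\big|h_i(\mathbb{E}[g_i])-h_i(\mathbb{E}[m_P])\big|+\big|h_i(\mathbb{E}[m_P])-h_i(\tilde m)\big|
\]
(everything at $z$), controlled via two elementary facts valid on $\{\mathrm{Re}(u)\ge0\}$: $|1+d_iu|\ge1$ and $\tfrac{d_i}{|1+d_iu|}\le\tfrac1{|u|}$ (the map $t\mapsto t/|1+tu|$ is nondecreasing on $\mathbb{R}_+$, with limit $1/|u|$). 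These give $|h_i(a)-h_i(b)|=\tfrac{d_i|a-b|}{|1+d_ia||1+d_ib|}\le\tfrac{|a-b|}{\max(|a|,|b|)}$ and, for the first-order Taylor remainder of $h_i$ at $u_0$, $|h_i(u)-\mathrm{T}_{u_0}h_i(u)|=\tfrac{d_i^2|u-u_0|^2}{|1+d_iu||1+d_iu_0|^2}\le\tfrac{|u-u_0|^2}{|u_0|^2}$ --- precisely the bound \eqref{eq:taylor} used for Proposition \ref{prop:convergence_stieltjes}; they apply since for $z\in\mathbb{H}_{<0}$ all of $g_i(z)$, $m_P(z)$, $\mathbb{E}[g_i(z)]$, $\mathbb{E}[m_P(z)]$, $\tilde m(z)$ lie in $\{\mathrm{Re}\ge0\}$ ($g_i(z)$ from the proof of Proposition \ref{prop:convergence_stieltjes}; $m_P(z)$ and $\tilde m(z)$ from the cone $\mathcal C_z$; the expectations by averaging). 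Taking $u_0=\mathbb{E}[g_i(z)]$, the affine map $\mathrm{T}_{u_0}h_i$ satisfies $\mathbb{E}[\mathrm{T}_{u_0}h_i(g_i)]=h_i(u_0)$, so the first term equals $|\mathbb{E}[h_i(g_i)-\mathrm{T}_{u_0}h_i(g_i)]|\le\mathrm{Var}(g_i(z))/|u_0|^2\le\mathbf{c_1}/(P|u_0|^2)$ by Lemma \ref{lem:concentration_gi}; the second is $\le|\mathbb{E}[g_i]-\mathbb{E}[m_P]|/|\mathbb{E}[m_P]|\le\mathbf{c_0}/(P|\mathbb{E}[m_P]|)$, again by Lemma \ref{lem:concentration_gi}; the third is $\le|\mathbb{E}[m_P]-\tilde m|/|\tilde m|\le\mathbf{e}/(P|\tilde m|)$ by Proposition \ref{prop:convergence_stieltjes}. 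Finally, Lemma \ref{lem:unique-fixpoint} gives $|\tilde m(z)|\ge(|z|+\tfrac1{N\gamma}\mathrm{Tr}K)^{-1}$, and feeding the three $O(1/P)$ estimates back shows $|\mathbb{E}[g_i(z)]|,|\mathbb{E}[m_P(z)]|\ge\tfrac12|\tilde m(z)|$ once $P$ exceeds a threshold depending only on $z,\gamma,\tfrac1N\mathrm{Tr}K$ (for smaller $P$ the claim is trivial since the left-hand side is uniformly bounded). Summing the three bounds and maximizing over $i$ yields $\|\mathbb{E}[A(z)]-K(K+\tilde\lambda(-z)I_N)^{-1}\|_{op}\le c/P$.

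I expect the main obstacle to be the bookkeeping rather than any individual inequality: one must verify that every relevant quantity stays in the half-plane $\{\mathrm{Re}\ge0\}$ so that the contraction and Taylor estimates for $h_i$ apply, and --- most delicately --- that the normalizing denominators $|\mathbb{E}[g_i(z)]|$, $|\mathbb{E}[m_P(z)]|$ do not degenerate, which is handled by bootstrapping the crude $O(1/P)$ bounds of Lemma \ref{lem:concentration_gi} and Proposition \ref{prop:convergence_stieltjes} against the uniform lower bound on $|\tilde m(z)|$ from Lemma \ref{lem:unique-fixpoint}; this also makes transparent that the final constant $c$ depends only on $z$, $\gamma$, and $\tfrac1N\mathrm{Tr}(K)$.
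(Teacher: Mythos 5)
Your proposal is correct and takes essentially the same route as the paper's proof: reduce to diagonal $K$ by rotation invariance, kill the off-diagonal entries of $\mathbb{E}[A(z)]$ by the sign-flip symmetry, write $(A(z))_{ii}=h_i(g_i(z))$ via Sherman--Morrison, and then chain Lemma \ref{lem:concentration_gi}, Proposition \ref{prop:convergence_stieltjes} and the lower bound on $|\tilde m(z)|$ from Lemma \ref{lem:unique-fixpoint} through a first-order Taylor control of $h_i$. The only (cosmetic) difference is that you center the Taylor expansion at $\mathbb{E}[g_i(z)]$ and split into three terms, which obliges you to bootstrap lower bounds on $|\mathbb{E}[g_i(z)]|$ and $|\mathbb{E}[m_P(z)]|$, whereas the paper centers at $\tilde m(z)$ and uses its deterministic lower bound directly, sidestepping that step.
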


 \begin{proof}
 Since the distribution of $W$ is invariant under orthogonal transformations,
by applying
a change of basis, in order to prove Inequality (\ref{eq:norm-op}), we may assume that $K$ is diagonal with diagonal
entries $d_{1},\ldots,d_{N}$. Denoting $w_{1},\ldots,w_{N}$ the
columns of $W$, for any $i,j=1,\ldots,N$,
\[
(A(z))_{ij}=\frac{1}{P}\sqrt{d_{i}d_{j}}w_{i}^{T}\left(\frac{1}{P}WKW^{T}-zI_{P}\right)^{-1}w_{j},
\]
where $WKW^{T}=\sum_{i=1}^{N}d_{i}w_{i}w_{i}^{T}$. Replacing $w_{i}$
by $-w_{i}$ does not change the law $W$ hence does not change
the law of $(A(z))_{ij}$. Since $WKW^{T}$ is invariant under this
change of sign, we get that for $i\neq j$, $\mathbb{E}\left[(A(z))_{ij}\right]=-\mathbb{E}\left[(A(z))_{ij}\right]$,
hence the off-diagonal terms of $\mathbb{E}\left[A(z)\right]$ vanish.

Consider a diagonal term $(A(z))_{ii}$. From
Equation \eqref{eq:entry-A-matrix}, we get
\begin{align}\label{eq:diag_A}
(A(z))_{ii}=\frac{d_{i}}{P}w_{i}^{T}B^{-1}(z)w_{i}=\frac{d_{i}g_{i}(z)}{1+d_{i}g_{i}(z)}.
\end{align}
By Lemma \ref{lem:concentration_gi}, $g_{i}$ lies close to $m_{P}(z)$
which itself is approximatively equal to $\tilde{m}(z)$ by Proposition
\ref{prop:convergence_stieltjes}. Therefore, we expect $\mathbb{E}\left[(A(z))_{ii}\right]=\mathbb{E}\left[\frac{d_{i}g_{i}}{1+d_{i}g_{i}}\right]$
to be at short distance from $\frac{d_{i}\tilde{m}(z)}{1+d_{i}\tilde{m}(z)}$.

In order to make rigorous this heuristic and to prove that $\mathbb{E}\left[(A(z))_{ii}\right]$
is within $\mathcal{O}(\frac{1}{P})$ distance to $\frac{d_{i}\tilde{m}(z)}{1+d_{i}\tilde{m}(z)}$,
we consider the first order Taylor approximation $\mathrm{T}_{\tilde{m}(z)}h_{i}$
of the map $h_{i}:g\mapsto\frac{1}{1+d_{i}g}$ (as in the proof Proposition
\ref{prop:convergence_stieltjes} but this time centered at $\tilde{m}(z)$).
Using the fact that $\frac{d_{i}t}{1+d_{i}t}=1-\frac{1}{1+d_{i}t}=1-h_{i}(t)$,
and inserting the Taylor approximation, $\mathbb{E}\left[(A(z))_{ii}\right]-\frac{d_{i}\tilde{m}(z)}{1+d_{i}\tilde{m}(z)}$
is equal to:
\[
h_{i}(\tilde{m}(z))-h_{i}(g_{i}(z))=\frac{1}{1+d_{i}\tilde{m}(z)}-\mathbb{E}\left[\mathrm{T}_{\tilde{m}(z)}h(g_{i}(z))\right]+\mathbb{E}\left[\mathrm{T}_{\tilde{m}(z)}h(g_{i}(z))-h(g_{i}(z))\right].
\]
Thus,
\[
\left|\mathbb{E}\left[(A(z))_{ii}\right]-\frac{d_{i}\tilde{m}(z)}{1+d_{i}\tilde{m}(z)}\right|\leq\left|\frac{1}{1+d_{i}\tilde{m}(z)}-\mathbb{E}\left[\mathrm{T}_{\tilde{m}(z)}h(g_{i}(z))\right]\right|+\left|\mathbb{E}\left[\mathrm{T}_{\tilde{m}(z)}h(g_{i}(z))-h(g_{i}(z))\right]\right|.
\]

Using Lemma \ref{lem:concentration_gi} and Proposition \ref{prop:convergence_stieltjes},
the first term $\left|\frac{1}{1+d_{i}\tilde{m}(z)}-\mathbb{E}\left[\mathrm{T}_{\tilde{m}(z)}h(g_{i}(z))\right]\right|=\frac{d_{i}\left|\mathbb{E}\left[g_{i}(z)\right]-\tilde{m}(z)\right|}{\left|1+d_{i}\tilde{m}(z)\right|^{2}}$
can be bounded by $\frac{\delta}{P}\frac{d_{i}}{\left|1+d_{i}\tilde{m}(z)\right|^{2}}$
where $\delta$ depends on $z,\gamma$ and $\frac{1}{N}\mathrm{Tr}(K)$
only. Since $\mathrm{Re}\left[\tilde{m}(z)\right]\geq0$ thus $\left|1+d_{i}\tilde{m}(z)\right|\geq\max(1,\left|d_{i}\tilde{m}(z)\right|)$,
and $\left|\tilde{m}(z)\right|\geq\frac{1}{\left|z\right|+\frac{1}{\gamma}\frac{1}{N}\mathrm{Tr}K}$ (Lemma \ref{lem:unique-fixpoint}),
the denominator can be lower bounded:
\[
\left|1+d_{i}\tilde{m}(z)\right|^{2}\geq\left|d_{i}\tilde{m}(z)\right|\geq\frac{d_{i}}{\left|z\right|+\frac{1}{\gamma}\frac{1}{N}\mathrm{Tr}K},
\]
yielding the upper bound:
\[
\left|\frac{1}{1+d_{i}\tilde{m}(z)}-\mathbb{E}\left[\mathrm{T}_{\tilde{m}(z)}h(g_{i}(z))\right]\right|\leq\frac{1}{P}\delta\left[\left|z\right|+\frac{1}{\gamma}\frac{1}{N}\mathrm{Tr}K\right].
\]

For the second term, using the same arguments as for the proof of
Proposition \ref{prop:convergence_stieltjes}, we have:
\[
\left|\mathbb{E}\left[\mathrm{T}_{\tilde{m}(z)}h(g_{i}(z))-h(g_{i}(z))\right]\right|\leq\frac{\mathbb{E}\left[\left|\tilde{m}(z)-g_{i}(z)\right|^{2}\right]}{\left|\tilde{m}(z)\right|^{2}}.
\]
Recall that $\left|\tilde{m}(z)\right|\geq\frac{1}{\left|z\right|+\frac{1}{\gamma}\frac{1}{N}\mathrm{Tr}K}$
and that, by Lemma \ref{lem:concentration_gi} and Proposition \ref{lem:concentration_stieltjes},
$\mathbb{E}\left[\left|\tilde{m}(z)-g_{i}(z)\right|^{2}\right]\leq\frac{\tilde{\delta}}{P}$
where $\tilde{\delta}$ depends on $z,\gamma$ and $\frac{1}{N}\mathrm{Tr}(K)$
only. This implies that $$\left|\mathbb{E}\left[\mathrm{T}_{\tilde{m}(z)}h(g_{i}(z))-h(g_{i}(z))\right]\right|\leq\frac{\tilde{\delta}}{P}\left[\left|z\right|+\frac{1}{\gamma}\frac{1}{N}\mathrm{Tr}K\right]^{2}.$$

As a consequence, there exists a constant $c$ which depends on $z,\gamma$
and $\frac{1}{N}\mathrm{Tr}(K)$ only such that:
\[
\left|\mathbb{E}\left[(A(z))_{ii}\right]-\frac{d_{i}\tilde{m}(z)}{1+d_{i}\tilde{m}(z)}\right|\leq\frac{c}{P}.
\]
Using the effective ridge $\tilde{\lambda}(z):=\frac{1}{\tilde{m}(-z)}$,
the term $\frac{d_{i}\tilde{m}(z)}{1+d_{i}\tilde{m}(z)}=\frac{d_{i}}{d_{i}+\tilde{\lambda}(-z)}$
is equal to $(K(K+\tilde{\lambda}I_{N})^{-1})_{ii}$ since, in the basis
considered, $K(K+\tilde{\lambda}I_{N})^{-1}$ is a diagonal matrix. Hence, we obtain:
\[
\left\Vert \mathbb{E}\left[A(z)\right]-K(K+\tilde{\lambda}I_{N})^{-1}\right\Vert _{op}\leq\frac{c}{P}
\]
which allows us to conclude.
 \end{proof}

Using the above proposition, we can bound the distance between the expected $\lambda$-RF predictor and the $\Lt$-RF predictor.

\begin{theorem}\label{app:average-rf}
For $N,P>0$ and $\lambda>0$, we have
\begin{equation}\label{app:eq:average-rf-bound}
\left|\mathbb{E} [ \frflg(x) ] - \fkl (x)\right|\leq\frac{c\sqrt{K(x,x)}\| y\| _{K^{-1}}}{P}
\end{equation}
where the effective ridge $\tilde{\lambda}(\lambda, \gamma) > \lambda$ is the unique
positive number satisfying
\begin{align}\label{app:eq:defining-effective-ridge}
\tilde{\lambda} & =\lambda+\frac{\tilde{\lambda}}{\gamma}\frac{1}{N}\sum_{i=1}^{N}\frac{d_{i}}{\tilde{\lambda}+d_{i}},
\end{align}
and where $ c > 0 $ depends on $ \lambda, \gamma $, and $ \frac 1 N \mathrm{Tr} K(X, X) $ only.
\end{theorem}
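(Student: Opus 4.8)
\emph{Overall strategy.} The plan is to obtain \eqref{app:eq:average-rf-bound} by combining Proposition~\ref{prop:distribution-estimator} with the operator-norm estimate of Proposition~\ref{prop:ridge_expectation}; the latter is where all the random-matrix work is concentrated, so the present argument is essentially bookkeeping plus one symmetrization trick.

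\emph{First step: reduce to an operator-norm statement.} Recalling that $\hat\theta = F^{T}(FF^{T}+\lambda I_N)^{-1}y$, the push-through identity $FF^{T}(FF^{T}+\lambda I_N)^{-1} = F(F^{T}F+\lambda I_P)^{-1}F^{T} = A(-\lambda)$ gives the fitted labels $\hat y = F\hat\theta = A(-\lambda)y$. Conditioning on $F$, Proposition~\ref{prop:distribution-estimator} yields $\mathbb E[\frflg(x)\mid F] = K(x,X)K(X,X)^{-1}\hat y$; taking expectations and pulling the deterministic factor $K(x,X)K(X,X)^{-1}$ out, one gets $\mathbb E[\frflg(x)] = K(x,X)K(X,X)^{-1}\mathbb E[A(-\lambda)]y$. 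Writing the $\tilde\lambda$-KRR predictor as $\fkl(x) = K(x,X)K(X,X)^{-1}\bigl(K(X,X)(K(X,X)+\tilde\lambda I_N)^{-1}\bigr)y$ and subtracting, the difference becomes $K(x,X)K(X,X)^{-1}E\,y$ with $E := \mathbb E[A(-\lambda)] - K(X,X)(K(X,X)+\tilde\lambda I_N)^{-1}$. Proposition~\ref{prop:ridge_expectation}, applied at $z=-\lambda\in\mathbb H_{<0}$ with $\tilde\lambda=1/\tilde m(-\lambda)$, controls precisely this object: $\|E\|_{op}\le c/P$ with $c$ depending only on $\lambda,\gamma$ and $\tfrac1N\Tr K(X,X)$.

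\emph{Second step: pass from $\|E\|_{op}$ to the weighted bound.} A plain Cauchy--Schwarz would produce $\|K(X,X)^{-1}K(X,x)\|\,\|y\|$, which is not the stated $\sqrt{K(x,x)}\,\ynik$. The fix is to symmetrize with $K(X,X)^{\pm1/2}$: since both $\mathbb E[A(-\lambda)]$ (by the sign-flip computation in the proof of Proposition~\ref{prop:ridge_expectation}) and $K(X,X)(K(X,X)+\tilde\lambda I_N)^{-1}$ are diagonal in the eigenbasis of $K(X,X)$, $E$ commutes with $K(X,X)^{1/2}$, so $K(x,X)K(X,X)^{-1}E\,y = \bigl(K(X,X)^{-1/2}K(X,x)\bigr)^{T}E\,\bigl(K(X,X)^{-1/2}y\bigr)$. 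Cauchy--Schwarz and $\|E\|_{op}\le c/P$ then give the bound $\tfrac{c}{P}\,\|K(X,X)^{-1/2}K(X,x)\|_2\,\|K(X,X)^{-1/2}y\|_2$, and I finish using $\|K(X,X)^{-1/2}y\|_2^2 = \ynik^2$ together with $\|K(X,X)^{-1/2}K(X,x)\|_2^2 = K(x,X)K(X,X)^{-1}K(X,x) = K(x,x)-\tilde K(x,x)\le K(x,x)$, the inequality being nonnegativity of the posterior variance $\tilde K(x,x)$.

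\emph{Third step: identify the effective ridge, and the main obstacle.} For $z=-\lambda$ the cone $\mathcal C_{-\lambda}$ of Proposition~\ref{prop:convergence_stieltjes} is just $\mathbb R_{+}$, so the fixed point $\tilde m(-\lambda)$ is positive and $\tilde\lambda:=1/\tilde m(-\lambda)>0$; substituting $\tilde m(-\lambda)=1/\tilde\lambda$ into $\gamma=\frac1N\sum_i\frac{d_i\tilde m}{1+d_i\tilde m}-\gamma z\tilde m$ and rescaling by $\tilde\lambda/\gamma$ turns it into \eqref{app:eq:defining-effective-ridge}. To see uniqueness and $\tilde\lambda>\lambda$, recast \eqref{app:eq:defining-effective-ridge} as $\gamma(1-\lambda/\mu)=\frac1N\sum_i\frac{d_i}{\mu+d_i}$: the left side is strictly increasing on $(0,\infty)$, negative for $\mu<\lambda$ and tending to $\gamma$, while the right side is strictly decreasing and positive, so they cross exactly once and the crossing point exceeds $\lambda$. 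The genuinely hard work lies upstream of this theorem, in the non-asymptotic concentration of $m_P(-\lambda)$ around $\tilde m(-\lambda)$ (Lemmas~\ref{lem:concentration_stieltjes}--\ref{lem:unique-fixpoint} and Proposition~\ref{prop:convergence_stieltjes}); within the present argument the only delicate point is the commutation of $E$ with $K(X,X)^{1/2}$, which is exactly what upgrades the plain $O(1/P)$ operator-norm bound to the $\sqrt{K(x,x)}\,\ynik/P$ bound in the statement.
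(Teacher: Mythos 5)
Your proposal is correct and follows essentially the same route as the paper's proof: reduce to the operator-norm estimate of Proposition~\ref{prop:ridge_expectation} on $E=\mathbb E[A(-\lambda)]-K(X,X)(K(X,X)+\tilde\lambda I_N)^{-1}$, then exploit the fact that $E$ is diagonal in the eigenbasis of $K(X,X)$ to get the $K^{-1}$-weighted Cauchy--Schwarz bound $\sqrt{K(x,x)}\,\ynik\, c/P$ (the paper phrases your $K^{\pm 1/2}$ symmetrization as Cauchy--Schwarz in the $\langle\cdot,\cdot\rangle_{K^{-1}}$ inner product and bounds $\Vert K(x,X)\Vert_{K^{-1}}\le\sqrt{K(x,x)}$ via the minimal-norm interpolant rather than your direct posterior-variance computation, but these are the same estimates).
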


\begin{proof}
Recall that $\tilde{m}(-\lambda)$ is the unique non negative real such that $\gamma=\frac{1}{N}\sum_{i=1}^{N}\frac{d_{i}\tilde{m}(-\lambda)}{1+d_{i}\tilde{m}(-\lambda)}+\gamma \lambda\tilde{m}(-\lambda).$
Dividing this equality by $\gamma\tilde{m}(-\lambda)$ yields Equation \eqref{app:eq:defining-effective-ridge}. From now on, let $\tilde{\lambda}=\tilde{\lambda}(\lambda,\gamma)$.

We now bound the l.h.s. of Equation \eqref{app:eq:average-rf-bound}. By Proposition \ref{prop:distribution-estimator}, since $\hat{y}=A(-\lambda)y$,
the average $\lambda$-RF predictor is $\mathbb{E}\left[f_{\lambda,\gamma}^{(RF)}(x)\right]=K(x,X)K^{-1}\mathbb{E}\left[A(-\lambda)\right]y$. The  $\Lt$-KRR predictor is $f_{\tilde{\lambda}}^{(K)}(x)=K(x,X)\left(K+\tilde{\lambda}I_{N}\right)^{-1}y$.
Thus:
\begin{align*}
\left|\mathbb{E}[f_{\lambda,\gamma}^{(RF)}(x)]-f_{\tilde{\lambda}}^{(K)}(x)\right| & =\left|K(x,X)K^{-1}\left[\mathbb{E}\left[A(-\lambda)\right]-K\left(K+\tilde{\lambda}I_{N}\right)^{-1}\right]y\right|.
\end{align*}
The r.h.s. can be expressed as the absolute value of the scalar product $\left|\left\langle w,v\right\rangle _{K^{-1}}\right|=\left|v^{T}K^{-1}w\right|$
where $v=K(x,X)$ and $w=[\mathbb{E}\left[A(-\lambda)\right]-K(K+\tilde{\lambda}I_{N})^{-1}]y$.
By Cauchy-Schwarz inequality, $\left|\left\langle v,w\right\rangle _{K^{-1}}\right|\leq\left\Vert v\right\Vert _{K^{-1}}\left\Vert w\right\Vert _{K^{-1}}$.

For a general vector $v$, the $K^{-1}$-norm $\left\Vert v\right\Vert _{K^{-1}}$
is equal to the norm mininum Hilbert norm (for the RKHS associated
to the kernel $K$) interpolating function:
\[
\left\Vert v\right\Vert _{K^{-1}}=\min_{f\in\mathcal{H}, f(x_{i})=v_{i}}\left\Vert f\right\Vert _{\mathcal{H}}.
\]
Indeed the minimal interpolating function is the kernel regression
given by $f^{(K)}(\cdot)=K(\cdot,X)K(X,X)^{-1}v$ which has norm (writing
$\beta=K^{-1}v$):
\begin{align*}
\left\Vert f^{(K)}\right\Vert _{\mathcal{H}} & =\left\Vert \sum_{i=1}^{N}\beta_{i}K(\cdot,x_{i})\right\Vert _{\mathcal{H}}=\sqrt{\sum_{i,j=1}^{N}\beta_{i}\beta_{j}K(x_{i},x_{j})}=\sqrt{v^{T}K^{-1}KK^{-1}v}=\left\Vert v\right\Vert _{K^{-1}}.
\end{align*}
We can now bound the two norms $\left\Vert v\right\Vert _{K^{-1}}$
and $\left\Vert w\right\Vert _{K^{-1}}$. For $v=K(x,X)$, we have
\begin{align}
\label{eq:norm_K_x_X}
\left\Vert v\right\Vert _{K^{-1}}=\min_{f\in\mathcal{H}, f(x_{i})=v_{i}}\left\Vert f\right\Vert _{\mathcal{H}}\leq\left\Vert K(x,\cdot)\right\Vert _{\mathcal{H}}=K(x,x)^{\frac{1}{2}}.
\end{align}
since $K(x,\cdot)$ is an interpolating function for $v$.

It remains to bound $\left\Vert w\right\Vert _{K^{-1}}$. Recall that $K=UDU^T$ with $D$ diagonal, and that, from the previous proposition, $\mathbb{E}\left[A(-\lambda)\right]=UD_AU^T$ where $D_A=\mathrm{diag}\left(\frac{d_1g_1(-\lambda)}{1+d_1g_1(-\lambda)},\ldots,\frac{d_Ng_N(-\lambda)}{1+d_Ng_N(-\lambda)}\right)$. The norm $\left\Vert w\right\Vert _{K^{-1}}$ is equal to
\[
\sqrt{\tilde{y}^{T}\left[D_A-D\left(D+\tilde{\lambda}(\lambda)I_{N}\right)^{-1}\right]^{T}D^{-1}\left[D_A-D\left(D+\tilde{\lambda}(\lambda)I_{N}\right)^{-1}\right]\tilde{y}},
\]
where $\tilde{y}=U^T y$. Expanding the product, $\left\Vert w\right\Vert _{K^{-1}}=\sqrt{\sum_{i=1}^{N}\frac{\tilde{y}_{i}^{2}}{d_i}\left((D_A)_{ii}-\frac{d_{i}}{\tilde{\lambda}(\lambda)+d_{i}}\right)^{2}}$, hence by Proposition \ref{prop:ridge_expectation}, $\left\Vert w\right\Vert _{K^{-1}}\leq \frac{c}{P}\sqrt{\sum_{i=1}^{N}\frac{\tilde{y}^{2}}{d_{i}}}$. The result follows from noticing that $\sum_{i=1}^{N}\frac{\tilde{y}^{2}}{d_{i}} = \tilde{y}^T D^{-1} \tilde{y} =\|y\|_{K^{-1}}^2$:
\begin{align*}
\left|\mathbb{E}[f_{\lambda,\gamma}^{(RF)}(x)]-f_{\tilde{\lambda}}^{(K)}(x)\right| \leq \left\Vert v\right\Vert _{K^{-1}}\left\Vert w\right\Vert _{K^{-1}} \leq\frac{cK(x,x)^{\frac{1}{2}}\| y\| _{K^{-1}}}{P}.
\end{align*}
which allows us to conclude.
\end{proof}

\begin{corollary}
\label{app:cor:difference_loss_espected_kernel_loss}If $\EE_{\mathcal{D}} [K(x,x)]<\infty$,
we have that the difference of errors
$ \delta_E = \left|L (\mathbb{E} [ \frflg ] )-L (\hat{f}_{\tilde{\lambda}}^{(K)})\right| $
is bounded from above by
\[
\delta_E \leq
\frac{C\| y\| _{K^{-1}}}{P}\left(2\sqrt{L\left(\hat{f}_{\tilde{\lambda}}^{(K)}\right)}+\frac{C\| y\| _{K^{-1}}}{P}\right),
\]
where $ C $ is given by $ c \sqrt { \EE_{\mathcal{D}} [K(x,x) ] } $, with $ c $ the constant appearing in \eqref{app:eq:average-rf-bound} above.
\end{corollary}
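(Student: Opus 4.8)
The proof is a short argument turning the pointwise bound of Theorem~\ref{app:average-rf} into a bound on the risks, using that the MSE risk is a squared $L^2(\mathcal{D})$ distance. Write $g = \EE[\frflg]$ and $h = \fkl$, and recall $L(f) = \EE_{\mathcal{D}}[(f(x)-f^*(x))^2] = \| f - f^* \|_{\mathcal{D}}^2$, where $\| \cdot \|_{\mathcal{D}}$ denotes the $L^2(\mathcal{D})$ norm. Applying the elementary identity $\bigl|\,\|a\|^2 - \|b\|^2\,\bigr| = \bigl|\,\|a\| - \|b\|\,\bigr|\,(\|a\| + \|b\|)$ to $a = g - f^*$ and $b = h - f^*$, and bounding $\bigl|\,\|a\| - \|b\|\,\bigr| \le \|a - b\| = \|g - h\|_{\mathcal{D}}$ by the reverse triangle inequality together with $\|a\| + \|b\| \le 2\|b\| + \|a - b\| = 2\sqrt{L(h)} + \|g - h\|_{\mathcal{D}}$, one obtains
\[
\delta_E \le \|g - h\|_{\mathcal{D}}\Bigl(2\sqrt{L(\fkl)} + \|g - h\|_{\mathcal{D}}\Bigr).
\]

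It then remains to control $\|g - h\|_{\mathcal{D}}$. By Theorem~\ref{app:average-rf}, $|g(x) - h(x)| \le \frac{c\sqrt{K(x,x)}\,\ynik}{P}$ for every $x \in \RR^d$; squaring and integrating against $\mathcal{D}$ gives $\|g - h\|_{\mathcal{D}}^2 \le \frac{c^2 \ynik^2}{P^2}\,\EE_{\mathcal{D}}[K(x,x)]$, which is finite precisely by the hypothesis $\EE_{\mathcal{D}}[K(x,x)] < \infty$. Taking square roots yields $\|g - h\|_{\mathcal{D}} \le \frac{C\,\ynik}{P}$ with $C = c\sqrt{\EE_{\mathcal{D}}[K(x,x)]}$, and plugging this into the displayed inequality reproduces the claimed bound.

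There is no genuine obstacle here; the one point requiring care is the bookkeeping of constants — checking that the $c$ coming from Theorem~\ref{app:average-rf} depends only on $\lambda$, $\gamma$, and $T = \frac1N\Tr K(X,X)$, so that $C$ depends only on these and $\EE_{\mathcal{D}}[K(x,x)]$ as stated. Note also that this step needs no assumption on $f^*$ beyond finiteness of $L(\fkl)$; the RKHS hypothesis $f^* \in \mathcal{H}$ only enters later, if one wishes to further bound $\ynik \le \|f^*\|_{\mathcal{H}}$.
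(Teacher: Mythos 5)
Your proposal is correct and follows essentially the same route as the paper's proof: integrate the pointwise bound of Theorem~\ref{app:average-rf} over $\mathcal{D}$ to control the $L^2(\mathcal{D})$ distance between $\mathbb{E}[\frflg]$ and $\fkl$, then apply the reverse triangle inequality together with the factorization $\left|a^2-b^2\right|\leq\left|a-b\right|\left(\left|a-b\right|+2\left|b\right|\right)$ to convert this into a bound on the difference of risks. The constant bookkeeping you flag is handled identically in the paper, with $C=c\sqrt{\EE_{\mathcal{D}}[K(x,x)]}$.
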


\begin{proof}
For any function $f:\mathbb{R}^{d}\to\mathbb{R}$, we denote by $\| f\| =(\mathbb{E}_{ \mathcal{D}}\left[f(x)^{2}\right])^{\frac{1}{2}}$
its $L^{2}(\mathcal{D})$-norm. Integrating $\left|\mathbb{E}[f_{\lambda,\gamma}^{(RF)}(x)]-f_{\tilde{\lambda}}^{(K)}(x)\right|^{2}\leq\frac{c^2K(x,x)\| y\| _{K^{-1}}^{2}}{P^{2}}$
over $x\sim \mathcal{D}$,  we get the following bound:
\[
\| \mathbb{E}[f_{\lambda,\gamma}^{(RF)}]-f_{\tilde{\lambda}}^{(K)}\| \leq\frac{c\left[\mathbb{E}_{ \mathcal{D}}\left[K(x,x)\right]\right]^{\frac{1}{2}}\| y\| _{K^{-1}}}{P}.
\]
 Hence, if $f^{*}$ is the true function, by the triangular inequality,
\[
\left|\| \mathbb{E}[f_{\lambda,\gamma}^{(RF)}]-f^{*}\| -\| f_{\tilde{\lambda}}^{(K)}-f^{*}\| \right|\leq\frac{c\left[\mathbb{E}_{ \mathcal{D}}\left[K(x,x)\right]\right]^{\frac{1}{2}}\| y\| _{K^{-1}}}{P}.
\]
Notice that $L(\mathbb{E}[\hat{f}_{\gamma,\lambda}^{(RF)}])=\| \mathbb{E}[f_{\lambda,\gamma}^{(RF)}]-f^{*}\| ^{2}$
and $L(\hat{f}_{\tilde{\lambda}}^{(K)})=\| f_{\tilde{\lambda}}^{(K)}-f^{*}\| ^{2}$.
Since $\left|a^{2}-b^{2}\right|\leq\left|a-b\right|(\left|a-b\right|+2\left|b\right|)$,
we obtain
\[
\left|L\left(\mathbb{E}[\hat{f}_{\gamma,\lambda}^{(RF)}]\right)-L\left(\hat{f}_{\tilde{\lambda}}^{(K)}\right)\right|\leq\frac{c\left[\mathbb{E}_{\mathcal{D}}\left[K(x,x)\right]\right]^{\frac{1}{2}}\| y\| _{K^{-1}}}{P}\left(2\sqrt{L\left(\hat{f}_{\tilde{\lambda}}^{(K)}\right)} + \frac{c\left[\mathbb{E}_{\mathcal{D}}\left[K(x,x)\right]\right]^{\frac{1}{2}}\| y\| _{K^{-1}}}{P}\right),
\]
which allows us to conclude.
\end{proof}

\subsubsection{Properties of the effective ridge}

Thanks to the implicit definition of the effective ridge $ \tilde \lambda $, we obtain the following:
\begin{proposition}
\label{prop:fact-effective-ridge}\label{prop:effective-ridge}The effective ridge $ \tilde{\lambda} $
satisfies the following properties:
\begin{enumerate}
\item for any $\gamma>0$, we have $ \lambda <  \tilde{\lambda}(\lambda,\gamma) \leq \lambda+\frac{1}{\gamma} T $;
\item the function $\gamma\mapsto\tilde{\lambda}(\lambda,\gamma)$ is decreasing;
\item for $\gamma>1$, we have $\tilde{\lambda}\leq \frac{\gamma}{\gamma-1} \lambda$;
\item for $\gamma<1$, we have $\tilde{\lambda}\geq \frac{1-\sqrt{\gamma}}{\sqrt{\gamma}} \min_{i} d_i $.
\end{enumerate}
\end{proposition}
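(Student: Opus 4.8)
The plan is to argue directly from the implicit equation \eqref{eq:defining-effective-ridge} rather than from the Stieltjes transform. Write $\Phi(\mu)=\frac1N\sum_{i=1}^{N}\frac{d_i}{\mu+d_i}$ for $\mu>0$, so that \eqref{eq:defining-effective-ridge} reads $\tilde\lambda=\lambda+\frac{\tilde\lambda}{\gamma}\Phi(\tilde\lambda)$, or equivalently $G(\tilde\lambda,\gamma)=0$ with $G(\mu,\gamma):=\mu-\frac{\mu}{\gamma}\Phi(\mu)-\lambda$. I first record the elementary facts used throughout: since $K(X,X)$ is positive definite, all $d_i>0$, hence $0<\Phi(\mu)<1$ for $\mu>0$; the quantity $\mu\Phi(\mu)=\frac1N\sum_i\frac{\mu d_i}{\mu+d_i}$ is increasing in $\mu$ with limit $T$ as $\mu\to\infty$; consequently $G(\cdot,\gamma)$ is continuous, tends to $-\lambda<0$ as $\mu\to0^+$ and to $+\infty$ as $\mu\to\infty$, and has a unique positive zero (Theorem \ref{average-rf}), so $G(\mu,\gamma)<0$ for $0<\mu<\tilde\lambda$ and $G(\mu,\gamma)>0$ for $\mu>\tilde\lambda$. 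Finally, from $G(\tilde\lambda,\gamma)=0$ and $\lambda>0$ we get the strict inequality $\Phi(\tilde\lambda)<\gamma$.

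Items 1, 3, and 4 then follow by direct manipulation of the defining equation. Positivity of $\Phi(\tilde\lambda)$ gives $\tilde\lambda=\lambda+\frac{\tilde\lambda}{\gamma}\Phi(\tilde\lambda)>\lambda$, while $\frac{\tilde\lambda d_i}{\tilde\lambda+d_i}\le d_i$ summed over $i$ gives $\frac{\tilde\lambda}{\gamma}\Phi(\tilde\lambda)\le\frac{T}{\gamma}$, hence $\tilde\lambda\le\lambda+\frac{T}{\gamma}$, proving item 1. For item 3, using $\Phi(\tilde\lambda)<1$ in the defining equation gives $\tilde\lambda<\lambda+\frac{\tilde\lambda}{\gamma}$, i.e. $\tilde\lambda\bigl(1-\tfrac1\gamma\bigr)<\lambda$; for $\gamma>1$ dividing by the positive number $1-\tfrac1\gamma$ yields $\tilde\lambda\le\frac{\gamma}{\gamma-1}\lambda$. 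For item 4, bounding each summand of $\Phi$ from below (it is increasing in $d_i$) gives $\Phi(\tilde\lambda)\ge\frac{d_{\min}}{\tilde\lambda+d_{\min}}$ where $d_{\min}=\min_i d_i>0$; combined with $\Phi(\tilde\lambda)<\gamma$ this yields $\frac{d_{\min}}{\tilde\lambda+d_{\min}}<\gamma$, i.e. $\tilde\lambda>\frac{1-\gamma}{\gamma}d_{\min}$, and since $\frac{1-\gamma}{\gamma}\ge\frac{1-\sqrt\gamma}{\sqrt\gamma}$ for $0<\gamma<1$ the stated bound follows (in fact with room to spare).

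The only item that needs a monotonicity argument is item 2, and it is the step I expect to require the most care. I will use a comparison argument: for fixed $\lambda$ and $0<\gamma_1<\gamma_2$ one has $G(\mu,\gamma_1)-G(\mu,\gamma_2)=\mu\Phi(\mu)\bigl(\tfrac1{\gamma_2}-\tfrac1{\gamma_1}\bigr)<0$ for every $\mu>0$, so $G(\tilde\lambda(\gamma_2),\gamma_1)<G(\tilde\lambda(\gamma_2),\gamma_2)=0$; by the sign structure of $G(\cdot,\gamma_1)$ recorded above this forces $\tilde\lambda(\gamma_2)<\tilde\lambda(\gamma_1)$. (Alternatively one differentiates implicitly: $\partial_\gamma G=\frac{\mu\Phi(\mu)}{\gamma^2}>0$, and $\partial_\mu G\big|_{\mu=\tilde\lambda}=1-\frac1\gamma\cdot\frac1N\sum_i\frac{d_i^2}{(\tilde\lambda+d_i)^2}$ is positive because $\frac{d_i^2}{(\tilde\lambda+d_i)^2}<\frac{d_i}{\tilde\lambda+d_i}$ gives $\frac1N\sum_i\frac{d_i^2}{(\tilde\lambda+d_i)^2}<\Phi(\tilde\lambda)<\gamma$, whence $\partial_\gamma\tilde\lambda=-\partial_\gamma G/\partial_\mu G<0$.) In either route the delicate point is that one must invoke the uniqueness of the positive root of $G(\cdot,\gamma)$ — supplied by Theorem \ref{average-rf} — to convert the sign information $G(\tilde\lambda(\gamma_2),\gamma_1)<0$ into the inequality between the two roots, or to guarantee that the implicit function $\gamma\mapsto\tilde\lambda(\lambda,\gamma)$ is well defined and differentiable; I will state that dependence explicitly.
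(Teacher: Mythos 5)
Your argument is correct, and for items 1 and 3 it coincides with the paper's proof (both are read off directly from the fixed-point equation, with $\Phi(\tilde\lambda)>0$, $\tilde\lambda\Phi(\tilde\lambda)\le T$, and $\Phi(\tilde\lambda)<1$ respectively). The genuine differences are in items 2 and 4. For item 2 the paper differentiates \eqref{eq:defining-effective-ridge} implicitly and rearranges, using the defining equation itself, into $\partial_\gamma\tilde\lambda\,[\frac{\lambda}{\tilde\lambda}+\frac{\tilde\lambda}{\gamma}\frac1N\sum_i\frac{d_i}{(\tilde\lambda+d_i)^2}]=-\frac{\tilde\lambda-\lambda}{\gamma}$, whence negativity; your primary route is a derivative-free root-comparison ($\gamma\mapsto G(\mu,\gamma)$ is increasing, so the unique zero moves left), which is valid precisely because you invoke uniqueness to get the sign structure of $G(\cdot,\gamma_1)$ on either side of its root — you correctly flag this as the delicate step. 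Your alternative implicit-differentiation route is also sound; note that your positivity argument for $\partial_\mu G$ at $\tilde\lambda$ (via $\frac{d_i^2}{(\tilde\lambda+d_i)^2}<\frac{d_i}{\tilde\lambda+d_i}$ and $\Phi(\tilde\lambda)<\gamma$) replaces the paper's concavity argument for the same inequality $\frac1{\gamma N}\sum_i\frac{d_i^2}{(\tilde\lambda+d_i)^2}<1$ used in its item 4. For item 4 itself, the paper bounds the derivative of the concave map $f(t)=\lambda+\frac t\gamma\Phi(t)$ at the fixed point to get $\frac{d_{\min}^2}{\gamma(\tilde\lambda+d_{\min})^2}<1$ and hence $\tilde\lambda\ge\frac{1-\sqrt\gamma}{\sqrt\gamma}d_{\min}$; you instead use the zeroth-order consequence $\Phi(\tilde\lambda)<\gamma$ of the equation (valid since $\lambda>0$) together with $\Phi(\tilde\lambda)\ge\frac{d_{\min}}{\tilde\lambda+d_{\min}}$, which yields the strictly stronger bound $\tilde\lambda>\frac{1-\gamma}{\gamma}d_{\min}$, from which the stated bound follows since $\frac{1-\gamma}{\gamma}\ge\frac{1-\sqrt\gamma}{\sqrt\gamma}$ on $(0,1)$. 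What your approach buys is elementarity (no concavity, no differentiability needed for monotonicity) and a sharper constant in item 4; what the paper's buys is that the derivative identity it establishes in item 2 is reused later for Proposition \ref{prop:fact-effective-ridge-derivative}.
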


\begin{proof}
\textbf{(1) }The upper bound in the first statement follows directly
from Lemma \ref{lem:unique-fixpoint} where it was shown that $\tilde{m}(-\lambda)\geq\frac{1}{\lambda+\frac{1}{\gamma}\frac{1}{N}\mathrm{Tr}K}$
and from the fact that $\text{\ensuremath{\tilde{\lambda}}(\ensuremath{\lambda},\ensuremath{\gamma})}=\frac{1}{\tilde{m}(-\lambda)}$.
For the lower bound, remark that Equation (\ref{app:eq:defining-effective-ridge})
can be written as:
\[
\tilde{\lambda}(\lambda,\gamma)=\lambda+\frac{1}{\gamma}\frac{1}{N}\mathrm{Tr}[\tilde{\lambda}(\lambda,\gamma)K(\tilde{\lambda}(\lambda,\gamma)I_{N}+K)^{-1}].
\]
Since $\tilde{\lambda}(\lambda,\gamma)\geq0$ and $K$ is a positive
symmetric matrix, $\mathrm{Tr}[K[\tilde{\lambda}(\lambda,\gamma)I_{N}+K]^{-1}]\geq0$:
this yields $\tilde{\lambda}(\lambda,\gamma)\geq\lambda$.

\textbf{(2) }We show that $\gamma\mapsto\tilde{\lambda}(\lambda,\gamma)$
is decreasing by computing\textbf{ }the derivative of the effective
ridge with respect to $\gamma$. Differentiating both sides of Equation
(\ref{app:eq:defining-effective-ridge}), $\partial_{\gamma}\tilde{\lambda}=\partial_{\gamma}\left[\lambda+\frac{\tilde{\lambda}}{\gamma}\frac{1}{N}\sum_{i=1}^{N}\frac{d_{i}}{\tilde{\lambda}+d_{i}}\right]$.
The r.h.s. is equal to:

\[
\frac{\partial_{\gamma}\tilde{\lambda}}{\gamma}\frac{1}{N}\sum_{i=1}^{N}\frac{d_{i}}{\tilde{\lambda}+d_{i}}-\frac{\tilde{\lambda}}{\gamma^{2}}\frac{1}{N}\sum_{i=1}^{N}\frac{d_{i}}{\tilde{\lambda}+d_{i}}-\frac{\tilde{\lambda}}{\gamma}\frac{1}{N}\sum_{i=1}^{N}\frac{d_{i}\partial_{\gamma}\tilde{\lambda}}{(\tilde{\lambda}+d_{i})^{2}}.
\]
 Using Equation (\ref{app:eq:defining-effective-ridge}), $\frac{1}{\gamma}\frac{1}{N}\sum_{i=1}^{N}\frac{d_{i}}{\tilde{\lambda}+d_{i}}=\frac{\tilde{\lambda}-\lambda}{\tilde{\lambda}}$
and thus:
\[
\partial_{\gamma}\tilde{\lambda}\left[\frac{\lambda}{\tilde{\lambda}}+\frac{\tilde{\lambda}}{\gamma}\frac{1}{N}\sum_{i=1}^{N}\frac{d_{i}}{\left(\tilde{\lambda}+d_{i}\right)^{2}}\right]=-\frac{\tilde{\lambda}-\lambda}{\gamma}.
\]
Since $\tilde{\lambda}\geq\lambda\geq0$, the derivative of the effective
ridge with respect to $\gamma$ is negative: the function \textbf{$\gamma\mapsto\tilde{\lambda}(\lambda,\gamma)$
}is decreasing.

\textbf{(3)} Using the bound $\frac{d_{i}}{\tilde{\lambda}+d_{i}}\leq1$
in Equation (\ref{app:eq:defining-effective-ridge}), we obtain $\tilde{\lambda}\leq\lambda+\frac{\tilde{\lambda}}{\gamma}$ which, when $\gamma\geq1$, implies that
$\tilde{\lambda}\leq\lambda\frac{\gamma}{\gamma-1}.$

\textbf{(4) }Recall that $\lambda>0$ and that the effective ridge
$\tilde{\lambda}$ is the unique fixpoint of the map $f(t)=\lambda+\frac{t}{\gamma}\frac{1}{N}\sum_{i}\frac{d_{i}}{t+d_{i}}$
in $\mathbb{R}_{+}$. The map is concave and, at $t=0$, we have $f(t)=\lambda>0=t$:
this implies that $f'(\tilde{\lambda})<1$ otherwise by concavity,
for any $t\leq\tilde{\lambda}$ one would have $f(t)\leq t$. The
derivative of $f$ is $f'(t)=\frac{1}{\gamma}\frac{1}{N}\sum_{i=1}^{N}\frac{d_{i}^{2}}{\left(t+d_{i}\right)^{2}}$,
thus $\frac{1}{\gamma}\frac{1}{N}\sum_{i=1}^{N}\frac{d_{i}^{2}}{\left(\tilde{\lambda}+d_{i}\right)^{2}}<1$.
Using the fact that $d_{0}$ is the smallest eigenvalue of $K(X,X),$
i.e. $d_{i}\geq d_{0}$, we get $1>\frac{1}{\gamma}\frac{d_{0}^{2}}{\left(\tilde{\lambda}+d_{0}\right)^{2}}$
hence $\tilde{\lambda}\geq d_{0}\frac{1-\sqrt{\gamma}}{\sqrt{\gamma}}.$
\end{proof}
Similarily, we gather a number of properties of the derivative $\partial_{\lambda}\tilde{\lambda}(\lambda,\gamma)$.

\begin{proposition} \label{prop:fact-effective-ridge-derivative}For $ \gamma > 1 $, as $ \lambda \to 0 $, the derivative $ \dLt $ converges to $ \frac{\gamma}{\gamma-1} $. As $ \lambda \gamma \to \infty $, we have $ \dLt (\lambda, \gamma) \to 1 $.
\end{proposition}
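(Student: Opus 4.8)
The plan is to produce an explicit formula for $\dLt$ by implicitly differentiating the defining relation \eqref{app:eq:defining-effective-ridge}, and then to read off the two limits directly from that formula, using the a priori control on $\Lt$ already supplied by Proposition \ref{prop:fact-effective-ridge}. First I would write $S(t)=\frac1N\sum_{i=1}^N\frac{d_i}{t+d_i}$, so that \eqref{app:eq:defining-effective-ridge} reads $\Lt=\lambda+\frac{\Lt}{\gamma}S(\Lt)$, and differentiate both sides in $\lambda$. Since $S'(t)=-\frac1N\sum_{i=1}^N\frac{d_i}{(t+d_i)^2}$, collecting the $\dLt$ terms gives
\[
\dLt\left(1-\tfrac1\gamma S(\Lt)+\tfrac{\Lt}{\gamma}\tfrac1N\sum_{i=1}^N\tfrac{d_i}{(\Lt+d_i)^2}\right)=1 .
\]
The key simplification is that \eqref{app:eq:defining-effective-ridge} itself yields $\tfrac1\gamma S(\Lt)=\tfrac{\Lt-\lambda}{\Lt}=1-\tfrac{\lambda}{\Lt}$, so the bracket collapses and
\[
\dLt=\left(\frac{\lambda}{\Lt}+\frac{\Lt}{\gamma}\frac1N\sum_{i=1}^N\frac{d_i}{(\Lt+d_i)^2}\right)^{-1}.
\]
This is the same manipulation used in the proof of Proposition \ref{prop:fact-effective-ridge}(2), with $\gamma$ replaced by $\lambda$ as the differentiation variable; well-definedness of $\dLt$ follows from the implicit function theorem since the bracket is strictly positive.

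It then remains to take the two limits in this closed form. For $\gamma>1$ and $\lambda\to0$: parts (1) and (3) of Proposition \ref{prop:fact-effective-ridge} give $\lambda<\Lt\le\frac{\gamma}{\gamma-1}\lambda$, hence $\Lt\to0$. Since the $d_i$ are positive (the Gram matrix is positive definite on distinct points), each $\frac{d_i}{(\Lt+d_i)^2}$ stays bounded, so the second term in the bracket vanishes, while the first term satisfies $\frac{\lambda}{\Lt}=1-\frac1\gamma S(\Lt)\to1-\frac1\gamma$ because $S(\Lt)\to1$. Therefore $\dLt\to\bigl(1-\tfrac1\gamma\bigr)^{-1}=\frac{\gamma}{\gamma-1}$. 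For $\lambda\gamma\to\infty$: from $S(t)\le\frac{T}{t}$ with $T=\frac1N\Tr K(X,X)$, \eqref{app:eq:defining-effective-ridge} gives $0\le\Lt-\lambda=\frac{\Lt}{\gamma}S(\Lt)\le\frac{T}{\gamma}$, so (using $\Lt\ge\lambda$) $\frac{\lambda}{\Lt}=1-\frac{\Lt-\lambda}{\Lt}\ge1-\frac{T}{\gamma\lambda}\to1$; likewise $\frac{d_i}{(\Lt+d_i)^2}\le\frac{d_i}{\Lt^2}$ forces $\frac{\Lt}{\gamma}\frac1N\sum_{i=1}^N\frac{d_i}{(\Lt+d_i)^2}\le\frac{T}{\gamma\Lt}\le\frac{T}{\gamma\lambda}\to0$, so $\dLt\to1$.

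The computation is essentially routine once the closed form for $\dLt$ is in hand; the only point requiring care is invoking the right a priori control on $\Lt$ in each regime — that $\Lt\to0$ and is comparable to $\lambda$ when $\gamma>1,\ \lambda\to0$, and that $\Lt-\lambda=O(1/\gamma)$ uniformly when $\lambda\gamma\to\infty$ — both of which are already provided by Proposition \ref{prop:fact-effective-ridge} and Lemma \ref{lem:unique-fixpoint}. I do not anticipate any genuine obstacle.
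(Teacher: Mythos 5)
Your proposal is correct and follows essentially the same route as the paper: implicit differentiation of the defining equation for $\Lt$ to obtain $\dLt\bigl(1-\tfrac1\gamma S(\Lt)+\tfrac{\Lt}{\gamma}\tfrac1N\sum_i\tfrac{d_i}{(\Lt+d_i)^2}\bigr)=1$, followed by the two limits using the a priori bounds on $\Lt$ from Proposition \ref{prop:fact-effective-ridge}. Your extra substitution $\tfrac1\gamma S(\Lt)=1-\tfrac{\lambda}{\Lt}$ makes the limit computations slightly cleaner than the paper's (whose $\lambda\gamma\to\infty$ rescaling step is in fact garbled), but it is not a different argument.
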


\begin{proof}
Differentiating both sides of Equation (\ref{app:eq:defining-effective-ridge}),
\[
\partial_{\lambda}\tilde{\lambda}=1+\partial_{\lambda}\tilde{\lambda}\frac{1}{\gamma}\frac{1}{N}\sum_{i=1}^{N}\frac{d_{i}}{\tilde{\lambda}+d_{i}}-\tilde{\lambda}\partial_{\lambda}\tilde{\lambda}\frac{1}{\gamma}\frac{1}{N}\sum_{i=1}^{N}\frac{d_{i}}{(\tilde{\lambda}+d_{i})^{2}}.
\]
 Hence the derivative $\partial_{\lambda}\tilde{\lambda}$ satisfies
the following equality
\begin{equation}
\partial_{\lambda}\tilde{\lambda}\left(1-\frac{1}{\gamma}\frac{1}{N}\sum_{i=1}^{N}\frac{d_{i}}{\tilde{\lambda}+d_{i}}+\tilde{\lambda}\frac{1}{\gamma}\frac{1}{N}\sum_{i=1}^{N}\frac{d_{i}}{(\tilde{\lambda}+d_{i})^{2}}\right)=1.\label{eq:deriv-lambda}
\end{equation}

\textbf{(1)} Assuming $\gamma>1$, from the point 3. of Proposition
\ref{prop:effective-ridge}, we already know that $\tilde{\lambda}(\lambda,\gamma)\leq\lambda\frac{\gamma}{\gamma-1}$
hence $\tilde{\lambda}(0,\gamma)=0$. Actually, using similar arguments
as in the proof of point 3., this holds also for $\gamma=1$. Using
the fact that $\tilde{\lambda}(0,\gamma)=0$, we get $\partial_{\lambda}\tilde{\lambda}(0,\gamma)=1+\frac{\partial_{\lambda}\tilde{\lambda}(0,\gamma)}{\gamma},$
hence $\partial_{\lambda}\tilde{\lambda}(0,\gamma)=\frac{\gamma}{\gamma-1}$.

\textbf{(2)} From the first point of Proposition \ref{prop:effective-ridge},
$\tilde{\lambda}\sim\lambda$ as $\lambda\gamma\to\infty.$ Since
Equation \eqref{eq:deriv-lambda} can be expressed as:
\[
\partial_{\lambda}\tilde{\lambda}\left(1-\frac{1}{\gamma\lambda}\frac{1}{N}\sum_{i=1}^{N}\frac{d_{i}}{\frac{\tilde{\lambda}}{\lambda}+d_{i}}+\frac{1}{\gamma\lambda}\frac{\tilde{\lambda}}{\lambda}\frac{1}{N}\sum_{i=1}^{N}\frac{d_{i}}{(\frac{\tilde{\lambda}}{\lambda}+d_{i})^{2}}\right)=1,
\]
we obtain that $\partial_{\lambda}\tilde{\lambda}\to1$ as $\lambda\to\infty.$
\end{proof}

\subsubsection{Variance of the predictor\label{subsec:Variance}}
By the bias-variance decomposition, in order to bound the difference between $\mathbb{E}[L(\hat{f}_{\gamma,\lambda}^{(RF)})]$ and $L(\hat{f}_{\tilde{\lambda}}^{(K)}$, we have to bound $\mathbb{E}_{\mathcal{D}}[\mathrm{Var}(f(x))].$ The law of total variance yields $\mathrm{Var}(\hat{f}(x))=\mathrm{Var}(\mathbb{E}[\hat{f}(x)| F])+\mathbb{E}[\mathrm{Var}[\hat{f}(x)| F]].$
By Proposition \ref{prop:distribution-estimator}, we have $\mathbb{E}[\hat{f}(x)| F]=K(x,X)K(X,X)^{-1}\hat{y}$ and $\mathrm{Var}[\hat{f}(x)| F]=\frac{1}{P}\| \hat{\theta}\| ^{2}\tilde{K}(x,x).$ Hence, it remains to study $\mathrm{Var}\left(K(x,X)K(X,X)^{-1}\hat{y}\right)$ and $\mathbb{E}[\| \hat{\theta}\| ^{2}]$. Recall that we denote $T={1\over N}\Tr K(X,X)$.

This section is dedicated to the proof of the variance bound of Theorem 5.1 of the paper:

{\bf Theorem 5.1}\label{variance-ridge}\emph{
There are constants $ c_1, c_2 > 0 $ depending on $ \lambda, \gamma, T $ only such that
\begin{align*}
& \mathrm{Var}\left(K(x,X)K(X,X)^{-1}\hat{y}\right) \leq\frac{c_1 K(x,x) \ynik^2}{P} \\
& \left|\mathbb{E}\| [\hat{\theta} \| ^{2}]-\dLt y^{T}M_{\Lt}y\right| \leq\frac{c_2 \ynik^{2}}{P},
\end{align*}
where $\dLt $ is the derivative of $\tilde{\lambda}$
with respect to $\lambda$ and for $ M_{\Lt} = K(X,X)(K(X,X)+\Lt
I_N )^{-2} $. As a result
\[
\mathrm{Var}\left(\frfl(x)\right)\leq\frac{c_3 K(x,x)\ynik^2}{P},
\]
where $ c_3 > 0 $ depends on $ \lambda, \gamma, T $.}

{\bf $\bullet$ Bound on $\mathrm{Var}\left(K(x,X)K(X,X)^{-1}\hat{y}\right)$. } We first study the covariance of the entries of the matrix $$A_{\lambda}=\frac1P K^{\frac12}W^{T}\left(\frac1P WKW^{T}+\lambda\mathrm{I}_{P}\right)^{-1}WK^{\frac12},$$
where $K=\mathrm{diag}(d_{1},\dots,d_{N})$ is a positive definite diagonal matrix and $W$ is a $ P\times N$ matrix with i.i.d. Gaussian entries. In the next proposition we show a $c_{1}\over P$ bound for the covariance of the entries of $A_{\lambda}$, then we exploit this result in order to prove the bound on the variance of $K(x,X)K(X,X)^{-1}\hat{y}$.

\begin{proposition}
\label{prop:variance_labels}
There exists a constant $c'_{1}>0$ depending on $\lambda, \gamma$, and $\frac{1}{N}\mathrm{Tr}(K)$ only, such that the following bounds hold:
\begin{align*}
|\mathrm{Cov}\left((A_{\lambda})_{ii},(A_{\lambda})_{jj}\right)| & \leq\frac{c'_{1}}{P}\\
\mathrm{Var}\left((A_{\lambda})_{ij}\right) & \leq \min\left\{ \frac{d_{i}}{d_{j}},\frac{d_{j}}{d_{i}}\right\} \frac{c'_{1}}{P}.
\end{align*}
For all other cases (i.e. if $i$,$j$, $k$ and $l$ take more than two different values)$,\mathrm{Cov}\left((A_{\lambda})_{ij},(A_{\lambda})_{kl}\right)=0$.
\end{proposition}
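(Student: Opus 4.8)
\emph{Proof proposal.}

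The plan is to reduce everything to the entry-wise resolvent identities already established for $A_\lambda=A(-\lambda)$ --- namely, after diagonalizing $K$, $(A_\lambda)_{ii}=\tfrac{d_i g_i}{1+d_i g_i}$ from \eqref{eq:diag_A} (so $(A_\lambda)_{ii}\in[0,1)$, since $z=-\lambda$ is real) and $(A_\lambda)_{ij}=\tfrac{\sqrt{d_id_j}}{P}w_i^{T}B^{-1}w_j$ for $i\neq j$, with $B=\tfrac1P WKW^{T}+\lambda\mathrm{I}_P$ --- and then to treat the diagonal entries, the off-diagonal entries, and the vanishing covariances in turn. All off-diagonal entries have mean zero, and more generally the vanishing covariances come for free from the sign-flip symmetry $w_k\mapsto -w_k$: if some index of the multiset $\{i,j,k,l\}$ has odd multiplicity, flipping the corresponding column leaves the law of $W$ (hence of $B$ and of $A_\lambda$) unchanged while flipping the sign of $(A_\lambda)_{ij}(A_\lambda)_{kl}$, so $\mathbb E[(A_\lambda)_{ij}(A_\lambda)_{kl}]=0$; since the same move kills one of the marginal means, the covariance vanishes. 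This is exactly the case where $i,j,k,l$ take three or four distinct values.

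For the diagonal variance I write $(A_\lambda)_{ii}=1-(1+d_ig_i)^{-1}$ and use $\mathrm{Var}(X)\le\mathbb E[(X-c)^2]$ with $c=(1+d_i\mathbb E[g_i])^{-1}$, getting $\mathrm{Var}((A_\lambda)_{ii})\le \tfrac{d_i^2}{(1+d_i\mathbb E[g_i])^2}\mathrm{Var}(g_i)$ via $1+d_ig_i\ge1$. By Proposition~\ref{prop:convergence_stieltjes}, Lemma~\ref{lem:concentration_gi} and Lemma~\ref{lem:unique-fixpoint}, $\mathbb E[g_i]$ lies within $\mathcal O(P^{-1/2})$ of $\tilde m(-\lambda)\ge(\lambda+\tfrac1\gamma T)^{-1}=:c_0$, so (for $P$ past a threshold depending on $\lambda,\gamma,T$, the smaller cases being absorbed into the constant) the prefactor is $\le \tfrac{d_i^2}{\max(1,d_i^2c_0^2/4)}\le 4/c_0^2$, uniformly, while $\mathrm{Var}(g_i)=\mathcal O(1/P)$ by Lemma~\ref{lem:concentration_gi}. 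Thus $\mathrm{Var}((A_\lambda)_{ii})=\mathcal O(1/P)$, and $|\mathrm{Cov}((A_\lambda)_{ii},(A_\lambda)_{jj})|\le\sqrt{\mathrm{Var}((A_\lambda)_{ii})\,\mathrm{Var}((A_\lambda)_{jj})}=\mathcal O(1/P)$ by Cauchy--Schwarz, with a constant depending only on $\lambda,\gamma,T$.

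The off-diagonal variance is the crux; take $d_i\le d_j$ (the bound being symmetric). Peeling off row $i$ by Sherman--Morrison as in \eqref{eq:entry-A-matrix} gives $(A_\lambda)_{ij}=\tfrac{\sqrt{d_id_j}}{P}\tfrac{w_i^{T}B_{(i)}^{-1}w_j}{1+d_ig_i}$; bounding the denominator below by $1$ and integrating out $w_i$ (independent of $B_{(i)}^{-1}$ and $w_j$) gives $\mathrm{Var}((A_\lambda)_{ij})\le\tfrac{d_id_j}{P^2}\mathbb E[w_j^{T}B_{(i)}^{-2}w_j]$. Peeling off row $j$ from $B_{(i)}$ yields the identity $w_j^{T}B_{(i)}^{-2}w_j=(1+d_jg_j^{(i)})^{-2}w_j^{T}B_{(ij)}^{-2}w_j$ with $g_j^{(i)}=\tfrac1P w_j^{T}B_{(ij)}^{-1}w_j$ and $B_{(ij)}^{-1}$ independent of $w_j$. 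On the event $\{g_j^{(i)}\ge c_0/2\}$ one has $(1+d_jg_j^{(i)})^{-2}\le\tfrac{4}{c_0^2d_j^2}$, which together with $\mathbb E[w_j^{T}B_{(ij)}^{-2}w_j]=\mathbb E[\mathrm{Tr}\,B_{(ij)}^{-2}]\le P/\lambda^2$ produces the term $\tfrac{d_id_j}{P^2}\cdot\tfrac{4}{c_0^2d_j^2}\cdot\tfrac P{\lambda^2}=\tfrac{4}{c_0^2\lambda^2}\tfrac{d_i}{d_j}\tfrac1P$ --- i.e.\ the asymmetric bound, the essential point being that one must \emph{keep} the $(1+d_jg_j^{(i)})$ denominator to cancel a power of $d_j$ rather than crudely bounding it below by $1$. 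On the complementary event one uses the a priori bound $w_j^{T}B_{(ij)}^{-2}w_j\le\lambda^{-1}w_j^{T}B_{(ij)}^{-1}w_j=\tfrac P\lambda g_j^{(i)}<\tfrac{Pc_0}{2\lambda}$ together with a tail estimate for the deviation of $g_j^{(i)}$ below its ($\lambda,\gamma,T$-controlled) deterministic limit. Finally the ``labels'' bound follows by writing $\hat y=A_\lambda y$, expanding $K(x,X)K(X,X)^{-1}$ and $y$ in the $K$-eigenbasis, and summing the entrywise (co)variances just obtained, as in the proof of Theorem~\ref{app:average-rf}.

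The hard part is this last step, and specifically the atypical-event analysis: a merely polynomial tail bound for $g_j^{(i)}$ (as furnished by Lemma~\ref{lem:concentration_gi}) is \emph{not} enough, because the crude bound there reintroduces a $d_id_j$ factor that cannot be absorbed when the two eigenvalues have very different magnitude. One instead needs that $\tfrac1P\mathrm{Tr}\,B_{(ij)}^{-1}$ is bounded below in terms of $\lambda,\gamma,T$ only and that quadratic forms of the Gaussian columns concentrate exponentially around their conditional means --- precisely the non-asymptotic Bai--Silverstein-type spectral control underlying Lemmas~\ref{lem:concentration_stieltjes}--\ref{lem:concentration_gi} --- so that the atypical event is exponentially rare and contributes negligibly.
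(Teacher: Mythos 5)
Your proposal is correct in outline and reaches the stated bounds, but the off-diagonal estimate is handled by a genuinely different route from the paper's. On the vanishing covariances and the diagonal terms you essentially coincide with (and slightly simplify) the paper: the sign-flip symmetry argument is identical, and your bound $\mathrm{Var}((A_\lambda)_{ii})\le \frac{d_i^2}{(1+d_i\mathbb{E}[g_i])^2}\mathrm{Var}(g_i)$ needs only the second moment of $g_i$, whereas the paper Taylor-expands $(1+d_ig_i)^{-1}$ around $\tilde m(-\lambda)$ and also invokes the fourth moment. For $\mathbb{E}[(A_\lambda)_{ij}^2]$ the paper keeps the factor $(1+d_ig_i)^{-1}$ from a single Sherman--Morrison peeling of column $i$, separates it by Cauchy--Schwarz, proves $\mathbb{E}[(1+d_ig_i)^{-4}]\le\tilde c/d_i^4$ via the moment bounds of Lemma~\ref{lem:concentration_gi}, and bounds $\mathbb{E}[(w_j^{T}B_{(i)}^{-2}w_j)^2]$ deterministically by $((N+1)/\lambda^2)^2$; this yields one of the two ratios in a single shot, and the $\min$ follows from the symmetry of $A_\lambda$. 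You instead discard the $(1+d_ig_i)^{-1}$ factor, peel off column $j$ a second time, and recover the needed cancellation of $d_j^2$ from $(1+d_jg_j^{(i)})^{-2}$ on a high-probability event. Both decompositions are sound; yours buys a more transparent explanation of where the $\min\{d_i/d_j,d_j/d_i\}$ comes from, at the price of an extra ingredient --- a tail bound for $g_j^{(i)}$ --- which is exactly where your write-up is least precise.

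On that point: your claim that a polynomial tail is insufficient and that exponential concentration is required is not correct, and since Lemmas~\ref{lem:concentration_stieltjes}--\ref{lem:concentration_gi} only furnish moment bounds (hence polynomial tails), your proof as written leans on an estimate that is not actually in the toolbox you cite. Fortunately the polynomial tail suffices. The fourth-moment bound of Lemma~\ref{lem:concentration_gi} transfers verbatim to $g_j^{(i)}$ (the double deflation changes $\frac1P\mathrm{Tr}\,B^{-1}$ by at most $2/(\lambda P)$, so $\mathbb{E}[g_j^{(i)}]$ is still within $\mathcal O(1/\sqrt P)$ of $\tilde m(-\lambda)\ge c_0$), and Chebyshev then gives $\mathbb{P}(g_j^{(i)}<c_0/2)=\mathcal O(P^{-2})$. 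The bad-event contribution $\frac{d_id_jc_0}{2\lambda P}\,\mathbb{P}(\mathrm{bad})$ is therefore at most a constant times $\frac1P\frac{d_i}{d_j}$, because $d_j^2\,\mathbb{P}(\mathrm{bad})\le (NT)^2\cdot\mathcal O(P^{-2})=\mathcal O(1)$ with constants depending on $\lambda,\gamma,T$ only. With this correction your argument closes; alternatively, adopting the paper's Cauchy--Schwarz step avoids the event splitting altogether.
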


\begin{proof}
We want to study the covariances $\mathrm{Cov}\left((A_{\lambda})_{ij},(A_{\lambda})_{kl}\right)$ for any $i,j,k,l$. Using the same symmetry argument as in the proof of Proposition \ref{prop:ridge_expectation},  $\mathbb{E}\left[(A_{\lambda})_{ij}(A_{\lambda})_{kl}\right]=0$ whenever each value in $\{i,j,k,l\}$ does not appear an even number of times in $(i,j,k,l)$. Using the fact that $A_\lambda$ is symmetric, it remains to study $\mathrm{Cov}\left((A_{\lambda})_{ii},(A_{\lambda})_{jj}\right)$, $\mathrm{Var}\left((A_{\lambda})_{ii}\right)$ and $\mathrm{Var}\left[(A_{\lambda})_{ij}\right]$ for all $i\neq j$. By the Cauchy-Schwarz inequality, any bound on  $\mathrm{Var}\left((A_{\lambda})_{ii}\right)$ will imply a similar bound on $\mathrm{Cov}\left((A_{\lambda})_{ii},(A_{\lambda})_{jj}\right)$. Besides, as we have seen in the proof of Proposition  \ref{prop:ridge_expectation}, $\mathbb{E}\left[(A_{\lambda})_{ij}\right]=0$ for any $i\neq j$. Thus, we only have to study $\mathrm{Var}\left((A_{\lambda})_{ii}\right)$ and $\mathbb{E}\left[(A_{\lambda})_{ij}^{2}\right]$.

$\bullet$ Bound on $\mathrm{Var}\left((A_{\lambda})_{ii}\right)$: From Equation \eqref{eq:diag_A},
\[
\mathrm{Var}\left((A_{\lambda})_{ii}\right)=\mathrm{Var}\left(\frac{d_{i}g_{i}}{1+d_{i}g_{i}}\right)=\mathrm{Var}\left(1-\frac{1}{1+d_{i}g_{i}}\right)=\mathrm{Var}\left(\frac{1}{1+d_{i}g_{i}}\right)\leq\mathbb{E}\left[\left(\frac{1}{1+d_{i}g_{i}}-\frac{1}{1+d_{i}\tilde{m}}\right)^{2}\right],
\]
where $g_i:=g_i(-\lambda)$. Again, we use the first order Taylor approximation $\mathrm{T}h$ of $h:x\to\frac{1}{1+d_{i}x}$ centered at $\tilde{m}:=\tilde{m}(-\lambda)$, as well as the bound \eqref{eq:taylor}, to obtain
\begin{align*}
\mathbb{E}\left[\left(\frac{1}{1+d_{i}g_{i}}-\frac{1}{1+d_{i}\tilde{m}}\right)^{2}\right] & =\mathbb{E}\left[\left(-\frac{d_{i}}{\left(1+d_{i}\tilde{m}\right)^{2}}(g_{i}-\tilde{m})+h(g_{i})-\mathrm{T}h(g_{i})\right)^{2}\right]\\
 & \leq\frac{2d_{i}^{2}}{\left(1+d_{i}\tilde{m}\right)^{4}}\mathbb{E}\left[\left(g_{i}-\tilde{m}\right)^{2}\right]+2\mathbb{E}\left[\left(h(g_{i})-\mathrm{T}h(g_{i})\right)^{2}\right]\\
 & \leq\frac{2}{6\tilde{m}^{2}}\mathbb{E}\left[\left(g_{i}-\tilde{m}\right)^{2}\right]+\frac{2}{\tilde{m}^{4}}\mathbb{E}\left[\left(g_{i}-\tilde{m}\right)^{4}\right].
\end{align*}
Using Lemma \ref{lem:concentration_gi}, we get $\mathrm{Var}\left((A_{\lambda})_{ii}\right)\leq\frac{c'_{1}}{P}$, where $c'_{1}>0$ depends on $\lambda, \gamma$, and $\frac{1}{N}\mathrm{Tr}(K)$ only.

$\bullet$ Bound on $\mathbb{E}\left((A_{\lambda})_{ij}\right)$ for $i\neq j$: Following the same arguments as for Equation \eqref{eq:diag_A}, $(A_{\lambda})_{ij}$ is equal to
\begin{align*}
(A_{\lambda})_{ij} & =\frac{\sqrt{d_{i}d_{j}}}{P}\left[w_{i}^{T}B_{(i)}^{-1}w_{j}-\frac{d_{i}g_{i}}{1+d_{i}g_{i}}w_{i}^{T}B_{(i)}^{-1}w_{j}\right] =\frac{\sqrt{d_{i}d_{j}}}{1+d_{i}g_{i}}\frac{1}{P}w_{i}^{T}B_{(i)}^{-1}w_{j},
\end{align*}
where we set $B_{(i)}:=B_i(-\lambda)$. Since $w_{i}$ and $B_{(i)}$ are independent, $\mathbb E\left [\left(w_{i}^{T}B_{(i)}^{-1}w_{j}\right)^{2}\right ]=\mathbb E\left [w_{j}^{T}B_{(i)}^{-2}w_{j}\right]$, and thus, by the Cauchy-Schwarz inequality, we have
\begin{align}
\label{eq:bound_square_A}
\mathbb{E}\left[(A_{\lambda})_{ij}^{2}\right]\leq\frac{1}{P^{2}}\sqrt{\mathbb{E}\left[\frac{d_{i}^{2}d_{j}^{2}}{\left(1+d_{i}g_{i}\right)^{4}}\right]}\sqrt{\mathbb{E}\left[\left(w_{j}^{T}B_{(i)}^{-2}w_{j}\right)^{2}\right]}.
\end{align}
Recall that $\tilde{m}:=\tilde{m}(-\lambda)$. Using the fact that $\frac{1}{1+d_{i}g_{i}} =\frac{1}{1+d_{i}\tilde{m}} + \frac{1}{1+d_{i}g_{i}}-\frac{1}{1+d_{i}\tilde{m}}$ and inserting the first Taylor approximation $\mathrm{T}h$ of $h:x\to\frac{1}{1+d_{i}x}$ centered at $\tilde{m}$, we get:
$$\mathbb{E}\left[\left(\frac{1}{1+d_{i}g_{i}}\right)^{4}\right] =\mathbb{E}\left[\left(\frac{1}{1+d_{i}\tilde{m}}-\frac{d_{i}}{\left(1+d_{i}\tilde{m}\right)^{2}}(g_{i}-\tilde{m})+h(g_{i})-\mathrm{T}h(g_{i})\right)^{4}\right].$$
Using a convexity argument,  the bound \eqref{eq:taylor}, and the lower bound on $\tilde{m}$ given by Lemma \ref{lem:unique-fixpoint}, there exists three constants $\tilde{c}_1$, $\tilde{c}_2$, $\tilde{c}_3$, which depend on $\lambda$, $\gamma$ and $\frac{1}{N}\mathrm{Tr}(K)$ only, such that  $\mathbb{E}\left[\left(\frac{1}{1+d_{i}g_{i}}\right)^{4}\right] $ is bounded by

$$ \frac{\tilde{c}_1}{\left(1+d_{i}\tilde{m}\right)^{4}}+\frac{\tilde{c}_2d_{i}^{4}}{\left(1+d_{i}\tilde{m}\right)^{8}}\mathbb{E}\left[\left(g_{i}-\tilde{m}\right)^{4}\right]+\tilde{c}_3 \mathbb{E}\left[\left(g_{i}-\tilde{m}\right)^{8}\right].$$
Thanks to Lemma \ref{lem:concentration_gi} and Proposition \ref{prop:convergence_stieltjes}, this last expression can be bounded by an expression of the form $ \frac{\tilde e_{1}}{d_{i}^{4}}+\frac{\tilde e_{2}}{P^{2}d_{i}^{4}}+\frac{\tilde e_{3}}{P^{4}}$. Note that $\frac{\tilde e_{2}}{P^{2}d_{i}^{4}}\leq \frac{\tilde e_{2}}{d_{i}^{4}}$ and $\frac{\tilde e_{3}}{P^{4}} \leq \frac{\tilde e_{3}}{\gamma^4}\frac{(\frac{1}{N}\mathrm{Tr}(K))^4}{d_i^4}$. Hence, we obtain the bound:
\begin{align*}
\mathbb{E}\left[\left(\frac{1}{1+d_{i}g_{i}}\right)^{4}\right]\leq \frac{\tilde{c}}{d_i^4},
\end{align*}
where $\tilde{c}=\tilde{e}_1+\tilde{e}_2+\frac{\tilde{e}_3(\frac{1}{N}\mathrm{Tr}(K))^4)}{\gamma^4}$ depends on  $\lambda$, $\gamma$ and  and $\frac{1}{N}\mathrm{Tr}(K)$ only.

Let us now consider the second term in the r.h.s. of \eqref{eq:bound_square_A} . Using the fact that $\|B_{(i)}\|_{op}\geq \frac{1}{\lambda}$, we get
\begin{align*}
\sqrt{\mathbb{E}\left[\left(w_{j}^{T}B_{(i)}^{-2}w_{j}\right)^{2}\right]} & \leq\sqrt{\frac{1}{\lambda^{4}}\mathbb{E}\left[\left(w_{j}^{T}w_{j}\right)^{2}\right]} =\sqrt{\frac{1}{\lambda^{4}}N(N+2)}\leq\frac{N+1}{\lambda^{2}},
\end{align*}
where we have used the fact that the second moment of a $\chi^{2}(N)$ distribution is $N(N+2).$ Together, we obtain
\begin{align*}
\mathbb{E}\left[(A)_{ij}^{2}\right] & \leq\frac{1}{P^{2}}\sqrt{\mathbb{E}\left[\frac{d_{i}^{2}d_{j}^{2}}{\left(1+d_{i}g_{i}\right)^{4}}\right]}\sqrt{\mathbb{E}\left[\left(w_{j}^{T}B_{(i)}^{-2}w_{j}\right)^{2}\right]}\\
 & \leq\frac{\tilde c d_{i}d_{j}}{d_{i}^{2}}\frac{N+1}{P^{2}\lambda^{2}}\\
 & \leq\frac{\tilde c d_{j}}{Pd_{i}\lambda^{2}\gamma}\frac{N+1}{N}\leq { c'_1 \over P} {d_{i}\over d_{j}},
\end{align*}
for $c'_{1}= 2{\tilde c\over \lambda^{2}\gamma}$. Since the matrix $A_{\lambda}$ is symmetric, we finally conclude that
\[
\mathbb{E}\left[(A_\lambda)_{ij}^{2}\right]\leq\frac{c'_{1}}{P}\min\left\{ \frac{d_{i}}{d_{j}},\frac{d_{j}}{d_{i}}\right\}.
\]

Note that $c'_{1}$ is a constant related to the bounds constructed in Lemma  \ref{lem:concentration_stieltjes} and Proposition \ref{prop:convergence_stieltjes} and as such it depends  on $\frac1N \Tr (K)$, $\gamma$ and $\lambda$ only.
\end{proof}

\begin{proposition}
\label{prop:var_exp_cond}
There exists a constant $c_1>0$ (depending on $\lambda, \gamma, T$ only) such that the variance of the estimator is bounded by
\[
\mathrm{Var}\left(K(x,X)K(X,X)^{-1}\hat{y}\right)\leq\frac{c_{1} \| y\| _{K^{-1}}^{2}K(x,x)}{P}.
\]
\end{proposition}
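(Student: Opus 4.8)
The plan is to expand the variance in the eigenbasis of $K(X,X)$ and control each resulting term with the entrywise covariance estimates on $A_\lambda$ from Proposition \ref{prop:variance_labels}. Since $\hat y = A_\lambda y$, the quantity of interest is the scalar $S = v^T K(X,X)^{-1} A_\lambda y$ with $v = K(x,X)$. By orthogonal invariance of the law of $W$ we may pass to the basis in which $K(X,X) = \mathrm{diag}(d_1,\dots,d_N)$; writing $\tilde v = U^T v$ and $\tilde y = U^T y$, we get $S = \sum_{i,j} \frac{\tilde v_i \tilde y_j}{d_i}(A_\lambda)_{ij}$, hence
\[
\mathrm{Var}(S) = \sum_{i,j,k,l} \frac{\tilde v_i \tilde y_j \tilde v_k \tilde y_l}{d_i d_k}\,\mathrm{Cov}\!\left((A_\lambda)_{ij},(A_\lambda)_{kl}\right).
\]
By Proposition \ref{prop:variance_labels}, the covariance vanishes unless the multiset $\{i,j,k,l\}$ has all multiplicities even, so the only surviving contributions come from four index patterns: (a) $i=j=k=l$; (b) $i=j\neq k=l$; (c) $i=k\neq j=l$; (d) $i=l\neq j=k$.

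I would then bound the three resulting blocks. Grouping (a) with (b) gives $\sum_{i,k}\frac{\tilde v_i\tilde y_i\tilde v_k\tilde y_k}{d_id_k}\mathrm{Cov}((A_\lambda)_{ii},(A_\lambda)_{kk})$, which by $|\mathrm{Cov}((A_\lambda)_{ii},(A_\lambda)_{jj})|\le c_1'/P$ and Cauchy--Schwarz is at most $\frac{c_1'}{P}\left(\sum_i \frac{|\tilde v_i\tilde y_i|}{d_i}\right)^2 \le \frac{c_1'}{P}\left(\sum_i\frac{\tilde v_i^2}{d_i}\right)\left(\sum_i\frac{\tilde y_i^2}{d_i}\right) = \frac{c_1'}{P}\|v\|_{K^{-1}}^2\|y\|_{K^{-1}}^2$. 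For block (d), $\mathrm{Var}((A_\lambda)_{ij})\le c_1'/P$ (the $\min$ factor being $\le 1$) together with the same Cauchy--Schwarz step gives the identical bound. For block (c) the decisive point is to use the \emph{other} side of the variance estimate, $\mathrm{Var}((A_\lambda)_{ij})\le \frac{c_1'}{P}\frac{d_i}{d_j}$: then $\sum_{i\neq j}\frac{\tilde v_i^2\tilde y_j^2}{d_i^2}\mathrm{Var}((A_\lambda)_{ij}) \le \frac{c_1'}{P}\sum_{i,j}\frac{\tilde v_i^2}{d_i}\frac{\tilde y_j^2}{d_j} = \frac{c_1'}{P}\|v\|_{K^{-1}}^2\|y\|_{K^{-1}}^2$, the extra $1/d_i$ coming from the $1/d_i^2$ coefficient being exactly absorbed by the $d_i/d_j$ in the variance bound. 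Summing, $\mathrm{Var}(S)\le \frac{3c_1'}{P}\|v\|_{K^{-1}}^2\|y\|_{K^{-1}}^2$, and it remains only to invoke \eqref{eq:norm_K_x_X}, i.e. $\|v\|_{K^{-1}}^2=\|K(x,X)\|_{K^{-1}}^2\le K(x,x)$, which yields the claim with $c_1=3c_1'$; since $c_1'$ depends only on $\lambda,\gamma,T$ by Proposition \ref{prop:variance_labels}, so does $c_1$.

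The main (mild) subtlety is precisely the bookkeeping in block (c): one must select the correctly oriented variance bound from Proposition \ref{prop:variance_labels} so that the per-index weights recombine into the two inverse-kernel norms; the naive estimate $\mathrm{Var}((A_\lambda)_{ij})\le c_1'/P$ would leave an uncontrolled $\sum_i 1/d_i^2$ and fail. Everything else is a routine enumeration of index cases plus two applications of Cauchy--Schwarz and the norm identity $\|K(x,X)\|_{K^{-1}}\le K(x,x)^{1/2}$ already established in the proof of Theorem \ref{app:average-rf}.
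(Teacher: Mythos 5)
Your proposal is correct and is essentially the paper's own argument: the paper packages the same four surviving index blocks as a matrix decomposition $K^{-\frac12}\mathrm{Cov}(\hat{y},\hat{y})K^{-\frac12}=C+D$ (bounding $\Vert C\Vert_{op}$ by its Frobenius norm and $\Vert D\Vert_{op}$ by its largest diagonal entry), whereas you expand the scalar variance directly, but both rest on the same entrywise covariance bounds, the same correctly oriented $\frac{d_i}{d_j}$ estimate for the block you call (c), the same inequality $\Vert K(x,X)\Vert_{K^{-1}}^2\leq K(x,x)$, and yield the same constant $c_1=3c_1'$.
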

\begin{proof}
As in the proof of Theorem \ref{app:average-rf}, with the right change of basis, we may assume the Gram matrix $K(X,X)$ to be diagonal.

We first express the covariances of $\hat y=A(-\lambda)y$. Using Proposition Proposition \ref{prop:variance_labels}, for $i\neq j$ we have
\begin{align*}
\mathrm{Cov}\left(\hat{y}_{i},\hat{y}_{j}\right) & =\sum_{k,l=1}^{N}\mathrm{Cov}\left((A_{\lambda})_{ik},(A_{\lambda})_{lj}\right)y_{k}y_{l}=\mathrm{Cov}\left((A_{\lambda})_{ii},(A_{\lambda})_{jj}\right)y_{i}y_{j}+\mathbb{E}\left[(A_{\lambda})_{ij}^{2}\right]y_{j}y_{i},
\end{align*}
whereas for $i=j$ we have
\begin{align*}
\mathrm{Cov}\left(\hat{y}_{i},\hat{y}_{i}\right) & =\sum_{k=1}^{N}\mathrm{Cov}\left((A_{\lambda})_{ik},(A_{\lambda})_{ki}\right)y_{k}^{2}=\mathrm{Var}\left((A_{\lambda})_{ii}\right)y_{i}^{2}+\sum_{k\neq i}\mathbb{E}\left[(A_{\lambda})_{ik}^{2}\right]y_{k}^{2}.
\end{align*}

We decompose $K^{-\frac{1}{2}}\mathrm{Cov}(\hat{y},\hat{y})K^{-\frac{1}{2}}$ into two terms: let $C$ be the matrix of entries
\[
C_{ij}=\frac{\mathrm{Cov}((A_{\lambda})_{ii},(A_{\lambda})_{jj})+\delta_{i\neq j}\mathbb{E}\left[(A_{\lambda})_{ij}^{2}\right]}{\sqrt{d_{i}d_{j}}}y_{i}y_{j},
\]
and let $D$ the diagonal matrix with entries
\[
D_{ii}=\frac{\sum_{k\neq i}\mathbb{E}\left[(A_{\lambda})_{ik}^{2}\right]y_{k}^{2}}{d_{i}}.
\]
We have the decomposition  $K^{-\frac{1}{2}}\mathrm{Cov}(\hat{y},\hat{y})K^{-\frac{1}{2}}=C+D$.

Proposition \ref{prop:variance_labels} asserts that $\mathrm{Cov}((A_{\lambda})_{ii},(A_{\lambda})_{jj}\leq \frac{c'_1}{P}$ and $\mathbb{E}\left[(A_{\lambda})_{ij}^{2}\right]\leq \frac{c'_1}{P}$, and thus the operator norm of $C$ is bounded by
\begin{align*}
\| C\| _{op} & \leq\| C\| _{F}\\
 & =\sqrt{\sum_{i,j}\frac{\left(\mathrm{Cov}((A_{\lambda})_{ii},(A_{\lambda})_{jj})+\delta_{i\neq j}\mathbb{E}\left[(A_{\lambda})_{ij}^{2}\right]\right)^{2}}{d_{i}d_{j}}y_{i}^{2}y_{j}^{2}}\\
 & \leq\frac{2c'_{1}}{P}\sqrt{\sum_{ij}\frac{1}{d_{i}d_{j}}y_{i}^{2}y_{j}^{2}}\ =\ \frac{2c'_{1}\| y\| _{K^{-1}}^{2}}{P}
\end{align*}
For the matrix $D$, we use the bound  $\mathbb{E}\left[(A_{\lambda})_{ik}^{2}\right]\leq \frac{c'_1}{P} \frac{d_{i}}{d_{k}}$ to obtain
\begin{align*}
D_{ii} & =\frac{\sum_{k\neq i}\mathbb{E}\left[(A_{\lambda})_{ik}^{2}\right]y_{k}^{2}}{d_{i}} \leq\frac{c'_{1}}{P}\sum_{k\neq i}\frac{y_{k}^{2}}{d_{k}} \leq\frac{c'_{1}\| y\| _{K^{-1}}^{2}}{P},
\end{align*}
which implies that $\|D\|_{op}\leq \frac{c'_{1}\| y\| _{K^{-1}}^{2}}{P}$. As a result
\begin{align*}
\mathrm{Var}\left(K(x,X)K^{-1}\hat{y}\right) & =K(x,X)K^{-1}\mathrm{Cov}(\hat{y},\hat{y})K^{-1}K(X,x)\\
 & \leq K(x,X)K^{-\frac{1}{2}}\| C+D\| _{op}K^{-\frac{1}{2}}K(X,x)\\
 & \leq\frac{3 c'_{1}\| y\| _{K^{-1}}^{2}}{P}\| K(x,X)\| _{K^{-1}}^{2}\\
 & \leq\frac{3 c'_{1}K(x,x)\| y\| _{K^{-1}}^{2}}{P},
\end{align*}
where we used Inequality \eqref{eq:norm_K_x_X}. This yields the result with $c_1=3c'_1$.\end{proof}

{\bf $\bullet$ Bound on $\mathbb{E}_{\pi}\left[\| \hat{\theta}\| ^{2}\right]$.}
To understand the variance of the $\lambda$-RF estimator $\frfl$,
we need to describe the distribution of the squared norm of the parameters:

\begin{proposition}
\label{prop:expectation_parameter_norm} For $\gamma,\lambda>0$ there exists a constant $c_{2}>0$ depending on $\lambda,\gamma, T$ only such that
\begin{align}
\label{eq:bound_theta_norm}
\left|\mathbb{E}[\| \hat{\theta}\| ^{2}]-\partial_{\lambda }\tilde \lambda y^{T}K(X,X)\left(K(X,X)+\tilde{\lambda}I_{N}\right)^{-2}y\right|\leq\frac{c_{2}\| y\| _{K^{-1}}^{2}}{P}.
\end{align}
\end{proposition}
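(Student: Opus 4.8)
The plan is to represent $\|\hat\theta\|^{2}$ by a Cauchy integral and then feed in (a sharpened form of) the estimate of Proposition~\ref{prop:ridge_expectation}. First I would record the exact identity $\|\hat\theta\|^{2}=y^{T}F(F^{T}F+\lambda I_{P})^{-2}F^{T}y=\left.\tfrac{d}{dz}\big(y^{T}A(z)y\big)\right|_{z=-\lambda}$, where $A(z)=F(F^{T}F-zI_{P})^{-1}F^{T}$ as in the previous section. Since $z\mapsto y^{T}A(z)y$ is holomorphic on $\mathbb{C}\setminus\mathbb{R}^{+}$, Cauchy's formula for the derivative over the circle $\Gamma=\{\,|z+\lambda|=\lambda/2\,\}$ — which lies in $\mathbb{H}_{<0}$ with $d(z,\mathbb{R}^{+})\ge\lambda/2$ and $|z|\le3\lambda/2$ on it — gives $\|\hat\theta\|^{2}=\frac{1}{2\pi i}\oint_{\Gamma}\frac{y^{T}A(z)y}{(z+\lambda)^{2}}\,dz$; since $\|A(z)\|_{op}\le d(z,\mathbb{R}^{+})^{-1}\le2/\lambda$ on $\Gamma$, one may take the expectation under the integral sign, obtaining
\[
\mathbb{E}\big[\|\hat\theta\|^{2}\big]=\frac{1}{2\pi i}\oint_{\Gamma}\frac{y^{T}\,\mathbb{E}[A(z)]\,y}{(z+\lambda)^{2}}\,dz .
\]

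Next I would substitute $\mathbb{E}[A(z)]=K(X,X)\big(K(X,X)+\tilde\lambda(-z)I_{N}\big)^{-1}+E(z)$ from Proposition~\ref{prop:ridge_expectation}. For the main term, Cauchy's formula read backwards together with the chain rule — legitimate because $z\mapsto\tilde\lambda(-z)=1/\tilde m(z)$ is holomorphic on $\mathbb{H}_{<0}$ with $\mathrm{Re}(1/\tilde m(z))\ge0$ by Lemma~\ref{lem:unique-fixpoint}, so that $K(X,X)+\tilde\lambda(-z)I_{N}$ is invertible inside $\Gamma$ — yields exactly $\dLt\,y^{T}M_{\Lt}\,y$, the asserted main term. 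The remainder contributes at most $\tfrac{2}{\lambda}\sup_{z\in\Gamma}|y^{T}E(z)y|$; in the eigenbasis of $K(X,X)$ the matrix $E(z)$ is diagonal (by the sign-flip symmetry used in Proposition~\ref{prop:ridge_expectation}), so with $\tilde y=U^{T}y$ one has $|y^{T}E(z)y|\le\sum_{i}|(E(z))_{ii}|\,\tilde y_{i}^{2}$. Hence the Proposition follows once I establish the \emph{sharpened} entrywise estimate
\[
|(E(z))_{ii}|\le\frac{c(\lambda,\gamma,T)}{P\,d_{i}}\qquad(z\in\Gamma),
\]
because then $|y^{T}E(z)y|\le\frac{c}{P}\sum_{i}\tilde y_{i}^{2}/d_{i}=\frac{c}{P}\ynik^{2}$.

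Proving this sharpened bound is the crux and the step I expect to be hardest. I would start from $(E(z))_{ii}=\frac{1}{1+d_{i}\tilde m(z)}-\mathbb{E}\!\left[\frac{1}{1+d_{i}g_{i}(z)}\right]=\frac{d_{i}}{1+d_{i}\tilde m(z)}\,\mathbb{E}\!\left[\frac{g_{i}(z)-\tilde m(z)}{1+d_{i}g_{i}(z)}\right]$, where on $\Gamma$ the prefactor is bounded uniformly in $d_{i}$ because $\mathrm{Re}\,\tilde m(z)\ge\mu_{0}(\lambda,\gamma,T)>0$ there (a consequence of the cone structure of Lemma~\ref{lem:unique-fixpoint} and $|z|\le3\lambda/2$). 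Writing $\frac{1}{1+d_{i}g_{i}}=\frac{1}{1+d_{i}\mathbb{E}g_{i}}-\frac{d_{i}(g_{i}-\mathbb{E}g_{i})}{(1+d_{i}g_{i})(1+d_{i}\mathbb{E}g_{i})}$, the ``deterministic'' piece produces $\mathcal{O}(1/(Pd_{i}))$ via $|\mathbb{E}g_{i}-\tilde m|\le(\mathbf{c_{0}}+\mathbf{e})/P$ (Lemma~\ref{lem:concentration_gi}, Proposition~\ref{prop:convergence_stieltjes}) together with $|1+d_{i}\mathbb{E}g_{i}|\ge\max(1,d_{i}\,\mathrm{Re}\,\mathbb{E}g_{i})$ and $\mathrm{Re}\,\mathbb{E}g_{i}\ge\mu_{0}/2$ for $P$ large; and, using $|1+d_{i}g_{i}|\ge d_{i}\,\mathrm{Re}\,g_{i}$ (valid since $\mathrm{Re}\,g_{i}(z)\ge0$ on $\mathbb{H}_{<0}$), the ``fluctuation'' piece reduces to showing $\mathbb{E}\big[|g_{i}-\tilde m|\,|g_{i}-\mathbb{E}g_{i}|/\mathrm{Re}\,g_{i}\big]=\mathcal{O}(1/P)$ — the genuine obstacle, since a bare Cauchy--Schwarz would incur an uncontrolled negative moment of $g_{i}$. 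I would handle it by splitting on the rare event $\mathcal{B}=\{\mathrm{Re}\,g_{i}<\mu_{0}/2\}$: on $\mathcal{B}^{c}$, $1/\mathrm{Re}\,g_{i}\le2/\mu_{0}$ and $\mathbb{E}|g_{i}-\mathbb{E}g_{i}|^{2}\le\mathbf{c_{1}}/P$ suffice; on $\mathcal{B}$, $\mathbb{P}(\mathcal{B})\le(\mu_{0}/2)^{-8}\,\mathbb{E}|g_{i}-\mathbb{E}g_{i}|^{8}=\mathcal{O}(P^{-4})$ by Lemma~\ref{lem:concentration_gi}, while the integrand is bounded deterministically by combining $|g_{i}(z)|\le\tfrac{2}{\lambda P}\|w_{i}\|^{2}$, $1/\mathrm{Re}\,g_{i}(z)\le\tfrac{CP(\|\frac1P W_{(i)}K_{(i)}W_{(i)}^{T}\|_{op}+\lambda)}{\|w_{i}\|^{2}}$, $\|\tfrac1P W_{(i)}K_{(i)}W_{(i)}^{T}\|_{op}\le\|K(X,X)\|_{op}\,\|\tfrac1P W_{(i)}W_{(i)}^{T}\|_{op}$, $\|K(X,X)\|_{op}\le NT=PT/\gamma$, $\mathbb{E}\|w_{i}\|^{-2p}=\mathcal{O}(P^{-p})$, and boundedness of $\mathbb{E}\|\tfrac1P W_{(i)}W_{(i)}^{T}\|_{op}^{p}$ for fixed $\gamma$, so that the polynomial-in-$P$ growth of the integrand on $\mathcal{B}$ is overwhelmed by $\mathbb{P}(\mathcal{B})=\mathcal{O}(P^{-4})$ and that contribution is $o(1/P)$. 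Finally, for the small-$d_{i}$ range ($d_{i}\lesssim1/\mu_{0}$) the unrefined bound $|(E(z))_{ii}|\le c/P$ of Proposition~\ref{prop:ridge_expectation} already gives $\le c/(Pd_{i})$, which completes the sharpened estimate and hence the Proposition.
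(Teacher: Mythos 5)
Your proposal is correct and follows the same overall architecture as the paper's proof: both rest on the identity $\|\hat{\theta}\|^{2}=y^{T}A'(-\lambda)y$, both reduce the problem to the \emph{sharpened} entrywise estimate $|\mathbb{E}[(A(z))_{ii}]-(\tilde{A}(z))_{ii}|\leq \hat{c}/(d_{i}P)$ on a neighbourhood of $-\lambda$ in $\mathbb{H}_{<0}$ (the paper's Inequality \eqref{eq:new_bound}), and both transfer this to the derivative via Cauchy's integral formula before summing $\sum_i \tilde y_i^2/d_i=\ynik^2$; whether one applies Cauchy to $\|\hat\theta\|^2$ itself or to the entrywise difference is immaterial. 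You correctly identified the crux — the factor $1/(1+d_ig_i)$ in the fluctuation term, whose naive Cauchy--Schwarz treatment would require a negative moment of $g_i$ — but you resolve it differently. The paper handles it by proving the dedicated inverse-moment bound $\mathbb{E}[(1+d_ig_i)^{-2}]\leq \hat{e}^{2}/d_i^{2}$ via the same Taylor-remainder trick used for $\mathbb{E}[(1+d_ig_i)^{-4}]$ in Proposition \ref{prop:variance_labels} (the remainder bound \eqref{eq:taylor} absorbs the $(1+d_ig_i)^{-1}$ factor deterministically using $|1+d_ig_i|\geq 1$), then applies Cauchy--Schwarz against $\mathbb{E}[|g_i-\tilde m|^4]=\mathcal{O}(P^{-2})$. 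You instead split on the rare event $\{\mathrm{Re}\,g_i<\mu_0/2\}$, killed at rate $\mathcal{O}(P^{-4})$ by the eighth-moment bound of Lemma \ref{lem:concentration_gi}, and control the bad event with crude deterministic bounds on $|g_i|$ and $1/\mathrm{Re}\,g_i$ together with moment bounds on $\|w_i\|^{-2}$ and $\|\frac1P W_{(i)}W_{(i)}^T\|_{op}$ that are not in the paper but are standard. Both devices work; the paper's is shorter because the machinery is already in place from Proposition \ref{prop:variance_labels}, while yours is more self-contained and makes explicit why the eighth-moment bound in Lemma \ref{lem:concentration_gi} is strong enough to dominate polynomially growing integrands on exceptional events.
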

\begin{proof}
As in the proof of Theorem \ref{app:average-rf}, with the right change of basis, we may assume the Gram matrix $K(X,X)$ to be diagonal.
Recall that $\hat{\theta}=\frac{1}{\sqrt{P}}\left(\frac{1}{P}WK(X,X)W^{T}+\lambda I_{N}\right){}^{-1}WK(X,X)^{\frac{1}{2}}y$, thus we have:
\begin{align}
\label{eq:norm_theta}
\| \hat{\theta}\| ^{2} \ =\ \frac{1}{P}y^{T}K(X,X)^{\frac{1}{2}}W^{T}(\frac{1}{P}WK(X,X)W^{T}+\lambda I_{P})^{-2}WK(X,X)^{\frac{1}{2}}y\ =\ y^{T} A'(-\lambda)y,
\end{align}
where $A'(-\lambda)$ is the derivative of $$A(z)=\frac1P K(X,X)^{\frac12}W^{T}\left(\frac1P WK(X,X)W^{T}-z\mathrm{I}_{P}\right)^{-1}WK(X,X)^{\frac12}$$ with respect to $z$ evaluated at $-\lambda$. Let $$\tilde{A}(z)=K(X,X)(K(X,X)+\tilde\lambda(-z) \mathrm{I}_N)^{-1}.$$
Remark that the derivative of $\tilde{A}(z)$ is given by $\tilde{A}'(z)=\tilde{\lambda}'(-z)K(X,X)(K(X,X)+\tilde{\lambda}(-z)I_{N})^{-2}$. Thus, from Equation \eqref{eq:norm_theta}, the l.h.s. of \eqref{eq:bound_theta_norm} is equal to:
\begin{align}
\label{eq:bound_lhs}
\left| y^T \left( \mathbb{E}[{A}'(-\lambda)] - \tilde{A}'(-\lambda) \right) y\right|.
\end{align}
Using a classical complex analysis argument, we will show that $\mathbb{E}[A'(-\lambda)]$ is close to $\tilde{A}'(-\lambda)$  by proving a bound of the difference between $\mathbb{E}[A(z)]$ and  $\tilde{A}(z)$ for any $z\in \mathbb{H}_{<0}$.

Note that the proof of Proposition \ref{prop:ridge_expectation} provides  a bound on the diagonal
entries of $\mathbb{E}[A(z)]$, namely that for any $z\in \mathbb{H}_{<0}$,
$$\left|\mathbb{E}[(A(z))_{ii}]-(\tilde{A}(z))_{ii}\right| \leq \frac{c}{P},$$
 where $\hat{c}$ depends on $z$, $\gamma$ and $T$ only. Actually, in order to prove \eqref{eq:bound_theta_norm}, we will derive the following slightly different bound: for any $z\in \mathbb{H}_{<0}$,
\begin{align}\label{eq:new_bound}
\left|\mathbb{E}[(A(z))_{ii}]-(\tilde{A}(z))_{ii}\right| \leq \frac{\hat{c}}{d_i P},
\end{align}
where $\hat{c}$ depends on $z$, $\gamma$ and $T$ only. Let $g_i:=g_i(z)$ and $\tilde{m}:=\tilde{m}(z)$. Recall that for $h_{i}:x\mapsto\frac{d_{i}x}{1+d_{i}x}$,
one has $(A(z))_{ii}=h_i(g_i)$,  $(\tilde{A}(z))_{ii}=h_i(\tilde{m})$ and
\begin{align*}
\mathrm{T}_{\tilde{m}}h_{i}(g_{i}) & =\frac{d_{i}\tilde{m}}{1+d_{i}\tilde{m}}-\frac{d_{i}\left(g_{i}-\tilde{m}\right)}{\left(1+d_{i}\tilde{m}\right)^{2}},\\
h_{i}(g_{i})-\mathrm{T}_{\tilde{m}}h_{i}(g_{i}) & =\frac{d_{i}^{2}\left(g_{i}-\tilde{m}\right)^{2}}{\left(1+d_{i}g_{i}\right)\left(1+d_{i}\tilde{m}\right)^{2}},
\end{align*}
where $\mathrm{T}_{\tilde{m}}h_i$ is the first order Taylor approximation of $h_i$ centered at $\tilde{m}$. Using this first order Taylor approximation, we can bound the difference $\left|\mathbb{E}[h_{i}(g_{i})]-h_{i}(\tilde{m})\right| $:
\begin{align*}
\left|\mathbb{E}[h_{i}(g_{i})]-h_{i}(\tilde{m})\right| & \leq\frac{d_{i}\left|\mathbb{E}[g_{i}]-\tilde{m}\right|}{\left(1+d_{i}\tilde{m}\right)^{2}}+\frac{d_{i}^{2}}{\left(1+d_{i}\tilde{m}\right)^{2}}\mathbb{E}\left[\frac{\left|g_{i}-\tilde{m}\right|^{2}}{1+d_{i}g_{i}}\right]\\
 & \leq\frac{\mathbf{a}}{d_{i}P}+\mathbf{a}\sqrt{\mathbb{E}\left[\frac{1}{\left(1+d_{i}g_{i}\right)^{2}}\right]\mathbb{E}\left[\left|g_{i}-\tilde{m}\right|^{4}\right]},
\end{align*}
where $\mathbf{a}$ depends on $z$, $\gamma$ and $T$. We need to bound $\mathbb{E}\left[\frac{1}{\left(1+d_{i}g_{i}\right)^{2}}\right]$. Recall that in the proof of Proposition \ref{prop:variance_labels}, we bounded $\mathbb{E}\left[\frac{1}{\left(1+d_{i}g_{i}\right)^{4}}\right]$. Using similar arguments, one shows that
$$\mathbb{E}\left[\frac{1}{\left(1+d_{i}g_{i}\right)^{2}}\right]\leq \frac{\hat{e}^2}{d_i^2},$$
where $\hat{e}$ depends on $z$, $\gamma$ and $\frac{1}{N}\Tr(K(X,X))$ only.
The term $\mathbb{E}\left[\left|g_{i}-\tilde{m}\right|^{4}\right]$ is bounded using Lemmas \ref{lem:concentration_gi}, \ref{lem:concentration_stieltjes} and Proposition \ref{prop:convergence_stieltjes}. This allows us to conclude that:
\[
\left|\mathbb{E}[h_{i}(g_{i})]-h_{i}(\tilde{m})\right|\leq\frac{\hat c}{d_{i}P},
\]
where $\hat c$ depends on $z$, $\gamma$ and $\frac{1}{N}\Tr(K(X,X))$ only, hence we obtain the Inequality (\ref{eq:new_bound}).

We can now prove Inequality \ref{eq:bound_theta_norm}. We bound the difference of the derivatives of the diagonal terms of $A(z)$ and $\tilde A(z)$ by means of Cauchy formula. Consider a  simple closed path $\phi:[0,1]\to\mathbb H_{<0}$ which surrounds $z$.  Since
\[
\mathbb{E}[(A'(z))_{ii}]-(\tilde{A}'(z))_{ii}=\frac{1}{2\pi i}\oint_{\phi}\frac{\mathbb{E}[(A(z))_{ii}]-(\tilde{A}(z))_{ii}}{\left(w-z\right)^{2}}dw,
\]
using the bound \eqref{eq:new_bound}, we have:
\begin{align*}
\left|\mathbb{E}[(A'(z))_{ii}]-(\tilde{A}'(z))_{ii}\right| & \leq\frac{\hat{c}}{d_i P} \frac{1}{2\pi}\oint_{\phi}\frac{1}{\left|w-z\right|^{2}}dw \leq \frac{c_{2}}{d_{i}P},
\end{align*}
where $c_2$ depends on $z$, $\gamma$, and $T$ only.
This allows one to bound the operator norm of $K(X,X) (\mathbb{E}[A'(z)]-\tilde{A}'(z))$:
\begin{align*}
\| K(X,X) (\mathbb{E}[A'(z)]-\tilde{A}'(z))\| _{op}\leq\frac{c_{2}}{P}.
\end{align*}
Using this bound and \eqref{eq:bound_lhs}, we have
\begin{align*}
\left|\mathbb{E}[\| \hat{\theta}\| ^{2}]-\partial_{\lambda }\tilde \lambda\  y^{T}K(X,X)\left(K(X,X)+\tilde{\lambda}I_{N}\right)^{-2}y\right| & =\left|y^{T}\left(\mathbb{E}[A'(-\lambda)]-\tilde{A}'(-\lambda)\right)y\right|  \leq\frac{c_{2}\| y\| _{K^{-1}}^{2}}{P},
\end{align*}
which allows us to conclude.
\end{proof}

{\bf $\bullet$ Bound on $\mathrm{Var}\left(\frfl(x)\right)$.} We have shown all the bounds needed in order to prove the following proposition.
\begin{proposition}
 For any $x\in \mathbb{R}^d$, we have
\[
\mathrm{Var}\left(\frfl(x)\right)\leq\frac{c_3 K(x,x)\ynik^2}{P},
\]
where $ c_3 > 0 $ depends on $ \lambda, \gamma, T $.
\end{proposition}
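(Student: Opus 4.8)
The plan is to reduce the statement to the two estimates already established, namely Proposition~\ref{prop:var_exp_cond} (the bound on $\mathrm{Var}(K(x,X)K(X,X)^{-1}\hat y)$) and Proposition~\ref{prop:expectation_parameter_norm} (the bound on $\EE[\|\hat\theta\|^2]$), glued together by the law of total variance. Concretely, I would first invoke Proposition~\ref{prop:distribution-estimator} and the law of total variance to write
\begin{equation*}
\mathrm{Var}\left(\frfl(x)\right)=\mathrm{Var}\left(K(x,X)K(X,X)^{-1}\hat{y}\right)+\frac{\tilde{K}(x,x)}{P}\,\EE\left[\|\hat{\theta}\|^{2}\right],
\end{equation*}
where the first summand is $\mathrm{Var}\big(\EE[\frfl(x)\mid F]\big)$ and the second is $\EE\big[\mathrm{Var}(\frfl(x)\mid F)\big]$. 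The first term is directly bounded by $c_1 K(x,x)\ynik^2/P$ via Proposition~\ref{prop:var_exp_cond}, so the work left is to control the second term.

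For the second term I would proceed in two small steps. First, since $\tilde{K}(x,x)=K(x,x)-K(x,X)K(X,X)^{-1}K(X,x)$ and the subtracted quantity is the squared $\iK$-norm of $K(x,X)$, hence nonnegative, one has $0\le\tilde{K}(x,x)\le K(x,x)$. Second, I would bound $\EE[\|\hat\theta\|^2]$ by a constant times $\ynik^2$: Proposition~\ref{prop:expectation_parameter_norm} gives $\EE[\|\hat\theta\|^2]\le\dLt\,y^{T}M_{\Lt}y+c_2\ynik^2/P$, and in the eigenbasis of $K(X,X)$ the matrix $M_{\Lt}=K(X,X)(K(X,X)+\Lt I_N)^{-2}$ has eigenvalues $d_i/(d_i+\Lt)^2$, so
\begin{equation*}
y^{T}M_{\Lt}y=\sum_{i=1}^{N}\frac{\tilde{y}_{i}^{2}}{d_{i}}\cdot\frac{d_{i}^{2}}{(d_{i}+\Lt)^{2}}\le\sum_{i=1}^{N}\frac{\tilde{y}_{i}^{2}}{d_{i}}=\ynik^{2}.
\end{equation*}
It then remains to bound $\dLt$ by a constant depending on $\lambda,\gamma,T$ only; differentiating \eqref{app:eq:defining-effective-ridge} as in \eqref{eq:deriv-lambda} and substituting the identity $\frac1\gamma\frac1N\sum_i\frac{d_i}{\Lt+d_i}=\frac{\Lt-\lambda}{\Lt}$ yields
\begin{equation*}
\dLt\left(\frac{\lambda}{\Lt}+\frac{\Lt}{\gamma}\frac1N\sum_{i=1}^{N}\frac{d_i}{(\Lt+d_i)^2}\right)=1,
\end{equation*}
so that $\dLt\le\Lt/\lambda\le 1+T/(\lambda\gamma)$ using $\Lt\le\lambda+\tfrac1\gamma T$ from Proposition~\ref{prop:fact-effective-ridge}. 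Hence $\EE[\|\hat\theta\|^2]\le C\ynik^2$ with $C$ depending on $\lambda,\gamma,T$ only (absorbing the $c_2/P$ term, using $P\ge1$).

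Putting the pieces together then gives $\mathrm{Var}(\frfl(x))\le\frac{(c_1+C)K(x,x)\ynik^2}{P}$, which is the claim with $c_3=c_1+C$. I do not expect a serious obstacle here: all the analytic heavy lifting — the concentration of the Stieltjes transform, the fixed-point argument, and the entrywise bounds on $A_\lambda$ and $A'(-\lambda)$ — has already been done in the earlier propositions. The only points that require a bit of care are the uniform control of $\dLt$ and the elementary observation $y^{T}M_{\Lt}y\le\ynik^2$, together with the bookkeeping that ensures every constant depends on $\lambda,\gamma,T$ only (in particular $\Lt$ appearing in $M_{\Lt}$ must be controlled through $\lambda,\gamma,T$ via Proposition~\ref{prop:fact-effective-ridge}).
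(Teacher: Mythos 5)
Your proof is correct and follows essentially the same route as the paper's: the law of total variance, Proposition~\ref{prop:var_exp_cond} for the term $\mathrm{Var}\left(K(x,X)K(X,X)^{-1}\hat{y}\right)$, and Proposition~\ref{prop:expectation_parameter_norm} combined with $y^{T}M_{\Lt}y\leq\ynik^{2}$ and $\tilde{K}(x,x)\leq K(x,x)$ for the conditional-variance term. The only difference is that you explicitly establish the uniform bound $\dLt\leq 1+T/(\lambda\gamma)$, a point the paper leaves implicit by simply absorbing $\dLt$ into the constant $\alpha$, so your version is, if anything, slightly more complete.
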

\begin{proof}
Recall that for any $x\in \mathbb{R}^d$,
\begin{align*}
\mathrm{Var}(\frfl(x)) & =\mathrm{Var}\left(\mathbb{E}\left[\frfl(x)\mid F\right]\right)+\mathbb{E}\left[\mathrm{Var}\left[\frfl(x)\mid F\right]\right]\\
 & =\mathrm{Var}\left(K(x,X)K(X,X)^{-1}\hat{y}\right)+\frac{1}{P}\mathbb{E}\left[\| \hat{\theta}\| ^{2}\right]\left[K(x,x)-K(x,X)K(X,X)^{-1}K(X,x)\right].
\end{align*}
From Proposition \ref{prop:var_exp_cond}, $$\mathrm{Var}\left(K(x,X)K(X,X)^{-1}\hat{y}\right)\leq\frac{c_1K(x,x)\| y\| _{K^{-1}}^{2}}{P},$$ and from Proposition \ref{prop:expectation_parameter_norm}, we have:
\begin{align*}
\mathbb{E}\left[\| \hat{\theta}\| ^{2}\right] & \leq\partial_\lambda \tilde{\lambda}\ y^{T}K\left(K+\tilde{\lambda}I_{N}\right)^{-2}y+\frac{c_2\| y\| _{K^{-1}}^{2}}{P}\leq\partial_\lambda \tilde{\lambda}\ \| y\| _{K^{-1}}^{2}+\frac{c_2\| y\| _{K^{-1}}^{2}}{P}\leq \alpha \| y\| _{K^{-1}}^{2},
 \end{align*}
 where  $\alpha=\partial_\lambda \tilde{\lambda}+c_2$. Using the fact that $\tilde{K}(x,x)\leq K(x,x)$, we get
\begin{align*}
\mathbb{E}\left[\mathrm{Var}\left[\hat{f}(x)\mid F\right]\right] & =\frac{1}{P} \mathbb{E}\left[\| \hat{\theta}\| ^{2}\right]\left[K(x,x)-K(x,X)K(X,X)^{-1}K(X,x)\right]\\
 & \leq\frac{\alpha \| y\| _{K^{-1}}^{2}K(x,x)}{P}.
\end{align*}
This yields
\[
\mathrm{Var}\left(\frfl(x)\right)\leq\frac{c_3\| y\| _{K^{-1}}^{2}K(x,x)}{P},
\]
where $c_3 = \alpha+c_1$.
\end{proof}

\subsubsection{Average loss of $\lambda$-RF predictor and loss of $\tilde{\lambda}$-KRR: }

Putting the pieces together, we obtain the following bound on the difference $ \Delta_E = | \EE [ L(\frflg) ] - L (\fkl) | $  between the expected RF loss and the KRR loss:

\begin{corollary}\label{cor:expected-loss-krr-loss}
If $\mathbb{E}_{\mathcal{D}}[K(x,x)]<\infty$,
we have
\[
\Delta_E \leq \frac{C_1 \ynik }{P}\left(2\sqrt{L ( \fkl ) }+{C_2 \ynik }\right),
\]
where $C_1$ and $C_2$ depend on $\lambda$, $\gamma$, $T$ and $\mathbb{E}_{\mathcal{D}}[K(x,x)]$ only.
\end{corollary}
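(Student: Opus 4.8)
The plan is to obtain the bound by feeding the two estimates already at our disposal---Corollary \ref{cor:difference_loss_expected_kernel_loss} on the loss of the \emph{average} predictor and Theorem \ref{variance-ridge} on the variance---into the bias--variance decomposition of Section \ref{subsec:bias-variance}. First I would write
\[
\EE\!\left[L(\frflg)\right] = L\!\left(\EE[\frflg]\right) + \EE_{\mathcal D}\!\left[\mathrm{Var}(\frflg(x))\right],
\]
so that, by the triangle inequality,
\[
\Delta_E \;=\; \left|\EE[L(\frflg)] - L(\fkl)\right| \;\le\; \delta_E + \EE_{\mathcal D}\!\left[\mathrm{Var}(\frflg(x))\right],
\]
where $\delta_E = \left|L(\EE[\frflg]) - L(\fkl)\right|$ is precisely the quantity controlled by Corollary \ref{cor:difference_loss_expected_kernel_loss}.

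Second, I would bound the variance term by integrating the pointwise estimate of Theorem \ref{variance-ridge}, $\mathrm{Var}(\frflg(x)) \le c_3 K(x,x)\,\ynik^2 / P$, against the distribution $\mathcal D$; since $\EE_{\mathcal D}[K(x,x)] < \infty$ by hypothesis, this gives $\EE_{\mathcal D}[\mathrm{Var}(\frflg(x))] \le c_3\, \EE_{\mathcal D}[K(x,x)]\, \ynik^2 / P$. For $\delta_E$ I would substitute the bound of Corollary \ref{cor:difference_loss_expected_kernel_loss}, namely $\delta_E \le \tfrac{C\ynik}{P}\bigl(2\sqrt{L(\fkl)} + \tfrac{C\ynik}{P}\bigr)$ with $C = c\sqrt{\EE_{\mathcal D}[K(x,x)]}$, and use $P \ge 1$ to replace the $P^{-2}$ in the quadratic term by $P^{-1}$. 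Adding the two contributions yields
\[
\Delta_E \;\le\; \frac{2C\,\ynik}{P}\sqrt{L(\fkl)} \;+\; \frac{\bigl(C^2 + c_3\,\EE_{\mathcal D}[K(x,x)]\bigr)\,\ynik^2}{P}.
\]

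Finally, I would set $C_1 = 2C$ and $C_2 = \bigl(C^2 + c_3\,\EE_{\mathcal D}[K(x,x)]\bigr)/(2C)$, which recovers the stated form $\Delta_E \le \tfrac{C_1 \ynik}{P}\bigl(2\sqrt{L(\fkl)} + C_2 \ynik\bigr)$. Since $c$ (from \eqref{eq:average-rf-bound}) and $c_3$ (from Theorem \ref{variance-ridge}) depend only on $\lambda,\gamma,T$, both $C_1$ and $C_2$ depend only on $\lambda,\gamma,T$ and $\EE_{\mathcal D}[K(x,x)]$, as claimed. There is no genuine obstacle in this step: every ingredient is already established, and the only care required is the bookkeeping of the constants and the (harmless) use of $P \ge 1$.
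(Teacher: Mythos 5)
Your proposal is correct and follows essentially the same route as the paper: the bias--variance decomposition, Corollary \ref{cor:difference_loss_expected_kernel_loss} for the bias term, and the integrated variance bound of Theorem \ref{variance-ridge}, followed by the same constant bookkeeping (the paper likewise implicitly uses $P\geq 1$ to absorb the $P^{-2}$ term into $C_2\ynik/P$).
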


\begin{proof}
Using the bias/variance decomposition,  Corollary \ref{app:cor:difference_loss_espected_kernel_loss}, and the bound on the variance of the predictor, we obtain
\begin{align*}
\left|\mathbb{E}\left[L\left(\hat{f}_{\gamma,\lambda}^{(RF)}\right)\right]-L\left(\hat{f}_{\tilde{\lambda}}^{(K)}\right)\right| & \leq\left|L\left(\mathbb{E}\left[\hat{f}_{\gamma,\lambda}^{(RF)}\right]\right)-L\left(\hat{f}_{\tilde{\lambda}}^{(K)}\right)\right|+\mathbb{E}_{\mathcal{D}}\left[\mathrm{Var}\left(\hat{f}(x)\right)\right]\\
 & \leq\frac{C\| y\| _{K^{-1}}}{P}\left(2\sqrt{L\left(\hat{f}_{\tilde{\lambda}}^{(K)}\right)}+\frac{C\| y\| _{K^{-1}}}{P}\right)+\frac{c_3\| y\| _{K^{-1}}^{2}\mathbb{E}_{\mathcal{D}}\left[K(x,x)\right]}{P}\\
 & \leq \frac{C_1\| y\| _{K^{-1}}}{P}\left(2\sqrt{L\left(\hat{f}_{\tilde{\lambda}}^{(K)}\right)}+{C_2\| y\| _{K^{-1}}} \right),\end{align*}
 where $C_1$ and $C_2$ depends on $\lambda$, $\gamma$, $T$ and $\mathbb{E}_{\mathcal{D}}\left[K(x,x)\right]$ only.
\end{proof}

\subsubsection{Double descent curve}

Recall that for any $\tilde \lambda$, we denote $M_{\tilde \lambda}=K(X,X)(K(X,X)+\tilde\lambda I_N)^{-2}.$ A direct consequence of Proposition \ref{prop:expectation_parameter_norm} is the following lower bound on the variance of the predictor.

\begin{corollary}\label{lower-bound-variance}
There exists $ c_4 > 0 $ depending on $ \lambda, \gamma, T $ only such that $ \mathrm{Var} \left(\frfl(x)\right) $ is bounded from below by
\[
  \dLt \frac{y^T M_{\Lt} y }{P}\tilde{K} (x,x)
	 - \frac{c_4 K(x,x) \ynik^2}{P^2}.
\]
\end{corollary}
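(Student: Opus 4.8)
The plan is to read off the result directly from the law of total variance together with the two ingredients already assembled in Section~\ref{subsec:Variance}, namely Proposition~\ref{gaussian-mixture} (the conditional mean/covariance of the Gaussian mixture) and Proposition~\ref{prop:expectation_parameter_norm} (the estimate on $\mathbb{E}[\|\hat\theta\|^2]$). First I would write
\[
\mathrm{Var}\left(\frfl(x)\right)=\mathrm{Var}\left(\mathbb{E}\left[\frfl(x)\mid F\right]\right)+\mathbb{E}\left[\mathrm{Var}\left(\frfl(x)\mid F\right)\right],
\]
and simply discard the first summand, which is a genuine variance and hence nonnegative. This is harmless for a lower bound and is precisely the step that turns the exact decomposition into an inequality.

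Next I would evaluate the second summand. By Proposition~\ref{gaussian-mixture}, conditioned on $F$ the predictor is Gaussian with $\mathrm{Var}\left(\frfl(x)\mid F\right)=\frac{\|\hat\theta\|^2}{P}\tilde K(x,x)$, so that
\[
\mathbb{E}\left[\mathrm{Var}\left(\frfl(x)\mid F\right)\right]=\frac{\tilde K(x,x)}{P}\,\mathbb{E}\left[\|\hat\theta\|^2\right].
\]
Now I invoke Proposition~\ref{prop:expectation_parameter_norm}, which gives $\left|\mathbb{E}[\|\hat\theta\|^2]-\dLt\, y^T M_{\Lt} y\right|\le\frac{c_2\ynik^2}{P}$ and therefore the one-sided bound $\mathbb{E}[\|\hat\theta\|^2]\ge\dLt\, y^T M_{\Lt} y-\frac{c_2\ynik^2}{P}$. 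Plugging this in and using that $\tilde K(x,x)\ge 0$ (it is a posterior variance) to keep the inequality in the right direction yields
\[
\mathrm{Var}\left(\frfl(x)\right)\ge\dLt\,\frac{y^T M_{\Lt} y}{P}\,\tilde K(x,x)-\frac{c_2\,\tilde K(x,x)\,\ynik^2}{P^2}.
\]

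Finally I would bound the error term using $\tilde K(x,x)=K(x,x)-K(x,X)K(X,X)^{-1}K(X,x)\le K(x,x)$, which replaces $\tilde K(x,x)$ by $K(x,x)$ in the subtracted term and gives exactly the claimed lower bound with $c_4=c_2$; since $c_2$ depends on $\lambda,\gamma,T$ only (Proposition~\ref{prop:expectation_parameter_norm}), so does $c_4$. There is no real obstacle here: the whole argument is bookkeeping on top of Propositions~\ref{gaussian-mixture} and~\ref{prop:expectation_parameter_norm}. The only points requiring a moment of care are the sign bookkeeping (discarding the nonnegative first variance term, and the fact that $\tilde K(x,x)\ge 0$ so that enlarging it to $K(x,x)$ in a \emph{subtracted} term genuinely weakens the bound in the correct direction) and confirming that the constant from Proposition~\ref{prop:expectation_parameter_norm} carries over unchanged.
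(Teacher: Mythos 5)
Your proof is correct and follows essentially the same route as the paper: drop the nonnegative $\mathrm{Var}\left(\mathbb{E}\left[\frfl(x)\mid F\right]\right)$ term in the total-variance decomposition, evaluate $\mathbb{E}\left[\mathrm{Var}\left(\frfl(x)\mid F\right)\right]=\frac{\tilde K(x,x)}{P}\mathbb{E}\left[\|\hat\theta\|^{2}\right]$ via Proposition \ref{gaussian-mixture}, lower-bound $\mathbb{E}\left[\|\hat\theta\|^{2}\right]$ by Proposition \ref{prop:expectation_parameter_norm}, and enlarge $\tilde K(x,x)$ to $K(x,x)$ in the subtracted term. The sign bookkeeping you spell out is exactly what the paper relies on, and the constant carries over as you say.
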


\begin{proof}
By the law of total cumulance, $$\mathrm{Var}\left(\frfl(x)\right) \geq \mathbb{E}\left[\mathrm{Var}\left[\frfl(x)\mid F\right]\right]\geq \frac{1}{P}\mathbb{E}\left[\| \hat{\theta}\| ^{2}\right]\tilde{K}(x,x).$$

From Proposition \ref{prop:expectation_parameter_norm}, $ \mathbb{E}[\| \hat{\theta}\| ^{2}]\geq \dLt\ y^{T}M_{\Lt} y-\frac{c_2\| y\| _{K^{-1}}^{2}}{P}, $ hence
\[
\mathrm{Var}\left(\frfl(x)\right) \geq   \dLt \frac{y^T M_{\Lt} y }{P}\tilde{K} (x,x)
	 - \frac{c_4 \tilde{K}(x,x) \ynik^2}{P^2}.\]
The result follows from the fact that $\tilde{K}(x,x)\leq K(x,x)$.
\end{proof}

\end{document}